\newtheorem{thm}{Theorem}[section]
\newtheorem{prop}{Proposition}[section]
\newtheorem{lem}{Lemma}[section]
\newtheorem{cor}{Corollary}[section]
\theoremstyle{definition}
\newtheorem*{defn}{Definition}
\let \mc = \mathcal
\let \ms = \mathsf
\let \mbb = \mathbb
\let \eq = \equiv
\let \sub = \subset
\let \sube = \subseteq
\let \sm = \setminus
\let \es = \varnothing
\let \ra = \rightarrow
\let \la = \leftarrow
\let \lra = \leftrightarrow
\let \Ra = \Rightarrow
\let \La = \Leftarrow
\let \Lra = \Leftrightarrow
\newcommand{\g}[1][]{\ensuremath{\mc G_{#1}}}
\newcommand{\pa}[2]{\ensuremath{\textup{pa}_{#1}(#2)}}
\newcommand{\ch}[2]{\ensuremath{\textup{ch}_{#1}(#2)}}
\newcommand{\sib}[2]{\ensuremath{\textup{sib}_{#1}(#2)}}
\newcommand{\an}[2]{\ensuremath{\textup{an}_{#1}(#2)}}
\newcommand{\de}[2]{\ensuremath{\textup{de}_{#1}(#2)}}
\newcommand{\dis}[2]{\ensuremath{\textup{dis}_{#1}(#2)}}
\newcommand{\ba}[2]{\ensuremath{\textup{bar}_{#1}(#2)}}
\newcommand{\co}[2]{\ensuremath{\textup{col}_{#1}(#2)}}
\newcommand{\mb}[2]{\ensuremath{\textup{mb}_{#1}(#2)}}
\newcommand{\cl}[2]{\ensuremath{\textup{cl}_{#1}(#2)}}
\newcommand{\pre}[2]{\ensuremath{\textup{pre}^{\leq}_{#1}(#2)}}
\newcommand{\ml}[2]{\ensuremath{\textup{ml$\s$s}^{\leq}_{#1}(#2)}}
\newcommand{\ta}[2]{\ensuremath{\textup{tail}_{#1}(#2)}}
\newcommand{\seq}[1]{\ensuremath{\langle #1 \rangle}}
\newcommand{\flo}[1]{\ensuremath{\lfloor #1 \rfloor_{\leq}}}
\newcommand{\ceo}[1]{\ensuremath{\lceil #1 \rceil_{\leq}}}
\newcommand{\floor}[1]{\ensuremath{\lfloor #1 \rfloor}}
\newcommand{\ceil}[1]{\ensuremath{\lceil #1 \rceil}}
\newcommand{\s}[1][0.5]{\ensuremath{\mkern #1 mu}}
\newcommand{\istate}[4]{\ensuremath{#1 \perp \s[-9] \perp #2 \mid #3 \; [\s #4 \s]}}
\newcommand{\dstate}[4]{\ensuremath{#1 \not \s[-3] \perp \s[-9] \perp #2 \mid #3 \; [\s #4 \s]}}
\newcommand{\dom}[1]{\ensuremath{\textup{dom}( #1, \leq )}}
\DeclareMathOperator*{\argmax}{arg\s[2]max}
\title{The \textit{m}-connecting imset and factorization for ADMG models}
\author[1]{Bryan Andrews}
\author[2]{Gregory F. Cooper}
\author[3]{Thomas S. Richardson}
\author[1]{Peter Spirtes}
\affil[1]{Department of Philosophy, Carnegie Mellon University}
\affil[2]{Department of Biomedical Informatics, University of Pittsburgh}
\affil[3]{Department of Statistics, University of Washington}
\date{\today}
\begin{document}

\maketitle

\begin{abstract}
    \noindent Directed acyclic graph (DAG) models have become widely studied and applied in statistics and machine learning---indeed, their simplicity facilitates efficient procedures for learning and inference. Unfortunately, these models are not closed under marginalization, making them poorly equipped to handle systems with latent confounding. Acyclic directed mixed graph (ADMG) models characterize margins of DAG models, making them far better suited to handle such systems. However, ADMG models have not seen wide-spread use due to their complexity and a shortage of statistical tools for their analysis. 
    
    In this paper, we introduce the \textit{m}-connecting imset which provides an alternative representation for the independence models induced by ADMGs. Furthermore, we define the \textit{m}-connecting factorization criterion for ADMG models, characterized by a single equation, and prove its equivalence to the global Markov property. The \textit{m}-connecting imset and factorization criterion provide two new statistical tools for learning and inference with ADMG models. We demonstrate the usefulness of these tools by formulating and evaluating a consistent scoring criterion with a closed form solution.
\end{abstract}

\textit{Keywords:} ADMG models, imsetal conditional independence, factorization criterion

\section{Introduction}
\label{sec:intro}

Directed acyclic graph (DAG) models, also known as Bayesian networks, have become widely studied and applied in statistics and machine learning---indeed, their simplicity facilitates efficient procedures for learning and inference \citep{bishop2006pattern,darwiche2009modeling,koller2009probabilistic}. Furthermore, DAG models allow for causal inference under a few additional assumptions \citep{spirtes2000causation, pearl2009causality}. These models describe probability distributions that obey the conditional independence relations represented in a DAG, and may be characterized by a well-known recursive factorization. Moreover, multinomial and Gaussian DAG models form curved exponential families with known dimension \citep{geiger2001stratified}. While variables may be added to DAG models to account for latent confounding, doing so leads to several challenges \citep{richardson2002ancestral}: 
\begin{itemize}
    \item these models are not always identifiable;
    \item inference may be sensitive to assumptions made about the latent variables;
    \item these models are not curved exponential families \cite{geiger2001stratified};
    \item distributions associated with one of these models may be difficult to characterize \cite{settimi1998geometry, settimi2000geometry}.
\end{itemize}

Alternatively, we could consider models that describe probability distributions obeying the conditional independence relations represented in margins of DAGs. In this paradigm, we treat latent confounding as marginalized variables, thereby making no assumptions about the latent variables. However, DAG models, are not closed under marginalization. As a result, several authors have explored more general families of graphical models that are closed under marginalization \citep{richardson2002ancestral, sadeghi2013stable}.

\begin{figure}[H]
    \centering
    \includegraphics[page=1]{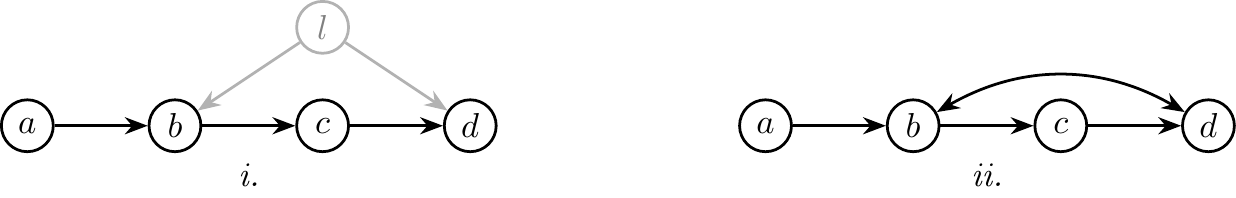}
    \caption{Graphical representations for a system with latent confounding: (\textit{i}) a DAG with an explicitly defined latent variable; (\textit{ii}) an ADMG for the induced marginal independence model.}
    \label{fig:dag_admg}
\end{figure}

In this work, we focus on acyclic directed mixed graph (ADMG) models due to their simple and intuitive derivation as latent projections of DAGs \citep{verma1990equivalence, evans2018markov}. Figure \ref{fig:dag_admg} illustrates a DAG with a latent variable and the corresponding ADMG. Notably, ADMGs generalize other families of graphical models that are closed under marginalization, such as directed (maximal) ancestral graphs \citep{sadeghi2013stable}. Accordingly, the results of this paper directly apply to those subfamilies.

ADMG models describe probability distributions that obey the conditional independence relations represented in an ADMG. Auspiciously, multinomial and Gaussian ADMG models form curved exponential families with known dimension \citep{richardson2002ancestral, evans2014markovian}. However, these models have not seen wide-spread use due to their complexity and a shortage of statistical tools for their analysis. Similarly, \cite{studeny2005probabilistic} introduced structural imsets as a non-graphical representation of conditional independence which, while capable of representing all probabilistic independence models, have not seen wide-spread use. Developments to the imsetal representation of conditional independence have been largely theoretical, except for in constrained scenarios, such as the restriction to DAG independence models---indeed, several methods use imsets to learn DAGs \citep{studeny2008mathematical, studeny2017towards, studeny2014learning}. A recent innovation in the space of imsetal independence models introduced the characteristic imset as a more natural representation of DAG independence models \citep{hemmecke2012characteristic}.

In this paper we introduce the \textit{m}-connecting imset, which is analogous to the characteristic imset for DAG models, in order to expand the tools available for analysis of ADMG models. Indeed, we show how the \textit{m}-connecting imset admits a novel factorization criterion, characterized by a single equation, that is equivalent to the global Markov property---we call this new criterion the \textit{m}-connecting factorization. Using the \textit{m}-connecting factorization, we formulate a consistent scoring criterion with a closed-form solution for curved exponential families whose independence models are described by ADMGs. We investigate asymptotic properties of our score and compare its ability to recover correct independence models against the well-known FCI algorithm and two of its variants

This paper details the methods developed in a recent PhD thesis \citep{andrews2022inducing} and is organized as follows. Section \ref{sec:back} introduces and discusses concepts relevant to ADMGs, imsets, and independence models. Section \ref{sec:m-imset} introduces the \textit{m}-connecting imset and factorization and proves the equivalence of the \textit{m}-connecting factorization to the global Markov property for ADMG models. Section \ref{sec:consist_score} formulates a consistent scoring criterion for ADMG curved exponential families and investigates its ability to recover correct independence models against the well-known FCI algorithm and two of its variants. Section \ref{sec:discussion} provides a discussion and states our conclusions. For readability, all proofs are postponed until the appendix.

\section{Background}
\label{sec:back}

In this section, we introduce and discuss concepts relevant to ADMGs, imsets, and independence models in order to facilitate the formulation of the \textit{m}-connecting imset and factorization; partially ordered sets and the M{\"o}bius inversion are of particular importance for the latter. Throughout this paper, $V$ denotes a non-empty finite set of variables that act as vertices in graphical contexts. We use the following notational conventions:
\begin{itemize}
    \item{\makebox[32mm]{$a,b,c,\ldots \: \in V$\hfill}} lowercase letters denote variables or singletons;
    \item{\makebox[32mm]{$A,B,C,\ldots \: \sube V$\hfill}} uppercase letters denote sets of variables; 
    \item{\makebox[32mm]{$\ms A,\ms B,\ms C,\ldots \: \sube \mc P(V)$\hfill}} uppercase sans-serif letters denote sets of sets;
\end{itemize}
where $\mc P(V)$ denotes the power set of $V$. Additionally, juxtapositions of sets denote their unions (e.g., $AB = A \cup B$).

\subsection{Acyclic Directed Mixed Graphs}

Let $\mc E(V)$ be a set of ordered and unordered pairs:
\[
    \mc E(V) \eq \{ (a,b) \; : \; a,b \in V \; (a \neq b) \} \; \cup \; \{ \{a,b\} \; : \; a,b \in V \; (a \neq b) \}.
\]
 
\begin{defn}[\textit{directed mixed graph}]
    A \textit{directed mixed graph} is an ordered pair $\g = (V, E)$ where $E \sube \mc E(V)$. Graphically, the elements of $V$ act as vertices and the elements of $E$ act as directed and bi-directed edges. Accordingly, for $a,b \in V$ we write:
    \[
        \left\{ \!\!
        \begin{array}{c}
            a \ra b \\
            a \lra b
        \end{array}
         \!\! \right\}
         \: \text{in} \s[7] \g \s[7] \text{if} \:
         \left\{ \!\!
        \begin{array}{c}
            \s[-4] (a,b) \s[-4] \\
            \s[-4] \{ a,b \} \s[-4]
        \end{array}
         \!\! \right\} \in E.
    \]
\end{defn}

\noindent A directed mixed graph is \textit{mixed} in the sense that it contains a mixture of directed and bi-directed edges, and \textit{directed} in the sense that it does not contain undirected edges. This is not to be confused with a \textit{directed graph}, which is a graph that only contains directed edges. A directed mixed graph is \textit{acyclic} if it does not contain a sequence of the following form:
\[
    v_1 \ra \dots \ra v_n \ra v_1 \s[18] \text{for} \: n \geq 2.
\]

In this paper, we focus on acyclic directed mixed graphs (ADMGs)---directed mixed graphs that are acyclic. Next, we review a few graphical concepts used to describe ADMGs.

\begin{defn}[\textit{induced subgraph}]
    Let $\g = (V, E)$ be an ADMG and $A \sube V$. The \textit{induced subgraph} of $\g$ with respect to $A$ is the ordered pair:
    \[
        \g[A] \eq (A, E \cap \mc E(A)).
    \]
\end{defn}

\begin{defn}[\textit{path}]
    A \textit{path} $\pi = \seq{v_1, e_1, \dots, v_n}$ $(n \geq 2)$ is an alternating sequence of distinct vertices and edges where $e_i = (v_{i}, v_{i+1})$ or $e_i = \{ v_{i}, v_{i+1} \}$ for $1 \leq i < n$. The \textit{endpoints} of $\pi$ are the first and last vertices $\{ v_1, v_n \}$ and are \textit{adjacent} if $(v_{1}, v_{n}) \in E$ or $\{ v_{1}, v_{n} \} \in E$.
\end{defn}

\noindent In this paper, we use sequences of vertices to describe paths in ADMGs even if there is more than one edge between a pair of vertices---the intended edge for a pair of vertices will either be obvious from the context or irrelevant.

\begin{defn}[\textit{triple}]
    A \textit{triple} is a path $\pi = \seq{v_1,v_2,v_3}$ with three vertices and is \textit{unshielded} if its endpoints are not adjacent.
\end{defn}

\begin{defn}[\textit{collider}]
    A \textit{collider} on a path $\pi = \seq{v_1, \dots, v_n}$ is a vertex $v_i$ such that:
    \[
        \{ v_{i-1} \; {*\s[-1]\clipbox{1.5mm -1mm -1mm -1mm}{$\ra$}} v_i {\clipbox{-1mm -1mm 1.5mm -1mm}{$\la$}\s[-1]*} \; v_{i+1} \} \s[7] \text{in} \s[7] \g
    \]
    where asterisks are used to denote edge marks that may either be an arrowhead or a tail. Moreover, the collider is an \textit{unshielded collider} if the corresponding triple is unshielded.
\end{defn}

\begin{defn}[\textit{non-collider}]
    A \textit{non-collider} on a path $\pi = \seq{v_1, \dots, v_n}$ is a vertex $v_i$ such that:
    \[
        \left\{ \!\! \begin{array}{c}
        v_{i-1} \la v_i \ra v_{i+1} \\
        v_{i-1} \; {*\s[-1]\clipbox{1.5mm -1mm -1mm -1mm}{$\ra$}} v_i \ra v_{i+1} \\
        v_{i-1} \la v_i {\clipbox{-1mm -1mm 1.5mm -1mm}{$\la$}\s[-1]*} \; v_{i+1}
        \end{array} \!\! \right\} \: \text{in} \s[7] \g
    \]
    where asterisks are used to denote edge marks that may either be an arrowhead or a tail. Moreover, the non-collider is an \textit{unshielded non-collider} if the corresponding triple is unshielded.
\end{defn}

\noindent ADMGs arise naturally from DAGs with latent variables via a process called \textit{latent projection} \citep{verma1990equivalence, evans2018markov}. For example, the ADMG in Figure \ref{fig:dag_admg} (\textit{ii}) illustrates the latent projection of the DAG in Figure \ref{fig:dag_admg} (\textit{i}). Notably, every DAG is an ADMG. We define latent projection with respect to ADMGs.

\vskip 5mm

\begin{algorithm}[H]
    \caption{\textsc{Latent Projection} $\textsc{LP}(\g, L)$}
    \label{alg:lp}
    \KwIn{ADMG: $\g$, \, vertex set $L$}
    \KwOut{ADMG: $\g$}
    \ForEach{$l \in L$}{
        \ForEach{\textup{triple} \seq{a,l,b} \textup{in} $\g$}{
            \If{$a \ra l \ra b$ \textup{in} $\g$ \textup{and} $a \ra b$ \textup{not in} $\g$}{
                Add $a \ra b$ to $\g$ \;
            }
            \If{$\left\{ \!\!
            \begin{array}{c}
                a \la l \ra b \\
                a \lra l \ra b
            \end{array}
            \!\! \right\}$ \textup{in} \g[] \textup{and} $a \lra b$ \textup{not in} $\g$}{
                Add $a \lra b$ to $\g$ \;
            }
        }
        Remove $l$ from $\g$ \;
    }    
\end{algorithm}

\vskip 5mm

Next, we review standard terminology for describing how vertices relate in an ADMG. Let $\g = (V, E)$ be an ADMG and $a \in V$:
\begin{align*}
    \pa{\g}{a} &\eq \{ b \in V \; : \; b \ra a \s[7] \text{in} \s[7] \g\}; \\
    \ch{\g}{a} &\eq \{ b \in V \; : \; b \la a \s[7] \text{in} \s[7] \g\}; \\
    \sib{\g}{a} &\eq \{ b \in V \; : \; b \lra a \s[7] \text{in} \s[7] \g\}
\end{align*}
are the \textit{parents}, \textit{children}, and \textit{siblings} of $a$, respectively. Similarly:
\begin{align*}
    \an{\g}{a} &\eq \{ b \in V \; : \; b \ra \cdots \ra a \s[7] \text{or} \s[7] b \ra a \s[7] \text{in} \s[7] \g \s[7] \text{or} \s[7] a = b \}; \\
    \de{\g}{a} &\eq \{ b \in V \; : \; b \la \cdots \la a \s[7] \text{or} \s[7] b \la a \s[7] \text{in} \s[7] \g \s[7] \text{or} \s[7] a = b \}; \\
    \dis{\g}{a} &\eq \{ b \in V \; : \; b \lra \cdots \lra a \s[7] \text{or} \s[7] b \lra a \s[7] \text{in} \s[7] \g \s[7] \text{or} \s[7] a = b \}
\end{align*}
are the \textit{ancestors}, \textit{descendants}, and \textit{district} of $a$, respectively. These functions are applied disjunctively to sets, in other words, applying one to a set of vertices is the union of the operation applied to each vertex in the set. For example, $A \sube V$ has parents and ancestors:
\[
\pa{\g}{A} \eq \bigcup_{a \in A} \pa{\g}{a} \s[72] \an{\g}{A} \eq \bigcup_{a \in A} \an{\g}{a}.
\]
On the other hand:
\[
    \co{\g}{a} \eq \{ b \in V \; : \left\{ \!\!
    \begin{array}{c}
        b \ra \; \lra \cdots \lra \; \la a \\
        b \ra \; \lra \cdots \lra a \\
        b \lra \cdots \lra \; \la a \\
        b \lra \cdots \lra a
    \end{array}
    \!\! \right\} \: \text{or} \: \left\{ \!\!
    \begin{array}{c}
        b \ra \; \la a \\
        b \ra a \\
        b \la a \\
        b \lra a
    \end{array}
    \!\! \right\}
    \: \text{in} \s[7] \g \s[7] \text{or} \s[7] a = b \}
\]
are the \textit{collider-connecting vertices} of $a$. This function is applied conjunctively to sets, in other words, applying it to a set of vertices is the intersection of the operation applied to each vertex in the set. For example, a set of vertices $A \sube V$ has collider-connecting vertices:
\[
    \co{\g}{A} \eq \bigcap_{a \in A} \co{\g}{a}.
\]
The use of conjunction rather than disjunction ensures that the collider-connecting vertices of a set are contiguously connected by colliders on some path.

Lastly, we review a few special types of sets that will be useful throughout this paper.

\begin{defn}[\textit{ancestral set}] 
    Let $\g = (V, E)$ be an ADMG and $A \sube V$. If $A = \an{\g}{A}$, then $A$ is an \textit{ancestral set}. In other words, $A$ contains its own ancestors. The set of all ancestral sets is defined:
    \[
        \mc A(\g) \eq \{ A \in \mc P(V) \; : \; A = \an{\g}{A} \}.
    \]
\end{defn}

\noindent Figure \ref{fig:set_ex} illustrates the nontrivial ancestral sets of an ADMG. Notably, the concepts of induced subgraph and latent projection are equivalent for ancestral sets.

\begin{prop}[\cite{evans2016graphs}]
    \label{prop:induce_margin}
    Let $\g = (V, E)$ be an ADMG and $A \sube V$. If $A \in \mc A(\g)$, then\textup{:}
    \[
        \g[A] = \textsc{LP}(\g, V \sm A).
    \]
\end{prop}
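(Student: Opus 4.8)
The plan is to analyze Algorithm~\ref{alg:lp} directly on the input $(\g, L)$ with $L \eq V \sm A$ and show that it terminates with the graph $(A, E \cap \mc E(A))$, which is precisely $\g[A]$. Since the outer loop removes each element of $L$ exactly once, the final vertex set is $V \sm L = A$. The only edges the algorithm ever deletes are those discarded together with a removed vertex of $L$, so in particular no edge with both endpoints in $A$ is ever deleted; and apart from these deletions the algorithm only introduces new edges. Consequently it suffices to prove the single claim: \emph{Algorithm~\ref{alg:lp} never adds an edge whose two endpoints both lie in $A$.} Granting this, the final edge set is exactly $E \cap \mc E(A)$, giving $\textsc{LP}(\g, V \sm A) = (A, E \cap \mc E(A)) = \g[A]$.

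To establish the claim I would carry a loop invariant along the execution: at every point, $A$ is an ancestral set of the current graph; equivalently, no not-yet-removed vertex of $L$ has a child in $A$. This holds at the start because $A \in \mc A(\g)$: if $l \ra a$ with $l \in L$ and $a \in A$, then $l \in \an{\g}{a} \sube \an{\g}{A} = A$, contradicting $l \in L$.

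For preservation, consider the iteration that processes a vertex $l \in L$. A new directed edge $x \ra y$ is added only by the first conditional, and only when the triple $\seq{x,l,y}$ satisfies $x \ra l \ra y$ in the current graph; in particular $l \ra y$, so, as $l$ is a not-yet-removed vertex of $L$, the invariant forces $y \notin A$. Hence every directed edge the algorithm adds points into a vertex outside $A$: this preserves the invariant (the subsequent removal of $l$ only deletes edges) and shows no added directed edge lies inside $A$. A new bidirected edge $x \lra y$ is added only by the second conditional, which requires $x \la l \ra y$ or $x \lra l \ra y$ in the current graph; in either case $l \ra y$ (and in the first case also $l \ra x$), so again $y \notin A$ by the invariant, and no added bidirected edge lies inside $A$. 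Bidirected edges do not affect which sets are ancestral, so the invariant survives and the induction closes.

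The remaining work is routine: one checks that the inner-loop edge additions do not undermine the argument, since $l$ stays in the graph until the end of its iteration and every edge added in the meantime points outside $A$, so the invariant in fact holds at each intermediate step of the inner loop (a nested induction on the additions, with base case the invariant inherited from the start of the outer iteration). Assembling the pieces then yields the stated equality. The only point needing care—rather than genuine difficulty—is keeping the two update rules aligned with the invariant, in particular verifying the bidirected rule in both of its forms.
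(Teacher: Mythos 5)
Your argument is correct. The paper does not actually prove Proposition \ref{prop:induce_margin}---it is cited from Evans (2016) without proof---so there is nothing to compare against, but your direct verification of Algorithm \ref{alg:lp} is sound and self-contained: the invariant that no not-yet-removed vertex of $L = V \sm A$ has a child in $A$ holds initially because $A$ is ancestral, every edge the algorithm adds has the endpoint $y$ with $l \ra y$ outside $A$ (so the invariant is preserved and no added edge lies inside $A$), and the fact that triples centered at $l$ are determined by edges incident to $l$, which the inner loop never modifies, makes the nested induction over inner-loop additions go through as you indicate.
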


\begin{defn}[\textit{collider-connecting set}]
    Let $\g = (V, E)$ be an ADMG and $C \sube V$. If $C = \co{\g[C]}{C}$, then $C$ is a \textit{collider-connecting set}. In other words, the members of $C$ are collider-connecting by paths using only members of $C$. The set of all collider-connecting sets is defined:
    \[
    \mc C(\g) \eq \{ C \in \mc P(V) \; : \; C = \co{\g[C]}{C} \}.
    \]
\end{defn}

\noindent Figure \ref{fig:set_ex} illustrates the nontrivial collider-connecting sets of an ADMG.

\begin{defn}[\textit{barren subset}]
    Let $\g = (V,E)$ be an ADMG and $B \sube V$. If $A = \smash{\bigcup \limits_{b \in B}} \an{\g}{b} \sm b$, then the \textit{barren subset} of $B$ is defined:
    \[
        \ba{\g}{B} \eq B \sm A.
    \]
In other words, the barren subset of $B$ contains the members of $B$ that are not ancestors of other members of $B$.
\end{defn}

\noindent Figure \ref{fig:set_ex} illustrates the nontrivial barren subsets of an ADMG.

\begin{figure}[H]
    \begin{minipage}{.35\textwidth}
        \centering
        \includegraphics[page=2]{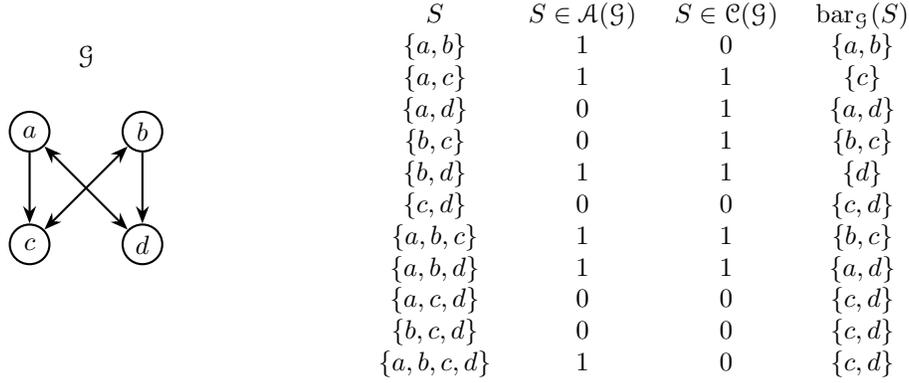}
        \vspace{10mm}
    \end{minipage} %
    \begin{minipage}{.6\textwidth}
        \centering
        \[
            \begin{blockarray}{c @{\hspace{5mm}} c @{\hspace{5mm}} c @{\hspace{5mm}} c}
                S & S \in \mc A(\g) & S \in \mc C(\g) & \ba{\g}{S} \\
                \begin{block}{c @{\hspace{5mm}} c @{\hspace{5mm}} c @{\hspace{5mm}} c}
                    \{a,b\} & 1 & 0 & \{a,b\} \\
                    \{a,c\} & 1 & 1 & \{c\} \\
                    \{a,d\} & 0 & 1 & \{a,d\} \\
                    \{b,c\} & 0 & 1 & \{b,c\} \\
                    \{b,d\} & 1 & 1 & \{d\} \\
                    \{c,d\} & 0 & 0 & \{c,d\} \\
                    \{a,b,c\} & 1 & 1 & \{b,c\} \\
                    \{a,b,d\} & 1 & 1 & \{a,d\} \\
                    \{a,c,d\} & 0 & 0 & \{c,d\} \\
                    \{b,c,d\} & 0 & 0 & \{c,d\} \\
                    \{a,b,c,d\} & 1 & 0 & \{c,d\} \\
                \end{block}
            \end{blockarray}
        \]
    \end{minipage}
    \caption{An ADMG and its nontrivial ancestral sets, collider-connecting sets, and barren subsets.}
    \label{fig:set_ex}
\end{figure}

\subsection{Partially Ordered Sets}

Throughout this paper, $\ms P$ denotes a finite \textit{partially ordered set}. Notably, the elements of $\ms P$ can be variables or sets of variables---hence our choice of notation.

\begin{defn}[\textit{partial order}]
    A \textit{partial order} is a binary relation $\leq$ over a set $\ms P$ such that $\leq$ is \textit{reflexive}, \textit{antisymmetric}, and \textit{transitive}. In other words, for every collection of elements $A,B,C \in \ms P$:
    \begin{enumerate}
        \item{\makebox[30mm]{reflexivity\hfill}} $A \leq A$;
        \item{\makebox[30mm]{antisymmetry\hfill}} $A \leq B$ and $B \leq A$ $\s[18] \Ra \s[18]$ $A = B$;
        \item{\makebox[30mm]{transitivity\hfill}} $A \leq B$ and $B \leq C$ $\s[18] \Ra \s[18]$ $A \leq C$.
    \end{enumerate}
\end{defn}

\begin{defn}[\textit{partially ordered set}]
    A \textit{partially ordered set}, \textit{poset} for short, is a set $\ms P$ with a partial order $\leq$. A pair of elements $A,B \in \ms P$ are \textit{comparable} if $A \leq B$ or $B \leq A$ and \textit{incomparable} otherwise. If every pair of elements is comparable, then $\leq$ is a \textit{total order}. 
\end{defn}

Two simple examples of partially ordered sets used throughout this paper are $V$ ordered by a total order \textit{consistent} with an ADMG $\g = (V,E)$ and $\mc P(V)$ ordered by \textit{inclusion}.

\begin{defn}[\textit{consistent order}]
    If $\g = (V, E)$ is an ADMG and $\leq$ is a total order where:
    \[
        a \leq b \s[18] \Ra \s[18] b \not \in \an{\g}{a} \s[18] \text{for all} \s[18] a,b \in V \: (a \neq b),
    \]
    then $\leq$ is a \textit{consistent} with $\g$.
\end{defn}

\begin{defn}[\textit{preceding vertices}]
    Let $\g = (V, E)$ be an ADMG with consistent order $\leq$. If $b \in V$, then the preceding vertices of $b$ with respect to $\leq$ are defined:
    \[
        \pre{\g}{b} \eq \{ a \in V \; : \; a \leq b \}.
    \]
    This function is applied disjunctively to sets, in other words, applying is to a set of vertices is the union of the operation applied to each vertex in the set. For example, $A \sube V$ has preceding vertices:
\[
    \pre{\g}{A} \eq \bigcup_{a \in A} \pre{\g}{a}.
\]
\end{defn}

\begin{defn}[\textit{inclusion order}]
If $\ms P = \mc P(V)$ and $\leq$ is the partial order where:
\[
    A \leq B \s[18] \Lra \s[18] A \sube B \s[18] \text{for all} \s[18] A,B \in \ms P
\]
then $\mc P(V)$ is ordered by \textit{inclusion}.
\end{defn}

\begin{figure}[H]
    \centering
    \includegraphics[page=3]{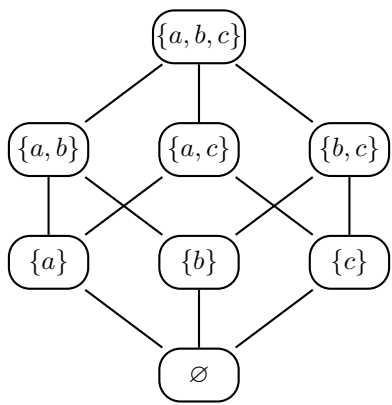}
    \caption{The Hasse diagram for the poset $\ms P = \mc P(\{ a,b,c \})$ ordered by inclusion.}
    \label{fig:powposet}
\end{figure}

\noindent Figure \ref{fig:powposet} illustrates the Hasse diagram for the poset $\ms P = \mc P(\{ a,b,c \})$ ordered by inclusion. Given a poset, we will also reference its \textit{maximal/minimal subsets}.

\begin{defn}[\textit{maximal/minimal subset}]
If $\ms P$ is a poset with partial order $\leq$ and $\ms S \sube \ms P$, then the maximal and minimal subsets of $S$ are defined:
    \[
        \ceo{\ms S} \eq \{ A \in \ms S \; : \; A \not \leq B \s[18] \text{for all} \; B \in \ms S \; (A \neq B)\};
    \]
    \[
        \flo{\ms S} \eq \{ A \in \ms S \; : \; B \not \leq A \s[18] \text{for all} \; B \in \ms S \; (A \neq B)\}.
    \]
\end{defn}

\noindent If we can guarantee that the maximal/minimal subset of $S$ is a unique singleton, such as when $\leq$ is a total order, then we abuse notation and interpret the output as the element of the singleton. Moreover, if no partial order is given, then the poset is assumed to be ordered by inclusion. For example:

\begin{itemize}
    \item $\ceo{V} = d$ in Figure \ref{fig:dag_admg}(\textit{ii}) where $\leq$ is a consistent order;
    \item $\flo{V} = a$ in Figure \ref{fig:dag_admg}(\textit{ii}) where $\leq$ is a consistent order;
    \item $\ceil{\mc P(\{ a, b, c \})} = \{ a, b, c \}$ in Figure \ref{fig:powposet};
    \item $\floor{\mc P(\{ a, b, c \}) \sm \es} = \{ \{ a \} , \{ b \}, \{ c \} \}$ in Figure \ref{fig:powposet}.
\end{itemize}

\noindent For the remainder of this paper, unless otherwise specified, $\ms P$ is the poset $\mc P(V)$ ordered by inclusion.

\subsection{M{\"o}bius Inversion}

Two useful functions for analyzing posets that will be fundamental to deriving the \textit{m}-connecting factorization are the \textit{zeta function} and \textit{M{\"o}bius function}. The zeta function $\zeta_{\ms P} : \ms P \times \ms P \ra \{0, 1\}$ is defined:
\[
    \zeta_{\ms P}(B,A) = 
    \begin{cases}
        \s[18] 1 \s[36] & B \sube A; \\
        \s[18] 0 \s[36] & B \not \sube A,
    \end{cases}
\]
and the M{\"o}bius function $\mu_{\ms P} \s : \ms P \times \ms P \ra \mbb Z$ is defined:
\[
    \mu_{\ms P}(B,A) = 
    \begin{cases}
        \s[6] (-1)^{|A \sm B|} \s[36] & B \sube A; \\
        \s[18] 0 \s[36] & B \not \sube A.
    \end{cases}
\]
The posets we consider are finite and may be thought of as matrices. Abusing notation, we interpret $\zeta_{\ms P}$ and $\mu_{\ms P}$ as matrices where the first and second arguments act as the row and column indices, respectively. Under this interpretation, the M{\"o}bius function is the inverse of the zeta function in the sense that $\mu_{\ms P} \s = \zeta_{\ms P}^{-1}$.

\begin{figure}[H]
    \[
        \begin{blockarray}{ccccccccc}
            \zeta_{\ms P} & \es & \{a\} & \{b\} & \{c\} & \{a,b\} & \{a,c\} & \{b,c\} & \{a,b,c\} \\
            \begin{block}{c[cccccccc]}
                \es & 1 & 1 & 1 & 1 & 1 & 1 & 1 & 1 \\
                \{a\} & 0 & 1 & 0 & 0 & 1 & 1 & 0 & 1 \\
                \{b\} & 0 & 0 & 1 & 0 & 1 & 0 & 1 & 1 \\
                \{c\} & 0 & 0 & 0 & 1 & 0 & 1 & 1 & 1 \\
                \{a,b\} & 0 & 0 & 0 & 0 & 1 & 0 & 0 & 1 \\
                \{a,c\} & 0 & 0 & 0 & 0 & 0 & 1 & 0 & 1 \\
                \{b,c\} & 0 & 0 & 0 & 0 & 0 & 0 & 1 & 1 \\
                \{a,b,c\} & 0 & 0 & 0 & 0 & 0 & 0 & 0 & 1 \\
            \end{block}
        \end{blockarray}
    \]
    \caption{The zeta function of a poset $\ms P = \mc P(\{ a,b,c \})$ ordered by inclusion---the first and second arguments of the zeta function act as row and column indices, respectively.}
    \label{fig:zeta}
\end{figure}

Figure \ref{fig:zeta} depicts the zeta function $\zeta_{\ms P}$ as a matrix for the poset depicted in Figure \ref{fig:powposet}. Notice that the matrix is invertible---it is an upper triangular matrix with non-zero entries on the main diagonal. In general, the rows and columns of the matrix corresponding to the zeta function of a poset can be rearranged in this manner. Accordingly, the matrix corresponding to the zeta function is invertible.

\begin{figure}[H]
    \[
        \begin{blockarray}{ccccccccc}
            \mu_{\ms P} \s & \es & \{a\} & \{b\} & \{c\} & \{a,b\} & \{a,c\} & \{b,c\} & \{a,b,c\} \\
            \begin{block}{c[cccccccc]}
                \es & 1 & \llap{$-$}1 & \llap{$-$}1 & \llap{$-$}1 & 1 & 1 & 1 & \llap{$-$}1 \\
                \{a\} & 0 & 1 & 0 & 0 & \llap{$-$}1 & \llap{$-$}1 & 0 & 1 \\
                \{b\} & 0 & 0 & 1 & 0 & \llap{$-$}1 & 0 & \llap{$-$}1 & 1 \\
                \{c\} & 0 & 0 & 0 & 1 & 0 & \llap{$-$}1 & \llap{$-$}1 & 1 \\
                \{a,b\} & 0 & 0 & 0 & 0 & 1 & 0 & 0 & \llap{$-$}1 \\
                \{a,c\} & 0 & 0 & 0 & 0 & 0 & 1 & 0 & \llap{$-$}1 \\
                \{b,c\} & 0 & 0 & 0 & 0 & 0 & 0 & 1 & \llap{$-$}1 \\
                \{a,b,c\} & 0 & 0 & 0 & 0 & 0 & 0 & 0 & 1 \\
            \end{block}
        \end{blockarray}
    \]  
    \caption{The M{\"o}bius function of a poset $\ms P = \mc P(\{ a,b,c \})$ ordered by inclusion---the first and second arguments of the M{\"o}bius function act as row and column indices, respectively.}
    \label{fig:mobius}
\end{figure}

\noindent Figure \ref{fig:mobius} depicts the M{\"o}bius function $\mu_{\ms P}$ as a matrix for the poset depicted in Figure \ref{fig:powposet}. Again, notice that $\mu_{\ms P}$ is invertible---it is an upper triangular matrix with non-zero entries on the main diagonal. We encourage the reader to check that the matrices depicted in Figures \ref{fig:zeta} and \ref{fig:mobius} are indeed inverses of each other. This relation holds in general and provides a good intuition for the \textit{M{\"o}bius inversion}.

\begin{prop}[\cite{rota1964foundations}]
    \label{prop:mob}
    Let $\ms P$ be a poset, and $g : \ms P \ra \mbb R$ and $h : \ms P \ra \mbb R$ be real-valued functions. The following expressions imply each other\textup{:}
    \begin{enumerate}
        \item $g(A) = \sum \limits_{B \in \ms P} h(B) \, \mu_{\ms P}(B,A) \s[18] \textup{for all} \; A \in \ms P$\textup{;}
        \item $h(A) = \sum \limits_{B \in \ms P} g(B) \, \zeta_{\ms P}(B,A) \s[18] \textup{for all} \; A \in \ms P$.
    \end{enumerate}
    Alternatively, if we abuse notation and treat $g$ and $h$ as column vectors, then the M{\"o}bius inversion states that $g = \mu_{\ms P}^{\top} h \, \Lra \, h = \zeta_{\ms P}^{\top} g$.
    If $\ms P = \mc P(V)$ is a poset ordered by inclusion, then the M{\"o}bius inversion simplifies to the following equivalent statements:
    \begin{enumerate}
        \item $g(A) = \sum \limits_{B \sube A} (-1)^{|A \sm B|} \, h(B) \s[18] \textup{for all} \; A \sube V$\textup{;}
        \item $h(A) = \sum \limits_{B \sube A} g(B) \s[18] \textup{for all} \; A \sube V$.
    \end{enumerate}
\end{prop}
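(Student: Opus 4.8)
The plan is to read statements (1) and (2) as the coordinate forms of matrix identities and then reduce the whole equivalence to the relation $\mu_{\ms P} = \zeta_{\ms P}^{-1}$ recorded just above the proposition. Regard $g$ and $h$ as column vectors indexed by $\ms P$, and $\zeta_{\ms P}$, $\mu_{\ms P}$ as the matrices whose $(B,A)$-entries are $\zeta_{\ms P}(B,A)$ and $\mu_{\ms P}(B,A)$, i.e.\ with the first argument indexing the row (as in Figures~\ref{fig:zeta} and \ref{fig:mobius}); here $\ms P$ is a power set ordered by inclusion, so that $|A \sm B|$ is meaningful and the displayed closed forms apply. Then statement (1), $g(A) = \sum_{B} h(B)\,\mu_{\ms P}(B,A)$ for all $A$, is exactly $g = \mu_{\ms P}^{\top} h$, and statement (2), $h(A) = \sum_{B} g(B)\,\zeta_{\ms P}(B,A)$ for all $A$, is exactly $h = \zeta_{\ms P}^{\top} g$ --- and the sums over all $B \in \ms P$ agree with the sums over $B \sube A$ since both functions vanish when $B \not\sube A$. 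So it suffices to prove $g = \mu_{\ms P}^{\top} h \Lra h = \zeta_{\ms P}^{\top} g$.

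For completeness I would first re-derive $\mu_{\ms P} = \zeta_{\ms P}^{-1}$ from the closed forms (one may instead simply cite the discussion preceding the proposition). The $(B,A)$-entry of $\zeta_{\ms P}\mu_{\ms P}$ is $\sum_{B \sube C \sube A}(-1)^{|A \sm C|}$; this is an empty sum, hence $0$, when $B \not\sube A$, and when $B \sube A$, writing $D = A \sm B$ and letting $C$ range through $B \cup E$ with $E \sube D$, it equals $\sum_{E \sube D}(-1)^{|D|-|E|} = \sum_{k=0}^{|D|}\binom{|D|}{k}(-1)^{|D|-k} = (1-1)^{|D|}$, which is $1$ if $A = B$ and $0$ otherwise. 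Thus $\zeta_{\ms P}\mu_{\ms P} = I$, and since both are finite square matrices, $\mu_{\ms P}\zeta_{\ms P} = I$ as well.

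It now follows that $(\mu_{\ms P}^{\top})^{-1} = (\mu_{\ms P}^{-1})^{\top} = \zeta_{\ms P}^{\top}$. Hence if $g = \mu_{\ms P}^{\top} h$, left-multiplying by $\zeta_{\ms P}^{\top}$ gives $\zeta_{\ms P}^{\top} g = (\mu_{\ms P}\zeta_{\ms P})^{\top} h = h$; and if $h = \zeta_{\ms P}^{\top} g$, left-multiplying by $\mu_{\ms P}^{\top}$ gives $\mu_{\ms P}^{\top} h = (\zeta_{\ms P}\mu_{\ms P})^{\top} g = g$. This proves (1) $\Lra$ (2) together with the matrix restatement. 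For the final part, set $\ms P = \mc P(V)$ ordered by inclusion and substitute $\zeta_{\ms P}(B,A) = 1$ and $\mu_{\ms P}(B,A) = (-1)^{|A \sm B|}$ (for $B \sube A$, and $0$ otherwise) into (1) and (2); the outer sums collapse to sums over $B \sube A$, yielding the two simplified statements verbatim.

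I do not expect a real obstacle here: the only mathematical content is the binomial identity $\sum_{E \sube D}(-1)^{|D|-|E|} = (1-1)^{|D|}$, and the rest is linear-algebra bookkeeping. The two things to handle carefully are keeping the row/column convention for $\zeta_{\ms P}$ and $\mu_{\ms P}$ consistent with the paper's figures (so that (1)--(2) correctly feature the transposes $\mu_{\ms P}^{\top}$ and $\zeta_{\ms P}^{\top}$, not $\mu_{\ms P}$ and $\zeta_{\ms P}$ themselves), and noting explicitly that enlarging the summation index from $\{B : B \sube A\}$ to all of $\ms P$ only adds zero terms.
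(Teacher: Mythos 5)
Your proof is correct. Note that the paper does not actually prove this proposition --- it is cited from \cite{rota1964foundations}, and the surrounding text only offers the informal observation that the matrices $\zeta_{\ms P}$ and $\mu_{\ms P}$ are inverses of one another. Your argument is the standard one and formalizes exactly that observation: the computation $\zeta_{\ms P}\mu_{\ms P} = I$ via the binomial identity $\sum_{E \sube D}(-1)^{|D|-|E|} = (1-1)^{|D|}$, followed by transposition, is all that is needed, and you handle the two delicate points (the row/column convention matching Figures \ref{fig:zeta} and \ref{fig:mobius}, and the harmless enlargement of the summation range to all of $\ms P$) correctly. Your restriction to posets of sets ordered by inclusion is also the right reading of the statement, since the paper's closed-form definitions of $\zeta_{\ms P}$ and $\mu_{\ms P}$ via $B \sube A$ and $(-1)^{|A \sm B|}$ only make sense in that setting, which is the only one the paper uses.
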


\begin{cor}[\cite{rota1964foundations}]
    \label{cor:mob_alt}
    Let $\ms P$ be a poset and $g : \ms P \ra \mbb R$ and $h : \ms P \ra \mbb R$ be real-valued functions. The following expressions imply each other\textup{:}
    \begin{enumerate}
        \item $g(A) = \sum \limits_{B \in \ms P} \mu_{\ms P}(A,B) \, h(B) \s[18] \textup{for all} \; A \in \ms P$\textup{;}
        \item $h(A) = \sum \limits_{B \in \ms P} \zeta_{\ms P}(A,B) \, g(B) \s[18] \textup{for all} \; A \in \ms P$.
    \end{enumerate}
    Alternatively, if we abuse notation and view $g$ and $h$ as column vectors, then the corollary states that $g = \mu_{\ms P} \s h \, \Lra \, h = \zeta_{\ms P} g$. If $V$ is a non-empty set of variables and $\ms P = \mc P(V)$ is a poset ordered by inclusion, then the corollary simplifies to the following equivalent statements:
    \begin{enumerate}
        \item $g(A) = \sum \limits_{\substack{B \sube V \\ A \sube B}} (-1)^{|B \sm A|} \, h(B) \s[18] \textup{for all} \; A \sube V$\textup{;}
        \item $h(A) = \sum \limits_{\substack{B \sube V \\A \sube B}} g(B) \s[18] \textup{for all} \; A \sube V$.
    \end{enumerate}
\end{cor}

\begin{figure}[H]
    \[
        \begin{blockarray}{c @{\hspace{10mm}} c @{\hspace{15mm}} c @{\hspace{10mm}} c @{\hspace{10mm}} c}
        S & g(S) & & S & h(S) \\
        \begin{block}{c @{\hspace{10mm}} [c] @{\hspace{15mm}} c @{\hspace{10mm}} c @{\hspace{10mm}} [c]}
            \es & 0 & & \es & 0 \\
            \{a\} & 0 & \smash{\overset{\zeta_{\ms P}^{\top}\;}{\longrightarrow}} & \{a\} & 0 \\
            \{b\} & 0 & & \{b\} & 0 \\
            \{c\} & 1 & & \{c\} & 0 \\
            \{a,b\} & 0 & & \{a,b\} & 1 \\
            \{a,c\} & \llap{$-$}1 & & \{a,c\} & 0 \\
            \{b,c\} & \llap{$-$}1 & \smash{\overset{\;\;\mu_{\ms P}^{\top}}{\longleftarrow}} & \{b,c\} & 0 \\
            \{a,b,c\} & 1 & & \{a,b,c\} & 1 \\
        \end{block}
    \end{blockarray}
    \] 
    \caption{An application of the M{\"o}bius inversion.}
    \label{fig:zmapp}
\end{figure}

Figure \ref{fig:zmapp} depicts an application of the M{\"o}bius inversion. If $g : \ms P \ra \mbb R$ and $h : \ms P \ra \mbb R$ are real-valued functions satisfying Proposition \ref{prop:mob}, then the zeta function depicted in Figure \ref{fig:zeta} applied to $g$ returns $h$ and the M{\"o}bius function depicted in Figure \ref{fig:mobius} applied to $h$ returns $g$. The M{\"o}bius inversion will be used in this manner to transition between imsetal representations of conditional independence. 

\subsection{Independence Models}

Conditional independence is often used synonymously with probabilistic conditional independence---conditional independence statements that are represented in a probability measure. We use this term more generally to include graphical and imsetal conditional independence---conditional independence statements that are represented in an ADMG and an imset, respectively.

\begin{defn}[\textit{disjoint triple}]
    If $A,B,C \sube V$ are disjoint, then $\seq{A,B \mid C}$ is the corresponding \textit{disjoint triple}. The set of all disjoint triples is defined:
    \[
        \mc T(V) \eq \{ \seq{A,B \mid C} \; : \; A,B,C \sube V \; \text{and} \; A \cap B = A \cap C = B \cap C = \es \}.
    \]
\end{defn}

\begin{defn}[\textit{independence statement}]
    If $\mc O$ is a mathematical object over $V$ and $\seq{A,B \mid C} \in \mc T(V)$, then the statement \textit{``$A$ and $B$ are independent given $C$ with respect to $\mc O$''} is an \textit{independence statement}. In this case, we say $\seq{A,B \mid C}$ is represented in $\mc O$ and write \istate{A}{B}{C}{\mc O}.
\end{defn}

\begin{defn}[\textit{independence model}]
    If $\mc O$ is a mathematical object over $V$, then the \textit{independence model} induced by $\mc O$ is defined:
    \[
        \mc I(\mc O) \eq \left\{ \seq{A,B \mid C} \in \mc T(V) \; : \;  \istate{A}{B}{C}{\mc O} \right\}.
    \]
\end{defn}

Several families of independence models have been axiomatically characterized. The independence model $\mc I(\mc O)$ is called a \textit{semi-graphoid} whenever conditions (\textit{i - v}) hold for every collection of disjoint sets $A,B,C,D \sube V$:
\begin{enumerate}
    \item{\makebox[28mm]{triviality\hfill}} \istate{A}{\es}{C}{\mc O};
    \item{\makebox[28mm]{symmetry\hfill}} \istate{A}{B}{C}{\mc O} $\s[18] \Ra \s[18]$ \istate{B}{A}{C}{\mc O};
    \item{\makebox[28mm]{decomposition\hfill}} \istate{A}{BD}{C}{\mc O} $\s[18] \Ra \s[18]$ \istate{A}{D}{C}{\mc O};
    \item{\makebox[28mm]{weak union\hfill}} \istate{A}{BD}{C}{\mc O}$\s[18] \Ra \s[18]$ \istate{A}{B}{CD}{\mc O};
    \item{\makebox[28mm]{contraction\hfill}} \istate{A}{B}{CD}{\mc O} and \istate{A}{D}{C}{\mc O} $\s[18] \Ra \s[18]$ \istate{A}{BD}{C}{\mc O}.
\end{enumerate}
Furthermore, $\mc I(\mc O)$ is called a \textit{graphoid} whenever conditions (\textit{i - vi}) hold for every collection of disjoint sets $A,B,C,D \sube V$:
\begin{enumerate}
    \item[\it vi.]{\makebox[28mm]{intersection\hfill}} \istate{A}{B}{CD}{\mc O} and \istate{A}{D}{BC}{\mc O} $\s[18] \Ra \s[18]$ \istate{A}{BD}{C}{\mc O}.
\end{enumerate}
Moreover, $\mc I(\mc O)$ is called a \textit{compositional graphoid} whenever conditions (\textit{i - vii}) hold for every collection of disjoint sets $A,B,C,D \sube V$:
\begin{enumerate}
    \item[\it vii.]{\makebox[28mm]{composition\hfill}} \istate{A}{B}{C}{\mc O} and \istate{A}{D}{C}{\mc O} $\s[18] \Ra \s[18]$ \istate{A}{BD}{C}{\mc O}.
\end{enumerate}

\subsection{Probabilistic Conditional Independence}

Throughout this paper, we consider a collection of random variables $X$ on a measurable space $(\ms X, \mc X)$ indexed by $V$, and a probability measure $P : \mc X \ra [0,1]$ having density $f : \ms X \ra [0, \infty)$ dominated by a $\sigma$-finite product measure $\mu$. If for $P$-almost all $x_{BC} \in \ms X_{BC}$:
\[
    P(X_A \in \Omega \mid X_{BC} = x_{BC}) = P(X_A \in \Omega \mid X_C = x_C) \s[18] \text{for all} \; \Omega \in \mc X_A
\]
\noindent or, alternatively, for $\mu$-almost all $x \in \ms X$:
\[
    f_{ABC}(x) f_{C}(x) = f_{AC}(x) f_{BC}(x),
\]
then we say \seq{A,B \mid C} is represented in $P$ and write \istate{A}{B}{C}{P}. The independence model induced by $P$ is denoted by $\mc I(P)$. Moreover, the independence model induced by a probability measure is a semi-graphoid. 

\begin{prop}[\cite{studeny2005probabilistic}]
    \label{prop:prob_semi_graphoid}
    If $P$ is a probability measure, then $\mc I(P)$ is a semi-graphoid.
\end{prop}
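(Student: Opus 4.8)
The plan is to verify each of the five semi-graphoid axioms (\textit{i}--\textit{v}) directly from the density characterization of probabilistic conditional independence: $\istate{A}{B}{C}{P}$ holds iff $f_{ABC}(x)\,f_{C}(x) = f_{AC}(x)\,f_{BC}(x)$ for $\mu$-almost all $x \in \ms X$, where $f_{S}$ denotes the marginal density of $X_{S}$ (obtained by integrating $f$ over the coordinates outside $S$ against the corresponding factor of the product measure $\mu$, so that $f_{S}$ depends on $x$ only through $x_{S}$). Triviality and symmetry are immediate: for $B = \es$ the defining equation reads $f_{AC}f_{C} = f_{AC}f_{C}$, which holds everywhere; and the equation $f_{ABC}f_{C} = f_{AC}f_{BC}$ is unchanged when $A$ and $B$ are interchanged, since $ABC = BAC$ as sets and multiplication is commutative.

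For decomposition, assume $\istate{A}{BD}{C}{P}$, i.e. $f_{ABDC}f_{C} = f_{AC}f_{BDC}$ for $\mu$-a.e. $x$, and integrate both sides over $x_{B}$ against $\mu_{B}$. Because $f_{C}$ and $f_{AC}$ do not depend on $x_{B}$, and because integrating $f_{ABDC}$ (resp. $f_{BDC}$) over $x_{B}$ yields $f_{ADC}$ (resp. $f_{DC}$), Tonelli's theorem — applied to the non-negative integrands and to the $\mu_{B}$-slices of the $\mu$-null set where the identity fails — gives $f_{ADC}f_{C} = f_{AC}f_{DC}$ for $\mu$-a.e. $x$, which is $\istate{A}{D}{C}{P}$. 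The same marginalization argument, applied to $\istate{A}{BD}{C}{P}$, records the auxiliary identity $f_{ACD}f_{C} = f_{AC}f_{CD}$, which I use below.

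Weak union and contraction are then purely algebraic. For weak union, starting from $f_{ABCD}f_{C} = f_{AC}f_{BCD}$ (this is $\istate{A}{BD}{C}{P}$ rewritten using $ABDC = ABCD$, $BDC = BCD$), substitute $f_{AC} = f_{ACD}f_{C}/f_{CD}$ from the auxiliary identity, multiply through by $f_{CD}$, and cancel $f_{C}$ to obtain $f_{ABCD}f_{CD} = f_{ACD}f_{BCD}$, i.e. $\istate{A}{B}{CD}{P}$. For contraction, from $\istate{A}{B}{CD}{P}$ write $f_{ABCD} = f_{ACD}f_{BCD}/f_{CD}$, multiply by $f_{C}$, and invoke $\istate{A}{D}{C}{P}$ in the form $f_{ACD}f_{C}/f_{CD} = f_{AC}$ to reach $f_{ABCD}f_{C} = f_{AC}f_{BCD}$, i.e. $\istate{A}{BD}{C}{P}$.

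The main obstacle is the measure-theoretic bookkeeping buried in the words ``cancel'' and ``substitute''. One must check that on the $\mu$-null set where a denominator such as $f_{CD}$ or $f_{C}$ vanishes, the relevant larger marginal densities ($f_{ABCD}$, $f_{ACD}$, $f_{BCD}$, and so on) also vanish for $\mu$-a.e. $x$ — which holds because each such marginal is an integral of a larger non-negative marginal, so the larger one is zero on $\mu$-a.e. slice over any point where the smaller one is zero — so that the target identities hold trivially ($0 = 0$) there, while off that null set the divisions are legitimate; one also reuses the Tonelli-on-null-slices observation to see that $\mu$-a.e. equalities survive marginalization. An essentially equivalent route is the factorization characterization ``$\istate{A}{B}{C}{P}$ iff $f_{ABC}(x) = g(x_{AC})\,h(x_{BC})$ for $\mu$-a.e. $x$ and some non-negative $g,h$,'' under which decomposition and weak union are almost formal, but establishing that characterization carries the same null-set subtleties, so I would present the direct argument above.
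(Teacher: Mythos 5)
The paper offers no proof of this proposition---it is imported from Studen\'y (2005)---and your direct verification of the five axioms from the defining identity $f_{ABC}f_C = f_{AC}f_{BC}$ (marginalize over $x_B$ for decomposition, then algebra plus zero-set bookkeeping for weak union and contraction) is exactly the standard argument and is correct. One small quibble: the set $\{x : f_C(x_C)=0\}$ is $P$-null but need not be $\mu$-null, so you should not call it a ``$\mu$-null set''; this costs nothing, since the observation you already make---that the larger marginals vanish $\mu$-a.e.\ on that set, so the target identity reads $0=0$ there---is what actually justifies the cancellation.
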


\noindent In this paper, we are interested in the ADMG models that describe \textit{positive measures}.

\begin{defn}[\textit{positive measure}]
    If probability measure $P$ has density $f : \ms X \ra (0, \infty)$, then $P$ is a positive measure.
\end{defn}

\noindent Notably, any independence model induced by a positive measure is a graphoid. 

\begin{prop}[\cite{pearl1988probabilistic}]
    \label{prop:pos_graphoid}
    If $P$ is a positive measure, then $\mc I(P)$ is a graphoid.
\end{prop}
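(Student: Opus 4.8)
The statement follows by checking the single axiom that separates graphoids from semi-graphoids. Proposition~\ref{prop:prob_semi_graphoid} already gives that $\mc I(P)$ is a semi-graphoid, so properties (\textit{i}--\textit{v}) hold for free and it remains only to verify the intersection property (\textit{vi}): assuming $\istate{A}{B}{CD}{P}$ and $\istate{A}{D}{BC}{P}$ for disjoint $A,B,C,D$, one must show $\istate{A}{BD}{C}{P}$. I would work throughout with the density (factorization) form of conditional independence recalled just before Proposition~\ref{prop:prob_semi_graphoid}, and use positivity of $f$ to pass freely between products and quotients of marginal densities --- note that $f>0$ forces every marginal density to be strictly positive, since $\mu$ is a product measure.

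Concretely, the first hypothesis reads $f_{ABCD}\,f_{CD} = f_{ACD}\,f_{BCD}$ and the second reads $f_{ABCD}\,f_{BC} = f_{ABC}\,f_{BCD}$, both holding $\mu$-almost everywhere. Dividing each by $f_{BCD}$ (valid by positivity) and equating the two resulting expressions for $f_{ABCD}/f_{BCD}$ yields
\[
    \frac{f_{ACD}(x)}{f_{CD}(x)} = \frac{f_{ABC}(x)}{f_{BC}(x)} \qquad \text{for } \mu\text{-almost all } x.
\]
The left-hand side does not depend on the $B$-coordinates and the right-hand side does not depend on the $D$-coordinates, so there is a function $\phi$ of $(x_A, x_C)$ alone with $f_{ACD} = \phi \cdot f_{CD}$ and $f_{ABC} = \phi \cdot f_{BC}$ almost everywhere. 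Integrating the first identity over the $D$-coordinates and using $\int f_{ACD}\,d\mu_D = f_{AC}$ together with $\int f_{CD}\,d\mu_D = f_C$ identifies $\phi = f_{AC}/f_C$.

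Substituting back gives $f_{ACD} = (f_{AC}/f_C)\,f_{CD}$, and combining with the first hypothesis, $f_{ABCD} = f_{ACD}\,f_{BCD}/f_{CD} = (f_{AC}/f_C)\,f_{BCD}$; clearing denominators produces $f_{ABCD}\,f_C = f_{AC}\,f_{BCD}$, which is exactly the density characterization of $\istate{A}{BD}{C}{P}$. Together with Proposition~\ref{prop:prob_semi_graphoid}, this shows $\mc I(P)$ is a graphoid. (Alternatively one could stop after deriving $\istate{A}{D}{C}{P}$ and invoke contraction, but the direct computation is shorter.)

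The only delicate point is measure-theoretic bookkeeping: all the displayed identities hold merely $\mu$-almost everywhere, so the step that extracts the common function $\phi$ of $(x_A,x_C)$ from an almost-everywhere equality between a function of $(x_A,x_C,x_D)$ and a function of $(x_A,x_B,x_C)$ requires the product structure of $\mu$ and a Fubini / almost-every-section argument; one must likewise check that integrating these almost-everywhere identities over a block of coordinates is legitimate and that every division is performed only where the relevant density is positive. I expect this routine but slightly fussy verification to be the main thing to get right; the algebraic core is the short computation above.
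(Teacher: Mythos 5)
The paper offers no proof of this proposition: it is imported wholesale from \cite{pearl1988probabilistic}, and the appendix contains nothing for it. Your argument is the standard proof of the intersection axiom for strictly positive densities and is correct: reduce to axiom (\textit{vi}) via Proposition~\ref{prop:prob_semi_graphoid}, divide the two factorization identities by $f_{BCD}$, observe that $f_{ACD}/f_{CD} = f_{ABC}/f_{BC}$ forces both sides to equal a function $\phi(x_A,x_C)$, identify $\phi = f_{AC}/f_C$ by integrating out the $D$-coordinates, and substitute back to obtain $f_{ABCD}\,f_C = f_{AC}\,f_{BCD}$. You are also right that the only genuine care needed is the Fubini/almost-every-section argument used to extract $\phi$ from an almost-everywhere identity, and that positivity of $f$ (hence of all marginals) is what licenses every division; with that bookkeeping done, the proof is complete and matches the classical argument the citation points to.
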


\subsection{Graphical Conditional Independence}

Conditional independence in an ADMG is characterized by \textit{m-connecting} paths and the \textit{m-separation} criterion, which naturally extend \textit{d-connecting} paths and the \textit{d-separation} criterion from DAGs to ADMGs \citep{richardson2002ancestral, richardson2003markov}.

\begin{defn}[\textit{m-connecting path}]
If $\g = (V, E)$ is an ADMG and $\seq{a,b \mid C} \in \mc T(V)$ such that a path $\pi$  exists between $a$ and $b$ where:
\begin{enumerate}
    \item every non-collider on $\pi$ is not a member of $C$;
    \item every collider on $\pi$ is an ancestor of some $c \in C$;
\end{enumerate}
then $\pi$ is an \textit{m-connecting path} between $a$ and $b$ relative to $C$.
\end{defn}

\begin{defn}[\textit{m-separation}]
If $\g = (V, E)$ is an ADMG and $\seq{A,B \mid C} \in \mc T(V)$ such that no \textit{m}-connecting path exists between $a$ and $b$ relative to $C$ for all $a \in A$ and $b \in B$, then $A$ and $B$ are \textit{m-separated} by $C$.
\end{defn}

Let $\g = (V,E)$ be an ADMG and $\seq{A,B \mid C} \in \mc T(V)$. If $A$ and $B$ are \textit{m}-separated by $C$, then we say \seq{A,B \mid C} is represented in $\g$ and write \istate{A}{B}{C}{\g}. The independence model induced by $\g$ is denoted $\mc I(\g)$.  Moreover, the independence model induced by an ADMG is a compositional graphoid.

\begin{prop}[\cite{sadeghi2014markov}]
    \label{prop:admg_graphoid}
    If \g[] is an ADMG, then $\mc I(\g)$ is a compositional graphoid.
\end{prop}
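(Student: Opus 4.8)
The plan is to verify the seven compositional-graphoid conditions for $\mc I(\g)$ directly from the definition of \textit{m}-separation, dispatching all but one quickly and concentrating the effort on weak union. Throughout, recall that $\istate{A}{B}{C}{\g}$ says precisely that \emph{no} \textit{m}-connecting path relative to $C$ joins \emph{any} $a\in A$ to \emph{any} $b\in B$. From this reading, four conditions are immediate: triviality is vacuous, since $B=\es$ contributes no vertex; symmetry holds because reversing a path preserves collider/non-collider status and the requirements ``non-collider $\notin C$'' and ``collider $\in\an{\g}{C}$''; decomposition holds because an \textit{m}-connecting path from $a\in A$ to a vertex of $D$ is a fortiori one to a vertex of $BD=B\cup D$; and composition holds because the absence of \textit{m}-connecting paths from $A$ to $B$ and from $A$ to $D$ is the absence of such paths from $A$ to $B\cup D$.

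Contraction and intersection follow by a truncation argument that exploits the fact that \emph{enlarging} the separating set only strengthens the collider requirement, since $\an{\g}{C}\sube\an{\g}{CD}$, while leaving the non-collider requirement untouched. For contraction, an \textit{m}-connecting path from $a\in A$ to $w\in BD$ relative to $C$ either ends in $D$ (contradicting $\istate{A}{D}{C}{\g}$), meets $D$ at an earlier vertex (its prefix up to the first such vertex again contradicts $\istate{A}{D}{C}{\g}$), or avoids $D$ altogether, in which case it is \textit{m}-connecting relative to $CD$ and contradicts $\istate{A}{B}{CD}{\g}$. Intersection is the same move applied to the first vertex of $BD$ encountered along such a path: the prefix ending there is \textit{m}-connecting relative to $CD$ if that vertex lies in $B$, and relative to $BC$ if it lies in $D$, contradicting one of the two hypotheses.

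Weak union is the condition that needs genuine path surgery, because here the separating set \emph{shrinks} from $CD$ to $C$, so a collider that is an ancestor of $D$ but not of $C$ loses its license. Given an \textit{m}-connecting path $\pi$ from $a\in A$ to $b\in B$ relative to $CD$, let $w$ be the first collider along $\pi$ (starting from $a$) with $w\in\an{\g}{D}\sm\an{\g}{C}$; if there is none, $\pi$ is already \textit{m}-connecting relative to $C$ and witnesses $\dstate{A}{BD}{C}{\g}$. Otherwise fix a shortest directed path $\rho:w\ra\cdots\ra d$ with $d\in D$, and form a path from $a$ to $d$ by following $\pi$ from $a$ up to $w$ (or, should $\rho$ revisit that prefix, only up to the last vertex of $\rho$ on it) and then $\rho$. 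This path is \textit{m}-connecting relative to $C$: the non-colliders inherited from the prefix avoid $CD\supseteq C$; the colliders inherited from the prefix lie in $\an{\g}{C}$ by minimality of $w$; the junction vertex acquires an outgoing tail along $\rho$, hence becomes a non-collider, and it cannot lie in $C$ since $w\notin\an{\g}{C}$ forces every vertex of $\rho$ out of $C$; and the interior of $\rho$ consists of non-colliders, none of them in $C$ for the same reason. Thus $\dstate{A}{BD}{C}{\g}$, which is the contrapositive of weak union. I expect this splicing---in particular checking that the junction vertex behaves and that the vertices stay distinct---to be the main obstacle.

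Should the surgery prove cumbersome, a clean alternative is to reduce to DAGs. Every ADMG $\g$ is the latent projection of a DAG $\mc D$ obtained by replacing each bidirected edge $a\lra b$ of $\g$ with a fresh source $\ell_{ab}$ and edges $a\la\ell_{ab}\ra b$ and then marginalizing $\{\ell_{ab}\}$ via Algorithm~\ref{alg:lp}. A transfer lemma shows that \textit{m}-separation in $\g$ and $d$-separation in $\mc D$ agree on every disjoint triple over the original vertices \citep{verma1990equivalence,evans2018markov}, so $\mc I(\g)$ is exactly the restriction of $\mc I(\mc D)$ to $\mc T(V)$. Since $d$-separation in a DAG is a compositional graphoid \citep{pearl1988probabilistic}, and since every compositional-graphoid condition instantiated with subsets of $V$ has its conclusion again over $V$, this restriction is a compositional graphoid, and hence so is $\mc I(\g)$. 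On this route the transfer lemma---path bookkeeping that tracks collider status and ancestry as the $\ell$-vertices are deleted---is the step that carries the weight.
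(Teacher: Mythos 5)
The paper does not prove this proposition at all: it is imported wholesale from \cite{sadeghi2014markov}, where the result is established for a more general class of mixed graphs. Your proposal therefore supplies a self-contained argument where the paper only cites, and the argument you give is correct. Triviality, symmetry, decomposition, and composition are indeed immediate from the pairwise, path-based definition of \textit{m}-separation; your truncation arguments for contraction and intersection are sound (the key observations being that a prefix of a path preserves the collider status of its interior vertices and that $\an{\g}{C}\sube\an{\g}{CD}$); and the weak-union splicing is the standard one and you have identified exactly the three points where it could go wrong --- the junction vertex acquires a tail and so becomes a non-collider, no vertex of the directed path $\rho$ can lie in $C$ because $w\notin\an{\g}{C}$, and the colliders preceding $w$ lie in $\an{\g}{C}$ because $\an{\g}{CD}=\an{\g}{C}\cup\an{\g}{D}$ and $w$ was chosen first. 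The only detail left implicit is that the choice of junction vertex (the vertex of $\rho$ nearest $d$ that meets the prefix) guarantees the spliced sequence has distinct vertices; this is routine. Your fallback route via the canonical DAG and latent projection is also valid and is closer in spirit to how the literature usually obtains such closure properties for ADMGs, since $d$-separation in DAGs is already known to be a compositional graphoid and Algorithm \ref{alg:lp} preserves the induced independence model over the retained vertices; the cost of that route is the transfer lemma, which is itself a path-surgery argument of comparable weight. Either way, you have proved something the paper merely asserts.
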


\noindent Notably, multiple ADMGs can induce the same independence model. This phenomenon is called \textit{Markov equivalence}, and a complete set of Markov equivalent ADMGs is called a \textit{Markov equivalence class} (MEC).

\subsubsection{Graphical Markov Properties}

ADMGs and probability measures are linked by conditional independence using the \textit{global Markov property}.

\begin{defn}[\textit{global Markov property}]
    If $\g = (V, E)$ is an ADMG and $P$ is a probability measure where:
    \[
        \istate{A}{B}{C}{\g} \s[18] \Ra \s[18] \istate{A}{B}{C}{P} \s[18] \text{for all} \s[18] \seq{A,B \mid C} \in \mc T(V),
    \]
    or $\mc I(\g) \sube \mc I(P)$, then $P$ satisfies the \textit{global Markov property} for $\g$.
\end{defn}

\noindent However, many of the conditional independence statements characterized by the global Markov property are redundant---implied by the semi-graphoid axioms. Accordingly, the global Markov property can be simplified to a more succinct property using a consistent order and a few additional concepts. Let $\g = (V,E)$ be an ADMG. If $b \in V$, then:
\[
    \mb{\g}{b} \eq \co{\g}{b} \sm b \s[72] \cl{\g}{b} \eq \co{\g}{b}
\]
are the \textit{Markov boundary} and \textit{closure} of $b$, respectively. Together, they have the following property.

\begin{prop}
    \label{prop:ordered_markov_helper}
    Let $\g = (V, E)$ be an ADMG. If $b \in V$, then\textup{:}
    \[
        \istate{b}{V \sm \cl{\g}{b}}{\mb{\g}{b}}{\g}.
    \]
\end{prop}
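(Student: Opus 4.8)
The plan is to argue by contradiction straight from the definition of $m$-separation. Write $M := \mb{\g}{b} = \co{\g}{b} \sm b$, and suppose for contradiction that there are a vertex $d \in V \sm \co{\g}{b}$ and an $m$-connecting path $\pi = \seq{b = v_0, v_1, \dots, v_n = d}$ between $b$ and $d$ relative to $M$; the goal is to show this forces $d \in \co{\g}{b}$. The running tool is the reformulation, obtained by unwinding the pattern list defining $\co{\g}{\cdot}$, that a vertex $c$ belongs to $\co{\g}{b}$ exactly when $c = b$ or there is a path from $b$ to $c$ every non-endpoint of which is a collider; on any such path the first edge has an arrowhead at the second vertex, the last edge has an arrowhead at the penultimate vertex, and every strictly interior edge is bi-directed. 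In particular $\pa{\g}{b} \cup \ch{\g}{b} \cup \sib{\g}{b} \sube \co{\g}{b}$, so the case $n = 1$ is immediate: an edge between $b$ and $d$ would put $d \in \co{\g}{b}$.

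Assume then $n \geq 2$, so that $v_1, \dots, v_{n-1}$ are genuine non-endpoints of $\pi$. Everything reduces to the claim that \emph{each of $v_1, \dots, v_{n-1}$ is a collider on $\pi$}. Indeed, granting this, $\pi$ is a path from $b$ to $d$ all of whose non-endpoints are colliders, whence $d \in \co{\g}{b}$ by the reformulation, contradicting $d \in V \sm \co{\g}{b}$.

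I would prove the claim by strong induction on $i$ ($1 \le i \le n-1$). For the base case, the edge of $\pi$ joining $b$ and $v_1$ must carry an arrowhead at $v_1$: otherwise it is $v_1 \ra b$, so $v_1 \in \pa{\g}{b} \sube M$, and then $v_1$ is a non-collider on $\pi$ lying in the conditioning set, impossible on an $m$-connecting path. Hence $v_1 \in \ch{\g}{b} \cup \sib{\g}{b} \sube \co{\g}{b}$, so $v_1 \in M$ (as $v_1 \notin \{b,d\}$); being a non-endpoint of $\pi$ in the conditioning set, $v_1$ cannot be a non-collider, so it is a collider. For the step, suppose $v_1, \dots, v_i$ are colliders on $\pi$. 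Then the prefix $\seq{v_0, \dots, v_i}$ is a path with all non-endpoints ($v_1, \dots, v_{i-1}$) colliders, so $v_i \in \co{\g}{b}$, i.e. $v_i \in M$; and since $v_i$ is a collider on $\pi$, the $\pi$-edge joining $v_i$ and $v_{i+1}$ carries an arrowhead at $v_i$. That edge must also carry an arrowhead at $v_{i+1}$: if instead it were $v_{i+1} \ra v_i$, then $\seq{v_0, \dots, v_i, v_{i+1}}$ would again be a path with all non-endpoints colliders (the sole new non-endpoint $v_i$ has arrowheads on both of its path-edges), giving $v_{i+1} \in \co{\g}{b} \cap M$, yet $v_{i+1}$ would be a non-collider on $\pi$ in the conditioning set — impossible. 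So $v_i \lra v_{i+1}$, the path $\seq{v_0, \dots, v_{i+1}}$ has all non-endpoints colliders, $v_{i+1} \in \co{\g}{b} \cap M$, and (being a non-endpoint of $\pi$ in the conditioning set) $v_{i+1}$ is a collider on $\pi$. This closes the induction.

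The main obstacle is the edge-orientation bookkeeping in the step: one must track which end of each path-edge bears an arrowhead carefully enough to see that the extended prefixes $\seq{v_0, \dots, v_{i+1}}$ genuinely remain ``collider paths,'' and the short patterns in the definition of $\co{\g}{\cdot}$ must be matched alongside the long ones when invoking the reformulation. I would also note that the argument never uses the ancestor condition on colliders in the definition of $m$-connection — only that non-colliders avoid the conditioning set — so it in fact establishes a formally stronger separation statement than what is asked.
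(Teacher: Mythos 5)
Your proof is correct. It follows the same general strategy as the paper---a direct path-blocking argument from the definition of $m$-connection together with the characterization of $\co{\g}{b}$ as the vertices reachable from $b$ along paths all of whose non-endpoints are colliders---but it runs the traversal in the opposite direction, and that difference is substantive. The paper walks a putative connecting path from the far endpoint toward $b$, stops at the first vertex $c$ of $\cl{\g}{b}$ encountered, and asserts that if $c$ were a collider on the path then the vertex just before it would already lie in $\cl{\g}{b}$. That step is not justified as written: extending a collider-path witness for $c \in \co{\g}{b}$ backward through $c$ requires that witness to carry an arrowhead at $c$, which can fail when $c$ belongs to the closure only as a parent of the district, and one can construct small ADMGs in which the first closure vertex met from the far side is a collider while the path is actually blocked at a later non-collider. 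Your induction from the $b$-end avoids this issue entirely, because at each stage the prefix you have already built is itself the collider-path witness, so the arrowhead you need at $v_i$ is precisely the one supplied by the inductive hypothesis; this is the edge-orientation bookkeeping you flag as the main obstacle, and you handle it correctly. Two further points in your favor: since $m$-separation of sets is defined pairwise in this paper, your element-wise argument yields the set statement directly, without the appeal to the composition axiom that the paper makes; and your closing observation that the ancestor condition on colliders is never used is accurate, so your argument does establish a formally stronger blocking statement.
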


\noindent In other words, $\mb{\g}{b}$ renders $b$ independent of the vertices not in $\cl{\g}{b}$. Moreover, these sets are the smallest sets with this property and lead directly to the \textit{ordered local Markov property}.

\begin{defn}[\textit{ordered local Markov property}]
    Let $\g = (V, E)$ be an ADMG with consistent order $\leq$. If $P$ is a probability measure where:
    \[
        \istate{b}{A \sm \cl{\g[A]}{b}}{\mb{\g[A]}{b}}{P} \s[18] \text{for all} \s[18] A \in \mc A(\g) \; \text{and} \; b = \ceo{A},
    \]
    then $P$ satisfies the \textit{ordered local Markov property} for $\g$ and $\leq$.
\end{defn}

\noindent Auspiciously, the global and ordered local Markov properties are equivalent.

\begin{thm}[\cite{richardson2003markov}]
    \label{thm:mprops}
    If $\g = (V, E)$ is an ADMG with consistent order $\leq$ and $P$ is a probability measure, then the following are equivalent\textup{:}
    \begin{enumerate}
        \item $P$ satisfies the global Markov property for $\g$\textup{;}
        \item $P$ satisfies the ordered local Markov property for $\g$ and $\leq$.
    \end{enumerate}
\end{thm}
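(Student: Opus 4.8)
The plan is to prove the two implications separately, using throughout the standard fact that \emph{$m$-separation agrees on an ancestral set}: if $A \in \mc A(\g)$ and $X,Y,Z \sube A$ are pairwise disjoint, then \istate{X}{Y}{Z}{\g} if and only if \istate{X}{Y}{Z}{\g[A]}. This is a consequence of Proposition~\ref{prop:induce_margin} and the invariance of $m$-separation under latent projection, or of a direct argument: the ``if'' direction is immediate, and for ``only if'' a minimal-length $m$-connecting path of $\g$ between $X$ and $Y$ relative to $Z$ cannot leave $A$, since its colliders are ancestors of $Z \sube A$ and hence lie in $A$, while a maximal sub-path running outside $A$ would (because no vertex outside an ancestral set is an ancestor of one inside) have an arrowhead at each of its two ends yet consist only of non-colliders (outside-$A$ vertices are not ancestors of $Z$), which is impossible.

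\textbf{(1) $\Rightarrow$ (2)} is immediate: fix $A \in \mc A(\g)$ and $b = \ceo{A}$, apply Proposition~\ref{prop:ordered_markov_helper} to the ADMG $\g[A]$ to get \istate{b}{A \sm \cl{\g[A]}{b}}{\mb{\g[A]}{b}}{\g[A]}, use the fact above (all displayed sets lie in the ancestral set $A$) to rewrite it with respect to $\g$, and use the global Markov property to transfer it to $P$. This is exactly the defining statement of the ordered local Markov property indexed by $A$.

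\textbf{(2) $\Rightarrow$ (1)} I would prove by induction on $|V|$, the case $|V| = 1$ being trivial. In the inductive step, note first that every proper ancestral set $A \subsetneq V$ has $\mc A(\g[A]) = \{A' \in \mc A(\g) : A' \sube A\}$, that the induced subgraph of $\g[A]$ on such an $A'$ is $\g[A']$, and that $\ceo{A'}$ is the same under $\leq$ and under its restriction to $A$; consequently every defining statement of the ordered local Markov property for $\g[A]$ is one of the defining statements for $\g$, mentioning only variables in $A$, and hence holds in $P$, so in $P_A$. By the induction hypothesis $P_A$ is therefore global Markov for $\g[A]$ for every $A \subsetneq V$ in $\mc A(\g)$. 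Now take any \istate{X}{Y}{Z}{\g} and set $A = \an{\g}{XYZ}$; the fact above gives \istate{X}{Y}{Z}{\g[A]}, so if $A \subsetneq V$ we obtain \istate{X}{Y}{Z}{P_A}, hence \istate{X}{Y}{Z}{P}, and we are done.

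The one remaining case is $A = V$. Writing $b = \ceo{V}$, consistency of $\leq$ gives $\de{\g}{b} = \{b\}$, so $b \in \an{\g}{XYZ} = V$ forces $b \in XYZ$; also $V \sm b \in \mc A(\g)$, so $P_{V \sm b}$ is global Markov for $\g[V \sm b]$ by the previous paragraph, and the ordered local statement at $V$ reads $(\star)$: \istate{b}{V \sm \cl{\g}{b}}{\mb{\g}{b}}{P}. What remains is an ``extension lemma'': $(\star)$ together with the global Markov property of $P_{V \sm b}$ imply \istate{X}{Y}{Z}{P} for every \istate{X}{Y}{Z}{\g} with $b \in XYZ$. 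For $b \in X$, say $X = X'b$ (the case $b \in Y$ is symmetric), decomposition gives \istate{X'}{Y}{Z}{\g} with $\an{\g}{X'YZ} \subsetneq V$, so \istate{X'}{Y}{Z}{P} is already available, while weak union gives \istate{b}{Y}{ZX'}{\g}; passing the latter to $P$ and applying contraction yields \istate{X}{Y}{Z}{P}. The case $b \in Z = Z'b$ is analogous: after assembling \istate{X}{Yb}{Z'}{P}, weak union puts $b$ back into the conditioning set. So everything reduces to showing that, for \istate{b}{Y'}{W'}{\g} with $Y', W' \sube V \sm b$, the statement \istate{b}{Y'}{W'}{P} follows from $(\star)$ and the global Markov property of $P_{V \sm b}$ using only the semi-graphoid axioms (Proposition~\ref{prop:prob_semi_graphoid}), no positivity being available. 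This is the technical heart and the step I expect to be hardest: one must verify that the needed graphical independences hold and that each semi-graphoid manipulation---decomposition, weak union, contraction---legitimately applies, in effect re-running the classical ``local implies global'' derivation localized at the single sink $b$.
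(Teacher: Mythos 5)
The paper does not actually prove this theorem: it is imported wholesale from \cite{richardson2003markov}, so there is no internal proof to compare against. Judged on its own terms, your outline follows the standard strategy (induction on $|V|$, reduction to ancestral margins, isolation of the maximal vertex $b$), and the easy direction together with the preliminary reductions are essentially right. But the proof has a genuine gap at exactly the point where the theorem is hard, and you say so yourself: you never establish that $\istate{b}{Y'}{W'}{\g}$ implies $\istate{b}{Y'}{W'}{P}$ from the single local statement $(\star)$ at $A=V$ plus global Markovianity of the proper ancestral margins. This is not a routine verification. The conditioning set $W'$ in an arbitrary $m$-separation statement need not contain, nor be contained in, $\mb{\g}{b}$, so one cannot reach it from $(\star)$ by decomposition and weak union alone; the derivation in \cite{richardson2003markov} requires auxiliary graphical lemmas (in particular, that the Markov blanket of $b$ in $\g[{\an{\g}{bY'W'}}]$ separates appropriately, and a path-composition argument of the kind used in Lemma \ref{lem:routing}) before contraction can be applied. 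Declaring this ``the technical heart'' without carrying it out means the hard direction is asserted, not proved.

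A second, smaller gap sits in your case $b\in Z$. From $\istate{X}{Y}{Z'b}{\g}$ you propose to ``assemble'' $\istate{X}{Yb}{Z'}{P}$ and then apply weak union, but $\istate{X}{Yb}{Z'}{\g}$ (or its mirror $\istate{Xb}{Y}{Z'}{\g}$) does not follow from the semi-graphoid axioms applied to $\istate{X}{Y}{Z'b}{\g}$; establishing that one of these two graphical statements must hold uses the fact that $b$ is a sink (every path through $b$ has $b$ as a collider) together with a composition-of-connecting-paths argument. That lemma needs to be stated and proved, and the subsequent bookkeeping (which of $X$, $Y$ absorbs $b$) affects which independence you must already have in $P$. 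Until the extension lemma and this case analysis are written out, the proof of (2) $\Rightarrow$ (1) is an outline rather than an argument.
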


\subsubsection{Factorization}

The global Markov property for DAGs can be characterized by a well-known \textit{recursive factorization} that encodes the independence statements represented in a DAG algebraically. If $\g = (V, E)$ is a DAG and $P$ is a probability measure, then the following are equivalent:
\begin{enumerate}
    \item $f(x) = \prod_{v \in V} f_{v \mid \pa{\g}{v}}(x) \s[18]$ for $\mu$-almost all $x \in \mc X$\textup{;}
    \item $\istate{A}{B}{C}{\g} \s[18] \Ra \s[18] \istate{A}{B}{C}{P} \s[18]$ for all $\seq{A,B \mid C} \in \mc T(V)$.
\end{enumerate}

\noindent \cite{richardson2009factorization} developed a similar factorization for ADMGs, however, in order to state his factorization, we first must review a few additional concepts. At a high level, Richardson's factorization utilizes a function to partition the variables into sets called \textit{heads}. The heads are then used to construct one or more equations. In these equations marginal densities over ancestral sets are set equal to the products of conditional densities of heads conditioned on accompanying sets called \textit{tails}.


\begin{defn}[\textit{head}]
    Let $\g = (V, E)$ be an ADMG and $H \sube V$ $(H \neq \es)$. If $H = \ba{\g}{C}$ for some $C \in \mc C(\g)$, then $H$ is a \textit{head}. In other words, $H$ is the barren subset of a (non-empty) collider-connecting set. The set of all heads is defined:
    \[
        \mc H(\g) \eq \{ \ba{\g}{C} \; : \; C \in \mc C(\g) \} \sm \es
    \]
\end{defn}

\begin{defn}[\textit{tail}]
    Let $\g = (V, E)$ be an ADMG and $H \in \mc H(\g)$. If $A = \an{\g}{H}$, then the \textit{tail} of $H$ is defined:
    \[
        \ta{\g}{H} \eq \co{\g[A]}{H} \sm H. 
    \]
    In other words, the tail of $H$ contains the nontrivial collider-connecting vertices of $H$ connected by ancestors of $H$. 
\end{defn}

In order to define the function that will partition the variables into heads, we need to define a new partial order. For the remainder of this section, let $\g =(V, E)$ be an ADMG and $\leq$ be the partial order:
\[
    A \leq B \s[18] \Lra \s[18] A \sube \an{\g}{B} \s[18] \text{for all} \s[18] A,B \in \mc H(\g).
\]
The heads $\mc H(V)$ partition $V$ with the help of two functions: $\Phi_{\g} : \mc P(V) \ra \mc P(\mc H(\g))$ and $\Psi_{\g} : \mc P(V) \ra \mc P(V)$. For $S \sube V$:
\begin{align*}
    \Phi_{\g}(S) &\eq \ceo{\{ H \in \mc H(\g) \; : \; H \sube S \}}; \\
    \Psi_{\g}(S) &\eq S \sm \bigcup_{T \in \Phi_{\g}(S)} T.
\end{align*}
In other words, $\Phi_{\g}(S)$ returns the maximal heads contained in $S$ and $\Psi_{\g}(S)$ returns the variables in $S$ not contained within a set in $\Phi_{\g}(S)$. The partition function $[ \, \cdot \, ]_{\g} : \mc P(V) \ra \mc P(\mc H(\g))$ is defined recursively:
\[
    [\, S \,]_{\g} \eq 
    \begin{cases}
        \s[18] \es \s[36] & S = \es; \\
        \s[18] \Phi_{\g}(S) \cup [\Psi(S)]_{\g} \s[36] & S \neq \es.
    \end{cases}
\]
In other words, the partition function returns the maximal head sets contained in $S$ and is then reapplied to the remaining variables.

\begin{thm}[\cite{richardson2009factorization}]
    Let $\g = (V, E)$ be an ADMG and $P$ be a probability measure. The following are equivalent\textup{:}
    \begin{enumerate}
        \item $f_A(x) = \prod_{H \in [A]_{\g}} f_{H \mid T}(x) \s[18]$ for all $A \in \mc A(\g)$ and $\mu$-almost all $x \in \mc X$\textup{;}
        \item $\istate{A}{B}{C}{\g} \s[18] \Ra \s[18] \istate{A}{B}{C}{P} \s[18]$ for all $\seq{A,B \mid C} \in \mc T(V)$.
    \end{enumerate}
\end{thm}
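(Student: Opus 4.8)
The plan is to show that (1) is equivalent to the global Markov property by way of the ordered local Markov property, which Theorem~\ref{thm:mprops} identifies with it for the ambient consistent order $\leq$. Both directions are handled by induction on ancestral sets, peeling off the $\leq$-maximal vertex. I will use without proof the standard facts about Richardson's head/tail calculus \citep{richardson2009factorization}: for every $A \in \mc A(\g)$ the heads occurring in $[A]_{\g}$ partition $A$; $A \sm \ceo{A} \in \mc A(\g)$; and $\ta{\g}{H} \sube \an{\g}{H}$ for every head $H$ (the last being immediate from the definition of tail).

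\textbf{Direction (1) $\Rightarrow$ (2).} It suffices to establish the ordered local Markov statement $\istate{b}{A \sm \cl{\g[A]}{b}}{\mb{\g[A]}{b}}{P}$ for every $A \in \mc A(\g)$ with $b = \ceo{A}$. Let $H_0$ be the unique head of $[A]_{\g}$ containing $b$, with tail $T_0 = \ta{\g}{H_0}$. Since $\leq$ is consistent and $b$ is $\leq$-maximal in $A$, $b$ is not an ancestor of any other vertex of $A$, so $b$ belongs to no tail of any head occurring in $[A]_{\g}$ (nor to $T_0$, which is disjoint from $H_0$). Hence $x_b$ occurs only in the factor $f_{H_0 \mid T_0}$ of $f_A = \prod_{H \in [A]_{\g}} f_{H \mid \ta{\g}{H}}$; integrating $x_b$ out gives $f_{A \sm b} = f_{(H_0 \sm b) \mid T_0}\prod_{H \neq H_0} f_{H \mid \ta{\g}{H}}$, and taking the ratio yields $f_{b \mid A \sm b} = f_{b \mid (H_0 \sm b)\cup T_0}$ ($P$-a.s.). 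To finish I would check $(H_0 \sm b)\cup T_0 \sube \mb{\g[A]}{b} = \co{\g[A]}{b}\sm b$: writing $H_0 = \ba{\g}{C_0}$ with $C_0 \in \mc C(\g)$ gives $C_0 \sube \an{\g}{H_0} \sube A$, so the vertices of $H_0 \sm b$ are collider-connected to $b$ within $A$; and $T_0 = \co{\g[\an{\g}{H_0}]}{H_0}\sm H_0$ collects vertices collider-connected to every vertex of $H_0$---in particular to $b$---within $A$. Since $A \sm \cl{\g[A]}{b} = (A \sm b)\sm \mb{\g[A]}{b}$ is disjoint from $(H_0 \sm b)\cup T_0$, the conditional density $f_{b \mid A \sm b}$ is a function of the coordinates in $\mb{\g[A]}{b}$ alone, which (by a standard argument) gives the asserted independence. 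Ranging over all $A$ yields the ordered local Markov property, hence (2).

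\textbf{Direction (2) $\Rightarrow$ (1).} By (2) and Theorem~\ref{thm:mprops}, $P$ satisfies the ordered local Markov property. Induct on $|A|$ for $A \in \mc A(\g)$; the case $A = \es$ is the empty product. For $A \neq \es$ set $b = \ceo{A}$ and $A' = A \sm b \in \mc A(\g)$. The chain rule and the ordered local Markov statement at $(A,b)$ give $f_A = f_{b \mid \mb{\g[A]}{b}}\cdot f_{A'}$, and the inductive hypothesis gives $f_{A'} = \prod_{H \in [A']_{\g}} f_{H \mid \ta{\g}{H}}$. Everything then reduces to the identity
\[
    f_{b \mid \mb{\g[A]}{b}}\cdot \prod_{H \in [A']_{\g}} f_{H \mid \ta{\g}{H}} \;=\; \prod_{H \in [A]_{\g}} f_{H \mid \ta{\g}{H}},
\]
which I would prove by describing exactly how $[A]_{\g}$ arises from $[A']_{\g}$ when the $\leq$-maximal vertex $b$ is reinserted: isolating the head $H_0 \ni b$ of $[A]_{\g}$ and its tail $T_0$, determining which heads of $[A']_{\g}$ are absorbed into $H_0$ and which merely acquire enlarged tails, and invoking further $m$-separation consequences of (2) to absorb the gap between $T_0$ and $\mb{\g[A]}{b}$ and to rewrite the affected factors.

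The crux, and the main obstacle, is this last identity: matching $\prod_{H \in [A']_{\g}} f_{H\mid \ta{\g}{H}}$ corrected by $f_{b \mid \mb{\g[A]}{b}}$ against $\prod_{H \in [A]_{\g}} f_{H\mid \ta{\g}{H}}$ requires a precise combinatorial account of how the recursive head partition and its tails change under restoration of one $\leq$-maximal vertex, together with several auxiliary conditional independences beyond the single ordered-local statement used in the chain-rule step. A plausible alternative---likely no shorter---is to pass to a canonical DAG whose latent projection is $\g$, apply the recursive DAG factorization, and integrate out the latent variables; this runs into the same head/tail bookkeeping. By contrast, (1) $\Rightarrow$ (2) is essentially a one-vertex marginalization and should be comparatively routine.
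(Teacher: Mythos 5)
The paper itself does not prove this theorem; it is imported verbatim from \cite{richardson2009factorization}, so there is no internal proof to compare against. Judged on its own terms, your write-up establishes only half of the statement. The direction (1) $\Rightarrow$ (2) is essentially sound: because $b=\ceo{A}$ is $\leq$-maximal in the ancestral set $A$, it lies in no tail and in exactly one head $H_0$ of $[A]_{\g}$, so integrating out $x_b$ from the product and taking the ratio gives $f_{b\mid A\sm b}=f_{b\mid (H_0\sm b)\cup T_0}$; the containment $(H_0\sm b)\cup T_0\sube\mb{\g[A]}{b}$ follows as you indicate from $C_0\sube\an{\g}{H_0}\sube A$ with $H_0=\ba{\g}{C_0}$, and weak union plus Theorem~\ref{thm:mprops} then delivers the global Markov property. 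The remaining verifications there (that a conditional density depending on fewer coordinates yields the independence for a general, not necessarily positive, dominated measure) are routine.

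The genuine gap is in (2) $\Rightarrow$ (1). Your induction on $|A|$ correctly reduces the claim to the identity
\[
    f_{b \mid \mb{\g[A]}{b}}\cdot \prod_{H \in [A']_{\g}} f_{H \mid \ta{\g}{H}} \;=\; \prod_{H \in [A]_{\g}} f_{H \mid \ta{\g}{H}},
\]
but you do not prove it; you only describe what a proof would have to do (decide which heads of $[A']_{\g}$ are absorbed into $H_0$, which acquire enlarged tails, and which auxiliary $m$-separations close the gap between $T_0$ and $\mb{\g[A]}{b}$). This identity is not a bookkeeping formality: it fails for a generic probability measure and holds only by virtue of specific additional conditional independences implied by the ordered local Markov property, and the precise description of how $\Phi_{\g}$, $\Psi_{\g}$ and the tails transform when the $\leq$-maximal vertex is reinserted is exactly where the difficulty of Richardson's theorem is concentrated (in Richardson's own development it occupies a sequence of lemmas on heads, tails and districts). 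As written, the hard direction is a plan rather than a proof, so the proposal cannot be accepted as a complete argument for the equivalence.
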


\begin{figure}[H]
    \centering
    \begin{minipage}{.6\textwidth}
        \centering
        \begin{tabular}{l|cccc}
            H & $\{a\}$ & $\{b\}$ & $\{c\}$ & $\{d\}$ \\
            \hline
            T & $\es$ & $\{a\}$ & $\{b\}$ & $\{a,b,c\}$ \\
        \end{tabular}
    \end{minipage}%
    \begin{minipage}{.35\textwidth}
        \centering
        \includegraphics[page=4]{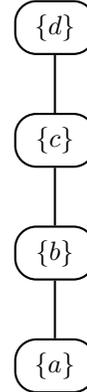}
    \end{minipage}
    \caption{The heads and tails of the ADMG in Figure \ref{fig:dag_admg} (\textit{ii}) and the Hasse diagram for the corresponding poset over the heads.}
    \label{fig:tr_faci}
\end{figure}

Figure \ref{fig:tr_faci} illustrates the heads and tails of the ADMG in Figure \ref{fig:dag_admg} (\textit{ii}) and the corresponding poset over the heads. The only ancestral set that admits a non-redundant factorization is, $\{a,b,c,d\}$ which partitions the variables into the following heads:
\begin{align*}
    [\, \{a,b,c,d\} \,]_{\g} &= \Phi_{\g}(\{a,b,c,d\}) \cup [\, \Psi(\{a,b,c,d\}) \,]_{\g} \\
    &= \{\{d\}\} \cup [\, \{a,b,c\} \,]_{\g} \\
    &= \{\{d\}\} \cup \Phi_{\g}(\{a,b,c\}) \cup [\, \Psi(\{a,b,c\}) \,]_{\g} \\
    &= \{\{d\}\} \cup \{\{c\}\} \cup [\, \{a,b\} \,]_{\g} \\
    &= \{\{d\}, \{c\}\} \cup \Phi_{\g}(\{a,b\}) \cup [\, \Psi(\{a,b\}) \,]_{\g} \\
    &= \{\{d\}, \{c\}\} \cup \{\{b\}\} \cup [\, \{a\} \,]_{\g} \\
    &= \{\{d\}, \{c\}, \{b\}\} \cup \Phi_{\g}(\{a\}) \cup [\, \Psi(\{a\}) \,]_{\g} \\
    &= \{\{d\}, \{c\}, \{b\}\} \cup \{\{a\}\} \cup [\, \es \,]_{\g} \\
    &= \{\{d\}, \{c\}, \{b\}, \{a\}\}.
\end{align*}

\noindent Accordingly, a probability measure satisfies the global Markov property with respect to $\g$ if and only if the following factorization holds for $\mu$-almost all $x \in \mc X$:
\[
    f_{abcd}(x) = f_{d \mid abc}(x) \, f_{c \mid b}(x) \, f_{b \mid a}(x) \, f_a(x).
\]

\begin{figure}[H]
    \centering
    \begin{minipage}{.6\textwidth}
        \centering
        \begin{tabular}{l|cccccc}
            H & $\{a\}$ & $\{b\}$ & $\{c\}$ & $\{d\}$ & $\{a,d\}$ & $\{b,c\}$ \\
            \hline
            T & $\es$ & $\es$ & $\{a\}$ & $\{b\}$  & $\{b\}$ & $\{a\}$ \\
        \end{tabular}
    \end{minipage}%
    \begin{minipage}{.35\textwidth}
        \centering
        \includegraphics[page=5]{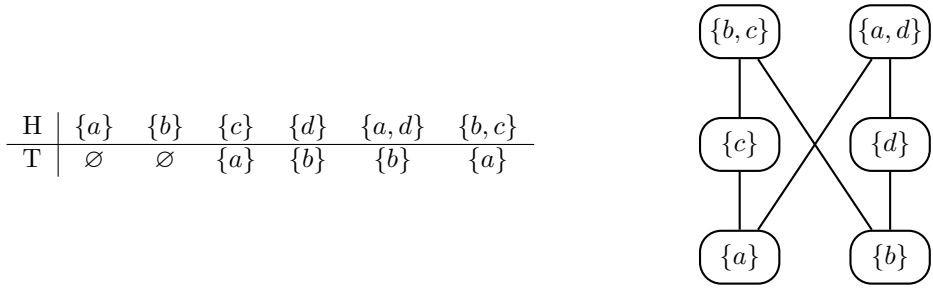}
    \end{minipage}
    \caption{The heads and tails of the ADMG in Figure \ref{fig:set_ex} and the Hasse diagram for the corresponding poset over the heads.}
    \label{fig:tr_facii}
\end{figure}

Figure \ref{fig:tr_facii} illustrates the heads and tails of the ADMG in Figure \ref{fig:set_ex} and the corresponding poset over the heads. The only ancestral sets that admit non-redundant factorizations are $\{a,b,c,d\}$ and $\{a,b\}$ which partition the variables into the following heads:
\begin{align*}
    [\, \{a,b,c,d\} \,]_{\g} &= \Phi_{\g}(\{a,b,c,d\}) \cup [\, \Psi(\{a,b,c,d\}) \,]_{\g} \\
    &= \{\{a,d\}, \{b,c\}\} \cup [\, \es \,]_{\g} \\
    &= \{\{a,d\}, \{b,c\}\} \textup{;}
\end{align*}
\begin{align*}
    [\, \{a,b\} \,]_{\g} &= \Phi_{\g}(\{a,b\}) \cup [\, \Psi(\{a,b\}) \,]_{\g} \\
    &= \{\{a\}, \{b\}\} \cup [\, \es \,]_{\g} \\
    &= \{\{a\}, \{b\}\}.
\end{align*}

\noindent Accordingly, a probability measure satisfies the global Markov property with respect to $\g$ if and only if the following factorizations hold for $\mu$-almost all $x \in \mc X$:
\begin{align*}
    f_{abcd}(x) &= f_{ad \mid b}(x) \, f_{bc \mid a}(x) \textup{;} \\
    f_{ab}(x) &= f_a(x) \, f_b(x).
\end{align*}

When the probability measure is positive, both Richardson's factorization and the \textit{m}-connecting factorization proposed in this paper are equivalent to the global Markov property and therefore equivalent to each other. The key difference is that the factorization characterized by Richardson may require multiple equations, while the \textit{m}-connecting factorization only requires a single equation, which makes it ideal for formulating a scoring criterion; see Section \ref{sec:consist_score}.

\subsection{Integer-valued Multisets}

\begin{defn}[\textit{integer-valued multiset}]
    An \textit{integer-valued multiset}, \textit{imset} for short, is an integer-valued function $u$: $\mc P(V) \ra \mbb Z$. Alternatively, $u$ is an element of $\mbb Z^{\mc P(V)}$.
\end{defn}

\noindent A simple example of an imset is the \textit{identifier}. The identifier for a set $A \sube V$ is defined:
\[
    \delta_A(S) \eq
    \begin{cases}
    \s[18] 1 & \s[36] S = A; \\ 
    \s[18] 0 & \s[36] S \neq A.
    \end{cases}
\]
More generally, the identifier for a set of sets $\ms A \sube \mc P(V)$ is defined: 
\[
    \delta_{\ms A}(S) \eq
    \begin{cases}
    \s[18] 1 & \s[36] S \in \ms A; \\ 
    \s[18] 0 & \s[36] S \not \in \ms A.
    \end{cases}
\]

\noindent The identifier imset may be used to construct \textit{elementary imsets} which are the fundamental building block of the imsetal representation of conditional independence.

\begin{defn}[\textit{elementary imset}]
If $\seq{a,b \mid C} \in \mc T(V)$, then the corresponding \textit{elementary imset} $u_{\seq{a,b \mid C}} : \mc P(V) \ra \mbb Z$ is an imset defined:
\[
    u_{\seq{a,b \mid C}} \eq \delta_{ab \s C} + \delta_{C} - \delta_{a \s C} - \delta_{b \s C}.
\]
\end{defn}

\begin{figure}[H]
    \[
        \begin{blockarray}{c @{\hspace{10mm}} rccccccccl}
            \begin{block}{c @{\hspace{10mm}} rccccccccl}
                S & & \es & \{a\} & \{b\} & \{c\} & \{a,b\} & \{a,c\} & \{b,c\} & \{a,b,c\} & \\
                \vspace{-5mm} \\
                u_{\seq{a,b \mid c}}(S) & \Big[ & 0 & 0 & 0 & 1 & 0 & \llap{$-$}1 & \llap{$-$}1 & 1 & \!\!\!\!\!\!\! \Big]^\top \\
            \end{block}
        \end{blockarray}
    \]
    \caption{An elementary imset: $u_{\seq{a,b \mid c}}$.}
    \label{fig:imset}
\end{figure}

\noindent Figure \ref{fig:imset} illustrates the elementary imset $u_{\seq{a,b \mid c}}$. The structure of an elementary imset imitates the definition of probabilistic conditional independence when the independent sets are singletons. This is not a coincidence, in fact, elementary imsets were designed to represent statements of conditional independence in this manner. Elementary imsets may be used to construct other types of imsets, such as \textit{semi-elementary imsets} and \textit{structural imsets}.

\begin{defn}[\textit{semi-elementary imset}]
    If $\seq{A,B \mid C} \in \mc T(V)$, then the corresponding \textit{semi-elementary imset} $u_{\seq{A,B \mid C}} : \mc P(V) \ra \mbb Z$ is defined:
    \[
        u_{\seq{A,B \mid C}} \eq \delta_{ABC} + \delta_C - \delta_{AC} - \delta_{BC}.
    \]
\end{defn}

\noindent Semi-elementary imsets may be constructed as linear combinations of elementary imsets with non-negative integer coefficients.

\begin{prop}[\cite{studeny2005probabilistic}]
    \label{prop:semi_elem}
    Every semi-elementary imset is a linear combination of elementary imsets with non-negative integer coefficients.
\end{prop}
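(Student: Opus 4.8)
The plan is to prove the statement by induction on $|A \cup B|$, the total number of vertices appearing in the independent sets of the semi-elementary imset $u_{\seq{A,B \mid C}}$. The base case is $|A \cup B| = 2$, i.e. both $A$ and $B$ are singletons; here $u_{\seq{A,B \mid C}}$ is already an elementary imset (with coefficient $1$), so there is nothing to prove. The substantive content is the inductive step, which is powered by a single algebraic identity: if $A = A' \cup \{a\}$ with $a \notin A'$ and $A' \neq \es$, then
\[
    u_{\seq{A,B \mid C}} \;=\; u_{\seq{A',B \mid C}} \;+\; u_{\seq{a,B \mid A'C}}.
\]
One checks this by expanding both sides in terms of identifiers: the left side is $\delta_{ABC} + \delta_C - \delta_{AC} - \delta_{BC}$, and the right side is $(\delta_{A'BC} + \delta_C - \delta_{A'C} - \delta_{BC}) + (\delta_{aA'BC} + \delta_{A'C} - \delta_{aA'C} - \delta_{A'BC})$; since $aA'BC = ABC$ and $aA'C = AC$, the $\delta_{A'BC}$ and $\delta_{A'C}$ terms cancel and the two sides agree. (A symmetric identity splits off a vertex of $B$ instead, should one prefer; one only needs one of the two.)

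Given the identity, the inductive step runs as follows. Suppose $|A \cup B| > 2$. Without loss of generality $|A| \geq 2$, so write $A = A' \cup \{a\}$ with $A' \neq \es$ and $a \notin A' \cup B \cup C$. Then $\seq{A',B \mid C}$ and $\seq{a,B \mid A'C}$ are both disjoint triples (disjointness is inherited from that of $\seq{A,B\mid C}$), and each has a strictly smaller value of the induction parameter: $|A' \cup B| < |A \cup B|$, and $|\{a\} \cup B| = 1 + |B| \le |A \cup B|$ with equality only when $|A|=1$, which is excluded — actually $|\{a\}\cup B| < |A\cup B|$ precisely because $|A'| \ge 1$. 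So by the induction hypothesis both $u_{\seq{A',B \mid C}}$ and $u_{\seq{a,B \mid A'C}}$ are non-negative integer combinations of elementary imsets, and by the displayed identity so is their sum $u_{\seq{A,B \mid C}}$. This closes the induction.

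The main obstacle — really the only place where care is needed — is bookkeeping the disjointness and strict-decrease conditions so the induction is well-founded: one must confirm that after peeling off $a$, the triple $\seq{a, B \mid A'C}$ genuinely lies in $\mc T(V)$ (i.e. $a$, $B$, and $A'C$ are pairwise disjoint), which follows since $a \notin A'$, $a \notin B$, $a \notin C$, $B \cap C = \es$, and $B \cap A' = \es$, all consequences of $\seq{A,B\mid C} \in \mc T(V)$. The algebraic identity itself is a routine expansion and the non-negativity of coefficients is automatic because we only ever add (never subtract) imsets along the recursion. I would also remark that the recursion terminates after exactly $|A| + |B| - 2$ splits, which makes the "non-negative integer coefficients" claim quantitatively transparent, though this count is not needed for the statement.
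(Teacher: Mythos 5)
Your proof is correct. The paper itself imports this proposition from \cite{studeny2005probabilistic} without reproving it, and your argument is exactly the standard one found there: the ``chain-rule'' identity $u_{\seq{A,B \mid C}} = u_{\seq{A',B \mid C}} + u_{\seq{a,B \mid A'C}}$ applied recursively, with non-negativity automatic because the decomposition only ever adds terms. The one small imprecision is in the base case: $|A \cup B| \leq 2$ does not force both $A$ and $B$ to be singletons, since $\seq{A,B \mid C} \in \mc T(V)$ permits $A = \es$ or $B = \es$, in which case $u_{\seq{A,B \mid C}}$ is the zero imset and the claim holds vacuously (the empty linear combination); adding one sentence to dispose of these degenerate triples makes the induction airtight.
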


\noindent Similarly, structural imsets may be constructed as linear combinations of elementary imsets with non-negative rational coefficients.

\begin{defn}[\textit{structural imset}]
    If $u : \mc P(V) \ra \mbb Z$ is an imset where:
    \[
        u = \sum_{\seq{a,b \mid C} \in \mc T(V)} k_{a,b \mid C} \, u_{\seq{a,b \mid C}} \s[18] \text{for some} \; k_{a,b \mid C} \in \mbb Q^+,
    \]
    then $u$ is a \textit{structural imset} over $V$. Denote the set of all structural imsets over $V$ by $\mc S(V)$.
\end{defn}

Conditional independence in structural imsets may be characterized by subtraction of semi-elementary imsets. If there exists $k \in \mbb Q^+$ such that $u - k \, u_{\seq{A,B \mid C}}$ is a structural imset, then we say \seq{A,B \mid C} is represented in $u$ and write \istate{A}{B}{C}{u}. The independence model induced by $u$ is denoted $\mc I(u)$. Moreover, the independence models induced by structural imsets are semi-graphoids.

\begin{prop}[\cite{studeny2005probabilistic}]
    \label{prop:struct_semigraphiod}
    Let $u$ be an imset. If $u \in \mc S(V)$, then $\mc I(u)$ is a semi-graphoid.
\end{prop}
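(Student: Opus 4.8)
The plan is to verify the five semi\mbox{-}graphoid axioms for $\mc I(u)$ when $u \in \mc S(V)$. Triviality and symmetry are immediate structural facts: expanding identifiers gives $u_{\seq{A,\es\mid C}} = \delta_{AC} + \delta_C - \delta_{AC} - \delta_C = 0$, so $u - 1 \cdot u_{\seq{A,\es\mid C}} = u \in \mc S(V)$ and hence \istate{A}{\es}{C}{u}; and $u_{\seq{A,B\mid C}} = u_{\seq{B,A\mid C}}$ by commutativity of union, so \istate{A}{B}{C}{u} and \istate{B}{A}{C}{u} are literally the same statement. The content lies in the remaining three axioms, and the unifying ingredient is a pair of "chain\mbox{-}rule" identities for semi\mbox{-}elementary imsets, verified by expanding the identifiers: for pairwise disjoint $A,B,C,D$, $u_{\seq{A,BD\mid C}} = u_{\seq{A,D\mid C}} + u_{\seq{A,B\mid CD}} = u_{\seq{A,B\mid C}} + u_{\seq{A,D\mid BC}}$.

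For decomposition and weak union I would argue additively. If \istate{A}{BD}{C}{u}, say $u - k\,u_{\seq{A,BD\mid C}} \in \mc S(V)$, then $u - k\,u_{\seq{A,D\mid C}} = (u - k\,u_{\seq{A,BD\mid C}}) + k\,u_{\seq{A,B\mid CD}}$; the first summand is structural and the second, a non\mbox{-}negative multiple of a semi\mbox{-}elementary imset, is by Proposition \ref{prop:semi_elem} a non\mbox{-}negative combination of elementary imsets, so the sum lies in $\mc S(V)$ (which is closed under addition), giving \istate{A}{D}{C}{u}. Swapping the roles of $u_{\seq{A,D\mid C}}$ and $u_{\seq{A,B\mid CD}}$ gives \istate{A}{B}{CD}{u} (weak union), and using the second identity gives \istate{A}{B}{C}{u} (the other decomposition instance).

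Contraction is the crux, and it is precisely the axiom for which this additive argument fails — it would amount to subtracting an elementary imset from a structural imset, which need not remain structural. Here I would pass to the duality between the rational cone of structural imsets and the cone of supermodular set functions $m : \mc P(V) \ra \mbb R$ \citep{studeny2005probabilistic}. Writing $\langle v,m\rangle \eq \sum_{S\sube V} v(S)\,m(S)$, note that $\langle u_{\seq{A,B\mid C}}, m\rangle = m(ABC) + m(C) - m(AC) - m(BC) \geq 0$ is itself an instance of supermodularity (for the sets $AC$ and $BC$); consequently \istate{A}{B}{C}{u} holds iff every supermodular $m$ with $\langle u,m\rangle = 0$ also satisfies $\langle u_{\seq{A,B\mid C}}, m\rangle = 0$. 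Equivalently, $\mc I(u) = \bigcap\{\mc I(m) : m \text{ supermodular}, \langle u,m\rangle = 0\}$, where $\mc I(m) \eq \{\seq{A,B\mid C}\in\mc T(V) : m(ABC)+m(C)=m(AC)+m(BC)\}$. Since an arbitrary intersection of semi\mbox{-}graphoids is a semi\mbox{-}graphoid (triviality is inherited, and each implicational axiom, having the form "a conjunction of membership statements implies a membership statement", is preserved under intersection), it suffices to check the axioms for a single supermodular $m$. For such $m$, triviality and symmetry are again immediate, and pairing each chain\mbox{-}rule identity with $m$ exhibits the non\mbox{-}negative number $\langle u_{\seq{A,BD\mid C}}, m\rangle$ as a sum of two non\mbox{-}negative numbers, so it vanishes iff both do: $\seq{A,BD\mid C}\in\mc I(m)$ iff $\seq{A,D\mid C}\in\mc I(m)$ and $\seq{A,B\mid CD}\in\mc I(m)$ (and likewise with $\seq{A,B\mid C}$ and $\seq{A,D\mid BC}$). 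Reading these equivalences left to right yields decomposition and weak union; reading them right to left yields contraction.

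I expect the main obstacle to be the duality statement itself, in particular the sufficiency direction — that a uniform multiplier exists whenever the tightness condition holds — which needs a standard polyhedrality/compactness argument on the supermodular cone together with a check that the multiplier can be chosen in $\mbb Q^+$ as the definition of $\mc I(u)$ demands; this last point uses that any witness in a non\mbox{-}trivial independence statement is forced to be a positive integer, because $u$ and $u_{\seq{A,B\mid C}}$ are integer\mbox{-}valued and $u_{\seq{A,B\mid C}}(ABC) = 1$ whenever $A,B \neq \es$. These rationality and integrality considerations are routine once noticed; the real work is done by the additive chain\mbox{-}rule identity and, for contraction, by the passage through supermodular functions.
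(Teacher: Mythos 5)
The paper offers no proof of this proposition---it is imported directly from \cite{studeny2005probabilistic}---so there is nothing in the text to compare against; what you have written is, in substance, a reconstruction of Studen{\'y}'s own argument, and it is correct as an argument modulo the one theorem you defer. Your verification of triviality and symmetry, the chain-rule identity $u_{\seq{A,BD\mid C}} = u_{\seq{A,D\mid C}} + u_{\seq{A,B\mid CD}} = u_{\seq{A,B\mid C}} + u_{\seq{A,D\mid BC}}$, and the additive derivations of decomposition and weak union are all sound and self-contained. You are also right that contraction is the crux and that the supermodular-cone duality, under which $\mc I(u)$ is an intersection of the semi-graphoids $\mc I(m)$ and contraction becomes the right-to-left reading of the same identity, is the standard escape. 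Two remarks. First, be aware that the ``sufficiency direction'' you flag---that tightness of $\langle u_{\seq{A,B\mid C}},m\rangle$ against every supermodular $m$ annihilating $u$ yields an actual multiplier $k$ with $u - k\,u_{\seq{A,B\mid C}} \in \mc S(V)$---is not a routine side condition but essentially the entire content of the cited result (the face/relative-interior argument on the rational polyhedral cone generated by the elementary imsets, plus the integrality of the witness); as written, your treatment of contraction is a correct reduction to that theorem rather than a proof of it. Second, there is a strictly more elementary route worth knowing: Studen{\'y}'s own definition puts the multiplier on $u$ rather than on the semi-elementary imset, declaring $\seq{A,B\mid C}$ represented in $u$ when $k\,u - u_{\seq{A,B\mid C}} \in \mc S(V)$ for some $k \in \mbb N$. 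Under that formulation contraction follows by pure addition, since $(k_1 u - u_{\seq{A,B\mid CD}}) + (k_2 u - u_{\seq{A,D\mid C}}) = (k_1+k_2)\,u - u_{\seq{A,BD\mid C}}$ and $\mc S(V)$ is closed under addition; the cone duality is then needed only to reconcile that definition with the one stated in this paper. Either path is legitimate, but if the goal is a proof that does not lean on the deepest theorem of Studen{\'y}'s Chapter 5, the second is the one to take.
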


\cite{studeny2005probabilistic} linked structural imsets and probability measures by defining a factorization that uses the density of a probability measure to algebraically imply the conditional independence statements represented in a structural imset. Studen{\'y} defined his factorization for probability measures with finite multiinformation; see \citep{studeny2005probabilistic}. Below, we restate Studen{\'y}'s factorization for positive measures.

\begin{thm}
    \label{thm:imfact}
    
    Let $u$ be an imset and $P$ be a positive measure. If $u \in \mc S(V)$, then the following are equivalent\textup{:}
    \begin{enumerate}
        \item $\prod \limits_{S \in \mc P(V)} f_{S}(x)^{u(S)} = 1 \s[18]$ for $\mu$-almost all $x \in \mc X$\textup{;}
        \item $\istate{A}{B}{C}{u} \s[18] \Ra \s[18] \istate{A}{B}{C}{P} \s[18]$ for all $\seq{A,B \mid C} \in \mc T(V)$.
    \end{enumerate}
\end{thm}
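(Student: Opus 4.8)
The plan is to prove Theorem~\ref{thm:imfact} by reducing it to properties of the multiinformation functional and its decomposition along elementary imsets. First I would introduce, for a positive measure $P$ with density $f$, the functional $m_P : \mc P(V) \ra \mbb R$ given by the relative entropy of $f_S$ against the product of its univariate marginals (the \emph{multiinformation} of $X_S$); since $P$ is positive and $V$ is finite, $m_P$ is finite and nonnegative on every $S \sube V$. The key classical identity is that for any disjoint triple $\seq{A,B \mid C} \in \mc T(V)$,
\[
    \sum_{S \in \mc P(V)} u_{\seq{A,B \mid C}}(S)\, m_P(S) \;=\; m_P(ABC) + m_P(C) - m_P(AC) - m_P(BC) \;\geq\; 0,
\]
with equality if and only if \istate{A}{B}{C}{P}; this is just the nonnegativity of conditional mutual information $I(X_A; X_B \mid X_C)$ and its vanishing characterization. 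Equivalently, writing $\langle u, m_P\rangle$ for the pairing, we have $\langle u_{\seq{A,B\mid C}}, m_P\rangle = I(X_A;X_B\mid X_C) \ge 0$.

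Next I would connect the product-of-densities condition (\textit{i}) to this pairing. Taking logarithms (valid because $f$ is strictly positive, so each $f_S > 0$ $\mu$-a.e.), condition (\textit{i}) says $\sum_{S} u(S) \log f_S(x) = 0$ for $\mu$-almost all $x$. Taking expectations under $P$ and rearranging the resulting sum of (differential) entropies into multiinformation terms — using that entropies telescope into $m_P$ via $m_P(S) = \sum_{a \in S} H(X_a) - H(X_S)$ — one obtains that condition (\textit{i}) implies $\langle u, m_P \rangle = 0$. Writing $u = \sum_{\seq{a,b\mid C}} k_{a,b\mid C}\, u_{\seq{a,b\mid C}}$ with $k_{a,b\mid C} \in \mbb Q^+$ (Definition of structural imset), linearity gives $\sum_{\seq{a,b\mid C}} k_{a,b\mid C}\, I(X_a; X_b \mid X_C) = 0$, and since every term is nonnegative, each term with $k_{a,b\mid C} > 0$ must vanish, i.e.\ \istate{a}{b}{C}{P}. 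Now for an arbitrary $\seq{A,B\mid C}$ with \istate{A}{B}{C}{u}: by definition there is $k \in \mbb Q^+$ with $u - k\,u_{\seq{A,B\mid C}}$ structural, hence (applying the above to a common-denominator integer combination, and invoking Proposition~\ref{prop:semi_elem} to expand $u_{\seq{A,B\mid C}}$ itself into elementary imsets) $k\, I(X_A; X_B\mid X_C) \le \langle u, m_P\rangle = 0$, so $I(X_A;X_B\mid X_C) = 0$ and \istate{A}{B}{C}{P}. This gives (\textit{i}) $\Ra$ (\textit{ii}).

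For the converse (\textit{ii}) $\Ra$ (\textit{i}), I would argue as follows. Expand $u = \sum_{\seq{a,b\mid C}} k_{a,b\mid C}\, u_{\seq{a,b\mid C}}$. For each elementary imset appearing with positive coefficient, I claim \istate{a}{b}{C}{u}: indeed subtracting $k_{a,b\mid C}\, u_{\seq{a,b\mid C}}$ from $u$ leaves a non-negative rational combination of elementary imsets, hence a structural imset. By (\textit{ii}), each such triple then satisfies \istate{a}{b}{C}{P}, i.e.\ $f_{abC}\,f_{C} = f_{aC}\,f_{bC}$ $\mu$-a.e., which on taking logs reads $\langle u_{\seq{a,b\mid C}}, \log f\rangle = 0$ $\mu$-a.e. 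Multiplying by $k_{a,b\mid C}$ and summing over all elementary imsets in the expansion yields $\sum_S u(S)\log f_S(x) = 0$ $\mu$-a.e., which upon exponentiation is exactly condition (\textit{i}).

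The main obstacle — and the place requiring the most care — is the rationality/common-denominator bookkeeping together with the fact that the representation $u = \sum k_{a,b\mid C}\, u_{\seq{a,b\mid C}}$ is not unique: I must make sure that whatever independence conclusions I draw about $P$ depend only on $u$ and not on a choice of representation, and that the "subtract $k\, u_{\seq{A,B\mid C}}$ and it stays structural" manipulation is compatible with clearing denominators (scaling $u$ by a positive integer $N$ does not change $\mc I(u)$, $\mc I(P)$, or the validity of condition~(\textit{i}), since $f_S^{N u(S)}$ has product $1$ iff $f_S^{u(S)}$ does). A secondary technical point is justifying the passage from the pointwise (a.e.) log-identity to the expectation identity $\langle u, m_P\rangle = 0$ and back; for positive measures on a finite index set with $\sigma$-finite product dominating measure this is standard (all the relevant entropies are finite), but it should be stated, perhaps by citing the finite-multiinformation version in \citep{studeny2005probabilistic} and noting positive measures fall within its scope.
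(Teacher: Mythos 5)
Your proposal is correct and follows essentially the same route as the paper's proof: expand the structural imset into elementary imsets, identify $\sum_{S} u(S)\log f_S(x)$ with a non-negative rational combination of log-ratios whose expectations are conditional mutual informations, and argue one direction from non-negativity of those expectations and the other from pointwise vanishing of each log-ratio. The only substantive difference is that you pass from elementary to general triples via the direct inequality $k\, I(X_A;X_B\mid X_C) \le \langle u, m_P\rangle = 0$, whereas the paper invokes Proposition~\ref{prop:semi_elem} and repeated application of the semi-graphoid axioms to $\mc I(P)$; both work, and you also (rightly) flag the integrability point in taking expectations, which the paper's own proof leaves implicit.
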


Developments to the imsetal representation of conditional independence have been largely theoretical, except for in restricted scenarios, such as the restriction to DAG independence models---indeed, several methods use imsets to learn DAGs \citep{studeny2008mathematical, studeny2017towards, studeny2014learning}. Characteristic imsets are a recent innovation in the space of imsetal independence models introduced by \citep{hemmecke2012characteristic} for representing DAG independence models. In particular, for a DAG $\g = (V,E)$, the M{\"o}bius inversion of a characteristic imset $c_{\g} : \mc P(V) \ra \{ 0, 1 \}$ yields a structural imset that represents the same conditional independence statements as $\g$:
\[
u = \mu_{\ms P} \s (1 - c_{\g}) \s[18] \Ra \s[18] u \in \mc S(V) \; \text{and} \; \mc I(u) = \mc I(\g).
\]
\noindent Furthermore, the values of a characteristic imset may be read directly off of a DAG.
\begin{prop}
    \label{prop:char_imset}
    Let $\g[] = (V,E)$ be a DAG and $S \sube V$ $(|S| \geq 2)$. If $A = \an{\g}{S}$ and $B = \ba{\g}{S}$, then\textup{:}
    \begin{enumerate}
        \item $c_{\g}(S) \in \{ 0,1 \}$\textup{;}
        \item $c_{\g}(S) = 1 \s[18] \Lra \s[18] \text{there exists} \; a \in S \; \text{such that} \; S \sm a \sube \pa{\g}{a}$\textup{;}
        \item $c_{\g}(S) = 1 \s[18] \Lra \s[18] S \sube \co{\g[A]}{B}$. 
    \end{enumerate}
\end{prop}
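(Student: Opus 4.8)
The plan is to establish the three claims in order, leaning on the characterization of the characteristic imset via its Möbius inversion and on the graphical facts about DAGs collected earlier. For claim (\textit{i}), I would recall that $c_{\g}$ is defined precisely so that $u = \mu_{\ms P}(1 - c_{\g})$ is a structural imset with $\mc I(u) = \mc I(\g)$; since $1 - c_{\g}$ is the cumulative count (via the zeta transform) of a $\{0,1\}$-valued quantity — concretely, $1 - c_{\g}(S)$ counts, over $T \sube S$, whether $T$ is a ``parent set'' configuration — it follows from Proposition \ref{prop:mob} that $c_{\g}$ itself takes only values in $\{0,1\}$. More cleanly, I would simply cite \citep{hemmecke2012characteristic}, where $c_{\g}(S)$ is defined directly as an indicator, so (\textit{i}) is immediate from the definition; the Möbius-inversion remark is then just the bridge to the structural imset.

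For claim (\textit{ii}), I would argue both directions. ($\Leftarrow$) If some $a \in S$ has $S \sm a \sube \pa{\g}{a}$, then in the induced subgraph $\g[S]$ every other vertex of $S$ points into $a$, so $a$ is a sink with all of $S \sm a$ as parents; this is exactly the combinatorial pattern that forces the corresponding coordinate of the characteristic imset to $1$ in the Hemmecke--Lindner--Studený characterization. ($\Rightarrow$) Conversely, unwinding $c_{\g}(S) = 1$ through $c_{\g} = 1 - \zeta_{\ms P}^{\top}(\text{something})$ — or directly through the defining indicator — yields that such a sink vertex $a$ exists. I expect this step to be essentially a restatement of the known characteristic-imset formula, so the work is in matching the paper's notation ($\pa{\g}{a}$, induced subgraphs) to that formula rather than in any genuinely new argument.

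Claim (\textit{iii}) is the one I expect to be the main obstacle, since it recasts (\textit{ii}) in terms of collider-connection in $\g[A]$ with $A = \an{\g}{S}$ and $B = \ba{\g}{S}$, and this is where the ADMG machinery of the paper (the $\co{\cdot}{\cdot}$ operator, barren subsets, ancestral sets) has to be shown to specialize correctly to DAGs. The plan is: first observe that in a DAG the only ``collider-connecting'' paths are of the restricted forms $b \ra \la a$ or $b \ra a$ or $b \la a$ (no bidirected edges), so $\co{\g[A]}{B}$ collects vertices reachable from all of $B$ via such short collider paths within $A$; then show $S \sube \co{\g[A]}{B}$ is equivalent to the existence of a common child-or-ancestor pattern, which in a DAG (using $B = \ba{\g}{S}$, the barren/sink vertices of $S$, and $A = \an{\g}{S}$) collapses to: there is $a \in S$ with every other element of $S$ a parent of $a$. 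The bridge between (\textit{ii}) and (\textit{iii}) is thus: the sink vertex $a$ from (\textit{ii}) lies in $B$ (it is barren in $S$), and $S \sm a \sube \pa{\g}{a} \sube \an{\g}{S} = A$, giving collider-connection of all of $S$ through $a$; conversely collider-connection of $S$ within $A$ forces, by acyclicity, a single common sink that must be the unique barren element receiving all the others as parents. The delicate points will be (a) checking that restricting to $\g[A]$ rather than $\g$ does not lose or add connections — here I would invoke that $A$ is ancestral together with Proposition \ref{prop:induce_margin} — and (b) verifying the quantifier structure, namely that ``$\co{\g[A]}{B}$'' uses intersection over $B$ so that \emph{all} barren vertices are simultaneously collider-connected to everything in $S$, which in the DAG case pins down $B$ to a single vertex when $c_{\g}(S) = 1$. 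Once those are nailed down, (\textit{iii}) follows by chaining the equivalence in (\textit{ii}) with these two observations.
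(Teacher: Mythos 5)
Your proposal is correct and follows essentially the same route as the paper: properties (\textit{i}) and (\textit{ii}) are delegated to \cite{hemmecke2012characteristic}, and (\textit{iii}) is obtained by showing the condition $S \sube \co{\g[A]}{B}$ equivalent to the sink condition of (\textit{ii}), using that the distinguished vertex $a$ is the unique barren element of $S$ (so $B=\{a\}$) in one direction and, in the other, that the order-maximal vertex of $S$ is also maximal in $A=\an{\g}{S}$, so collider-connection to it in the DAG $\g[A]$ forces every other element of $S$ to be one of its parents. Your write-up is in fact more explicit than the paper's (very terse) argument about why $B$ collapses to a singleton and why the restriction to $\g[A]$ is harmless.
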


\noindent Properties (i) and (ii) were shown by \cite{hemmecke2012characteristic}. We add property (iii) which is equivalent to property (ii) in order to compare the characteristic imset to the \textit{m}-connecting imset; see Section \ref{sec:m-imset}. 

\section{The \textit{m}-connecting Imset}
\label{sec:m-imset}

In this section, we introduce the \textit{m-connecting imset} in order to construct a factorization for ADMGs. 

\begin{defn}[\textit{m-connecting imset}]
    If $\g = (V, E)$ is an ADMG, then the corresponding \textit{m}-connecting imset $m_{\g} : \mc P(V) \ra \{ 0,1 \}$ is defined:
    \[
        m_{\g}(S) \eq 
        \begin{cases}
            \s[18] 1 & \s[36] \{ \seq{a,b \mid C} \in \mc I(\g) \; : \; S \sm C = ab \} = \es; \\
            \s[18] 0 & \s[36] \{ \seq{a,b \mid C} \in \mc I(\g) \; : \; S \sm C = ab \} \neq \es.
        \end{cases}
    \]
    In other words, $m_{\g}(S) = 1$ if and only if there exists an \textit{m}-connecting path between $a$ and $b$ relative to $C$ for all $\seq{a,b \mid C} \in \mc T(V)$ where $S \sm C = ab$.
\end{defn}

\noindent The m-connecting imset naturally generalizes the characteristic imset in the following sense:

\begin{prop}
    \label{prop:mcimset_defn}
    Let $\g[] = (V,E)$ be an ADMG and $S \sube V$. If $A = \an{\g}{S}$ and $B = \ba{\g}{S}$, then\textup{:}
    \begin{enumerate}
        \item $m_{\g}(S) \in \{ 0,1 \}$\textup{;}
        \item $m_{\g}(S) = 1 \s[18] \Lra \s[18] S \sube \co{\g[A]}{B}$. 
    \end{enumerate}
\end{prop}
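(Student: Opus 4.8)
The plan is to prove the two claims in turn, with (i) being essentially immediate from the definition and (ii) being the substantive part. For (i), the definition of $m_{\g}$ already takes values in $\{0,1\}$ by construction (it is a case split returning $1$ or $0$), so there is nothing to prove beyond observing that the case split is exhaustive; I would dispatch this in one sentence.

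For (ii), I would unfold the definition: $m_{\g}(S) = 1$ iff for every disjoint triple $\seq{a,b \mid C}$ with $S \sm C = ab$ there is an $m$-connecting path between $a$ and $b$ relative to $C$. The key reduction is to notice that the relevant triples are parametrized by choosing the ordered pair $\{a,b\} \sube S$ and setting $C = S \sm \{a,b\}$; so $m_{\g}(S) = 1$ iff for all distinct $a,b \in S$ there is a path from $a$ to $b$ on which every non-collider avoids $S \sm \{a,b\}$ and every collider is an ancestor of some $c \in S \sm \{a,b\}$. The goal is to show this holds precisely when $S \sube \co{\g[A]}{B}$, where $A = \an{\g}{S}$ and $B = \ba{\g}{S}$.

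The two directions I would argue as follows. For the ``$\Leftarrow$'' direction, suppose $S \sube \co{\g[A]}{B}$. By the conjunctive definition of $\co{\g[A]}{\cdot}$ applied to the set $B$, the elements of $B$ are simultaneously collider-connected by a single walk using only colliders within $A$; moreover every element of $S \sm B$ is an ancestor (within $\g$, hence within $A$) of some element of $B$, since $B = \ba{\g}{S}$. Given any distinct $a, b \in S$, I would build the required $m$-connecting path by first routing from $a$ up an ancestral directed path to its barren ``representative'' $a' \in B$ (a non-collider segment — all intermediate vertices are non-colliders with tails, so they need not be avoided, though one must check they lie outside $C = S \sm \{a,b\}$ or detour around them), then traversing the collider-connecting sub-walk among the barren vertices from $a'$ to $b'$ (every intermediate vertex is a collider, and each such collider, being in $B \sube A = \an{\g}{S}$, is an ancestor of some member of $S$, and one checks it can be taken to be an ancestor of a member of $C$), then descending to $b$. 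The delicate bookkeeping is ensuring the concatenation can be pruned to an honest path (distinct vertices) while preserving the collider/non-collider status at the splice points, and that the colliders have ancestors specifically in $C$ rather than merely in $S$ — this is where the hypothesis $a, b \in S$ and the structure $C = S \sm ab$ is used. For the ``$\Rightarrow$'' direction, I would argue the contrapositive: if $S \not\sube \co{\g[A]}{B}$, exhibit a specific pair $a, b \in S$ for which no $m$-connecting path relative to $C = S \sm ab$ exists. Here I expect to lean on the characterization of $\co{\g[A]}{B}$ failing — either some vertex of $S$ is not collider-connected to $B$ within $A$, or $B$ itself is not contiguously collider-connected — and to translate that combinatorial obstruction into the statement that any candidate path between the relevant two vertices must use a non-collider in $C$ or a collider with no descendant in $C$.

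The main obstacle will be the ``$\Leftarrow$'' direction: turning the static set-theoretic condition $S \sube \co{\g[A]}{B}$ into an actual $m$-connecting \emph{path} (not walk) for each pair, with all collider/non-collider conditions verified at the splice points and with colliders shown to be ancestors of members of $C$ specifically. I anticipate needing a lemma (or an appeal to the structure of collider-connecting sets already developed, e.g.\ that membership in $\co{\g[A]}{\cdot}$ is witnessed by a path lying in $A$) to handle the walk-to-path reduction cleanly, analogous to the standard arguments used for $d$-connection in DAGs and to the treatment of heads, tails, and collider-connecting sets in \cite{richardson2009factorization}. I would also want to double-check the edge case $|S| \le 1$, where the defining set of triples is empty and $m_{\g}(S) = 1$ vacuously, against $\co{\g[A]}{B}$ with $B = S$ (a singleton or empty set is trivially collider-connecting), so the equivalence holds trivially there.
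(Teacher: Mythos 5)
The central difficulty is not where you place it. The definition of $m_{\g}(S)$ quantifies over \emph{all} triples $\seq{a,b \mid C} \in \mc T(V)$ with $S \sm C = ab$, and that condition only forces $C \supseteq S \sm ab$ and $C \cap ab = \es$: the conditioning set may additionally contain arbitrary vertices of $V \sm S$. Your ``key reduction'' to the single choice $C = S \sm ab$ per pair discards most of this quantifier, so the $\La$ direction of (ii) as you outline it would only establish $m$-connection relative to the minimal conditioning set. Since enlarging $C$ can capture a non-collider on your constructed path and block it, connection for the minimal $C$ does not imply connection for all admissible $C$; the robustness of the connection to arbitrary enlargement of $C$ outside $S$ is precisely the nontrivial, inducing-path-style content of the statement. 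The paper does not re-derive this: it invokes Proposition \ref{prop:rasm} (quoted from \cite{hu2020faster}), which asserts exactly that $M \in \mc M(\g)$ iff the fully quantified connection condition holds, so that $m_{\g}(S) = 1$ iff $S \in \mc M(\g) \cup \{\es\}$. Part (ii) then reduces to the purely set-theoretic identification of $\{ S : S \sube \co{\g[A]}{B}, \, A = \an{\g}{S}, \, B = \ba{\g}{S} \}$ with $\mc M(\g)$, i.e.\ checking that $B = \ba{\g}{S}$ is a head and $S \sm B \sube \ta{\g}{B}$ (the ADMG analogue of Proposition \ref{prop:char_imset}(iii); cf.\ Lemma \ref{lem:cm_equiv}). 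No path surgery is required.

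If you intend a self-contained path-theoretic proof, you are in effect re-proving the Hu--Evans result: the splice-and-prune construction must be carried out for an arbitrary $C$ with $S \sm C = ab$, verifying that every non-collider on the pruned path avoids all of $C$ (not merely $S \sm ab$) and that every collider is an ancestor of some member of $C$. That argument is the whole proof, not ``delicate bookkeeping,'' and the proposal as written does not close it. Part (i) and your treatment of the $|S| \leq 1$ edge case are fine.
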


\noindent The (non-empty) sets for which the m-connecting imset is 1 are called \textit{parameterizing sets} and have been used to parameterize ADMG models \citep{evans2014markovian}, characterize margins of discrete DAG models \citep{evans2018margins}, and to test the equivalence of ADMG independence models \citep{hu2020faster}.

\begin{defn}[\textit{parameterizing sets}]
If $\g = (V,E)$ is an ADMG, then the parametrizing sets are defined:
    \[
        \mc M(\g) \eq \{ HT \in \mc P(V) \; : \; H \in \mc H(\g) \; \text{and} \; T \sube \ta{\g}{H} \}
    \]
\end{defn}

\noindent Our choice of notation stresses that the members of these sets are connected by \textit{m}-connecting paths---this notion is made rigorous by \cite{hu2020faster}.

\begin{prop}[\cite{hu2020faster}]
    \label{prop:rasm}
    If $\g = (V, E)$ is an ADMG, then $M \in \mc M(\g)$ if and only if there exists an \textit{m}-connecting path between $a$ and $b$ relative to $C$ for all $\seq{a,b \mid C} \in \mc T(V)$ where $M \sm C = ab$.
\end{prop}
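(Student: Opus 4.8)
The plan is to show the two-way equivalence: $M \in \mc M(\g)$ if and only if every disjoint triple $\seq{a,b \mid C}$ with $M \sm C = ab$ is $m$-connecting. I would first observe that this is really a restatement of Proposition \ref{prop:mcimset_defn}(ii) combined with the definition of $m_{\g}$: by the definition of the $m$-connecting imset, $m_{\g}(M) = 1$ is \emph{exactly} the condition that every $\seq{a,b \mid C}$ with $M \sm C = ab$ is $m$-connecting, so the task reduces to proving $\mc M(\g) = \{ M \in \mc P(V) : m_{\g}(M) = 1 \}$ (restricting to non-empty $M$, and handling $|M| \le 1$ separately, where the right-hand condition is vacuous and $M = HT$ with $H$ a singleton head, $T = \es$, or $M = \es$). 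Given Proposition \ref{prop:mcimset_defn}(ii), it then suffices to show that for $S \sube V$ with $A = \an{\g}{S}$ and $B = \ba{\g}{S}$,
\[
    S \sube \co{\g[A]}{B} \s[18] \Lra \s[18] S \in \mc M(\g),
\]
i.e.\ $S \sube \co{\g[A]}{B}$ iff $S = HT$ for some head $H \in \mc H(\g)$ and some $T \sube \ta{\g}{H}$.

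For the direction $S \in \mc M(\g) \Rightarrow S \sube \co{\g[A]}{B}$: write $S = HT$ with $H = \ba{\g}{C_0}$, $C_0 \in \mc C(\g)$, and $T \sube \ta{\g}{H} = \co{\g[\an{\g}{H}]}{H} \sm H$. First I would check $B = \ba{\g}{S} = H$ — the tail consists of strict ancestors of $H$, so no vertex of $T$ is barren in $S$, and the barren vertices of $S = HT$ are exactly those of $H$, which (since $H$ is already barren) is $H$ itself. Next, $A = \an{\g}{S} = \an{\g}{HT} = \an{\g}{H}$ because $T \sube \an{\g}{H}$. So the claim becomes $HT \sube \co{\g[\an{\g}{H}]}{H}$, and $T \sube \ta{\g}{H} = \co{\g[\an{\g}{H}]}{H} \sm H$ gives exactly this (together with $H \sube \co{\g[\an{\g}{H}]}{H}$, which holds since collider-connecting is reflexive). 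For the converse, $S \sube \co{\g[A]}{B} \Rightarrow S \in \mc M(\g)$: set $H = B = \ba{\g}{S}$. I need $H \in \mc H(\g)$, i.e.\ $H = \ba{\g}{C}$ for some collider-connecting set $C$; the natural choice is $C = \co{\g[A']}{H}$ with $A' = \an{\g}{H}$, and I would verify $C \in \mc C(\g)$ and $\ba{\g}{C} = H$ using that the collider-connecting vertices of $H$ within $\an{\g}{H}$ are themselves collider-connecting via paths inside that set, and that they are all (strict) ancestors of $H$ except $H$ itself. Then $\ta{\g}{H} = C \sm H$, and it remains to show $S \sm H \sube \ta{\g}{H}$, i.e.\ $S \sube \co{\g[\an{\g}{H}]}{H}$. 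This should follow from $S \sube \co{\g[A]}{B} = \co{\g[\an{\g}{S}]}{\ba{\g}{S}}$ by checking $\an{\g}{S} = \an{\g}{H}$ (since $S \sube \an{\g}{B} = \an{\g}{H}$, as every vertex of $S$ lies on a collider path into $B$ within $A$, hence is an ancestor of some barren vertex) — once the ancestral sets agree, the two collider-connecting sets coincide.

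The main obstacle I anticipate is the bookkeeping in the converse direction: showing that $H = \ba{\g}{S}$ is genuinely a head and that $\co{\g[\an{\g}{H}]}{H} \sm H$ is its tail, which requires a careful argument that $\co{\g[\an{\g}{S}]}{\ba{\g}{S}}$ is a collider-connecting set and that taking its barren subset returns $\ba{\g}{S}$ unchanged. The subtlety is that collider-connecting paths within $\g[A]$ may pass through vertices that are ancestors (via collider-free directed segments hidden inside bidirected-collider chains) — I would lean on the structural characterization of $\co{\g}{\cdot}$ in the definition (the explicit list of path shapes $b \ra \lra \cdots \lra \la a$, etc.) and on the equivalence of induced subgraph and latent projection on ancestral sets (Proposition \ref{prop:induce_margin}) to keep the ancestral-set manipulations clean. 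A secondary subtlety is making the $|M| \le 1$ and $M = \es$ edge cases align with the conventions in the definitions of head and parameterizing set; these are routine but worth stating explicitly. Much of this overlaps with the proof of Proposition \ref{prop:mcimset_defn}, so in practice I would prove both together or cite the shared lemmas.
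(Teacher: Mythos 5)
First, a point of reference: the paper does not prove this proposition --- it is imported from \cite{hu2020faster}, and no argument for it appears in the appendix. (Indeed, the paper's own Lemma \ref{lem:routing} cites Proposition \ref{prop:rasm} for one of its two directions.) So there is no in-paper proof to compare against, and your plan has to stand on its own.

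The central problem is that your plan defers the entire path-theoretic content of the statement to Proposition \ref{prop:mcimset_defn}(ii). By definition, $m_{\g}(M)=1$ \emph{is} the right-hand side of Proposition \ref{prop:rasm}, so what you actually sketch is only the purely combinatorial equivalence $\mc M(\g) = \{ S \neq \es : S \sube \co{\g[A]}{B} \}$ with $A = \an{\g}{S}$, $B = \ba{\g}{S}$; the equivalence of $S \sube \co{\g[A]}{B}$ with the existence of an $m$-connecting path between $a$ and $b$ relative to \emph{every} $C \supseteq M \sm ab$ disjoint from $ab$ is taken for granted. That is the hard part: $C$ may contain vertices far outside $\an{\g}{M}$, so one must exhibit, for each such $C$, a path whose colliders are all ancestors of $C$ and whose non-colliders all avoid $C$, and conversely exhibit a separating triple when $S \not\sube \co{\g[A]}{B}$. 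Proposition \ref{prop:mcimset_defn} is itself stated without proof or citation in the paper and is most naturally \emph{derived from} Proposition \ref{prop:rasm} together with your combinatorial equivalence, not the other way around --- so leaning on it here is circular in effect. A self-contained proof needs an explicit path construction/obstruction argument (in the spirit of the contrapositive direction of Lemma \ref{lem:routing}, or the inducing-path arguments in Hu and Evans), and nothing in your plan engages with that.

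The combinatorial half you do sketch is essentially right, with one incorrect justification: $H \sube \co{\g[\an{\g}{H}]}{H}$ does \emph{not} follow from reflexivity. Because $\co{}{\cdot}$ is applied \emph{conjunctively} to sets, $\co{\g'}{H} = \bigcap_{h \in H}\co{\g'}{h}$, so you need every pair of distinct members of $H$ to be collider-connected in $\g[\an{\g}{H}]$. This holds because $H = \ba{\g}{C_0}$ for some collider-connecting set $C_0 \in \mc C(\g)$ and $C_0 \sube \an{\g}{H}$ (every member of $C_0$ is an ancestor of some member of $H$ by iterating the barren-subset definition), so the witnessing collider paths inside $\g[C_0]$ survive in $\g[\an{\g}{H}]$ --- but that is a different argument from reflexivity, and it is exactly the content of the paper's Lemma \ref{lem:cm_equiv}. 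The converse direction's remaining obstacle (that $\co{\g[\an{\g}{H}]}{H}$ is itself a collider-connecting set with barren subset $H$) is correctly identified but left unresolved; Lemmas \ref{lem:cm_equiv} and \ref{lem:unique_barren} supply most of what is needed there.
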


\noindent Similarly, we define the \textit{non-m-connecting imset}.

\begin{defn}[\textit{non-m-connecting imset}]
If $\g = (V, E)$ is an ADMG, then the corresponding non-\textit{m}-connecting imset $n_{\g} : \mc P(V) \ra \{ 0,1 \}$ is defined:
\[
    n_{\g}(S) \eq 
    \begin{cases}
        \s[18] 1 & \s[36] \{ \seq{a,b \mid C} \in \mc I(\g) \; : \; S \sm C = ab  \} \neq \es; \\
        \s[18] 0 & \s[36] \{ \seq{a,b \mid C} \in \mc I(\g) \; : \; S \sm C = ab  \} = \es.
    \end{cases}
\]
\end{defn}

\noindent The sets for which the non-\textit{m}-connecting imset is 1 are called \textit{constrained sets} \citep{evans2018margins}. Notably, some members of these sets are \textit{not} connected by any \textit{m}-connecting paths. Accordingly, we denote the set of all constrained sets:
\[
    \mc N(\g) \eq \mc P(V) \sm [ \mc M(\g) \cup \{\es\} ].
\]

\begin{figure}[H]
\begin{center}
\includegraphics[page=7]{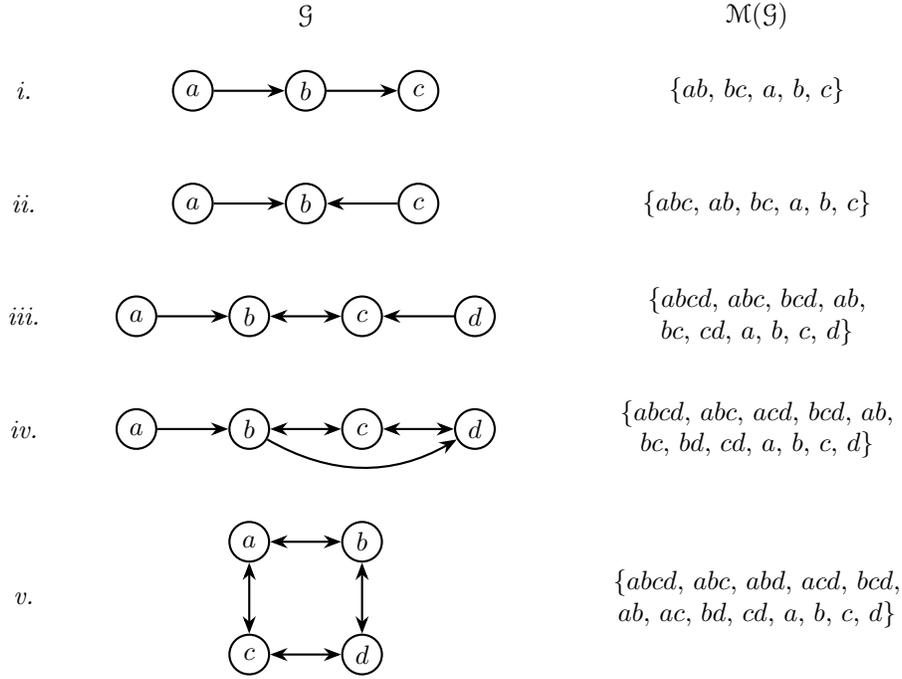}
\end{center}
\caption{Several ADMGs and their parameterizing sets.}
\label{fig:mcs}
\end{figure}

\noindent Figure \ref{fig:mcs} illustrates several ADMGs and their parameterizing sets. 






\subsection{Probabilistic Terms}

To facilitate the forthcoming discussion, we define several new terms. We define a function $\phi_A: \mc X_A \ra \mbb R$ as a linear combination of log density terms motivated by the M{\"o}bius inversion:
\[
    \log f_A(x) = \sum_{B \sube A} \phi_B(x) \s[72] \phi_A(x) = \sum_{B \sube A} (-1)^{|A \sm B|} \log f_B(x) \\.
\]
\noindent Defining functional vectors $h_{\ms P}(x) : \mc P(V) \ra \mbb R^{|\mc P(V)|}$ and $\phi_{\ms P}(x) : \mc P(V) \ra \mbb R^{|\mc P(V)|}$:
\[
    \underset{S \in \mc P(V)}{[h_{\ms P}(x)]_{S}} \eq \log f_{S}(x) \s[72] \underset{S \in \mc P(V)}{[\phi_{\ms P}(x)]_{S}} \eq \phi_{S}(x),
\]
\noindent we equivalently have:
\[
    h_{\ms P} = \zeta_{\ms P}^{\top} \phi_{\ms P}(x) \s[72] \phi_{\ms P}(x) = \mu_{\ms P}^{\top} h_{\ms P}(x).
\]
We accept the convention $\phi_{\es}(x) = 0$ for all $x \in \mc X$. The expectation of $\phi_A(x)$ with respect to $P$ has been previously studied in the field of information theory by several researchers, including \cite{mcgill1954multivariate}, who coined the term \textit{interaction information}. Accordingly, we call $\phi_A(x)$ the \textit{interaction information rate}.

We define the following shorthand for indicators of sets of sets and their corresponding $\phi$ terms. Let $A,B \sube V$ $(A \neq \es)$ be disjoint sets:
\[
    \delta_{A \mid B} \eq \sum_{\substack{S \sube AB \\ A \sube S}} \delta_{S} \s[72] \phi_{A \mid B}(x) \eq \sum_{\substack{S \sube AB \\ A \sube S}} \phi_{S}(x).
\]

\noindent Similar to above, we call $\phi_{A \mid B}(x)$ the \textit{conditional interaction information rate}. Another case is when the set of sets corresponds to a semi-elementary imset transformed by the M{\"o}bius inversion. Let $A,B,C \sube V$ $(AB \neq \es)$ be disjoint sets:
\[
    \delta_{A,B \mid C} \eq \sum_{\substack{S \sube ABC \\ S \not \sube AC \\ S \not \sube BC}} \delta_{S} \s[72] \phi_{A,B \mid C}(x) \eq \sum_{\substack{S \sube ABC \\ S \not \sube AC \\ S \not \sube BC}} \phi_{S}(x).
\]
The expectation of $\phi_{A,B \mid C}(x)$ with respect to $P$ is the well-known information theoretic concept of \textit{conditional mutual information}. Accordingly, we call $\phi_{A,B \mid C}(x)$ the \textit{mutual information rate}. The conditional mutual information rate and its corresponding imset have the following properties.

\vskip 5mm

\begin{prop}
\label{prop:iir}
Let $P$ be a positive measure\textup{:}
\begin{enumerate}
    \item $\displaystyle \phi_{A \mid B}(x) = \sum_{\substack{T \sube AB \\ B \sube T}} (-1)^{|AB \sm T|} \log f_{T}(x)$;
    \item $\displaystyle \phi_{A, B \mid C}(x) = \log \frac{ f_{A B C}(x) f_{C}(x) }{ f_{A C}(x) f_{B C}(x) }$;
    \item $u_{\seq{A,B \mid C}} = \mu_{\ms P} \s \delta_{A,B \mid C}$;
    \item $\istate{A}{B}{C}{P} \s[18] \Lra \s[18] \phi_{A, B \mid C}(x) = 0 \s[18] \text{for $\mu$-almost all} \; x \in \mc X$.
\end{enumerate}
\end{prop}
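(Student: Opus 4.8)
The plan is to prove the four items in order, noting that (ii) and (iii) rest on a single elementary combinatorial identity and that (iv) is an immediate consequence of (ii). For (i), I would simply unfold both layers of definitions: write $\phi_{A \mid B}(x) = \sum_{A \sube S \sube AB} \phi_S(x)$ and substitute the M\"obius expansion $\phi_S(x) = \sum_{R \sube S} (-1)^{|S \sm R|} \log f_R(x)$. Interchanging the two sums, the coefficient of $\log f_T(x)$ for a fixed $T \sube AB$ is $\sum_{S \,:\, A \cup T \sube S \sube AB} (-1)^{|S \sm T|}$. Writing each such $S$ as $(A \cup T) \cup W$ with $W \sube B \sm T$, and using that $W$ is disjoint from $A \cup T$, this coefficient factors as $(-1)^{|A \sm T|} \sum_{W \sube B \sm T} (-1)^{|W|}$, which is $0$ unless $B \sube T$ and equals $(-1)^{|A \sm T|}$ when $B \sube T$. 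Since $B \sube T \sube AB$ forces $AB \sm T = A \sm T$, this is exactly $(-1)^{|AB \sm T|}$, giving the stated formula. (One could instead read this off Proposition \ref{prop:mob} applied within the sublattice $\mc P(AB)$, but the direct count is cleaner.)

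The combinatorial core for (ii) and (iii) is the following: for pairwise disjoint $A, B, C$ and every $S \sube ABC$,
\[
    [S \sube ABC] + [S \sube C] - [S \sube AC] - [S \sube BC] \;=\; [S \not\sube AC]\,[S \not\sube BC],
\]
which I would verify by putting $p = [S \cap A = \es]$ and $q = [S \cap B = \es]$: for $S \sube ABC$ the left side is $1 + pq - q - p = (1-p)(1-q)$, while $[S \not\sube BC] = 1-p$ and $[S \not\sube AC] = 1-q$. Granting this, (ii) follows by writing $\log\frac{f_{ABC} f_C}{f_{AC} f_{BC}}$ as $\log f_{ABC} + \log f_C - \log f_{AC} - \log f_{BC}$, expanding each $\log f_X$ as $\sum_{S \sube X} \phi_S$ (all four sums being over subsets of $ABC$), and collecting the coefficient of each $\phi_S$; the identity converts that coefficient into $[S \not\sube AC]\,[S \not\sube BC]$, so the sum collapses to $\sum_{S \sube ABC,\, S \not\sube AC,\, S \not\sube BC} \phi_S = \phi_{A,B \mid C}(x)$.

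For (iii), by Corollary \ref{cor:mob_alt} the claim $u_{\seq{A,B \mid C}} = \mu_{\ms P}\, \delta_{A,B \mid C}$ is equivalent to $\delta_{A,B \mid C} = \zeta_{\ms P}\, u_{\seq{A,B \mid C}}$, i.e.\ $\delta_{A,B \mid C}(T) = \sum_{S \supseteq T} u_{\seq{A,B \mid C}}(S)$ for all $T \sube V$. Since $u_{\seq{A,B \mid C}} = \delta_{ABC} + \delta_C - \delta_{AC} - \delta_{BC}$, the right-hand side is $[T \sube ABC] + [T \sube C] - [T \sube AC] - [T \sube BC]$, and the indicator identity above rewrites it as $[T \sube ABC]\,[T \not\sube AC]\,[T \not\sube BC] = \delta_{A,B \mid C}(T)$. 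Finally for (iv): because $P$ is positive, every marginal $f_X(x)$ is strictly positive, so by (ii) the quantity $\phi_{A,B \mid C}(x) = \log\frac{f_{ABC}(x) f_C(x)}{f_{AC}(x) f_{BC}(x)}$ is well defined, and it vanishes precisely when $f_{ABC}(x) f_C(x) = f_{AC}(x) f_{BC}(x)$; imposing this for $\mu$-almost all $x$ is exactly the density characterization of $\istate{A}{B}{C}{P}$.

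None of the steps is deep; the only place I expect to have to be careful is bookkeeping with the M\"obius/zeta conventions --- which of $\mu_{\ms P}$ versus $\mu_{\ms P}^{\top}$ (and left versus right multiplication) is in force --- in steps (i) and (iii), and in making sure every sum over $S$ in step (ii) is genuinely ranging over subsets of $ABC$ before the coefficients are collected.
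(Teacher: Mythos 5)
Your proposal is correct and follows essentially the same route as the paper: part (i) by interchanging the two sums and evaluating the alternating inner sum, parts (ii) and (iii) via the indicator identity $[S\sube ABC]+[S\sube C]-[S\sube AC]-[S\sube BC]=[S\not\sube AC][S\not\sube BC]$ together with $\mu_{\ms P}=\zeta_{\ms P}^{-1}$, and part (iv) as an immediate consequence of (ii) and positivity. The only difference is that you make the indicator identity explicit and phrase (iii) as a pointwise evaluation through Corollary \ref{cor:mob_alt} rather than as a sum of identifier imsets, which is purely organizational.
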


\subsection{The \textit{m}-connecting Factorization}

In this section, we provide two versions of the \textit{m}-connecting factorization. First we give a version of the \textit{m}-connecting factorization that holds for ADMGs with bounded parameterizing set cardinality.

\begin{thm}
\label{thm:mconn_fact_simp}
Let $\g = (V, E)$ be an ADMG. If $P$ is a positive measure and $|M| \leq 5$ for all $M \in \mc M(\g)$, then the following are equivalent:
\begin{enumerate}
    \item $\log f(x) = \sum \limits_{M \in \mc M(\g)} \phi_M(x) \s[18]$ for $\mu$-almost all $x \in \mc X$\textup{;}
    \item \istate{A}{B}{C}{\g} $\s[18] \Ra \s[18]$ \istate{A}{B}{C}{P} $\s[18]$ for every $\seq{A,B \mid C} \in \mc T(V)$.
\end{enumerate}
\end{thm}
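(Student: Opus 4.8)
The plan is to prove Theorem~\ref{thm:mconn_fact_simp} by establishing the chain of equivalences (1) $\Lra$ (2) through the ordered local Markov property and Studen{\'y}'s imsetal factorization (Theorem~\ref{thm:imfact}). The key observation is that the right-hand side of the factorization equation in (1) can be rewritten in terms of the M{\"o}bius inversion: since $\log f(x) = \sum_{B \sube V} \phi_B(x)$ and the parameterizing sets $\mc M(\g)$ are exactly the sets on which $m_{\g}$ takes the value $1$ (together with possibly $\es$), statement (1) is equivalent to $\sum_{S \notin \mc M(\g) \cup \{\es\}} \phi_S(x) = 0$, i.e.\ $\sum_{S \in \mc N(\g)} \phi_S(x) = 0$ for $\mu$-almost all $x$. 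In imsetal language, letting $u = \mu_{\ms P}(n_{\g})$ be the M{\"o}bius inversion of the non-$m$-connecting imset, statement (1) is equivalent to $\prod_{S} f_S(x)^{u(S)} = 1$ $\mu$-a.e. So the first half of the work is purely algebraic bookkeeping relating $\phi$, the imsets $m_{\g}, n_{\g}$, and the product form; this uses Proposition~\ref{prop:iir} and Corollary~\ref{cor:mob_alt}/Proposition~\ref{prop:mob}.

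Given that reformulation, it suffices to show: (a) $u = \mu_{\ms P}(n_{\g})$ is a \emph{structural imset}, and (b) $\mc I(u) = \mc I(\g)$; then Theorem~\ref{thm:imfact} immediately yields the equivalence with the global Markov property, hence with (2). For (b), one direction is the easier one: using Proposition~\ref{prop:iir}(iii) and (iv), if $P$ satisfies the global Markov property for $\g$, then for each $\seq{a,b\mid C} \in \mc I(\g)$ the mutual information rate vanishes, and one wants to express $u$ as a non-negative rational combination $\sum k_{a,b\mid C} u_{\seq{a,b\mid C}}$ over pairs $\seq{a,b\mid C} \in \mc I(\g)$; summing the corresponding $\phi_{a,b\mid C}(x) = 0$ relations then gives statement (1). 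The reverse direction (that the factorization implies the global Markov property) runs through the ordered local Markov property (Theorem~\ref{thm:mprops}): for each ancestral set $A$ with maximal element $b$, one must extract from the single global equation (1), restricted/marginalized to $A$, the local statement $\istate{b}{A \sm \cl{\g[A]}{b}}{\mb{\g[A]}{b}}{P}$. Here the cardinality hypothesis $|M| \leq 5$ is what makes things tractable: a semi-elementary imset $u_{\seq{A,B\mid C}}$ involves sets of size up to $|ABC|$, and controlling which sets $S$ appear in $u$ (equivalently, bounding $|S|$ for $S$ with $u(S) \neq 0$, equivalently with $n_\g$ changing) lets one decompose $u$ into elementary pieces by an explicit finite combinatorial argument rather than an induction that could fail for large heads/tails.

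The main obstacle I expect is step (a)--(b) together, specifically showing that $\mu_{\ms P}(n_{\g})$ is structural with the right independence model, and in particular pinning down the role of the bound $|M| \le 5$. The natural strategy is to build the combination pair-by-pair: order $V$ consistently with $\g$, and for each vertex $b = \ceo{A}$ over ancestral sets $A$, peel off the non-$m$-connecting sets ``localized at $b$ within $A$'' as a sum of elementary imsets $u_{\seq{b, x \mid C}}$ with $\seq{b,x\mid C} \in \mc I(\g)$, where $x$ ranges over $A \sm \cl{\g[A]}{b}$ and $C$ ranges over appropriate subsets of $\mb{\g[A]}{b}$; then argue by M{\"o}bius/inclusion-exclusion that the total telescopes exactly to $u$. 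Verifying that the coefficients come out non-negative, and that no ``extra'' sets are created or destroyed, is the delicate combinatorial heart of the proof, and this is precisely where $|M|\le 5$ is used: it bounds the support of the intermediate imsets so that the inclusion-exclusion cancellations can be checked by a finite case analysis (and presumably the general Theorem, stated later, removes this bound at the cost of a more elaborate argument). Once $u \in \mc S(V)$ with $\mc I(u) = \mc I(\g)$ is in hand, Theorem~\ref{thm:imfact} closes the loop and the proof is complete.
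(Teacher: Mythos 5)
Your high-level reduction is exactly the strategy the paper sketches just before Proposition \ref{prop:struct_conds}: rewrite statement (\textit{i}) as $\phi_{\ms P}(x)^{\top} n_{\g} = 0$, show that $\mu_{\ms P}\s n_{\g}$ is a structural imset whose induced independence model contains the ordered local Markov statements (so that Theorem \ref{thm:mprops} applies) and is built only from independences holding in $\g$, and close the loop with Theorem \ref{thm:imfact}. Where you and the paper part ways is in how the two combinatorial claims are discharged. The paper gives no general argument at all: its proofs of both Theorem \ref{thm:mconn_fact_simp} and Proposition \ref{prop:struct_conds} consist of the single sentence that the result ``immediately follows from the enumeration of $m$-connecting factorizations for ADMG MECs in Appendix \ref{app:fac}'' --- every Markov equivalence class on five vertices is listed, its factorization is expanded explicitly, and one observes case by case that the conditional interaction information rates decompose into semi-elementary imsets over disjoint sets, so no adjustment term arises. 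You instead propose a general peeling construction along a consistent order, which is essentially the machinery (Algorithms \ref{alg:pairs} and \ref{alg:nie}, Lemma \ref{lem:olmp}) that the paper reserves for the unrestricted Theorem \ref{thm:mconn_fact}.

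The gap is that your ``delicate combinatorial heart'' is never actually supplied, and it is the entire content of the theorem. The peeling you describe does \emph{not} in general telescope to $\mu_{\ms P}\s n_{\g}$ with non-negative coefficients: the inclusion--exclusion over overlapping maximal constrained sets produces genuinely negative (exclusion) terms, and the bi-directed $6$-cycle of Appendix \ref{app:ng_not_in_s} shows that $\mu_{\ms P}\s n_{\g} \notin \mc S(V)$ really occurs. You assert that $|M| \le 5$ ``bounds the support of the intermediate imsets so that the inclusion-exclusion cancellations can be checked by a finite case analysis,'' but you give no mechanism by which the cardinality bound forces the exclusion terms to cancel; made precise, that finite case analysis \emph{is} the exhaustive verification the paper performs in Appendix \ref{app:fac}. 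A complete proof along your lines therefore still requires either that brute-force enumeration (plus a localization step, via Lemma \ref{lem:pairs_helper} and Corollary \ref{cor:induced_admg}, to pass from arbitrary ADMGs with all parameterizing sets of size at most five to literal five-vertex graphs) or a genuinely new combinatorial argument that the paper itself does not possess.
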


\noindent Alternatively, the summation over the parameterizing sets may be computed with heads and tails. Let $\g = (V, E)$ be an ADMG with consistent order $\leq$. If $R = \pre{\g}{v}$ and $T = \ta{\g}{H}$ where context clarifies $v \in V$ and $H \in \mc H(\g)$, then:
\begin{align*}
    \sum_{M \in \mc M(\g)} \phi_{M}(x) &= \sum_{H \in \mc H(\g)} \phi_{H \mid T}(x) \\
    &= \sum_{H \in \mc H(\g)} \sum_{S \sube H} (-1)^{|H \sm S|} \log f_{ST}(x) \\
    &= \sum_{v \in V} \sum_{\substack{H \in \mc H(\g[R]) \\ v \in H}} \sum_{\substack{S \sube H \\ v \not \in S}} (-1)^{|H \sm S| - 1} \log f_{v \mid ST}(x).
\end{align*}

Using the alternative stated above, we derive the \textit{m}-connecting factorizations for several ADMGs with respect to total orders given by the order of the density terms. A positive measure satisfies the global Markov property with respect to an ADMG if and only if the corresponding factorization holds for $\mu$-almost all $x \in \mc X$:
\begin{align*}
    \text{Fig. \ref{fig:dag_admg} (\textit{ii}):} \s[18] \log f(x) &= \log f_{d \mid abc}(x) + \log f_{c \mid b}(x) + \log f_{b \mid a}(x) + \log f_a(x); \\
    \text{Fig. \ref{fig:set_ex}:} \s[18] \log f(x) &= \log f_{d \mid ab}(x) + \log f_{c \mid ab}(x) + \log f_{b}(x) + \log f_{a}(x); \\
    \text{Fig. \ref{fig:mcs} (\textit{i}):} \s[18] \log f(x) &= \log f_{c \mid b}(x) + \log f_{b \mid a}(x) + \log f_{a}(x); \\
    \text{Fig. \ref{fig:mcs} (\textit{ii}):} \s[18] \log f(x) &= \log f_{b \mid ac}(x) + \log f_{c}(x) + \log f_{a}(x); \\
    \text{Fig. \ref{fig:mcs} (\textit{iii}):} \s[18] \log f(x) &= \log \left[ f_{c \mid abd}(x) \frac{f_{c \mid d}(x)}{f_{c \mid a d}(x)} \right] + \log f_{b \mid a}(x) + \log f_{d}(x) + \log f_{a}(x); \\
    \text{Fig. \ref{fig:mcs} (\textit{iv}):} \s[18] \log f(x) &= \log \left[ f_{d \mid abc}(x) \frac{f_{d \mid b}(x)}{f_{d \mid ab}(x)} \right] + \log f_{b \mid ac}(x) + \log f_{c}(x) + \log f_{a}(x); \\
    \text{Fig. \ref{fig:mcs} (\textit{v}):} \s[18] \log f(x) &= \log \left[ f_{d \mid abc}(x) \frac{f_{d}(x)}{f_{d \mid a}(x)} \right] + \log \left[ f_{c \mid ab}(x) \frac{f_{c}(x)}{f_{c \mid b}(x)} \right] + \log f_{b \mid a}(x) + \log f_{a}(x).
\end{align*}
The \textit{m}-connecting factorization resembles the well-known recursive factorization for DAGs. Vertices in the density terms are conditioned on their Markov boundaries in the subgraph induced by their preceding vertices. Additionally, if a density term implies extraneous dependencies, then the term is reweighted by a ratio equaling 1 if and only if the extra dependencies are false in the probability measure.

The proof strategy for Theorem \ref{thm:mconn_fact_simp} splits the log density of a positive measure into \textit{m}-connecting and non-\textit{m}-connecting pieces using the M{\"o}bius inversion (Proposition \ref{prop:mob}):
\begin{align*}
    \log f(x) &= \phi_{\ms P}(x)^{\top} \delta_{\mc P(V)} \\
    &= \phi_{\ms P}(x)^{\top} (m_{\g} + n_{\g}) \\
    &= \sum \limits_{M \in \mc M(\g)} \phi_M(x) + \phi_{\ms P}(x)^{\top} n_{\g}.
\end{align*}

\noindent Notably, $\mu_{\ms P} \s n_{\g}$ is a structural imset if $|M| \leq 5$ for all $M \in \mc M(\g)$.

\begin{prop}
    \label{prop:struct_conds}
    Let $\g = (V, E)$ be an ADMG. If $|M| \leq 5$ for all $M \in \mc M(\g)$, then $\mu_{\ms P} \s n_{\g} \in \mc S(V)$.
\end{prop}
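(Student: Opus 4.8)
The plan is to produce an explicit non-negative rational representation of $\mu_{\ms P}\s n_\g$ in terms of elementary imsets. Since the M\"obius inversion is a bijection, $u_{\seq{A,B\mid C}} = \mu_{\ms P}\s\delta_{A,B\mid C}$ (Proposition \ref{prop:iir}), and every semi-elementary imset is a non-negative integer combination of elementary ones (Proposition \ref{prop:semi_elem}), it is enough to write the indicator $n_\g = \delta_{\mc N(\g)}$ itself as a non-negative rational combination of semi-elementary indicators $\delta_{A,B\mid C}$: applying $\mu_{\ms P}$ to such a combination and then expanding each $u_{\seq{A,B\mid C}}$ into elementary imsets exhibits $\mu_{\ms P}\s n_\g$ as an element of $\mc S(V)$.

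To build the combination I would fix a consistent order $\leq$ and split $n_\g$ across the vertices. Writing $R = \pre{\g}{v}$, the sets having $v$ as their $\leq$-largest element are exactly the non-empty subsets of $R$ containing $v$, so $\delta_{\mc P(V)} = \delta_{\es} + \sum_{v \in V} \delta_{v \mid R \sm v}$; combining this with the head-partition identity displayed just after Theorem \ref{thm:mconn_fact_simp} (read as an identity of $\delta$-indicators rather than of $\phi$-terms, which is legitimate since the relation between the $\phi_S$ and the $\log f_S$ is invertible) yields $n_\g = \sum_{v \in V} n^{(v)}$, where $n^{(v)}$ is the $\{0,1\}$-indicator of the constrained sets of $\g[R]$ having $v$ as their largest element. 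Every parameterizing set of $\g[R]$ with largest element $v$ is contained in $K := \cl{\g[R]}{v} = \co{\g[R]}{v}$: such a set $HT$ has $v \in H$, its head $H$ is collider-connected and hence lies inside $K$, and its tail lies inside $K$ as well. Consequently the ordered local Markov property (Proposition \ref{prop:ordered_markov_helper}) peels off a single semi-elementary piece, $n^{(v)} = \delta_{v,\, R \sm K \mid K \sm v} + r^{(v)}$, where $r^{(v)}$ is the $\{0,1\}$-indicator of the sets $S$ satisfying $v \in S \sube K$ and $S \notin \mc M(\g[R])$.

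It then remains to show $\mu_{\ms P}\s r^{(v)} \in \mc S(V)$ for every $v$, and this is the step where the hypothesis $|M| \leq 5$ is essential. The residual $r^{(v)}$ is exactly what produces the reweighting ratios appearing in the factorizations of Figure \ref{fig:mcs}: a naive attempt to peel off further $\{0,1\}$-valued semi-elementary indicators fails, because $\{\, S : v \in S \sube K,\ S \notin \mc M(\g[R]) \,\}$ is in general neither upward- nor downward-closed inside $\{\, S : v \in S \sube K \,\}$, so the correct combination necessarily involves cancellation. Under the bound one shows that each $r^{(v)}$ is supported within $\mc P(W)$ for some ground set $W$ of at most five elements --- using that when $r^{(v)} \neq \es$ the relevant collider-connected closure (or each $\subseteq$-maximal set carrying $r^{(v)}$) is itself a parameterizing set, hence has at most five elements. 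On such a bounded ground set, membership of $\mu_{\ms P}\s r^{(v)}$ in the structural imset cone is a finite question, settled either from the explicit description of $\mc S(W)$ available for small $|W|$ or by producing a non-negative rational combination of elementary imsets directly. The main obstacle is precisely this last point --- controlling the support of each $r^{(v)}$ and discharging the small-ground-set instances --- and it is also where the bound is essentially sharp, which is why the theorem is stated with $|M| \leq 5$ rather than in full generality.
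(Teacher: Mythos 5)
Your overall strategy is the same as the paper's: reduce the question to configurations of bounded size and then settle those by a finite verification. Your reduction is in fact more explicit than anything the paper writes down, and the parts you do argue are sound: the partition of $n_{\g}$ by the $\leq$-largest element of each constrained set is legitimate (each nonempty $S$ has a unique maximum $v$ with $S \sube \pre{\g}{v}$, and since $\pre{\g}{v}$ is ancestral, Lemma \ref{lem:pairs_helper}(ii) guarantees that ``constrained'' means the same thing in $\g$ and in $\g[{\pre{\g}{v}}]$); every parameterizing set with maximum $v$ does lie in $K = \cl{\g[R]}{v}$; peeling off $\delta_{v,\,R\sm K \mid K \sm v}$ leaves exactly the indicator of non-parameterizing subsets of $K$ containing $v$; and $K$ is itself collider-connecting, hence parameterizing (Lemmas \ref{lem:cm_equiv} and \ref{lem:unique_barren}), so $|K| \leq 5$ under the hypothesis.

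The gap is that the finite verification you defer to is not a side issue --- it \emph{is} the proof. The paper's argument for this proposition consists entirely of the exhaustive enumeration in Appendix \ref{app:fac} of the $m$-connecting factorizations of all five-vertex MECs, checked case by case to need no adjustment term; your proposal reproduces the reduction but asserts the check (``settled either from the explicit description of $\mc S(W)$ \ldots or by producing a non-negative rational combination directly'') without performing it, and you yourself flag it as the main obstacle. There is also a secondary issue in the reduction itself: the residual $r^{(v)}$ is determined by which subsets of $K$ are parameterizing \emph{in $\g[R]$}, and since $K$ need not be ancestral this pattern is not simply $\mc M(\g[K])$ --- heads and tails of subsets of $K$ are computed using ancestors outside $K$. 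So to invoke an enumeration of five-vertex ADMGs you would additionally need to argue that every marking pattern of $\mc P(K)$ that can arise this way is realized by (or reduces to) one of the enumerated five-vertex configurations. That step is also left implicit in the paper's one-line proof, but a complete argument along your lines would have to supply it, together with the actual case check.
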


\noindent It follows from Proposition \ref{prop:struct_conds} and Theorem \ref{thm:imfact} that:
\[
    \mc I(\mu_{\ms P} \s n_{\g}) \sube \mc I(P) \s[18] \Lra \s[18] \phi_{\ms P}(x)^{\top} n_{\g} = 0 \s[18] \text{for} \; \mu \text{-almost all} \; x \in \mc X.
\]
Accordingly, we show:
\[
    \mc I(\g) \sube \mc I(\mu_{\ms P} \s n_{\g}).
\]

\noindent However, it is not difficult to construct and ADMG $\g$ such that $\mu_{\ms P} \s n_{\g}$ is not a structural imset. A bi-directed 6-cycle is an example; see Appendix \ref{app:extras}. When $\mu_{\ms P} \s n_{\g}$ is not a structural imset, we use the \textit{principle of inclusion-exclusion} to decompose $\mu_{\ms P} \s n_{\g}$ with respect to a consistent order $\leq$ into an inclusion structural imset $\mu_{\ms P} \s i_{\g}^{\leq}$ and an exclusion structural imset $\mu_{\ms P} \s e_{\g}^{\leq}$ such that $n_{\g} = i_{\g}^{\leq} - e_{\g}^{\leq}$. The exclusion imset is added to the factorization as an adjustment term which gives:
\[
    \mc I(\mu_{\ms P} \s i_{\g}^{\leq}) \sube \mc I(P) \s[18] \Lra \s[18] \phi_{\ms P}(x)^{\top} i_{\g}^{\leq} = 0 \s[18] \text{for} \; \mu \text{-almost all} \; x \in \mc X
\]
\noindent Accordingly, we show:
\[
    \mc I(\g) \sube \mc I(\mu_{\ms P} \s i_{\g}^{\leq}).
\]

Next we introduce Algorithms \ref{alg:pairs} and \ref{alg:nie} to facilitate the decomposition of the $\mu_{\ms P} \s n_{\g}$. Lemma \ref{lem:pairs_helper} provides two useful properties of parameterizing sets.

\begin{lem}
\label{lem:pairs_helper}
If $\g = (V, E)$ is an ADMG and $b \in \ba{\g}{V}$ and $A \in \mc A(\g)$, then\textup{:}
\begin{enumerate}
    \item $|\ceil{\{ M \in \mc M(\g) \; : \; b \in M \}}| = 1$\textup{;}
    \item $\mc M(\g[A]) = \{ M \in \mc P(A) \; : \; M \in \mc M(\g) \}$.
\end{enumerate}
\end{lem}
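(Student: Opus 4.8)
The plan is to prove the two statements separately, relying on the characterization of parameterizing sets as $\mc M(\g) = \{ HT : H \in \mc H(\g), \, T \sube \ta{\g}{H} \}$ together with the structural facts about heads, tails, collider-connecting sets, and ancestral sets collected in Section~\ref{sec:back}. For part (ii) I would argue by mutual inclusion. The inclusion $\mc M(\g[A]) \sube \{ M \in \mc P(A) : M \in \mc M(\g) \}$ should follow from the observation that, for an ancestral set $A$, the induced subgraph $\g[A]$ agrees with the latent projection $\textsc{LP}(\g, V \sm A)$ (Proposition~\ref{prop:induce_margin}); hence every head $H$ of $\g[A]$ is a head of $\g$ and its tail in $\g[A]$ equals its tail in $\g$, because both the relevant collider-connecting set and its ancestors lie inside $A$ (as $A = \an{\g}{A}$ and $\an{\g}{H} \sube A$ when $H \sube A$). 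The reverse inclusion requires showing that if $M \in \mc M(\g)$ and $M \sube A$, then $M$ is also a parameterizing set of $\g[A]$: writing $M = HT$ with $H = \ba{\g}{C}$ for a collider-connecting set $C$ and $T \sube \ta{\g}{H}$, one checks that $H$ remains a head in $\g[A]$ (its ancestors stay in $A$) and that $T$ is contained in the tail computed within $\g[A]$; this again uses ancestral-closure of $A$ so that no ancestor or collider-connecting path leaves $A$.

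For part (i), fix $b \in \ba{\g}{V}$, i.e. $b$ is barren in the whole graph. I would show that among all parameterizing sets containing $b$ there is a unique $\sube$-maximal one. The natural candidate is $M^* = \cl{\g}{b} \cdot \text{(its tail)}$, more precisely the parameterizing set $HT$ where $H$ is the unique head with $b \in \ba{\g}{C}$ for $C = \co{\g}{b}$ —since $b$ is barren, $\{b\}$ itself is a head (as $\ba{\g}{C} \ni b$ and in fact one can take $H = \{b\}$, with $T = \ta{\g}{\{b\}} = \co{\g}{b} \sm b = \mb{\g}{b}$), so $M^* = \cl{\g}{b} = \co{\g}{b}$. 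The claim is that any $M \in \mc M(\g)$ with $b \in M$ satisfies $M \sube \co{\g}{b}$. Here I would write $M = HT$; since $b \in HT$ and $H = \ba{\g}{C}$ for some collider-connecting $C$, either $b \in H$ or $b \in T \sube \ta{\g}{H} = \co{\g[\an{\g}{H}]}{H} \sm H$. In either case $b$ is collider-connected (within an ancestral set) to every element of $M$, and because $b$ is barren all of $M$ lies among the ancestors of $H$ which in turn... — the key point to nail down is that collider-connection to the barren vertex $b$ forces membership in $\co{\g}{b}$, using that $b$ being barren means every colliderpath through $b$ has $b$ as an endpoint, not an interior collider. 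Then $M \sube \co{\g}{b} = M^*$, giving uniqueness of the maximal element.

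The main obstacle I anticipate is the case analysis in part (i) when $b$ lies in the \emph{tail} rather than the head of $M$: one must verify that $b \in \ta{\g}{H}$, combined with $b$ barren, still forces $HT \sube \co{\g}{b}$, and in particular that $H \sube \co{\g}{b}$. This requires unwinding the definition of the tail as collider-connecting vertices of $H$ within $\an{\g}{H}$ and carefully tracking that a collider path realizing the connection from $b$ out to $H$ (and to other tail elements) can be rearranged into collider paths witnessing $h \in \co{\g}{b}$ for each $h \in H$ — the delicate part being the interplay of "collider-connecting" (a conjunctive, contiguous-collider notion) with the ancestral restriction. Once that is established, parts (i) and (ii) both reduce to bookkeeping with the definitions, and I would present them as two short lemmas feeding the later decomposition algorithms.
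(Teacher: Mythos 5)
Your overall strategy matches the paper's: for part (i) you identify $\co{\g}{b}$ as the unique maximal parameterizing set containing the barren vertex $b$ and try to show every $M \in \mc M(\g)$ with $b \in M$ is contained in it, and for part (ii) you use the agreement of induced subgraph and latent projection on ancestral sets (Proposition \ref{prop:induce_margin}), which is exactly how the paper disposes of that half. The problem is that the proposal has a genuine gap precisely where you flag the ``main obstacle'': the containment $M \sube \co{\g}{b}$ when $b$ lies in the \emph{tail} of $M = HT$ is the entire content of part (i), and you leave it as a path-rearrangement argument still to be carried out. The paper avoids any path surgery by routing through Proposition \ref{prop:mcimset_defn}(ii): every $M \in \mc M(\g)$ satisfies $M \sube \co{\g[A]}{B}$ with $A = \an{\g}{M}$ and $B = \ba{\g}{M}$; since $b$ is barren in all of $V$ it belongs to $B$, and because $\co{}{\cdot}$ is applied \emph{conjunctively} to sets (an intersection over its members), $\co{\g[A]}{B} \sube \co{\g[A]}{b} \sube \co{\g}{b}$. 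Together with the observation that for barren $b$ one has $\co{\g}{b} = \dis{\g}{b} \cup \pa{\g}{\dis{\g}{b}}$, which is itself a collider-connecting set and hence a parameterizing set (Lemma \ref{lem:cm_equiv}), this yields the unique maximal element. Until you supply an argument of this kind (or complete the path-rearrangement you sketch), part (i) is not proved.

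A secondary slip worth fixing: your identification of the maximal set as the parameterizing set with head $\{b\}$ and tail $\mb{\g}{b}$ is not correct in general, because $\ta{\g}{\{b\}} = \co{\g[\an{\g}{b}]}{b} \sm b$ is computed inside the induced subgraph on $\an{\g}{b}$ and therefore omits siblings of $b$ that are not ancestors of $b$. For the graph consisting of the single edge $b \lra c$ one has $\ta{\g}{\{b\}} = \es$ while $\co{\g}{b} = \{b,c\}$, so $\{b,c\}$ is not of the form $\{b\}T$ with $T \sube \ta{\g}{\{b\}}$; it is a parameterizing set only via the larger head $\ba{\g}{\co{\g}{b}} = \{b,c\}$. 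Your part (ii) argument is sound and is essentially a spelled-out version of the paper's appeal to Proposition \ref{prop:induce_margin}.
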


\noindent The intuition for Lemma \ref{lem:fact_helper} is given by the ordered local Markov property.

\begin{lem}
    \label{lem:fact_helper}
    Let $\g = (V, E)$ be an ADMG and $N \sube V$. If $b \in \ba{\g}{N}$ and $M = \ceil{\{ M \in \mc M(\g) \; : \; b \in M \sube N \}}$, then\textup{:}
    \[
        \istate{b}{N \sm M}{M \sm b}{\g}.
    \]
\end{lem}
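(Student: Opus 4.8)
The plan is to recognize the claim as the graphical ordered local Markov property (Proposition~\ref{prop:ordered_markov_helper}) for the induced subgraph $\g[N]$, with the parameterizing set $M$ playing the role of the closure $\cl{\g[N]}{b}$. Concretely, I would first prove the identification $M = \co{\g[N]}{b} = \cl{\g[N]}{b}$. Granting this, $M \sm b = \co{\g[N]}{b} \sm b = \mb{\g[N]}{b}$ and $N \sm M = N \sm \cl{\g[N]}{b}$, so applying Proposition~\ref{prop:ordered_markov_helper} to the ADMG $\g[N]$ and the vertex $b$ yields $\istate{b}{N \sm M}{M \sm b}{\g[N]}$; since $b$, $N \sm M$, and $M \sm b$ all lie in $N$, this m-separation coincides with the corresponding one in $\g$ --- here one uses that $N$ is an ancestral set (the situation in which the lemma is invoked), so that passing to $\g[N]$ does not alter m-separation among subsets of $N$ (see Proposition~\ref{prop:induce_margin}). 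That is the desired conclusion.

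For the inclusion $\co{\g[N]}{b} \sube M$: since $\an{\g[N]}{x} \sube \an{\g}{x}$ for every $x \in N$, the hypothesis $b \in \ba{\g}{N}$ forces $b \in \ba{\g[N]}{N}$, so $b$ is barren in $\g[N]$. Then $\co{\g[N]}{b}$ is a parameterizing set of $\g[N]$: it is collider-connecting and each of its elements is collider-connected within $\g[N]$ to its barren subset --- the relevant collider-connecting paths amalgamate through the common endpoint $b$, and their internal collider vertices stay inside the closure --- so Proposition~\ref{prop:mcimset_defn}(ii) gives $\co{\g[N]}{b} \in \mc M(\g[N])$. Moreover, a parameterizing set of $\g[N]$ is always a parameterizing set of $\g$: an m-connecting path in $\g[N]$ relative to a set $C \sube N$ remains m-connecting in $\g$ relative to any superset of $C$, because its colliders are ancestors of $C$ in $\g[N]$ and hence in $\g$, so Proposition~\ref{prop:rasm} transfers. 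Thus $\co{\g[N]}{b}$ belongs to the family $\{\, M' \in \mc M(\g) : b \in M' \sube N \,\}$ whose maximal element is $M$, so $\co{\g[N]}{b} \sube M$.

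For the reverse inclusion I would use the head--tail description of parameterizing sets together with Lemma~\ref{lem:pairs_helper}. Any $M' \in \mc M(\g)$ with $b \in M' \sube N$ is, by Lemma~\ref{lem:pairs_helper}(ii), a parameterizing set of $\g[N]$, hence $M' = H'T'$ with $H' \in \mc H(\g[N])$ and $T' \sube \ta{\g[N]}{H'}$; since every tail vertex lies in $\an{\g[N]}{H'} \sm H'$ while $b$ is barren in $\g[N]$, $b$ cannot be a tail vertex, so $b \in H'$. Now $H'$ is collider-connecting, so $H' \sube \co{\g[N]}{b}$; and each vertex of $T'$ is collider-connected within $\g[N]$ to $H'$ and hence to $b$, so $T' \sube \co{\g[N]}{b}$; therefore $M' \sube \co{\g[N]}{b}$, and taking $M' = M$ gives $M \sube \co{\g[N]}{b}$ (and in particular the family above has a unique maximal element, so $M$ is well defined). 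The step I expect to be the main obstacle is precisely this matching of $\mc M(\g)$ restricted to subsets of $N$ with $\mc M(\g[N])$ --- keeping careful track of which graph each parameterizing set and collider-connection is computed in, the content of Lemma~\ref{lem:pairs_helper}(ii) --- together with the m-separation transfer of the first paragraph; by contrast, the path-amalgamation facts used above are routine.
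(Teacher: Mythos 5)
Your reduction to Proposition~\ref{prop:ordered_markov_helper} only goes through when $N$ is ancestral, and that is exactly the case in which the lemma has no new content. The lemma is stated for an arbitrary $N \sube V$, and in the paper it is applied to constrained sets and to intersections $N_J$ of constrained sets (Algorithms~\ref{alg:pairs} and~\ref{alg:nie}, Proposition~\ref{prop:nconn_decomp}), which are typically \emph{not} ancestral; so your parenthetical ``the situation in which the lemma is invoked'' is not accurate. For non-ancestral $N$ both of your key steps fail. First, m-separation in $\g[N]$ does not transfer to $\g$, because ancestors of $N$ lying outside $N$ can mediate m-connections: in $a \ra c \ra b$ with $N = \{a,b\}$ one has $b$ barren in $N$, $a$ and $b$ m-separated given $\es$ in $\g[N]$, but m-connected in $\g$. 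Second, the identifications $\mc M(\g[N]) = \{M \in \mc P(N) : M \in \mc M(\g)\}$ and $M = \cl{\g[N]}{b}$ are only justified for ancestral sets --- Lemma~\ref{lem:pairs_helper}(ii) and Corollary~\ref{cor:induced_admg} rest on Proposition~\ref{prop:induce_margin}, which requires $N \in \mc A(\g)$. When $N$ \emph{is} ancestral your argument is fine, but then the statement is just Proposition~\ref{prop:ordered_markov_helper} read inside $\g[N]$, i.e., the ordered local Markov property, which is already available.

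The paper's proof is built precisely to handle the non-ancestral case and uses different machinery. It works in the ancestral closure $A = \an{\g}{N}$ rather than in $\g[N]$; it observes that, by maximality of $M$ among parameterizing subsets of $N$, every set $M \cup \{a\}$ with $a \in N \sm M$ is constrained; it then invokes Lemma~\ref{lem:routing} (together with Corollary~\ref{cor:induced_admg}) to convert each such constrained set into the elementary independence $\istate{b}{a}{M \sm b}{\g}$; and finally it assembles these statements into $\istate{b}{N \sm M}{M \sm b}{\g}$ using the composition axiom from Proposition~\ref{prop:admg_graphoid}. The ingredients your proposal is missing are Lemma~\ref{lem:routing} --- the characterization of constrained sets containing a barren vertex by a witnessing independence --- and the composition step; without them you cannot reach conditioning sets $M \sm b$ that are strictly smaller than the Markov boundary of $b$ in any induced \emph{ancestral} subgraph, which is the whole point of the lemma.
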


\vskip 5mm

\begin{algorithm}[H]
    \caption{$\textsc{Pairs}(\g, b)$}
    \label{alg:pairs}
    \KwIn{ADMG: $\g[] = (V,E)$, \, barren vertex: $b \in \ba{\g}{V}$}
    \KwOut{ordered lists: $\ms M$, $\ms N$}
    Initialize ordered lists $\ms M = \seq{}$, $\ms N = \seq{}$\;
    Let $\ms R = \{N \in \mc N(\g) \; : \; b \in N$ \} \;
    \Repeat{$\ms R = \es$}{
        Pick $N \in \ceil{\ms R}$\; 
        Let $M = \ceil{\{ M \in \mc M(\g) \; : \; b \in M \sube N \}}$ ; \hfill \tcp{unique by Lemma \ref{lem:pairs_helper}}
        Append $M$ to $\ms M$ and $N$ to $\ms N$ \;
        $\ms R = \ms R \sm \{ S \in \ms R \; : \; b \in S \sube N \; \text{and} \; S \not \sube M \}$ \;
    }
\end{algorithm}

\vskip 5mm

Let $\g = (V,E)$ be an ADMG and $b \in \ba{\g}{V}$. Additionally, let $\ms M, \ms N = \textsc{Pairs}(\g, b)$ be ordered lists of parameterizing and constrained sets, respectively:
\[
    \ms M = \seq{M_1, \dots, M_n }\s[72] \ms N = \seq{N_1, \dots, N_n}
\]
where $n = |\ms M|$. Additionally, we define the universe of sets with respect to $N_i$ and $b$:
\[
    \ms U_i \eq \bigcup_{\substack{S \sube N_i \\ b \in S}} \{S\}.
\]

We simplify notation and use $\ms N_{i,i}$ to define the set of sets that corresponds to the conditional independence statement \istate{b}{N_i \sm M_i}{M_i \sm b}{\g}. We call terms of this form \textit{base conditional independence terms} because they will form the basis for the principle of inclusion-exclusion. Let $A = b$, $B = N_i \sm M_i$, and $C = M_i \sm b$, then:
\begin{align*}
    ABC &= (N_i \sm M_i) \cup (M_i \sm b) \cup b \\
    &= N_i \\
    AC &= (M_i \sm b) \cup b \\
    &= M_i \\
    BC &= (N_i \sm M_i) \cup (M_i \sm b) \\
    &= N_i \sm b
\end{align*}
therefore
\[
    \ms N_{i,i} \eq \bigcup_{\substack{S \in \ms U_{i} \\ S \not \sube M_{i}}} \{ S \} = \bigcup_{\substack{S \sube N_i \\ b \in S \\ S \not \sube M_i}} \{ S \} = \bigcup_{\substack{S \sube ABC \\ S \not \sube AC \\ S \not \sube BC}} \{ S \}
\]
and
\[
    \delta_{\ms N_{i,i}} = \delta_{A, B \mid C}.
\]

Lemma \ref{lem:ncon_ie} motivates application of the principle of inclusion-exclusion to the base conditional independence terms.

\begin{lem}
    \label{lem:ncon_ie}
    Let $\g = (V, E)$ be an ADMG and $b \in \ba{\g}{V}$. If $\ms M, \ms N = \textsc{Pairs}(\g, b)$ where $|\ms M| = n$ and $\ms R = \{ N \in \mc N(\g) \; : \; b \in N \}$, then\textup{:}
    \[
        \bigcup_{i=1}^n \ms N_{i,i} = \ms R.
    \]
\end{lem}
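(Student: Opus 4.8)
The plan is to prove the two inclusions $\bigcup_{i=1}^n \ms N_{i,i} \sube \ms R$ and $\ms R \sube \bigcup_{i=1}^n \ms N_{i,i}$ separately, leaning on the bookkeeping that \textsc{Pairs} performs as it consumes the working set. Before either inclusion, I would record two basic facts. First, each $\ms N_{i,i}$ consists of sets $S$ with $b \in S \sube N_i$ and $S \not\sube M_i$; since $N_i \in \mc N(\g)$ and every such $S$ satisfies $b \in S$ and $S \not\sube M_i$ where $M_i$ is the (unique, by Lemma \ref{lem:pairs_helper}) maximal parameterizing set containing $b$ inside $N_i$, one checks that $S$ itself cannot be a parameterizing set: if $S \in \mc M(\g)$ then $b \in S \sube N_i$ forces $S \sube M_i$ by maximality, contradicting $S \not\sube M_i$. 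Hence $S \in \mc N(\g)$ and $b \in S$, i.e. $S \in \ms R$. This gives $\bigcup_{i=1}^n \ms N_{i,i} \sube \ms R$ immediately.

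For the reverse inclusion, I would track the algorithm's loop invariant. At the start of iteration $i$ the working set $\ms R^{(i)}$ is $\ms R$ with the sets removed during iterations $1,\dots,i-1$; the removal rule deletes exactly those $S \in \ms R^{(i-1)}$ with $b \in S \sube N_{i-1}$ and $S \not\sube M_{i-1}$, which is precisely $\ms R^{(i-1)} \cap \ms N_{i-1,i-1}$. The key claim is: every set removed in iteration $i$ lies in $\ms N_{i,i}$, and conversely the loop runs until $\ms R$ is exhausted, so every element of $\ms R$ is removed in some iteration. The first half is just the definition of $\ms N_{i,i}$ restricted to the current working set. The second half requires termination and the fact that nothing is ``lost'': since at each step we pick $N \in \ceil{\ms R^{(i)}}$ (nonempty while $\ms R^{(i)} \neq \es$) and remove at least $N$ itself — note $b \in N \sube N$ and $N \not\sube M_i$ because $N$ is a constrained set while $M_i$ is a parameterizing set, so $N \neq M_i$, and $N \sube M_i$ with $N$ maximal-in-$\ms R^{(i)}$ would force... here I need to argue $N \not\sube M_i$ carefully: $M_i \sube N$ by construction (it is a subset of $N$), so $N \sube M_i$ would give $N = M_i$, impossible since $N \in \mc N(\g)$ and $M_i \in \mc M(\g)$ are disjoint families. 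Thus $N$ is always removed, $|\ms R^{(i)}|$ strictly decreases, the loop terminates, and $\ms R = \bigsqcup_i (\ms R^{(i)} \cap \ms N_{i,i}) \sube \bigcup_i \ms N_{i,i}$.

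Assembling: $\ms R \sube \bigcup_{i=1}^n \ms N_{i,i} \sube \ms R$, giving equality. The main obstacle I anticipate is the reverse inclusion's subtlety that a set $S \in \ms R$ might be removed in iteration $i$ because $b \in S \sube N_i$ and $S \not\sube M_i$, yet one must confirm this $S$ genuinely belongs to $\ms N_{i,i}$ as defined (not merely to the ``removed'' set) — but this is immediate since $\ms N_{i,i} = \{ S : b \in S \sube N_i,\ S \not\sube M_i \}$ by the displayed computation preceding the lemma, with no reference to membership in the working set. The only other delicate point is ensuring that once $S$ survives past iteration $i$ (i.e. $S \notin \ms N_{i,i}$ even though $S \sube N_i$ is possible when $S \sube M_i$), it will still be available to be removed later; this follows because removal never deletes a set that is not in some $\ms N_{j,j}$, so the working set shrinks only along the targeted terms, and by termination every remaining constrained set is eventually caught. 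I would also invoke Lemma \ref{lem:pairs_helper}(i) exactly once, to justify that $M_i$ in the algorithm is well-defined, and Lemma \ref{lem:fact_helper} is not needed for this lemma (it is used downstream for the inclusion--exclusion itself).
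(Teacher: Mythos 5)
Your proof is correct, and it is both a repackaging and a completion of the paper's argument. For the hard containment $\ms R \sube \bigcup_{i=1}^n \ms N_{i,i}$ the paper and you rely on the same structural fact about \textsc{Pairs}---the working set is exhausted, and the deletion condition at iteration $i$ is literally membership in $\ms N_{i,i}$---but the paper phrases it as a proof by contradiction that descends through a chain $N \sube N_j \sub M_i$ until the fixed set is caught, whereas you phrase it as a loop invariant plus termination (each iteration removes at least $N_i$ itself, since $M_i \sube N_i$ and $N_i \neq M_i$ because $\mc N(\g)$ and $\mc M(\g)$ are disjoint). The two are interchangeable; yours is easier to audit because it replaces the descent by the observation that every $N \in \ms R$ is removed at some iteration $j$ and, by the removal rule at that iteration, is then an element of $\ms N_{j,j}$. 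The more substantive difference is that you also prove the containment $\bigcup_{i=1}^n \ms N_{i,i} \sube \ms R$---any $S$ with $b \in S \sube N_i$ and $S \not\sube M_i$ must be constrained, since a parameterizing set containing $b$ inside $N_i$ would lie inside the unique maximal such set $M_i$---whereas the paper's proof establishes only the reverse inclusion and then asserts equality. Your version does lean on the fact that the unique maximal element of a finite inclusion-ordered family is a maximum; that holds, and it is the one place where Lemma \ref{lem:pairs_helper} does real work.
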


\noindent Accordingly, we cover all constrained sets using sets of sets that represent conditional independence relations. 

Next, we generalize our notation to intersections of the sets in $\ms M$ and $\ms N$:
\[
    M_K \eq \bigcap_{k \in K} M_k \s[36] N_J \eq \bigcap_{j \in J} N_j \s[36] \ms U_J \eq \bigcup_{\substack{T \sube N_J \\ b \in T}} \{T\} \s[36] M_{J,K} \eq M_K \cap N_J.
\]

Similarly, we use $\ms N_{J,K}$ to define the set of sets that corresponds to the conditional independence statement \istate{b}{N_J \sm M_{J,K}}{M_{J,K} \sm b}{\g}. We call terms that fit this more general from \textit{conditional independence terms}. If $A = b$, $B = N_{J} \sm M_{J,K}$, and $C = M_{J,K} \sm b$, then:
\begin{align*}
    ABC &= b \cup (N_J \sm M_{J,K}) \cup (M_{J,K} \sm b) \\
    &= N_J \\
    AC &= b \cup (M_{J,K} \sm b) \\
    &= M_{J,K} \\
    BC &= (N_J \sm M_{J,K}) \cup (M_{J,K} \sm b) \\
    &= N_J \sm b
\end{align*}
therefore
\[
    \ms N_{J,K} \eq \bigcup_{\substack{S \in \ms U_{J} \\ S \not \sube M_{J,K}}} \{S\} = \bigcup_{\substack{S \sube N_J \\ S \not \sube M_{J,K} \\ S \not \sube N_J \sm b}} \{S\} = \bigcup_{\substack{S \sube ABC \\ S \not \sube AC \\ S \not \sube BC}} \{S\}
\]
and
\[
    \delta_{\ms N_{J,K}} = \delta_{A,B \mid C}.
\]

Next we define Algorithm \ref{alg:nie} and show that it decomposes the non-\textit{m}-connecting imsets into two structural imsets using the principle of inclusion exclusion---the inclusion and exclusion imsets. Notably, there may be some redundancy in the imsets constructed by Algorithm \ref{alg:nie}. In Appendix \ref{app:alt_nie} we present a non-redundant version of Algorithm \ref{alg:nie}, however, we have not verified that removing redundant terms does not change the induced independence model.

\vskip 5mm

\begin{algorithm}[H]
\caption{${\textsc{Non-m-connecting imset via In/Ex-clusion NIE}(\g, \leq)}$}
\label{alg:nie}
\KwIn{ADMG: $\g[] = (V,E)$, \, total order: $\leq$}
\KwOut{imsets: $i_{\g}^{\leq}$, $e_{\g}^{\leq}$}
Initialize imsets $i_{\g}^{\leq} : \mc P(V) \ra 0$, $e_{\g}^{\leq} : \mc P(V) \ra 0$ \;
Let $A = V$ \;
\Repeat{$A = \es$}{
    Let $b = \ceo{A}$ and $\ms M, \ms N = \textsc{Pairs}(\g[A],b)$ \;
    \ForEach{$J \sube \{1, \dots, |\ms M| \}$}{
        \ForEach{$K \sube J$ \textup{where} $K \neq \es$}{
            \uIf{$|J \sm K|$ \textup{is even}}{    
                $i_{\g}^{\leq} = i_{\g}^{\leq} + \delta_{\ms N_{J,K}}$ \;
            }\Else{
                $e_{\g}^{\leq} = e_{\g}^{\leq} + \delta_{\ms N_{J,K}}$ \;
            }
        }
    }
    Remove $b$ from $A$ \;   
}
\end{algorithm}

\vskip 5mm

\begin{prop}
    \label{prop:nconn_decomp}
    Let $\g = (V,E)$ be an ADMG with consistent order $\leq$. If $i_{\g}^{\leq}, e_{\g}^{\leq} = \textsc{NIE}(\g, \leq)$ then\textup{:}
    \begin{enumerate}
        \item $n_{\g} = i_{\g}^{\leq} - e_{\g}^{\leq}$\textup{;}
        \item $\mu_{\ms P} \s i_{\g}^{\leq}, \mu_{\ms P} \s e_{\g}^{\leq} \in \mc S(V)$.
\end{enumerate}
\end{prop}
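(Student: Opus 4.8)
The plan is to analyze Algorithm~\ref{alg:nie} one pass of its outer loop at a time. Since $\leq$ is a total order, when the loop reaches a vertex $b$ the working set is exactly $A = \pre{\g}{b}$, which lies in $\mc A(\g)$ because $\leq$ is consistent. Hence Lemma~\ref{lem:pairs_helper}(ii) gives $\mc M(\g[\pre{\g}{b}]) = \{ M \sube \pre{\g}{b} : M \in \mc M(\g) \}$, so $\mc N(\g[\pre{\g}{b}]) = \{ S \sube \pre{\g}{b} : S \neq \es,\ S \notin \mc M(\g) \}$, and therefore the list $\ms R = \{ N \in \mc N(\g[\pre{\g}{b}]) : b \in N \}$ used inside $\textsc{Pairs}(\g[\pre{\g}{b}], b)$ is precisely $\{ N \in \mc N(\g) : \ceo{N} = b \}$. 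As every non-empty set has a unique $\leq$-maximum, $\mc N(\g)$ is the disjoint union of these lists over $b \in V$, and $n_{\g} = \delta_{\mc N(\g)}$. So for part (i) it suffices to show that the net contribution $i_{\g}^{\leq} - e_{\g}^{\leq}$ made during the pass for $b$ equals $\delta_{\ms R}$.

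Fix such a pass, with $\ms M = \seq{M_1, \dots, M_n}$ and $\ms N = \seq{N_1, \dots, N_n}$. The net contribution of the pass is $\sum_{J \sube \{1,\dots,n\}} \sum_{\es \neq K \sube J} (-1)^{|J \sm K|}\, \delta_{\ms N_{J,K}}$. I would compute the coefficient of $\delta_S$ in this sum for an arbitrary $S$ with $b \in S$ (if $b \notin S$ every term vanishes). Put $P(S) = \{ k : S \sube N_k \}$ and $Q(S) = \{ k : S \sube M_k \}$; since $M_k \sube N_k$ one has $Q(S) \sube P(S)$, and from the description of $\ms N_{J,K}$ recalled before Algorithm~\ref{alg:nie} we get $S \in \ms N_{J,K}$ exactly when $J \sube P(S)$ and $K \not\sube Q(S)$. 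Thus the coefficient is $\sum_{J \sube P(S)} \sum_{\es \neq K \sube J,\ K \not\sube Q(S)} (-1)^{|J \sm K|}$; summing over $J \supseteq K$ first and using $\sum_{K \sube J \sube P(S)} (-1)^{|J \sm K|} = \mathbbm{1}[K = P(S)]$, the whole expression collapses to $\mathbbm{1}[P(S) \neq \es \text{ and } P(S) \neq Q(S)]$, which (again because $Q(S) \sube P(S)$) is just $\mathbbm{1}[P(S) \neq Q(S)]$. But $P(S) \neq Q(S)$ says precisely that $S \sube N_k$ and $S \not\sube M_k$ for some $k$, i.e.\ $S \in \bigcup_k \ms N_{k,k}$. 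Hence the net contribution of the pass is $\delta_{\bigcup_k \ms N_{k,k}}$, which equals $\delta_{\ms R}$ by Lemma~\ref{lem:ncon_ie}. Summing over all passes yields $i_{\g}^{\leq} - e_{\g}^{\leq} = \sum_{b \in V} \delta_{\{ N \in \mc N(\g) : \ceo N = b \}} = \delta_{\mc N(\g)} = n_{\g}$, proving part (i).

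For part (ii) I would invoke the identity established just before Algorithm~\ref{alg:nie}: writing $B = N_J \sm M_{J,K}$ and $C = M_{J,K} \sm b$, the sets $b$, $B$, $C$ are pairwise disjoint (note $b \in M_{J,K}$, so $b \notin B$ and $b \notin C$), $b \neq \es$, and $\delta_{\ms N_{J,K}} = \delta_{b, B \mid C}$. By Proposition~\ref{prop:iir}(iii), $\mu_{\ms P} \s \delta_{\ms N_{J,K}} = u_{\seq{b, B \mid C}}$, a semi-elementary imset. Since $i_{\g}^{\leq}$ and $e_{\g}^{\leq}$ are, by construction, sums of terms $\delta_{\ms N_{J,K}}$ with non-negative integer multiplicities, $\mu_{\ms P} \s i_{\g}^{\leq}$ and $\mu_{\ms P} \s e_{\g}^{\leq}$ are non-negative integer combinations of semi-elementary imsets; by Proposition~\ref{prop:semi_elem} each such summand is a non-negative integer combination of elementary imsets, so both $\mu_{\ms P} \s i_{\g}^{\leq}$ and $\mu_{\ms P} \s e_{\g}^{\leq}$ lie in $\mc S(V)$.

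The crux of the argument is the combinatorial cancellation in the second paragraph: tracking which pairs $(J,K)$ place a given $S$ into $\ms N_{J,K}$ and checking that the resulting alternating double sum telescopes to the plain indicator of $\bigcup_k \ms N_{k,k}$. Once that identity is in hand, part (i) is bookkeeping with ancestral sets via Lemmas~\ref{lem:pairs_helper} and~\ref{lem:ncon_ie}, and part (ii) is an immediate consequence of Propositions~\ref{prop:iir} and~\ref{prop:semi_elem} together with the structure of the algorithm.
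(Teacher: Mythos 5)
Your proof is correct and follows the same overall architecture as the paper's: decompose $n_{\g}$ by the $\leq$-maximal vertex of each constrained set, reduce the per-vertex contribution to $\delta_{\ms R}$ via Lemma~\ref{lem:ncon_ie}, and obtain structurality from the semi-elementary form of $\mu_{\ms P}\s\delta_{\ms N_{J,K}}$ via Propositions~\ref{prop:iir} and~\ref{prop:semi_elem}. Where you diverge is in how the central identity
\[
\delta_{\bigcup_{i}\ms N_{i,i}} \;=\; \sum_{J}\sum_{\es\neq K\sube J}(-1)^{|J\sm K|}\,\delta_{\ms N_{J,K}}
\]
is established. The paper derives it top-down by two chained applications of the inclusion/exclusion formula for imsets together with De Morgan's laws and the substitutions $\ms U_j \ra \ms U_J$ and $M_k \ra M_{J,K}$. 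You instead verify it bottom-up by fixing $S \ni b$, characterizing membership $S \in \ms N_{J,K}$ as ``$J \sube P(S)$ and $K \not\sube Q(S)$'' with $P(S)=\{k : S\sube N_k\}$, $Q(S)=\{k : S \sube M_k\}$, and collapsing the alternating double sum with the M\"obius identity $\sum_{K\sube J\sube P(S)}(-1)^{|J\sm K|}=\mathbbm{1}[K=P(S)]$ to get the coefficient $\mathbbm{1}[P(S)\neq Q(S)]=\mathbbm{1}[S\in\bigcup_k\ms N_{k,k}]$. Your membership characterization is right (given $S\sube N_J$, the condition $S\not\sube M_{J,K}=M_K\cap N_J$ reduces to $S\not\sube M_K$, i.e.\ $K\not\sube Q(S)$), and the coefficient computation is a clean, self-contained alternative to the paper's manipulation. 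Your handling of the bookkeeping is also sound: $\pre{\g}{b}\in\mc A(\g)$ by consistency of $\leq$, so Lemma~\ref{lem:pairs_helper}(ii) identifies the per-pass $\ms R$ with $\{N\in\mc N(\g) : \ceo{N}=b\}$, and these partition $\mc N(\g)$.
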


\vskip 5mm

In what follows, we provide an intuition for Proposition \ref{prop:nconn_decomp} with an illustrative example of Algorithm \ref{alg:nie}. Figure \ref{fig:workedoutex} depicts an ADMG $\g = (V,E)$, its parameterizing sets $\mc M(\g)$, and its constrained sets $\mc N(\g)$. Consider the total order $\leq$ over $V$ where $e \leq a \leq d \leq b \leq c$.

\begin{figure}[H]
\begin{center}
\includegraphics[page=12]{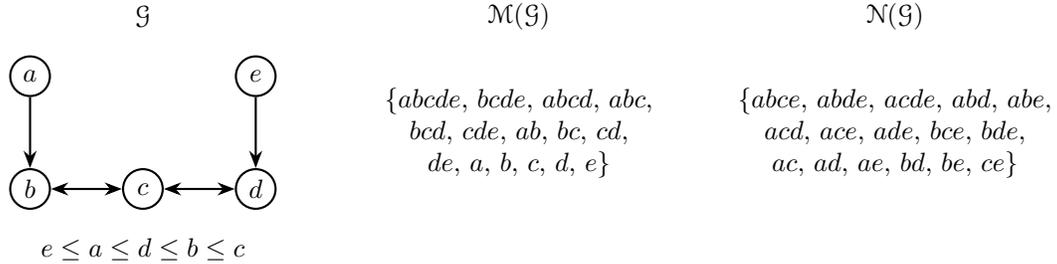}
\end{center}
\caption{An ADMG with vertices $\{a,b,c,d,e\}$ and the corresponding parameterizing and constrained sets for the ADMG.}
\label{fig:workedoutex}
\end{figure}

Run $\textsc{Pairs}(\g[abcde], c)$ to construct ordered lists $\ms N = \seq{\{a,b,c,e\}, \{a,c,d,e\}}$ and $\ms M = \seq{\{a,b,c\}, \{c,d,e\}}$.

\begin{figure}[H]
\begin{center}
\includegraphics[page=13]{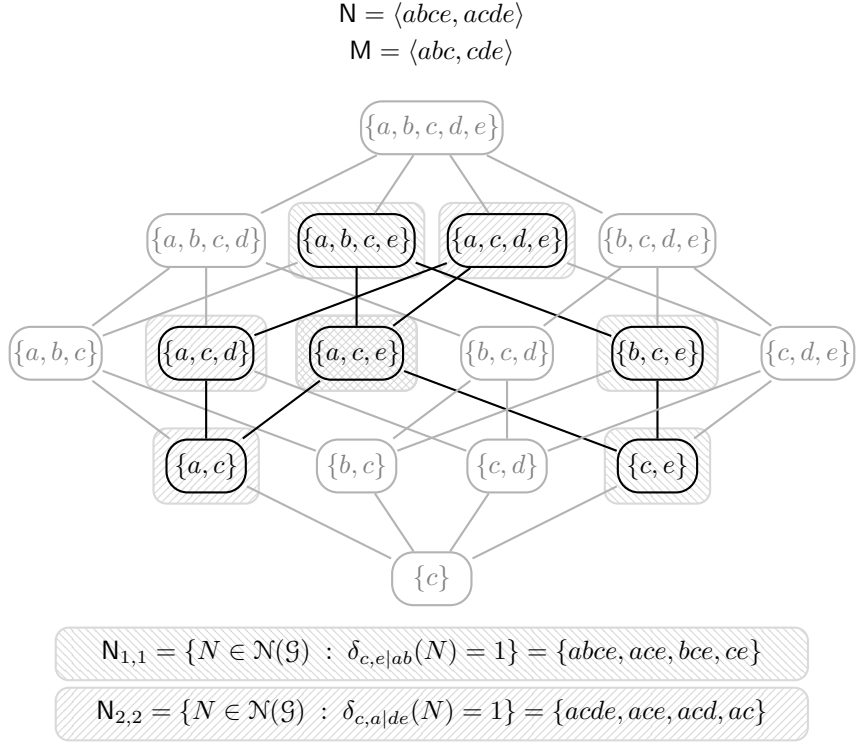}
\end{center}
\caption{A visualization of \textsc{Pairs}$(\g[abcde],c)$ applied to the ADMG in Figure \ref{fig:workedoutex} and the base elementary conditional independence terms. The constrained sets are bold and shaded to denote membership of a base conditional independence term.}
\label{fig:workedout_i}
\end{figure}

\noindent The intersection terms are as follows---these terms correspond to intersections over members of $\ms N$ and $\ms M$ indexed by the loops on lines 5 and 6 of Algorithm \ref{alg:nie}.

\begin{center}
$\ms N$ and $\ms M$ Intersection Terms:
\begin{align*}
N_{1} = \{a,b,c,e\} && M_{1} = \{a,b,c\} \\
N_{2} = \{a,c,d,e\} && M_{2} = \{c,d,e\} \\
N_{12} = \{a,c,e\} && M_{12} = \{c\} \\
\end{align*}
\end{center}

\noindent The conditional independence terms are as follows---these terms correspond to those added on lines 8 and 10.

\begin{center}
Conditional Independence Terms:
\begin{align*}
\ms N_{1,1} = \{ N \in \mc N(\g) \; ; \quad \delta_{c,e \mid ab}(N) = 1 \} \\
\ms N_{2,2} = \{ N \in \mc N(\g) \; ; \quad \delta_{c,a \mid cd}(N) = 1 \} \\
\ms N_{12,1} = \{ N \in \mc N(\g) \; ; \quad \delta_{c,e \mid a}(N) = 1 \} \\
\ms N_{12,2} = \{ N \in \mc N(\g) \; ; \quad \delta_{c,a \mid e}(N) = 1 \} \\
\ms N_{12,12} = \{ N \in \mc N(\g) \; ; \quad \delta_{c,ae}(N) = 1 \} \\
\end{align*}
\end{center}

\noindent The conditional independence terms are partitioned into inclusion and exclusion terms as follows---the inclusion terms are on the left and correspond to those added on line 8 and the exclusion terms are on the right and correspond to those added on line 10. \\

\begin{center}
\begin{minipage}{.45\textwidth}
\centering
Inclusion Terms:
\begin{align*}
\ms N_{1,1} = \{ N \in \mc N(\g) \; ; \quad \delta_{c,e \mid ab}(N) = 1 \} \\
\ms N_{2,2} = \{ N \in \mc N(\g) \; ; \quad \delta_{c,a \mid de}(N) = 1 \} \\
\ms N_{12,12} = \{ N \in \mc N(\g) \; ; \quad \delta_{c,ae}(N) = 1 \} \\
\end{align*}
\end{minipage}%
\begin{minipage}{.05\textwidth}
\hfill
\end{minipage}%
\begin{minipage}{.45\textwidth}
\centering
Exclusion Terms:
\begin{align*}
\ms N_{12,1} = \{ N \in \mc N(\g) \; ; \quad \delta_{c,e \mid a}(N) = 1 \} \\
\ms N_{12,2} = \{ N \in \mc N(\g) \; ; \quad \delta_{c,a \mid e}(N) = 1 \} \\
\vphantom{N_1} \\
\end{align*}
\end{minipage}%
\end{center}

\noindent The inclusion and exclusion imsets are updated accordingly:
\begin{align*}
    i_{\g}^{\leq} &= i_{\g}^{\leq} + \delta_{c,e \mid ab} \! + \, \delta_{c,a \mid de} \! + \, \delta_{c,ae} \\
    &= i_{\g}^{\leq} + \left [ \delta_{abce} + \delta_{ace} + \delta_{bce} + \delta_{ce} \right ] + \left [ \delta_{acde} + \delta_{acd} + \delta_{ace} + \delta_{ac} \right ] + \left [ \delta_{ace} + \delta_{ac} + \delta_{ce} \right ] \\
    e_{\g}^{\leq} &= e_{\g}^{\leq} + \delta_{c,e \mid a} \! + \, \delta_{c,a \mid e} \\
    &= e_{\g}^{\leq} + \left [ \delta_{ace} + \delta_{ce} \right ] + \left [ \delta_{ace} + \delta_{ac} \right ].
\end{align*}

\noindent The difference of $i_{\g}^{\leq}$ and $e_{\g}^{\leq}$ equals 1 for all constrained subsets of $\{a,b,c,d,e\}$ that contain $c$:
\[
    i_{\g}^{\leq}(S) - e_{\g}^{\leq}(S) \eq 
    \begin{cases}
        \s[18] 1 & \s[36] c \in S \in \mc N(\g); \\
        \s[18] 0 & \s[36] c \in S \not \in \mc N(\g).
    \end{cases}
\]

\noindent We encourage the reader to reference Figure \ref{fig:workedout_i} in order to verify this fact. \\

Run $\textsc{Pairs}(\g[abde], b)$ to construct ordered lists $\ms N = \seq{\{a,b,d,e\}}$ and $\ms M = \seq{\{a,b\}}$.

\begin{figure}[H]
\begin{center}
\includegraphics[page=14]{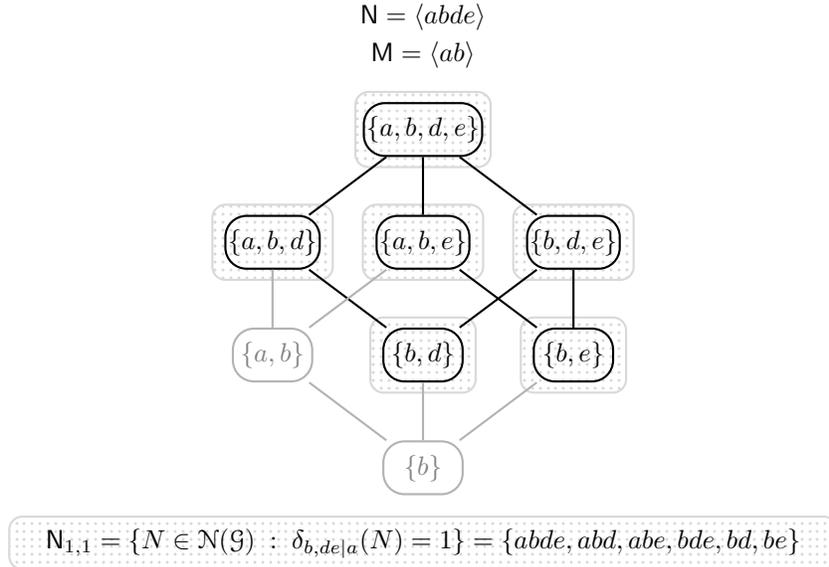}
\end{center}
\caption{A visualization of \textsc{Pairs}$(\g[abde],b)$ applied to the ADMG in Figure \ref{fig:workedoutex} and the corresponding conditional independence term. The constrained sets are bold and shaded to denote membership of the conditional independence term.}
\label{fig:workedout_ii}
\end{figure}

\noindent The intersection and inclusion terms are as follows---these terms correspond to intersections over members of $\ms N$ and $\ms M$ indexed by the loops on lines 5 and 6 of Algorithm \ref{alg:nie} and the terms added on line 8. \\

\begin{minipage}{\textwidth}
\begin{minipage}{.46\textwidth}
\centering
$\ms N$ and $\ms M$ Intersection Terms:
\vspace{-5mm}
\end{minipage}%
\begin{minipage}{.05\textwidth}
\hfill
\end{minipage}%
\begin{minipage}{.46\textwidth}
\centering
Inclusion Terms:
\vspace{-5mm}
\end{minipage}
\begin{minipage}{.46\textwidth}
\begin{align*}
N_1 = \{a,b,d,e\} && M_1 = \{a,b\} \\
\end{align*}
\end{minipage}%
\begin{minipage}{.05\textwidth}
\hfill
\end{minipage}%
\begin{minipage}{.46\textwidth}
\begin{align*}
\ms N_{1,1} = \{ N \in \mc N(\g) \; ; \quad \delta_{b,de \mid a}(N) = 1 \} \\
\end{align*}
\end{minipage}
\end{minipage}

\noindent The inclusion imset is updated accordingly (no update is made to the exclusion imset):
\begin{align*}
    i_{\g}^{\leq} &= i_{\g}^{\leq} + \delta_{b,de \mid a} \\
    &= i_{\g}^{\leq} + \left[ \delta_{abde} + \delta_{abd} + \delta_{abe} + \delta_{bde} + \delta_{bd} + \delta_{be} \right]
\end{align*}

\noindent The difference of $i_{\g}^{\leq}$ and $e_{\g}^{\leq}$ equals 1 for all constrained subsets of $\{a,b,d,e\}$ that contain $b$:
\[
    i_{\g}^{\leq}(S) - e_{\g}^{\leq}(S) \eq 
    \begin{cases}
        \s[18] 1 & \s[36] b \in S \in \mc N(\g); \\
        \s[18] 0 & \s[36] b \in S \not \in \mc N(\g).
    \end{cases}
\]

\noindent We encourage the reader to reference Figure \ref{fig:workedout_ii} in order to verify this fact. \\

Run $\textsc{Pairs}(\g[ade], d)$ to construct ordered lists $\ms N = \seq{\{a,d,e\}}$ and $\ms M = \seq{\{d,e\}}$.

\begin{figure}[H]
\begin{center}
\includegraphics[page=15]{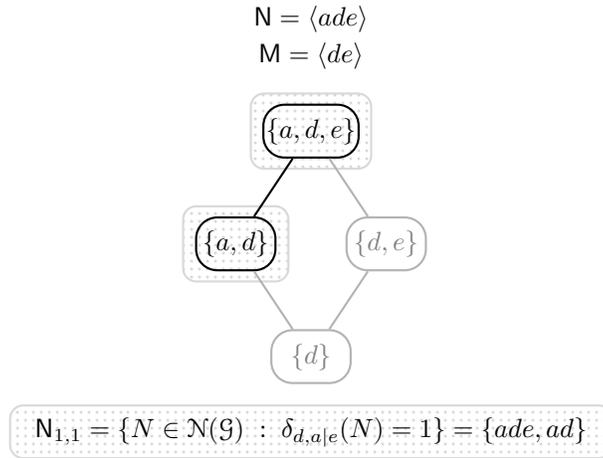}
\end{center}
\caption{A visualization of \textsc{Pairs}$(\g[ade],d)$ applied to the ADMG in Figure \ref{fig:workedoutex} and the corresponding conditional independence term. The constrained sets are bold and shaded to denote membership of the conditional independence term.}
\label{fig:workedout_iii}
\end{figure}

\noindent The intersection and inclusion terms are as follows---these terms correspond to intersections over members of $\ms N$ and $\ms M$ indexed by the loops on lines 5 and 6 of Algorithm \ref{alg:nie} and the terms added on line 8. \\

\begin{center}
\begin{minipage}{.46\textwidth}
\centering
Intersection Terms:
\vspace{-5mm}
\end{minipage}%
\begin{minipage}{.05\textwidth}
\hfill
\end{minipage}%
\begin{minipage}{.46\textwidth}
\centering
Positive Conditional Terms:
\vspace{-5mm}
\end{minipage}
\begin{minipage}{.46\textwidth}
\begin{align*}
N_1 = \{a,d,e\} && M_1 = \{d,e\} \\
\end{align*}
\end{minipage}%
\begin{minipage}{.05\textwidth}
\hfill
\end{minipage}%
\begin{minipage}{.46\textwidth}
\begin{align*}
\ms N_{1,1} = \{ N \in \mc N(\g) \; ; \quad \delta_{d,a \mid e}(N) = 1 \} \\
\end{align*}
\end{minipage}
\end{center}

\noindent The inclusion imset is updated accordingly (no update is made to the exclusion imset):
\begin{align*}
    i_{\g}^{\leq} &= i_{\g}^{\leq} + \delta_{d,a \mid e} \\
    &= i_{\g}^{\leq} + \left[ \delta_{ade} + \delta_{ad} \right]
\end{align*}

\noindent The difference of $i_{\g}^{\leq}$ and $e_{\g}^{\leq}$ equals 1 for all constrained subsets of $\{a,d,e\}$ that contain $d$:
\[
    i_{\g}^{\leq}(S) - e_{\g}^{\leq}(S) \eq 
    \begin{cases}
        \s[18] 1 & \s[36] d \in S \in \mc N(\g); \\
        \s[18] 0 & \s[36] d \in S \not \in \mc N(\g).
    \end{cases}
\]

\noindent We encourage the reader to reference Figure \ref{fig:workedout_iii} in order to verify this fact. \\

Run $\textsc{Pairs}(\g[ae], a)$ to construct ordered lists $\ms N = \seq{\{a,e\}}$ and $\ms M = \seq{\{a\}}$.

\begin{figure}[H]
\begin{center}
\includegraphics[page=16]{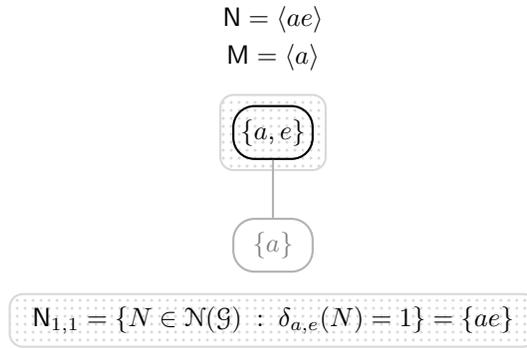}
\end{center}
\caption{A visualization of \textsc{Pairs}$(\g[ae],a)$ applied to the ADMG in Figure \ref{fig:workedoutex} and the corresponding conditional independence term. The constrained sets are bold and shaded to denote membership of the conditional independence term.}
\label{fig:workedout_iv}
\end{figure}

\noindent The intersection and inclusion terms are as follows---these terms correspond to intersections over members of $\ms N$ and $\ms M$ indexed by the loops on lines 5 and 6 of Algorithm \ref{alg:nie} and the terms added on line 8. \\

\begin{center}
\begin{minipage}{.46\textwidth}
\centering
Intersection Terms:
\vspace{-5mm}
\end{minipage}%
\begin{minipage}{.05\textwidth}
\hfill
\end{minipage}%
\begin{minipage}{.46\textwidth}
\centering
Positive Conditional Terms:
\vspace{-5mm}
\end{minipage}
\begin{minipage}{.46\textwidth}
\begin{align*}
N_1 = \{a,e\} && M_1 = \{a\} \\
\end{align*}
\end{minipage}%
\begin{minipage}{.05\textwidth}
\hfill
\end{minipage}%
\begin{minipage}{.46\textwidth}
\begin{align*}
\ms N_{1,1} = \{ N \in \mc N(\g) \; ; \quad \delta_{a,e}(N) = 1 \} \\
\end{align*}
\end{minipage}
\end{center}

\noindent The inclusion imset is updated accordingly (no update is made to the exclusion imset):
\begin{align*}
    i_{\g}^{\leq} &= i_{\g}^{\leq} + \delta_{ae}
\end{align*}

\noindent The difference of $i_{\g}^{\leq}$ and $e_{\g}^{\leq}$ equals 1 for all constrained subsets of $\{a,e\}$ that contain $a$:
\[
    i_{\g}^{\leq}(S) - e_{\g}^{\leq}(S) \eq 
    \begin{cases}
        \s[18] 1 & \s[36] a \in S \in \mc N(\g); \\
        \s[18] 0 & \s[36] a \in S \not \in \mc N(\g).
    \end{cases}
\]

\noindent We encourage the reader to reference Figure \ref{fig:workedout_iv} in order to verify this fact. \\

Run $\textsc{Pairs}(\g[e], e)$ to construct ordered lists $\ms N = \seq{}$ and $\ms M = \seq{}$.

\begin{figure}[H]
\begin{center}
\includegraphics[page=17]{figures.pdf}
\end{center}
\caption{A visualization of \textsc{Pairs}(\g[e],e) applied to the ADMG in Figure \ref{fig:workedoutex}.}
\label{fig:workedout_v}
\end{figure}

\noindent There are no intersection terms or updates to be made. Trivially, the difference of $i_{\g}^{\leq}$ and $e_{\g}^{\leq}$ equals 1 for all constrained subsets of $\{e\}$ that contain $e$:
\[
    i_{\g}^{\leq}(S) - e_{\g}^{\leq}(S) \eq 
    \begin{cases}
        \s[18] 1 & \s[36] e \in S \in \mc N(\g); \\
        \s[18] 0 & \s[36] e \in S \not \in \mc N(\g).
    \end{cases}
\]

\vskip 5mm

After all iterations of Algorithm \ref{alg:nie}:
\begin{align*}
i_{\g}^{\leq} &= \delta_{c,e \mid ab} \! + \, \delta_{c,a \mid de} \! + \, \delta_{c,ae} \! + \, \delta_{b,de \mid a} \! + \, \delta_{d,a \mid e} \! + \, \delta_{a,e} \\
&= \left [ \delta_{abce} + \delta_{ace} + \delta_{bce} + \delta_{ce} \right ] + \left [ \delta_{acde} + \delta_{acd} + \delta_{ace} + \delta_{ac} \right ] + \left [ \delta_{ace} + \delta_{ac} + \delta_{ce} \right ] \\
e_{\g}^{\leq} &= \delta_{c,e \mid a} \! + \, \delta_{c,a \mid e} \\
&= \left [ \delta_{ace} + \delta_{ce} \right ] + \left [ \delta_{ace} + \delta_{ac} \right ].
\end{align*}
Lastly, we apply the the M{\"o}bius inversion:
\begin{align*}
\mu_{\ms P} \s i_{\g}^{\leq} &= u_{\seq{c,e \mid ab}} + u_{\seq{c,a \mid de}} + u_{\seq{c,ae}} + u_{\seq{b,de \mid a}} + u_{\seq{d,a \mid e}} + u_{\seq{a,e}} \\
&= \left [ \delta_{abce} + \delta_{ab} - \delta_{abc} - \delta_{abe} \right ] + \left [ \delta_{acde} + \delta_{de} - \delta_{ade} - \delta_{cde} \right ] + \left [ \delta_{ace} - \delta_{ae} - \delta_{c} \right ] \\
&\quad + \left[ \delta_{abde} + \delta_{a} - \delta_{ade} - \delta_{ab} \right ] + \left[ \delta_{ade} + \delta_{e} - \delta_{ae} + \delta_{de} \right ] + \left [ \delta_{ae} - \delta_{a} - \delta_{e} \right ] \\
\mu_{\ms P} \s e_{\g}^{\leq} &= u_{\seq{c,e \mid a}} + u_{\seq{c,a \mid e}} \\
&= \left [ \delta_{ace} + \delta_{a} - \delta_{ac} - \delta_{ae} \right ] + \left [ \delta_{ace} + \delta_{e} - \delta_{ae} - \delta_{ce} \right ].
\end{align*}
Clearly $\mu_{\ms P} \s i_{\g}^{\leq}, \mu_{\ms P} \s e_{\g}^{\leq} \in \mc S(V)$ and $n_{\g} = i_{\g}^{\leq} - e_{\g}^{\leq}$.

\vskip 5mm

Lemma \ref{lem:olmp} shows that for an ADMG $\g$ with consistent order $\leq$, the independence statements represented in the structural imset $\mu_{\ms P} \s i_{\g}^{\leq}$ imply the independence statements required by the ordered local Markov property.

\begin{lem}
\label{lem:olmp}
Let $\g = (V, E)$ be an ADMG and $\leq$ be a total order consistent with $\g[]$. If $i_{\g}^{\leq}, e_{\g}^{\leq} = \textsc{NIE}(\g,\leq)$, $A \in \mc A(\g)$, and $b = \ceo{A}$, then\textup{:}
\[
    \istate{b}{A \sm \cl{\g[A]}{b}}{\mb{\g[A]}{b}}{\mu_{\ms P} \s i_{\g}^{\leq}}.
\]
\end{lem}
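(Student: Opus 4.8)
The plan is to verify the claimed conditional independence directly at the level of structural imsets: since $\mu_{\ms P}\s i_{\g}^{\leq}\in\mc S(V)$ by Proposition~\ref{prop:nconn_decomp}(2), it suffices to exhibit a rational multiple of the relevant semi-elementary imset that can be subtracted while staying in $\mc S(V)$. Set $R:=\pre{\g}{b}$. Because $\leq$ is consistent with $\g$ and $b=\ceo{A}$, every vertex of $A$ precedes $b$; hence $A\sube R$, the set $R$ is ancestral, $A$ is ancestral in $\g[R]$, $b=\ceo{R}$, $b\in\ba{\g}{A}$, and $b\in\ba{\g[R]}{N}$ for every $N\ni b$ with $N\sube R$. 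When Algorithm~\ref{alg:nie} reaches $b$ its working set equals $R$, so it calls $\textsc{Pairs}(\g[R],b)$, producing ordered lists $\ms M=\seq{M_1,\dots,M_n}$ and $\ms N=\seq{N_1,\dots,N_n}$. I record three facts. First, $i_{\g}^{\leq}$ is built as a sum of indicators $\delta_{\ms N_{J,K}}$ with $|J\sm K|$ even, each of which equals some $\delta_{A',B'\mid C'}$; by Proposition~\ref{prop:iir}(iii) and Proposition~\ref{prop:semi_elem}, $\mu_{\ms P}\s i_{\g}^{\leq}$ is therefore a nonnegative integer combination of elementary imsets. Second, for each $i$ the term $\delta_{\ms N_{\{i\},\{i\}}}$ is one of those summands, so removing the semi-elementary imset $u_{\seq{b,\,N_i\sm M_i\,\mid\,M_i\sm b}}=\mu_{\ms P}\s\delta_{\ms N_{i,i}}$ leaves a sum of semi-elementary imsets, whence $\istate{b}{N_i\sm M_i}{M_i\sm b}{\mu_{\ms P}\s i_{\g}^{\leq}}$. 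Third, since the supports $\{S:b'\in S\sube\pre{\g}{b'}\}$ are pairwise disjoint across vertices $b'$, Proposition~\ref{prop:nconn_decomp}(1) together with Lemma~\ref{lem:ncon_ie} localises to the $b$-iteration: writing $E_b$ and $O_b$ for the sums of the $\delta_{\ms N_{J,K}}$ with $|J\sm K|$ even, respectively odd, one has $E_b-O_b=\delta_{\ms R_b}$ where $\ms R_b:=\{S\in\mc N(\g[R]):b\in S\}$.

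Next I identify the target. Applying Lemma~\ref{lem:fact_helper} to $\g$ with $N=A$, and Lemma~\ref{lem:pairs_helper}(ii) to identify $\mc M(\g[A])$ with $\{M\in\mc M(\g[R]):M\sube A\}$, one gets that $M^{\ast}:=\cl{\g[A]}{b}=\co{\g[A]}{b}$ equals $\ceil{\{M\in\mc M(\g[R]):b\in M\sube A\}}$; in particular $M^{\ast}\in\mc M(\g[R])$ (a short argument via Proposition~\ref{prop:mcimset_defn}(ii) shows $\co{\g[A]}{b}$ is a parameterizing set of $\g[A]$ and is the largest one containing $b$). Since $\mb{\g[A]}{b}=M^{\ast}\sm b$, the statement to be proved is $\istate{b}{A\sm M^{\ast}}{M^{\ast}\sm b}{\mu_{\ms P}\s i_{\g}^{\leq}}$, and by definition of representation in a structural imset it suffices to show $\mu_{\ms P}\s i_{\g}^{\leq}-u^{\ast}\in\mc S(V)$, where $u^{\ast}:=u_{\seq{b,\,A\sm M^{\ast}\,\mid\,M^{\ast}\sm b}}=\mu_{\ms P}\s\delta_{\ms N^{\ast}}$ and $\ms N^{\ast}:=\{S:b\in S\sube A,\ S\not\sube M^{\ast}\}$. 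If $A\in\mc M(\g[R])$ then $M^{\ast}=A$, so $u^{\ast}=0$ and there is nothing to prove; assume then that $A$ is constrained. Every $S\in\ms N^{\ast}$ is constrained in $\g[R]$---otherwise $S\in\mc M(\g[A])$ with $b\in S$, forcing $S\sube\co{\g[A]}{b}=M^{\ast}$---so $\ms N^{\ast}\sube\ms R_b$.

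The remaining step is the crux. Using the third fact above, the $b$-iteration contributes $E_b=\delta_{\ms R_b}+O_b$ to $i_{\g}^{\leq}$, so
\begin{align*}
    i_{\g}^{\leq}-\delta_{\ms N^{\ast}}
    &=\Bigl(\textstyle\sum_{b'\neq b}E_{b'}\Bigr)+O_b+\bigl(\delta_{\ms R_b}-\delta_{\ms N^{\ast}}\bigr)\\
    &=\Bigl(\textstyle\sum_{b'\neq b}E_{b'}\Bigr)+O_b+\delta_{\ms R_b\sm\ms N^{\ast}}.
\end{align*}
Applying $\mu_{\ms P}$, the first two groups become sums of semi-elementary imsets and hence lie in $\mc S(V)$; therefore $\mu_{\ms P}\s i_{\g}^{\leq}-u^{\ast}\in\mc S(V)$ as soon as $\mu_{\ms P}\s\delta_{\ms R_b\sm\ms N^{\ast}}\in\mc S(V)$. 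Now $\ms R_b\sm\ms N^{\ast}$ is the disjoint union of the constrained sets of $\g[R]$ containing $b$ that are not contained in $A$, and the constrained subsets of $M^{\ast}$ containing $b$. One natural way to finish is by induction over the ancestral sets ordered by reverse inclusion---the base case $A=V$ being immediate from the second fact (if $V$ is constrained it is selected first, with $M_1=\cl{\g}{b}$, and otherwise $u^{\ast}=0$)---the inductive step re-expressing $\delta_{\ms R_b\sm\ms N^{\ast}}$ by inclusion--exclusion over intersections of the $N_i$'s and $M_i$'s (exactly the combinatorics behind Proposition~\ref{prop:nconn_decomp}(1)) so that it too becomes a nonnegative rational combination of elementary imsets. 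The delicate points---and where I expect most of the effort to go---are checking that this rewriting really keeps all coefficients nonnegative and that the induction is well founded, i.e.\ that the conditioning set contracts all the way to $M^{\ast}\sm b$ rather than getting stranded at some parameterizing set strictly between $M^{\ast}$ and one of the $M_i$; the graphical content of Lemma~\ref{lem:fact_helper}, which certifies that every $\ms N_{J,K}$ is a genuine conditional independence term of $\g$, is what keeps this bookkeeping under control. Once $\mu_{\ms P}\s\delta_{\ms R_b\sm\ms N^{\ast}}\in\mc S(V)$ is in hand, the display yields $\mu_{\ms P}\s i_{\g}^{\leq}-u^{\ast}\in\mc S(V)$, which is exactly $\istate{b}{A\sm\cl{\g[A]}{b}}{\mb{\g[A]}{b}}{\mu_{\ms P}\s i_{\g}^{\leq}}$.
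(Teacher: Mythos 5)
Your setup (the localisation of Algorithm~\ref{alg:nie} to the $b$-iteration, the identity $E_b-O_b=\delta_{\ms R_b}$, the identification $M^{\ast}=\cl{\g[A]}{b}=\ceil{\{M\in\mc M(\g[R]):b\in M\sube A\}}$, and the inclusion $\ms N^{\ast}\sube\ms R_b$) is correct, and the algebraic rearrangement of $i_{\g}^{\leq}-\delta_{\ms N^{\ast}}$ is a valid imset identity. But the argument has a genuine gap at exactly the point you flag as ``the crux'': the claim that $\mu_{\ms P}\s\delta_{\ms R_b\sm\ms N^{\ast}}\in\mc S(V)$ is asserted, not proved, and there is no reason to expect it. The family $\ms R_b\sm\ms N^{\ast}$ is an essentially arbitrary collection of constrained sets containing $b$, and the indicator of such a collection generally does \emph{not} have a structural M\"obius inversion --- this is precisely the phenomenon exhibited in Appendix~\ref{app:ng_not_in_s} for $n_{\g}$, and it is the reason Algorithm~\ref{alg:nie} must produce an \emph{exclusion} imset at all. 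The inclusion--exclusion rewriting you propose for the inductive step reintroduces negatively signed terms of exactly the kind you are trying to avoid, so the induction as sketched does not close. More fundamentally, your strategy tries to certify the target independence by a single direct subtraction $\mu_{\ms P}\s i_{\g}^{\leq}-u^{\ast}\in\mc S(V)$ with $k=1$; the target statement $\istate{b}{A\sm M^{\ast}}{M^{\ast}\sm b}{\cdot}$ is far from any of the base statements $\istate{b}{N_i\sm M_i}{M_i\sm b}{\cdot}$ that are literally summands of $\mu_{\ms P}\s i_{\g}^{\leq}$, and bridging that distance by hand at the imset level amounts to re-proving a special case of Studen\'y's theorem that structural imsets induce semi-graphoids.

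The paper takes a different and easier route that you should compare against: it first extracts the directly subtractable base statements (Lemma~\ref{lem:pairs_ci}), then derives the target inside the independence model $\mc I(\mu_{\ms P}\s i_{\g}^{\leq})$ using the semi-graphoid axioms (Proposition~\ref{prop:struct_semigraphiod}), via an induction along the consistent order organised by Algorithm~\ref{alg:olmp}. The work there is in the bookkeeping of the partition $B_A,C_A,D_A,F_A$ of $R_A$, the minimal latent set $\ml{\g}{A}$, Lemma~\ref{lem:olmp_helper}, and a recursive call on smaller ancestral sets; the chain of decomposition, weak union, and contraction steps is exactly what replaces your unproved structural-imset claim. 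To repair your proof you would either have to import that semi-graphoid derivation (at which point your imset identity becomes unnecessary) or actually establish $\mu_{\ms P}\s\delta_{\ms R_b\sm\ms N^{\ast}}\in\mc S(V)$, which I do not believe holds in general.
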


\noindent Accordingly, we get the first of the following two results.

\begin{prop}
\label{prop:in_ex_indmodels}
Let $\g = (V, E)$ be an ADMG, $\leq$ be a total order consistent with $\g[]$. If $i_{\g}^{\leq}, e_{\g}^{\leq} = \textsc{NIE}(\g,\leq)$, then\textup{:}
\begin{enumerate}
    \item $\mc I(\mu_{\ms P} \s i_{\g}^{\leq}) = \mc I(\g)$\textup{;}
    \item $\mc I(\mu_{\ms P} \s e_{\g}^{\leq}) \sube \mc I(\g)$.
\end{enumerate}
\end{prop}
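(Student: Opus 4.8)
The plan is to prove part (i) by establishing its two inclusions separately, and to obtain part (ii) as a byproduct of the harder one. The common starting point is the observation that each of $\mu_{\ms P} \s i_{\g}^{\leq}$ and $\mu_{\ms P} \s e_{\g}^{\leq}$ is a finite sum of semi-elementary imsets attached to m-separations of $\g$: by Algorithm \ref{alg:nie}, collecting terms across all of its iterations $b = \ceo{A}$, we have $i_{\g}^{\leq} = \sum \delta_{\ms N_{J,K}}$ over the inclusion terms and $e_{\g}^{\leq} = \sum \delta_{\ms N_{J,K}}$ over the exclusion terms; since $\delta_{\ms N_{J,K}} = \delta_{b,\, N_J \sm M_{J,K} \mid M_{J,K} \sm b}$, Proposition \ref{prop:iir} gives $\mu_{\ms P} \s \delta_{\ms N_{J,K}} = u_{\seq{b,\, N_J \sm M_{J,K} \mid M_{J,K} \sm b}}$.

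The inclusion $\mc I(\g) \sube \mc I(\mu_{\ms P} \s i_{\g}^{\leq})$ I would get directly from Lemma \ref{lem:olmp}: for every $A \in \mc A(\g)$ with $b = \ceo{A}$, the statement $\istate{b}{A \sm \cl{\g[A]}{b}}{\mb{\g[A]}{b}}{\mu_{\ms P} \s i_{\g}^{\leq}}$ holds, so $\mu_{\ms P} \s i_{\g}^{\leq}$ represents every ordered local Markov statement for $\g$ and $\leq$. Since $\mu_{\ms P} \s i_{\g}^{\leq} \in \mc S(V)$ (Proposition \ref{prop:nconn_decomp}), $\mc I(\mu_{\ms P} \s i_{\g}^{\leq})$ is a semi-graphoid (Proposition \ref{prop:struct_semigraphiod}); and the argument behind Theorem \ref{thm:mprops}, which uses only the semi-graphoid axioms \cite{richardson2003markov}, shows that any semi-graphoid representing all ordered local Markov statements for $\g$ and $\leq$ represents every global Markov statement, hence contains $\mc I(\g)$.

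For the reverse inclusion $\mc I(\mu_{\ms P} \s i_{\g}^{\leq}) \sube \mc I(\g)$, and simultaneously for part (ii), I would first check that each triple $\seq{b,\, N_J \sm M_{J,K} \mid M_{J,K} \sm b}$ above really is an m-separation of $\g$. Fix the iteration with ancestral set $A$ (so $b = \ceo{A}$ is barren in $\g[A]$) and set $M_J^{\ast} = \ceil{\{ M \in \mc M(\g[A]) : b \in M \sube N_J \}}$, which is well defined by Lemma \ref{lem:pairs_helper}. Lemma \ref{lem:fact_helper}, applied in $\g[A]$, gives $\istate{b}{N_J \sm M_J^{\ast}}{M_J^{\ast} \sm b}{\g[A]}$; as $M_J^{\ast} \sube M_{J,K} \sube N_J$, one application of weak union (legitimate since $\mc I(\g[A])$ is a compositional graphoid, Proposition \ref{prop:admg_graphoid}) moves $M_{J,K} \sm M_J^{\ast}$ into the conditioning set and yields $\istate{b}{N_J \sm M_{J,K}}{M_{J,K} \sm b}{\g[A]}$; and since $A \in \mc A(\g)$, this m-separation also holds in $\g$ (Proposition \ref{prop:induce_margin}). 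Now let $P^{\ast}$ be a positive measure with density $f$ and $\mc I(P^{\ast}) = \mc I(\g)$ --- such a measure exists, for instance a generic Gaussian member of the ADMG model of $\g$. Each triple above holds in $P^{\ast}$, so $\prod_S f_S(x)^{v(S)} = 1$ for $\mu$-almost all $x$, where $v$ is the associated semi-elementary imset. Multiplying these identities over the inclusion terms and, separately, over the exclusion terms gives $\prod_S f_S(x)^{(\mu_{\ms P} \s i_{\g}^{\leq})(S)} = 1$ and $\prod_S f_S(x)^{(\mu_{\ms P} \s e_{\g}^{\leq})(S)} = 1$ for $\mu$-almost all $x$. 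Since $\mu_{\ms P} \s i_{\g}^{\leq}, \mu_{\ms P} \s e_{\g}^{\leq} \in \mc S(V)$ (Proposition \ref{prop:nconn_decomp}), Theorem \ref{thm:imfact} yields $\mc I(\mu_{\ms P} \s i_{\g}^{\leq}) \sube \mc I(P^{\ast}) = \mc I(\g)$ and $\mc I(\mu_{\ms P} \s e_{\g}^{\leq}) \sube \mc I(P^{\ast}) = \mc I(\g)$, completing part (i) and giving part (ii).

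The main obstacle is the first step of the last paragraph: confirming that every conditional independence term emitted by Algorithm \ref{alg:nie} is an m-separation of $\g$. This comes down to the bookkeeping $M_J^{\ast} \sube M_{J,K} \sube N_J$ over the nested loops $J \sube \{1, \dots, |\ms M|\}$, $K \sube J$, plus a correct transfer of statements between $\g[A]$ and $\g$; each individual step is routine given the graphoid axioms and Proposition \ref{prop:induce_margin}, but the combination takes care. A secondary point to state explicitly is the existence of a positive measure faithful to $\g$, which is standard for ADMG models.
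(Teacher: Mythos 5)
Your argument is correct and rests on the same two pillars as the paper's proof: Lemma \ref{lem:olmp} combined with Theorem \ref{thm:mprops} for $\mc I(\g) \sube \mc I(\mu_{\ms P} \s i_{\g}^{\leq})$, and Lemma \ref{lem:fact_helper} for the reverse containments. The genuine divergence is in how you complete $\mc I(\mu_{\ms P} \s i_{\g}^{\leq}),\, \mc I(\mu_{\ms P} \s e_{\g}^{\leq}) \sube \mc I(\g)$. The paper cites Lemma \ref{lem:fact_helper} and stops, implicitly passing from ``every semi-elementary generator corresponds to an \textit{m}-separation of $\g$'' to ``the structural imset represents no independence outside $\mc I(\g)$''; that passage is not automatic, since the independence model induced by a sum of semi-elementary imsets is not in general the semi-graphoid closure of its generators. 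Your detour through a positive measure $P^{\ast}$ with $\mc I(P^{\ast}) = \mc I(\g)$ and Theorem \ref{thm:imfact} is precisely the kind of argument that makes this step rigorous: the generators hold in $P^{\ast}$, hence the product formula holds, hence $\mc I(\mu_{\ms P} \s i_{\g}^{\leq}) \sube \mc I(P^{\ast}) = \mc I(\g)$, and likewise for the exclusion imset. What this buys in rigor it pays for with an appeal to the existence of a faithful positive measure for an arbitrary ADMG (e.g.\ generic Gaussian faithfulness for the Markov-equivalent MAG), which is standard but external to the paper and should be cited explicitly. Your remaining bookkeeping is sound: $M_{J,K} = M_K \cap N_J \supseteq \bigl(\bigcap_{j \in J} M_j\bigr) \cap N_J$, and each $M_j$ contains $M_J^{\ast}$ because $M_J^{\ast}$ is a parameterizing set with $b \in M_J^{\ast} \sube N_J \sube N_j$ while $M_j$ is, by Lemma \ref{lem:pairs_helper}, the unique maximal---hence maximum---such subset of $N_j$; the transfer from $\g[A]$ to $\g$ via Proposition \ref{prop:induce_margin} and Corollary \ref{cor:induced_admg} is also correct.
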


\begin{cor}
    \label{cor:fact}
    If $\g$ be an ADMG with consistent order $\leq$, then\textup{:}
    \begin{enumerate}
        \item $\mc I(\g) \sube \mc I(P) \s[18] \Lra \s[18] \phi_{\ms P}(x)^{\top} i_{\g}^{\leq} = 0 \s[18]$ for $\mu$-almost all $x \in \mc X$\textup{;}
        \item $\mc I(\g) \sube \mc I(P) \s[18] \Ra \s[18] \phi_{\ms P}(x)^{\top} e_{\g}^{\leq} = 0 \s[18]$ for $\mu$-almost all $x \in \mc X$.
    \end{enumerate}
\end{cor}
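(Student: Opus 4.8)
The plan is to reduce both parts to Studen{\'y}'s factorization theorem (Theorem~\ref{thm:imfact}), applied to the two structural imsets produced by $\textsc{NIE}(\g,\leq)$. Given the consistent order $\leq$, set $i_{\g}^{\leq}, e_{\g}^{\leq} = \textsc{NIE}(\g,\leq)$. By Proposition~\ref{prop:nconn_decomp} both $\mu_{\ms P}\, i_{\g}^{\leq}$ and $\mu_{\ms P}\, e_{\g}^{\leq}$ are structural imsets, and by Proposition~\ref{prop:in_ex_indmodels} they satisfy $\mc I(\mu_{\ms P}\, i_{\g}^{\leq}) = \mc I(\g)$ and $\mc I(\mu_{\ms P}\, e_{\g}^{\leq}) \sube \mc I(\g)$. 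Since $P$ is a positive measure (as Theorem~\ref{thm:imfact} requires), everything needed is already in place and the proof is largely a matter of chaining these facts together.

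The one step with any content is translating the scalar $\phi_{\ms P}(x)^{\top} u$ into Studen{\'y}'s product-of-densities expression. Using $\phi_{\ms P}(x) = \mu_{\ms P}^{\top} h_{\ms P}(x)$ with $[h_{\ms P}(x)]_S = \log f_S(x)$, for any imset $u$ I would write
\[
    \phi_{\ms P}(x)^{\top} u \;=\; h_{\ms P}(x)^{\top}(\mu_{\ms P}\, u) \;=\; \sum_{S \in \mc P(V)} (\mu_{\ms P}\, u)(S)\, \log f_S(x) \;=\; \log \prod_{S \in \mc P(V)} f_S(x)^{(\mu_{\ms P}\, u)(S)},
\]
so that $\phi_{\ms P}(x)^{\top} u = 0$ for $\mu$-almost all $x$ if and only if $\prod_{S \in \mc P(V)} f_S(x)^{(\mu_{\ms P}\, u)(S)} = 1$ for $\mu$-almost all $x$. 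The only thing to watch is the transpose bookkeeping: $\mu_{\ms P}$ moves onto the other factor of the inner product precisely because $\phi_{\ms P} = \mu_{\ms P}^{\top} h_{\ms P}$, not $\mu_{\ms P} h_{\ms P}$.

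For part~(i) I take $u = i_{\g}^{\leq}$. Since $\mu_{\ms P}\, i_{\g}^{\leq} \in \mc S(V)$, Theorem~\ref{thm:imfact} gives that $\prod_{S} f_S(x)^{(\mu_{\ms P}\, i_{\g}^{\leq})(S)} = 1$ for $\mu$-almost all $x$ is equivalent to $\mc I(\mu_{\ms P}\, i_{\g}^{\leq}) \sube \mc I(P)$; combining this with the display above and with $\mc I(\mu_{\ms P}\, i_{\g}^{\leq}) = \mc I(\g)$ from Proposition~\ref{prop:in_ex_indmodels} yields $\phi_{\ms P}(x)^{\top} i_{\g}^{\leq} = 0$ ($\mu$-a.a.\ $x$) $\Lra \mc I(\g) \sube \mc I(P)$, which is exactly (i). For part~(ii) I take $u = e_{\g}^{\leq}$; the same chain of equivalences (again using Proposition~\ref{prop:nconn_decomp} for structurality) gives $\phi_{\ms P}(x)^{\top} e_{\g}^{\leq} = 0$ ($\mu$-a.a.\ $x$) $\Lra \mc I(\mu_{\ms P}\, e_{\g}^{\leq}) \sube \mc I(P)$. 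Since Proposition~\ref{prop:in_ex_indmodels} supplies only the containment $\mc I(\mu_{\ms P}\, e_{\g}^{\leq}) \sube \mc I(\g)$, the hypothesis $\mc I(\g) \sube \mc I(P)$ forces $\mc I(\mu_{\ms P}\, e_{\g}^{\leq}) \sube \mc I(P)$, hence $\phi_{\ms P}(x)^{\top} e_{\g}^{\leq} = 0$ for $\mu$-almost all $x$.

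I do not expect a genuine obstacle here: the corollary is essentially a repackaging of Propositions~\ref{prop:nconn_decomp} and~\ref{prop:in_ex_indmodels} through Theorem~\ref{thm:imfact}, with the M{\"o}bius identity above as the connective tissue. The points that need care are the transpose manipulation, the standing requirement that $P$ be positive and that the relevant imsets be structural, and the observation that for $e_{\g}^{\leq}$ only one containment direction is available — which is why (ii) is stated as an implication rather than a biconditional, even though the argument in fact also delivers the converse.
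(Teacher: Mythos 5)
Your proof is correct and follows essentially the same route as the paper: the paper's own proof is the one-line ``this directly follows from Proposition~\ref{prop:in_ex_indmodels} and Theorem~\ref{thm:imfact},'' and your write-up supplies exactly the intended connective tissue, namely the transpose identity $\phi_{\ms P}(x)^{\top}u = h_{\ms P}(x)^{\top}\mu_{\ms P}\s u$ (the same manipulation the paper performs inside the proof of Theorem~\ref{thm:mconn_fact}) together with the structurality from Proposition~\ref{prop:nconn_decomp}. One caveat on your closing aside: the argument does \emph{not} deliver the converse of (ii) --- Theorem~\ref{thm:imfact} gives $\phi_{\ms P}(x)^{\top} e_{\g}^{\leq}=0$ ($\mu$-a.a.\ $x$) $\Lra \mc I(\mu_{\ms P}\s e_{\g}^{\leq}) \sube \mc I(P)$, and since $\mc I(\mu_{\ms P}\s e_{\g}^{\leq})$ may be a proper subset of $\mc I(\g)$ (indeed $e_{\g}^{\leq}$ is the zero imset for many graphs, making the left-hand side vacuously true), this does not imply $\mc I(\g) \sube \mc I(P)$.
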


\noindent Combining all the results above, we now state the main theorem of this paper.

\begin{thm}
\label{thm:mconn_fact}
Let $\g = (V, E)$ be an ADMG with consistent order $\leq$. If $P$ is a positive measure, then the following are equivalent\textup{:}
\begin{enumerate}
    \item $\log f(x) = \sum \limits_{M \in \mc M(\g)} \phi_{M}(x) - \phi_{\ms P}(x)^{\top} e_{\g}^{\leq} \s[18]$ for $\mu$-almost all $x \in \mc X$\textup{;}
    \item \istate{A}{B}{C}{\g} $\s[18] \Ra \s[18]$ \istate{A}{B}{C}{P} $\s[18]$ for all $\seq{A,B \mid C} \in \mc T(V)$.
\end{enumerate}
\end{thm}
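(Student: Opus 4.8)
The plan is to assemble the theorem from the machinery already developed, mirroring the strategy outlined after Theorem \ref{thm:mconn_fact_simp} but using the inclusion/exclusion decomposition instead of the bound $|M| \leq 5$. First I would rewrite the log density using the M\"obius inversion on the poset $\ms P = \mc P(V)$ ordered by inclusion. Since $\delta_{\mc P(V)} = \sum_{S \in \mc P(V)} \delta_S$ and, by Proposition \ref{prop:nconn_decomp}(i), the constant imset $\delta_{\mc P(V)} = m_{\g} + n_{\g} = m_{\g} + i_{\g}^{\leq} - e_{\g}^{\leq}$, applying $\phi_{\ms P}(x)^{\top}$ to both sides gives
\[
    \log f(x) = \phi_{\ms P}(x)^{\top} \delta_{\mc P(V)} = \sum_{M \in \mc M(\g)} \phi_M(x) + \phi_{\ms P}(x)^{\top} i_{\g}^{\leq} - \phi_{\ms P}(x)^{\top} e_{\g}^{\leq},
\]
where the identity $\phi_{\ms P}(x)^{\top} m_{\g} = \sum_{M \in \mc M(\g)} \phi_M(x)$ follows because $m_{\g}$ is the $0/1$ indicator of $\mc M(\g) \cup \{\es\}$ and $\phi_{\es}(x) = 0$ by convention. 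Consequently, statement (i) of the theorem — namely $\log f(x) = \sum_{M \in \mc M(\g)} \phi_M(x) - \phi_{\ms P}(x)^{\top} e_{\g}^{\leq}$ — holds for $\mu$-almost all $x$ if and only if $\phi_{\ms P}(x)^{\top} i_{\g}^{\leq} = 0$ for $\mu$-almost all $x$.

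Next I would invoke Corollary \ref{cor:fact}(i), which states precisely that $\mc I(\g) \sube \mc I(P)$ (i.e.\ the global Markov property, statement (ii)) is equivalent to $\phi_{\ms P}(x)^{\top} i_{\g}^{\leq} = 0$ for $\mu$-almost all $x$. Chaining this with the display above closes the loop: (i) $\Lra$ $\phi_{\ms P}(x)^{\top} i_{\g}^{\leq} = 0$ $\Lra$ (ii). I should also note that Corollary \ref{cor:fact}(ii) guarantees $\phi_{\ms P}(x)^{\top} e_{\g}^{\leq} = 0$ whenever the global Markov property holds, so when (ii) is true the adjustment term in (i) vanishes and the factorization collapses to $\log f(x) = \sum_{M \in \mc M(\g)} \phi_M(x)$ — but this observation is not needed for the equivalence itself, only for interpretation.

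For completeness I would spell out the chain of dependencies so that the reader sees where the real content sits: Corollary \ref{cor:fact} rests on Proposition \ref{prop:in_ex_indmodels}, which in turn rests on Lemma \ref{lem:olmp} together with Theorem \ref{thm:mprops} (equivalence of the global and ordered local Markov properties) and Theorem \ref{thm:imfact} (Studen\'y's factorization, applicable since $\mu_{\ms P} \s i_{\g}^{\leq} \in \mc S(V)$ by Proposition \ref{prop:nconn_decomp}(ii)). The direction $\mc I(\mu_{\ms P} \s i_{\g}^{\leq}) \sube \mc I(\g)$ is the easy containment (every independence statement carried by $i_{\g}^{\leq}$ is built from $m$-separation statements in $\g$, since each $\ms N_{J,K}$ term corresponds to a genuine conditional independence $\istate{b}{N_J \sm M_{J,K}}{M_{J,K} \sm b}{\g}$ by Lemma \ref{lem:fact_helper}); the reverse containment $\mc I(\g) \sube \mc I(\mu_{\ms P} \s i_{\g}^{\leq})$ is where Lemma \ref{lem:olmp} does the work, showing the ordered-local generators of $\mc I(\g)$ are all represented in $\mu_{\ms P} \s i_{\g}^{\leq}$. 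Given all of this infrastructure, the proof of Theorem \ref{thm:mconn_fact} itself is essentially a two-line bookkeeping argument; the main obstacle was never here but in establishing Proposition \ref{prop:nconn_decomp} and Lemma \ref{lem:olmp}, so I would keep this final proof short and simply cite those results in the order above.
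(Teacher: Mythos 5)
Your proposal is correct and follows essentially the same route as the paper's proof: both reduce statement (\textit{i}) to the condition $\phi_{\ms P}(x)^{\top} i_{\g}^{\leq} = 0$ via the M\"obius decomposition $\delta_{\mc P(V)} = m_{\g} + i_{\g}^{\leq} - e_{\g}^{\leq}$ and then appeal to the equivalence of that condition with the global Markov property. The only cosmetic difference is that you cite Corollary \ref{cor:fact}(\textit{i}) for both directions, whereas the paper unfolds the forward direction explicitly through Lemma \ref{lem:olmp}, Theorem \ref{thm:imfact}, and Theorem \ref{thm:mprops}; the underlying argument is identical.
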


\noindent Alternatively, the adjusted sum over the parameterizing sets may be computed as an adjusted sum over conditional densities. Let $\g = (V, E)$ be an ADMG with consistent order $\leq$. If $R = \pre{\g}{b}$ where context clarifies $b \in V$, then:
\begin{align*}
    \sum \limits_{M \in \mc M(\g)} \phi_{M}(x) - \phi_{\ms P}(x)^{\top} e_{\g}^{\leq} &= \sum_{S \in \mc P(V)} [m_{\g}(S) - e_{\g}^{\leq}(S)] \, \phi_S(x) \\
    &= \sum_{b \in V} \sum_{\substack{S \sube \cl{\g[R]}{b} \\ b \in S}} [1 - i_{\g}^{\leq}(S)] \, \phi_S(x) \\
    &= \sum_{b \in V} \bigg [ \log f_{b \mid \mb{\g[R]}{b}}(x) - \sum_{\substack{S \sube \cl{\g[R]}{b} \\ b \in S}} i_{\g}^{\leq}(S) \, \phi_{S}(x) \bigg ].
\end{align*}

\noindent This alternative formulation of the \textit{m}-connecting factorization in conjunction with the notion of a \textit{dominating DAG} gives a nice intuition.

\begin{defn}[\textit{dominating DAG}]
Let $\g = (V, E)$ be an ADMG with consistent order $\leq$. If $R = \pre{\g}{b}$ where context clarifies $b \in V$, then the corresponding \textit{dominating DAG} is defined:
\[
    \dom{\g} \eq (V, \{ (a,b) \in \mc E(V) \; : \; a \in \mb{\g[R]}{b} \}).
\]
\end{defn}

\noindent If $\g = (V, E)$ be an ADMG with consistent order $\leq$ and dominating DAG $\mc D = \dom{\g}$, then:
\[
    \log f(x) = \sum_{b \in V} \bigg [ \log f_{b \mid \pa{\mc D}{b}}(x) - \sum_{S \sube \pa{\mc D}{b}} i_{\g}^{\leq}(bS) \, \phi_{bS}(x) \bigg ].
\]

\noindent Using this formulation, the \textit{m}-connecting factorization becomes the well-known recursive factorization for the dominating DAG with a few additional conditional mutual information rate terms to adjust for any superfluous dependencies induced by $\mc D$. 

\section{Consistent Scoring Criterion}
\label{sec:consist_score}

In this section, we discuss an application of the \textit{m}-connecting factorization. In particular, we formulate a consistent scoring criterion with a closed-form solution for many curved exponential families, such as Gaussian and multinomial ADMG models. We investigate asymptotic properties of the score and compare its ability to recover correct independence models against the well-known FCI algorithm and two of its variants

\begin{defn}[\textit{exponential family}]
An \textit{exponential family} is a family of probability measures that admit densities with respect to $\sigma$-finite measure $\mu$:
\[
    f(x \mid \theta) = \exp \left[ \theta^\top t(x) - \psi(\theta) \right] \s[18] \textup{for all} \; \theta \in \Theta \; \textup{where}
\]
\begin{itemize}
    \item $\Theta \eq \{ \theta \in \mbb R^{|t(x)|} \, : \, \int_{x \in \mc X} \exp \left[ \theta^\top t(x) \right] \textup{d}\mu(x) < \infty \}$ is the \textit{natural parameter space};
    \item $t(x)$ is the \textit{sufficient statistic};
    \item $\psi(\theta) \eq \int_{x \in \mc X} \exp \left[ \theta^\top t(x) \right] \textup{d}\mu(x)$ is the \textit{cumulant function}.
\end{itemize}
Let $P_{\theta}$ denote a member of some curved exponential family parameterized by $\theta$. 
\end{defn}

Let $\g = (V, E)$ be an ADMG. We are interested in exponential families restricted to positive measures whose conditional independence relations are represented in $\g$. We denote the general parameter space constrained by $\mc I(\g)$:
\[
    \Theta_{\g} \eq \{ \theta \in \Theta \; : \; \mc I(\g) \sube \mc I(P_{\theta}) \}.
\]
When $\Theta_{\g}$ is a smooth manifold, the family of probability measures $\{ P_\theta \; : \; \theta \in \Theta_{\g} \}$ is a \textit{curved exponential family}; see \cite{kass1997geometrical}.

\begin{prop}[\cite{richardson2002ancestral}]
\label{prop:gadmg}
If $\g = (V,E)$ is an ADMG and $\mc F_\textup{G}(\g) \eq \{ P_\theta \; : \; \theta \in \Theta_{\g} \}$ is a family of Gaussian ADMG models, then $\mc F_\textup{G}(\g)$ is a curved exponential family with dimension\textup{:}
\[
    |\Theta_{\g}| = |V| + |\{ M \in \mc M(\g) \; : \; |M| \in \{ 1, 2\} \}|.
\]
\end{prop}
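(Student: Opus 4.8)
The plan is to split a Gaussian parameter into its mean and covariance, dispose of the mean for free, and match the dimension of the covariance part to the number of parameterizing sets of cardinality one or two. First I would parameterize the saturated Gaussian family by $(h,K)$, where $K$ is the precision matrix and $h = K\mu$, so that $\Theta = \mbb R^{|V|} \times \textup{PD}(V)$. For a jointly Gaussian vector, $\istate{A}{B}{C}{P_\theta}$ holds iff a fixed minor of $K$ (equivalently of $\Sigma = K^{-1}$) vanishes, which is a condition on $K$ alone. Hence $\Theta_{\g} = \mbb R^{|V|} \times \mc K_{\g}$ with $\mc K_{\g} = \{ K \in \textup{PD}(V) : \mc I(\g) \sube \mc I(N(0,K^{-1})) \}$, and $K \mapsto K^{-1}$ carries $\mc K_{\g}$ diffeomorphically onto $\mc S_{\g} = \{ \Sigma \in \textup{PD}(V) : \mc I(\g) \sube \mc I(N(0,\Sigma)) \}$. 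So $\mc F_\textup{G}(\g)$ is a curved exponential family iff $\mc S_{\g}$ is a smooth embedded submanifold of $\textup{PD}(V)$, and then $|\Theta_{\g}| = |V| + \dim \mc S_{\g}$. That $\mc S_{\g}$ is such a manifold, and coincides with the image of the structural-equation parameterization $\Sigma = (I - B)^{-\top}\Omega(I - B)^{-1}$, where $B$ is supported on the directed edges of $\g$ (nilpotent under a consistent order, so $I - B$ is invertible) and $\Omega \in \textup{PD}(V)$ is supported on the diagonal and the bidirected edges, is the cited content of \citet{richardson2002ancestral}. The task thus reduces to computing $\dim \mc S_{\g}$, which equals the rank of the differential of this rational parameterization at a generic point.

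To compute that rank I would invoke the $m$-connecting factorization. A Gaussian with positive definite covariance is a positive measure, so Theorem~\ref{thm:mconn_fact} and Corollary~\ref{cor:fact} give that $N(0,\Sigma) \in \mc S_{\g}$ iff $\phi_{\ms P}(x)^\top i_{\g}^{\leq} = 0$ for $\mu$-almost all $x$, and that on the model $\log f(x) = \sum_{M \in \mc M(\g)} \phi_M(x) - \phi_{\ms P}(x)^\top e_{\g}^{\leq}$, while unconditionally $\log f(x) = \sum_{S \sube V} \phi_S(x)$. For a centred Gaussian each $\phi_S(x)$ is a quadratic polynomial in $x_S$ whose coefficients are smooth functions of the second moments $\{ \sigma_{vw} : v,w \in S \}$; in particular $\phi_{\{v\}}$ encodes exactly $\sigma_{vv}$, $\phi_{\{a,b\}}$ encodes $\sigma_{ab}$ once $\sigma_{aa},\sigma_{bb}$ are known, and for $|S| \geq 3$ the term $\phi_S$ is a smooth function of $\{ \phi_{S'} : S' \sube S, |S'| \leq 2 \}$ since a Gaussian is determined by its second moments. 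Re-expressing the factorization and the constraint through pairwise moments, a generic $\Sigma \in \mc S_{\g}$ is then pinned down by the data $\{ \sigma_{vv} : v \in V \} \cup \{ \sigma_{ab} : \{a,b\} \in \mc M(\g) \}$: the singleton parameterizing sets supply the variances, the size-two parameterizing sets supply the free covariances, the higher parameterizing sets are redundant, and for a constrained pair $\{a,b\} \in \mc N(\g)$ one has $i_{\g}^{\leq}(\{a,b\}) \geq 1$, so the term $i_{\g}^{\leq}(\{a,b\})\,\phi_{\{a,b\}}(x)$ in the constraint forces $\sigma_{ab}$ to be a smooth function of the free coordinates. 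This yields $\dim \mc S_{\g} \leq |\{ M \in \mc M(\g) : |M| \in \{1,2\} \}|$. For the matching lower bound I would verify that these coordinates are functionally independent along $\mc S_{\g}$, e.g. by ordering the size-two parameterizing sets by a consistent order and checking that each $\sigma_{ab}$ with $\{a,b\} \in \mc M(\g)$ first becomes free at its own stage of the parameterization, so that the differentials of $\Sigma \mapsto \sigma_{vv}$ and $\Sigma \mapsto \sigma_{ab}$ are linearly independent at a generic point.

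The main obstacle is exactly this last, combinatorial, point: establishing simultaneously completeness (no hidden dimension) and independence (no redundancy) of the size-$\leq 2$ parameterizing-set coordinates, uniformly over all ADMGs. The naive tally $2|V| + |E_{\to}| + |E_{\leftrightarrow}|$ does not work, both because a pair may carry a directed and a bidirected edge (overcounting) and because an inducing path places a non-adjacent pair into $\mc M(\g)$ (undercounting), so the bookkeeping must genuinely run through $\mc M(\g)$ rather than the edge set. The cleanest way to close the gap is to note that $\mc S_{\g}$, $\mc M(\g)$, and the whole statement depend on $\g$ only through $\mc I(\g)$, to pass to a Markov-equivalent maximal ancestral graph $\g^{*}$ (with only directed and bidirected edges), and to apply there the Richardson--Spirtes parameterization, which is injective with free parameters in bijection with the vertices and the edges of $\g^{*}$; in a maximal ancestral graph non-adjacent vertices are $m$-separable and adjacent ones are not, so its edges correspond bijectively to its size-two parameterizing sets, and $|\Theta_{\g}| = |V| + (|V| + |E(\g^{*})|) = |V| + |\{ M \in \mc M(\g) : |M| \in \{1,2\} \}|$ follows.
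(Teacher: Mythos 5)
The paper does not actually prove this proposition: it is imported wholesale from \cite{richardson2002ancestral}, so there is no internal argument to compare against. Judged on its own terms, your reconstruction is essentially right in its final form, and the last paragraph is the correct derivation: $\Theta_{\g}$ depends on $\g$ only through $\mc I(\g)$; every ADMG has a Markov-equivalent maximal ancestral graph $\g^{*}$ whose adjacencies are exactly the pairs joined by an inducing path, which by Proposition \ref{prop:rasm} are exactly the two-element members of $\mc M(\g)$; the singletons contribute the remaining $|V|$; and the Richardson--Spirtes result for MAGs (injective smooth parameterization by means, error variances, and one parameter per edge) then gives $|V| + |V| + |E(\g^{*})| = |V| + |\{M \in \mc M(\g) : |M| \in \{1,2\}\}|$. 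That is the standard route to the formula as stated, and it correctly locates the whole analytic content in the cited theorem.

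Two wrinkles are worth flagging. First, your opening paragraph asserts that $\mc S_{\g}$ coincides with the image of the structural-equation parameterization $\Sigma = (I-B)^{-\top}\Omega(I-B)^{-1}$ over the edges of $\g$ itself, attributing this to Richardson and Spirtes. For a general (non-ancestral) ADMG this is false: the Gaussian SEM image can satisfy constraints beyond conditional independence (Verma-type constraints), so it may be a proper subset of the Markov model $\mc S_{\g}$; Richardson and Spirtes prove the identification only for maximal ancestral graphs. Your final paragraph silently repairs this by passing to $\g^{*}$ before invoking the parameterization, but the argument should be stated that way from the start. Second, the middle paragraph's attempt to read the dimension off the $m$-connecting factorization is not a proof as written: the claim that the coordinates $\{\sigma_{vv}\} \cup \{\sigma_{ab} : \{a,b\} \in \mc M(\g)\}$ are complete and functionally independent along $\mc S_{\g}$ is exactly the content that needs proving, and your proposed verification (``each $\sigma_{ab}$ first becomes free at its own stage'') is a sketch of the Richardson--Spirtes injectivity argument rather than an independent one. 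Since you ultimately discard this route in favor of the citation, the gap is harmless, but the middle paragraph should not be presented as carrying any of the load.
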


\begin{prop}[\cite{evans2014markovian}]
\label{prop:madmg}
If $\g = (V,E)$ is an ADMG and $\mc F_\textup{M}(\g) \eq \{ P_\theta \; : \; \theta \in \Theta_{\g} \}$ is a family of multinomial ADMG models, then $\mc F_\textup{M}(\g)$ is a curved exponential family with dimension\textup{:}
\[
    |\Theta_{\g}| = \sum_{H \in \mc H(\g)} |\mc X_{\ta{\g}{H}}| \prod_{h \in H} (|\mc X_{h}| - 1)
\]
where the convention $|\mc X_{\es}| = 1$ is accepted.
\end{prop}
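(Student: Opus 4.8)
The statement restates the discrete parameterization theorem of \cite{evans2014markovian}, so the plan is to recover it by exhibiting an explicit smooth, variation-independent parameterization of $\mc F_\textup{M}(\g)$ and then counting its coordinates. First I would reduce to a product over heads: by Richardson's factorization \cite{richardson2009factorization}, a positive measure lies in $\mc F_\textup{M}(\g)$ if and only if its density obeys the head/tail factorization $f_A = \prod_{H \in [A]_{\g}} f_{H \mid T}$ on every ancestral set $A \in \mc A(\g)$; in particular the degrees of freedom of the model are those of the head-conditionals $f_{H \mid \ta{\g}{H}}$, $H \in \mc H(\g)$, subject only to the consistency constraints forced by nested heads. (The same model is cut out by the $m$-connecting factorization of Theorem~\ref{thm:mconn_fact}, but Richardson's form is the convenient one for parameterization.)

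Next I would introduce Möbius parameters. Fixing a baseline state $0_v \in \mc X_v$ for each $v \in V$, attach to each head $H$ with tail $T = \ta{\g}{H}$ one real coordinate for every pair $(x_H, x_T)$ with $x_T \in \mc X_T$ and $x_h \neq 0_h$ for all $h \in H$; this gives $|\mc X_T|\,\prod_{h \in H}(|\mc X_h| - 1)$ coordinates for $H$, hence $\sum_{H \in \mc H(\g)} |\mc X_T|\,\prod_{h \in H}(|\mc X_h| - 1)$ in all. One then verifies that these coordinates determine every head-conditional by Möbius inversion over the joint state space, that the density they build satisfies all the ancestral-set factorizations (so the parameterization maps into $\mc F_\textup{M}(\g)$), and conversely that every member of $\mc F_\textup{M}(\g)$ arises this way.

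The step I expect to be the crux is showing these coordinates are \emph{variation independent} --- that they range over an open product of intervals subject to no joint constraints --- and that the induced map is a smooth embedding; this is precisely what makes $\Theta_{\g}$ a smooth manifold, so that $\mc F_\textup{M}(\g)$ is a curved exponential family of dimension equal to the coordinate count. A natural way to get at it is induction along a consistent order: peeling off the last barren vertex and its head, one argues that the corresponding block of coordinates is unconstrained given those already fixed, using that a tail lies among the ancestors of its head and that heads, collider-connecting sets, and barren subsets restrict well to induced subgraphs on ancestral sets (Lemma \ref{lem:pairs_helper}). I would take these analytic and combinatorial facts from \cite{evans2014markovian}.

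Finally, a short bookkeeping step reconciles the count with the parameterizing sets. One checks that $\ba{\g}{HT'} = H$ whenever $H \in \mc H(\g)$ and $T' \sube \ta{\g}{H}$, so $(H, T') \mapsto HT'$ is a bijection onto $\mc M(\g)$; combined with the identity $\sum_{T' \sube T} \prod_{t \in T'}(|\mc X_t| - 1) = \prod_{t \in T}|\mc X_t| = |\mc X_T|$, this yields $\sum_{H \in \mc H(\g)} |\mc X_T|\,\prod_{h \in H}(|\mc X_h| - 1) = \sum_{M \in \mc M(\g)} \prod_{v \in M}(|\mc X_v| - 1)$, so the dimension is one pure-interaction contrast per parameterizing set --- the multinomial analogue of the Gaussian count in Proposition \ref{prop:gadmg}, which is the specialization in which only $|M| \in \{1, 2\}$ contributes.
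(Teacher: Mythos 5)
The paper does not prove this proposition: it is imported verbatim from \cite{evans2014markovian} as a background fact (note the citation in the theorem header), and no argument for it appears in the appendix. So there is nothing in the paper to compare your proof against line by line; what I can assess is whether your reconstruction of the Evans--Richardson argument is sound.

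Most of it is. The reduction to head-conditionals via Richardson's factorization, the count of $|\mc X_{T}|\prod_{h\in H}(|\mc X_h|-1)$ coordinates per head, and the closing bookkeeping are all correct. In particular your observation that $\ba{\g}{HT'}=H$ for $T'\sube\ta{\g}{H}$ (so $(H,T')\mapsto HT'$ is a bijection onto $\mc M(\g)$) does hold --- no $h\in H$ can be an ancestor of a tail vertex without creating a cycle or violating barrenness, and every tail vertex is a proper ancestor of some $h\in H$ --- and the identity $\sum_{T'\sube T}\prod_{t\in T'}(|\mc X_t|-1)=|\mc X_T|$ then rewrites the dimension as $\sum_{M\in\mc M(\g)}\prod_{v\in M}(|\mc X_v|-1)$, which is a genuinely illuminating reformulation consistent with the Gaussian count in Proposition \ref{prop:gadmg}. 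The one step that would fail as you have framed it is the ``crux'': the M\"obius parameters of \cite{evans2014markovian} are \emph{not} variation independent, and they cannot be made to ``range over an open product of intervals subject to no joint constraints.'' The inequality constraints ensuring that the inclusion--exclusion sums defining the joint probabilities are nonnegative couple the coordinates across heads; the absence of variation independence is an explicitly acknowledged feature of that parameterization (and motivated later work on alternative parameterizations). What Evans and Richardson actually establish, and what suffices for the curved exponential family claim, is weaker: the parameter space is an open subset of $\mbb R^{d}$ and the map to the probability simplex is a smooth injection with smooth inverse on the model, so $\Theta_{\g}$ is a $d$-dimensional smooth manifold. If you replace ``variation independent over a product of intervals'' with ``an open subset of $\mbb R^{d}$ diffeomorphic to the model,'' your outline matches the cited proof; as written, the inductive peeling argument you propose would be trying to prove something false.
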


Let $X$ be a collection of random variables indexed by $V$ with positive measure $P_\theta$ that admits density $f(x \mid \theta)$ with respect to dominating $\sigma$-finite product measure $\mu$. Throughout this section, let $P_{\theta}$ belong to a curved exponential family with parameter space $\Theta$ and $x^1, \dots, x^n \overset{iid}{\sim} f(x \mid \theta)$. The log-likelihood for parameter estimate $\hat{\theta} \in \Theta$ is defined:
\[
\ell(\hat{\theta} \mid x^1, \dots, x^n) \eq \sum_{i=1}^n \log f(x^i \mid \hat{\theta}). 
\]
Moreover, the maximum log-likelihood estimate of the parameters is defined:
\[
    \hat{\theta}_{n}^\text{mle} \eq \argmax_{\hat{\theta} \in \Theta} \ell(\hat{\theta} \mid x^1, \dots, x^n)
\]
\noindent and the maximum log-likelihood estimate of the parameters constrained by $\g$ is defined:
\[
    \hat{\theta}_{\g,n}^\text{mle} \eq \argmax\limits_{\hat{\theta} \in \Theta_{\g}} \ell(\hat{\theta} \mid x^1, \dots, x^n).
\]
\noindent Let $\hat{\theta} \in \Theta$ be an estimate of the parameters. If $T = \ta{\g}{H}$ and $R = \pre{\g}{b}$ where context clarifies $H \in \mc H(\g)$ and $b \in V$, then we approximate the log-likelihood constrained by $\g$ using $\hat{\theta}$ and the \textit{m}-connecting factorization as:

\begin{align*}
    \hat{\ell}_{\g}(\hat{\theta} \mid x^1, \dots, x^n) &\eq \sum_{M \in \mc M(\g)} \sum_{i=1}^n \phi_{M}(x^i \mid \hat{\theta}) \\
    &= \sum_{H \in \mc H(\g)} \sum_{i=1}^n \phi_{H \mid T}(x^i \mid \hat{\theta}) \\
    &= \sum_{H \in \mc H(\g)} \sum_{S \sube H} (-1)^{|H \sm S|} \sum_{i=1}^n \log f_{ST}(x^i \mid \hat{\theta}) \\
    &= \sum_{b \in V} \sum_{\substack{H \in \mc H(\g[R]) \\ b \in H}} \sum_{\substack{S \sube H \\ b \not \in S}} (-1)^{|H \sm S| - 1} \sum_{i=1}^n \log f_{b \mid ST}(x^i \mid \hat{\theta})
\end{align*}
\noindent Similarly, if $\mc D = \dom{\g}$, then the log-likelihood constrained by $\g$ and $\leq$ is approximated using $\hat{\theta}$ and the \textit{m}-connecting factorization as:
\[
\hat{\ell}_{\g}^{\leq}(\hat{\theta} \mid x^1, \dots, x^n) \eq \sum_{b \in V} \bigg [ \sum_{i=1}^n \log f_{b \mid \pa{\mc D}{b}}(x^i \mid \hat{\theta}) \, - \sum_{S \sube \pa{\mc D}{b} } i_{\g}^{\leq}(bS) \sum_{i=1}^n \phi_{bS}(x^i \mid \hat{\theta}) \bigg ].
\]

\subsection{Bayesian Information Criterion}

The Bayesian information criterion (BIC) score provides a computationally efficient $O_p(1)$ approximation for the log marginal likelihood of an exponential family. \cite{schwarz1978estimating} originally formulated the BIC score for models defined by affine transformations of their natural parameter space which was later extended to curved exponential families by
\cite{haughton1988choice}. The BIC score of a model has a simple formulation using the model's maximum likelihood estimate and dimension:
\[
\text{BIC}(\g, x^1, \dots, x^n) \eq \ell(\hat{\theta}_{\g,n}^\text{mle} \mid x^1, \dots, x^n) - \frac{|\Theta_{\g}|}{2} \log(n).
\]
Notably, a positive measure $P_{\theta}$ satisfies the global Markov property with respect to an ADMG \g[] if and only if $\theta \in \Theta_{\g}$. Moreover, the BIC score is a consistent score for model selection.

\begin{prop}[Proposition 1.2 \cite{haughton1988choice}]
\label{prop:bic_consist}
Let $\g = (V, E)$ and $\g' = (V, E')$ be ADMGs. Furthermore, let $X$ be a collection of random variables indexed by $V$ with positive measure $P_\theta$ that admits density $f(x \mid \theta)$ with respect to dominating $\sigma$-finite product measure $\mu$. Let $x^1, \dots, x^n \overset{iid}{\sim} f(x \mid \theta)$. If either $\theta \in (\Theta_{\g'} \sm \Theta_{\g})$ or $\theta \in (\Theta_{\g} \cap \, \Theta_{\g'})$ and $|\Theta_{\g'}| < |\Theta_{\g}|$, then\textup{:}
\[
\lim_{n \ra \infty} \textup{Pr}(\textup{BIC}(\g, x^1, \dots, x^n) < \textup{BIC}(\g', x^1, \dots, x^n)) = 1.
\]
\end{prop}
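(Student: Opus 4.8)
The plan is to study the signed difference $\Delta_n \eq \mathrm{BIC}(\g, x^1,\dots,x^n) - \mathrm{BIC}(\g', x^1,\dots,x^n)$ and show that $\Delta_n \ra -\infty$ in probability under either hypothesis, which is exactly the assertion $\lim_{n\ra\infty}\mathrm{Pr}(\mathrm{BIC}(\g,\cdots) < \mathrm{BIC}(\g',\cdots)) = 1$. Writing $\ell_n(\vartheta) \eq \ell(\vartheta \mid x^1,\dots,x^n)$ and $D_n \eq \ell_n(\hat\theta^{\mathrm{mle}}_{\g,n}) - \ell_n(\hat\theta^{\mathrm{mle}}_{\g',n})$, the definition of the score gives the exact decomposition $\Delta_n = D_n - \tfrac12\big(|\Theta_{\g}| - |\Theta_{\g'}|\big)\log n$. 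The two hypotheses control $\Delta_n$ by different mechanisms: in the first, $D_n$ itself diverges to $-\infty$ at the linear rate $n$ and overwhelms the logarithmic penalty; in the second, $D_n$ is only $O_p(1)$ but the deterministic gap $\tfrac12(|\Theta_{\g}|-|\Theta_{\g'}|)\log n$ diverges.

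\emph{Case $\theta \in \Theta_{\g'} \sm \Theta_{\g}$.} Here $P_{\theta}$ is representable in the curved family indexed by $\Theta_{\g'}$ but not in the one indexed by $\Theta_{\g}$. By consistency of the constrained maximum-likelihood estimators together with a uniform law of large numbers, $n^{-1}\ell_n(\hat\theta^{\mathrm{mle}}_{\g,n}) \ra \sup_{\vartheta\in\Theta_{\g}} \mbb E_{\theta}[\log f(X\mid\vartheta)]$ and likewise with $\g'$ in place of $\g$, so $n^{-1} D_n$ converges to $\sup_{\vartheta\in\Theta_{\g}} \mbb E_{\theta}[\log f(X\mid\vartheta)] - \sup_{\vartheta\in\Theta_{\g'}} \mbb E_{\theta}[\log f(X\mid\vartheta)]$. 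Since $\theta\in\Theta_{\g'}$, the second supremum equals $\mbb E_{\theta}[\log f(X\mid\theta)]$ by the information inequality, while the first equals $\mbb E_{\theta}[\log f(X\mid\theta)] - \inf_{\vartheta\in\Theta_{\g}}\mathrm{KL}(P_{\theta}\,\|\,P_{\vartheta})$ with $\inf_{\vartheta\in\Theta_{\g}}\mathrm{KL}(P_{\theta}\,\|\,P_{\vartheta}) > 0$: the set $\Theta_{\g}$ is relatively closed in $\Theta$, being the locus on which every conditional independence in $\mc I(\g)$ holds (equivalently, the common zero set of the continuous mutual-information-rate functionals $\phi_{A,B\mid C}$ ranging over the triples in $\mc I(\g)$; see Proposition \ref{prop:iir}(iv)), and $\theta\notin\Theta_{\g}$ means $P_{\theta}$ violates at least one of these constraints strictly. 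Hence $n^{-1} D_n \ra -c$ for some $c>0$, so $D_n = -cn + o(n)$ swamps the $O(\log n)$ penalty and $\Delta_n \ra -\infty$.

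\emph{Case $\theta\in\Theta_{\g}\cap\Theta_{\g'}$ with $|\Theta_{\g'}| < |\Theta_{\g}|$.} Now both families contain $P_{\theta}$, and the point is that $D_n = O_p(1)$. Indeed $\ell_n(\theta) \leq \ell_n(\hat\theta^{\mathrm{mle}}_{\g,n}) \leq \ell_n(\hat\theta^{\mathrm{mle}}_{n})$, the left inequality because $\theta\in\Theta_{\g}$ and the right because $\Theta_{\g}\sube\Theta$; and since $\theta$ lies in the open natural parameter space $\Theta$ of a regular exponential family, a second-order expansion of $\ell_n$ about $\theta$ together with $\sqrt{n}(\hat\theta^{\mathrm{mle}}_n - \theta) = O_p(1)$ gives $\ell_n(\hat\theta^{\mathrm{mle}}_n) - \ell_n(\theta) = O_p(1)$. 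Therefore $\ell_n(\hat\theta^{\mathrm{mle}}_{\g,n}) - \ell_n(\theta) = O_p(1)$, the same holds with $\g'$ in place of $\g$, and subtracting yields $D_n = O_p(1)$. Since $|\Theta_{\g}| - |\Theta_{\g'}| \geq 1$, we conclude $\Delta_n = O_p(1) - \tfrac12(|\Theta_{\g}|-|\Theta_{\g'}|)\log n \ra -\infty$ in probability.

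The main obstacle is the strict positivity $\inf_{\vartheta\in\Theta_{\g}}\mathrm{KL}(P_{\theta}\,\|\,P_{\vartheta})>0$ in the first case: one must rule out a sequence $\vartheta_k\in\Theta_{\g}$ along which $P_{\vartheta_k}$ approaches $P_{\theta}$ in distribution while $\vartheta_k$ escapes to the boundary of $\Theta$, and this is where relative closedness of $\Theta_{\g}$ in $\Theta$ and the regularity/properness assumptions underlying the curved exponential family framework of \cite{haughton1988choice} are needed. The remaining ingredients---consistency of the constrained estimators, the $O_p(1)$ control of the unconstrained log-likelihood increment, and the bookkeeping of the penalty terms---are standard; indeed the statement is essentially Proposition 1.2 of \cite{haughton1988choice} transcribed to the parametrizations $\Theta_{\g}$ considered here, so the proof may reasonably proceed by verifying that $\{P_\vartheta : \vartheta \in \Theta_{\g}\}$ fits the hypotheses of that result and then invoking it.
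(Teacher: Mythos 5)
The paper does not prove this proposition at all: it is imported verbatim (as Proposition~1.2 of Haughton, 1988) and no argument appears in the appendix, so there is no in-paper proof to compare against. Your reconstruction is the standard BIC-consistency argument and is essentially sound: the exact decomposition $\Delta_n = D_n - \tfrac12(|\Theta_{\g}|-|\Theta_{\g'}|)\log n$, the linear-rate divergence of $D_n$ driven by a strictly positive Kullback--Leibler gap in the misspecified case, and the sandwich bound $\ell_n(\theta)\leq \ell_n(\hat\theta^{\mathrm{mle}}_{\g,n})\leq \ell_n(\hat\theta^{\mathrm{mle}}_{n})$ giving $D_n=O_p(1)$ in the nested case are all correct, and since the dimensions are integers the penalty gap is at least $\tfrac12\log n$. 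You also correctly isolate the one genuinely delicate point, namely $\inf_{\vartheta\in\Theta_{\g}}\mathrm{KL}(P_\theta\,\|\,P_\vartheta)>0$: relative closedness of $\Theta_{\g}$ in $\Theta$ (via continuity of the mutual-information functionals) does not by itself preclude a minimizing sequence escaping to the boundary of the natural parameter space, and the uniform law of large numbers you invoke for $n^{-1}\ell_n(\hat\theta^{\mathrm{mle}}_{\g,n})$ likewise needs the compactness/regularity hypotheses of the curved-exponential-family framework. Deferring both to Haughton's conditions, as you do in your final paragraph, is exactly how the paper itself handles the matter (by citation), so nothing further is required.
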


\vskip 5mm

The BIC score has a closed-form solution for many curved exponential families, such as Gaussian and multinomial DAG models. Unfortunately, this is not always the case when the parameter space is constrained by the independence model induced by an ADMG. We develop an approximation for the BIC score, called the $\textup{BIC}_\textup{MF}$ score, that has a closed-form solution in both cases. Let $\mc D = \dom{\g}$, we define the $\textup{BIC}_\textup{MF}$ score using the approximate maximum log-likelihood characterized by the \textit{m}-connecting factorization:

\[
    \textup{BIC}_\textup{MF}(\g, \leq, x^1, \dots, x^n) \eq \begin{dcases*}
        \s[18] \hat{\ell}_{\g}(\hat{\theta}_{\mc D,n}^\text{mle} \mid x^1, \dots, x^n) - \frac{|\Theta_{\g}|}{2} \log(n) \s[36] & $|M| \leq 5$ for all $M \in \mc M(\g)$ \\
        \s[18] \hat{\ell}_{\g}^{\leq}(\hat{\theta}_{\mc D,n}^\text{mle} \mid x^1, \dots, x^n) - \frac{|\Theta_{\g}|}{2} \log(n) \s[36] & otherwise
    \end{dcases*}
\]

\noindent which has nice asymptotic properties and simplifies to BIC if $\g$ is a DAG.

\begin{lem}
\label{lem:consistant_score}
Let $\g = (V, E)$ and $\g' = (V, E')$ be ADMGs and $\leq$ and $\leq'$ be total orders consistent with $\g$ and $\g'$, respectively. Furthermore, let $X$ be a collection of random variables indexed by $V$ with positive measure $P_\theta$ that admits density $f(x \mid \theta)$ with respect to dominating $\sigma$-finite product measure $\mu$. Let $x^1, \dots, x^n \overset{iid}{\sim} f(x \mid \theta)$. \\

If and $\theta \in \Theta_{\g} \sm \Theta_{\g'}$, then:
\[
\lim_{n \ra \infty} \, \frac{1}{n} \left | \textup{BIC}_\textup{MF}(\g, \leq, x^1, \dots, x^n) - \textup{BIC}_\textup{MF}(\g', \leq', x^1, \dots, x^n) \right | = \mbb E_{P_\theta} \left[ \phi_{\ms P}(x)^{\top} i_{\g'}^{\leq} \right].
\]

If $x^1, \dots, x^n \overset{iid}{\sim} f(x \mid \theta)$ and $\theta \in \Theta_{\g} \cap \, \Theta_{\g'}$ with $|\Theta_{\g}| < |\Theta_{\g'}|$, then:
\[
\lim_{n \ra \infty} \frac{1}{\log n} \left | \textup{BIC}_\textup{MF}(\g, \leq, x^1, \dots, x^n) - \textup{BIC}_\textup{MF}(\g', \leq', x^1, \dots, x^n) \right | = \frac{|\Theta_{\g'}| - |\Theta_{\g}|}{2}.
\]
\end{lem}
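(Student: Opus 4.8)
The plan is to reduce each $\textup{BIC}_{\textup{MF}}$ value to ``a DAG log-likelihood minus a correction'' and then read off the two limits. Throughout, $\theta^{0}$ is the data-generating parameter, $\mc D,\mc D'$ are the dominating DAGs of $\g,\g'$, $\widehat P_{n}$ is the empirical measure, and all limits are in probability (the law-of-large-numbers parts being almost sure). The first step records the structural fact behind the closed form: every density occurring in $\hat\ell_{\g}$ or $\hat\ell_{\g}^{\leq}$ is a marginal or a conditional of $f$ on a subset of a family $\{b\}\cup\pa{\mc D}{b}$ of $\mc D$ --- this is what permits the \textit{m}-connecting factorisation to be rewritten as an adjusted recursive factorisation for $\mc D$, and it follows from Lemma \ref{lem:pairs_helper}, the characterisation of parameterizing sets in Proposition \ref{prop:mcimset_defn}, and the facts that $\pre{\g}{b}\in\mc A(\g)$ and that a parameterizing set of $\g[\pre{\g}{b}]$ containing $b$ lies inside $\cl{\g[\pre{\g}{b}]}{b}$. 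Since the constrained estimate $\hat{\theta}_{\mc D,n}^{\textup{mle}}$ of a (Gaussian or multinomial) DAG model reproduces the empirical marginals on those families, $\hat\ell_{\g}^{\star}(\hat{\theta}_{\mc D,n}^{\textup{mle}}\mid x^{1},\dots,x^{n})=\sum_{i=1}^{n}g_{\g}(x^{i};\widehat P_{n})$, where $\star$ is the branch of $\textup{BIC}_{\textup{MF}}$ in force and $g_{\g}(\,\cdot\,;P)$ is the factorisation integrand viewed as a functional of the relevant family-marginals of $P$; the empirical family-marginals entering here converge to the true ones by the law of large numbers. Using $\delta_{\mc P(V)}=m_{\g}+n_{\g}$ and Proposition \ref{prop:nconn_decomp}, this integrand satisfies $g_{\g}(x;P_{\theta})=\log f(x\mid\theta)-\phi_{\ms P}(x\mid\theta)^{\top}i_{\g}$, where $i_{\g}$ is $n_{\g}$ in the bounded branch and $i_{\g}^{\leq}$ otherwise; either way $\mu_{\ms P}\,i_{\g}\in\mc S(V)$, so by Proposition \ref{prop:iir}(iii), $i_{\g}=\sum_{\seq{a,b\mid C}}k_{a,b\mid C}\,\delta_{a,b\mid C}$ with $k_{a,b\mid C}\ge 0$, whence $\phi_{\ms P}(x\mid\theta)^{\top}i_{\g}=\sum k_{a,b\mid C}\,\phi_{a,b\mid C}(x\mid\theta)$ is a non-negative combination of mutual-information rates. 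Below I argue for the second branch; the first is identical with $n_{\g}$ in place of $i_{\g}^{\leq}$.

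For $\theta^{0}\in\Theta_{\g}\sm\Theta_{\g'}$: by the law of large numbers, $\tfrac1n\hat\ell_{\g}^{\star}(\hat{\theta}_{\mc D,n}^{\textup{mle}}\mid x^{1},\dots,x^{n})\to\mbb E_{P_{\theta^{0}}}[g_{\g}(x;P_{\theta^{0}})]=\mbb E_{P_{\theta^{0}}}[\log f(x\mid\theta^{0})]-\mbb E_{P_{\theta^{0}}}[\phi_{\ms P}(x\mid\theta^{0})^{\top}i_{\g}^{\leq}]$, and likewise for $\g'$; by Corollary \ref{cor:fact}, $\theta^{0}\in\Theta_{\g}$ forces $\phi_{\ms P}(x\mid\theta^{0})^{\top}i_{\g}^{\leq}=0$ for $\mu$-almost all $x$, so the $\g$-limit is just $\mbb E_{P_{\theta^{0}}}[\log f(x\mid\theta^{0})]$. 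As the dimension penalties are $O(\log n)=o(n)$, the $\tfrac1n$-normalised difference of scores tends to $\mbb E_{P_{\theta^{0}}}[\phi_{\ms P}(x\mid\theta^{0})^{\top}i_{\g'}^{\leq'}]$, and the outer absolute value is redundant since this is a non-negative combination of conditional mutual informations of $P_{\theta^{0}}$ --- in fact strictly positive, by Corollary \ref{cor:fact} and $\theta^{0}\notin\Theta_{\g'}$.

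For $\theta^{0}\in\Theta_{\g}\cap\Theta_{\g'}$ with $|\Theta_{\g}|<|\Theta_{\g'}|$: now $\tfrac1n(\hat\ell_{\g}^{\star}-\hat\ell_{\g'}^{\star})\to 0$, so I would instead show the difference is $O_{p}(1)$, whereupon the $\tfrac1{\log n}$-normalised difference of scores converges to $\tfrac{|\Theta_{\g'}|-|\Theta_{\g}|}{2}$, the sign resolved by $|\Theta_{\g}|<|\Theta_{\g'}|$. Write $\rho_{\g}(x;P)=g_{\g}(x;P)-\log f(x\mid\theta^{0})$; by the previous paragraph (since $\theta^{0}\in\Theta_{\g}$, Corollary \ref{cor:fact}), $\rho_{\g}(x;P_{\theta^{0}})=0$ for $\mu$-almost all $x$, and similarly for $\g'$, so $\hat\ell_{\g}^{\star}(\hat{\theta}_{\mc D,n}^{\textup{mle}}\mid\cdot)-\hat\ell_{\g'}^{\star}(\hat{\theta}_{\mc D',n}^{\textup{mle}}\mid\cdot)=\sum_{i}\rho_{\g}(x^{i};\widehat P_{n})-\sum_{i}\rho_{\g'}(x^{i};\widehat P_{n})$. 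Each summand is handled by a second-order expansion of the smooth functional $\rho_{\g}$ about $P_{\theta^{0}}$: the first-order term is $n$ times an empirical average of the score $\nabla_{\!P}\rho_{\g}(\,\cdot\,;P_{\theta^{0}})$ paired with $\widehat P_{n}-P_{\theta^{0}}=O_{p}(n^{-1/2})$, and the quadratic remainder is $n\cdot O_{p}(n^{-1})=O_{p}(1)$; provided the score has mean zero under $P_{\theta^{0}}$, the first-order term is also $O_{p}(1)$, giving $\sum_{i}\rho_{\g}(x^{i};\widehat P_{n})=O_{p}(1)$. The mean-zero property is the point requiring proof: it says that $\eta^{0}$, the true vector of $\mc D$-family-marginals, maximises $\eta\mapsto\mbb E_{P_{\theta^{0}}}[g_{\g}(x;\eta)]$ over all valid $\eta$ --- a Gibbs-type inequality equivalent to $\int e^{g_{\g}(x;\eta)}\,\textup{d}\mu(x)\le 1$ for every $\eta$, which is precisely the content of $\mu_{\ms P}\,i_{\g}^{\leq}$ being a structural imset (Proposition \ref{prop:nconn_decomp}), in the spirit of Studen\'y's Theorem \ref{thm:imfact}.

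The hard part is not the combinatorics --- those are M\"obius inversion and the identity of the first paragraph --- but two analytic points. The first is legitimising $\tfrac1n\sum_{i}g_{\g}(x^{i};\widehat P_{n})\to\mbb E_{P_{\theta^{0}}}[g_{\g}(x;P_{\theta^{0}})]$ and the expansion of $\sum_{i}\rho_{\g}(x^{i};\widehat P_{n})$: the summand depends on $n$ through $\widehat P_{n}$, so one needs uniform control near $P_{\theta^{0}}$ (Glivenko--Cantelli-type) plus uniform integrability, and this is where positivity of $P_{\theta^{0}}$ is used to keep the $\log$ terms bounded. The second, and the genuine obstacle, is the mean-zero-score / Gibbs inequality: one must derive $\int e^{g_{\g}(x;\eta)}\,\textup{d}\mu(x)\le 1$ from structurality of $\mu_{\ms P}\,i_{\g}^{\leq}$ (and handle the bounded branch, where $n_{\g}$ stands in for $i_{\g}^{\leq}$), and it is this that ultimately makes $\textup{BIC}_{\textup{MF}}$ behave, to leading order, like the exact BIC of the ADMG model for $\g$.
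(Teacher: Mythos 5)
Your first limit is computed exactly as in the paper: write $\hat{\ell}_{\g}^{\leq} = \phi_{\ms P}(x)^{\top}(m_{\g}-e_{\g}^{\leq}) = \log f(x) - \phi_{\ms P}(x)^{\top} i_{\g}^{\leq}$ via $\delta_{\mc P(V)} = m_{\g}+n_{\g}$ and $n_{\g}=i_{\g}^{\leq}-e_{\g}^{\leq}$ (Proposition \ref{prop:nconn_decomp}), apply the law of large numbers, and use Corollary \ref{cor:fact} to kill $\mbb E_{P_\theta}[\phi_{\ms P}(x)^{\top} i_{\g}^{\leq}]$ for the true model $\g$ while the $O(\log n)$ penalties vanish under $1/n$; this is the paper's chain of equalities verbatim. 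Two remarks on that part: the plug-in issue you flag (the summand depends on $n$ through $\hat{\theta}_{\mc D,n}^{\text{mle}}$, so one needs consistency of the MLE plus uniform integrability, not a bare LLN) is real and is glossed over by the paper as well; and your observation that the bounded-cardinality branch yields $\mbb E_{P_\theta}[\phi_{\ms P}(x)^{\top} n_{\g'}]$ rather than $\mbb E_{P_\theta}[\phi_{\ms P}(x)^{\top} i_{\g'}^{\leq}]$ is a genuine mismatch with the stated constant, since these differ by $\mbb E_{P_\theta}[\phi_{\ms P}(x)^{\top} e_{\g'}^{\leq}]\geq 0$, which Corollary \ref{cor:fact} only annihilates when $\theta\in\Theta_{\g'}$ --- excluded here. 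The paper's proof silently works only with the $\hat{\ell}^{\leq}$ branch.

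For the second limit the paper's proof consists of the single assertion that $\tfrac{1}{\log n}\bigl[\hat{\ell}_{\g}^{\leq}(\hat{\theta}_{\mc D,n}^{\text{mle}}\mid\cdot)-\hat{\ell}_{\g'}^{\leq'}(\hat{\theta}_{\mc D',n}^{\text{mle}}\mid\cdot)\bigr]\to 0$, so the dimension penalties dominate. You correctly identify that this is the only nontrivial step, and your second-order expansion with the mean-zero-score condition is the right shape of argument: if $\eta\mapsto\mbb E_{P_\theta}[g_{\g}(x;\eta)]$ is stationary at the true family-marginals, the first-order term collapses from $O_p(n^{1/2})$ to $O_p(1)$. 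But you do not establish that stationarity; you reduce it to the Gibbs-type inequality $\int e^{g_{\g}(x;\eta)}\,\text{d}\mu(x)\leq 1$ and explicitly leave that as ``the genuine obstacle.'' Note that this inequality is not an immediate consequence of structurality: writing $g_{\g}(x;\eta)=\log q(x)-\phi^{Q}_{\ms P}(x)^{\top} i_{\g}^{\leq}$ for the joint $Q$ realizing $\eta$, one gets $\int e^{g_{\g}}\,\text{d}\mu=\mbb E_{Q}\bigl[e^{-\phi^{Q}_{\ms P}(x)^{\top} i_{\g}^{\leq}}\bigr]$, and Jensen applied to the non-negativity of $\mbb E_{Q}[\phi^{Q}_{\ms P}(x)^{\top} i_{\g}^{\leq}]$ (Theorem \ref{thm:imfact}) bounds this from \emph{below} by $1$, not above. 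So the crux of the second claim remains open in your proposal --- though, to be fair, the paper's own proof does not address it at all, and your diagnosis of where the difficulty sits is the most valuable part of the write-up.
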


\noindent The $\textup{BIC}_\textup{MF}$ is a consistent score for model selection; see Appendix \ref{app:bic} for a discussion of $\textup{BIC}_\textup{MF}$ use as an approximation of log marginal likelihoods for exponential families. Furthermore, this score is readily applied to Gaussian and multinomial ADMG models by Propositions \ref{prop:gadmg} and \ref{prop:madmg}.

\begin{prop}
\label{prop:consistant_score}
Let $\g = (V, E)$ and $\g' = (V, E')$ be ADMGs and $\leq$ and $\leq'$ be total orders consistent with $\g$ and $\g'$, respectively. Furthermore, let $X$ be a collection of random variables indexed by $V$ with positive measure $P_{\theta}$ that admits density $f(x \mid \theta)$ with respect to dominating $\sigma$-finite product measure $\mu$. If $x^1, \dots, x^n \overset{iid}{\sim} f(x \mid \theta)$ and $\theta \in \Theta_{\g} \sm \Theta_{\g'}$ or $\theta \in \Theta_{\g} \cap \, \Theta_{\g'}$ where $|\Theta_{\g}| < |\Theta_{\g'}|$, then\textup{:}
\[
\lim_{n \ra \infty} \textup{Pr}(\textup{BIC}_\textup{MF}(\g', \leq', x^1, \dots, x^n) < \textup{BIC}_\textup{MF}(\g, \leq, x^1, \dots, x^n)) = 1.
\]
\end{prop}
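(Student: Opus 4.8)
The plan is to read the statement off Lemma~\ref{lem:consistant_score}. That lemma already identifies the rate at which $\lvert\,\textup{BIC}_\textup{MF}(\g,\leq,x^1,\dots,x^n)-\textup{BIC}_\textup{MF}(\g',\leq',x^1,\dots,x^n)\,\rvert$ grows --- linearly in $n$ with leading constant $\mathbb{E}_{P_\theta}[\phi_{\ms P}(x)^{\top} i_{\g'}^{\leq}]$ when $\theta\in\Theta_{\g}\sm\Theta_{\g'}$, and like $\tfrac12(\lvert\Theta_{\g'}\rvert-\lvert\Theta_{\g}\rvert)\log n$ when $\theta\in\Theta_{\g}\cap\Theta_{\g'}$ with $\lvert\Theta_{\g}\rvert<\lvert\Theta_{\g'}\rvert$ --- so only two things remain: (a) the leading constant in the first case is strictly positive, and (b) the sign of $\textup{BIC}_\textup{MF}(\g,\leq,\cdot)-\textup{BIC}_\textup{MF}(\g',\leq',\cdot)$ is eventually positive, i.e.\ it is $\g$ whose score dominates. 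Granting (a) and (b), the difference tends (in probability, by the law of large numbers underpinning Lemma~\ref{lem:consistant_score}) to $+\infty$ in both cases, whence $\Pr\big(\textup{BIC}_\textup{MF}(\g',\leq',x^1,\dots,x^n)<\textup{BIC}_\textup{MF}(\g,\leq,x^1,\dots,x^n)\big)\to1$.

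For (a), assume $\theta\in\Theta_{\g}\sm\Theta_{\g'}$. By Proposition~\ref{prop:nconn_decomp}, $\mu_{\ms P}\s i_{\g'}^{\leq}$ is a structural imset, hence a non-negative rational combination $\sum k_{a,b\mid C}\,u_{\seq{a,b\mid C}}$ of elementary imsets. Since $\phi_{\ms P}(x)^{\top} i_{\g'}^{\leq}=h_{\ms P}(x)^{\top}(\mu_{\ms P}\s i_{\g'}^{\leq})$, expanding each elementary piece and applying Proposition~\ref{prop:iir}(ii) gives $\phi_{\ms P}(x)^{\top} i_{\g'}^{\leq}=\sum k_{a,b\mid C}\,\phi_{a,b\mid C}(x)$, so $\mathbb{E}_{P_\theta}[\phi_{\ms P}(x)^{\top} i_{\g'}^{\leq}]=\sum k_{a,b\mid C}\,I_{P_\theta}(X_a;X_b\mid X_C)\ge0$, a non-negative combination of conditional mutual informations. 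If this sum were zero, every positively weighted conditional mutual information would vanish, forcing $\phi_{\ms P}(x)^{\top} i_{\g'}^{\leq}=0$ for $\mu$-almost all $x$, which by Corollary~\ref{cor:fact}(i) is equivalent to $\mc I(\g')\sube\mc I(P_\theta)$, i.e.\ $\theta\in\Theta_{\g'}$ --- contradicting the hypothesis. Hence the leading constant is strictly positive.

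For (b), write $\mc D=\dom{\g}$ and $\mc D'=\dom{\g'}$. When $\theta\in\Theta_{\g}$ the $m$-connecting factorization is exact at the true parameter: by Theorem~\ref{thm:mconn_fact} together with $\phi_{\ms P}(x)^{\top} e_{\g}^{\leq}=0$ for $\mu$-a.a.\ $x$ (Corollary~\ref{cor:fact}(ii)), the $m$-connecting approximation of the log-likelihood evaluated at $\theta$ equals $\sum_{i}\log f(x^i\mid\theta)$. Since $\theta\in\Theta_{\g}\sube\Theta_{\mc D}$, the estimator $\hat{\theta}_{\mc D,n}^{\text{mle}}$ is consistent for $\theta$, so by continuity and a uniform law of large numbers $\tfrac1n\,\textup{BIC}_\textup{MF}(\g,\leq,x^1,\dots,x^n)\to\mathbb{E}_{P_\theta}[\log f(x\mid\theta)]$, the largest attainable expected log-density. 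In the first case the analogous limit for $\g'$ is strictly smaller, by the positive constant of part (a); in the second case the two normalized log-likelihood approximations agree up to $o_p(\log n)$ while $\g'$ carries the strictly larger penalty $\tfrac12(\lvert\Theta_{\g'}\rvert-\lvert\Theta_{\g}\rvert)\log n$. Either way $\textup{BIC}_\textup{MF}(\g,\leq,\cdot)-\textup{BIC}_\textup{MF}(\g',\leq',\cdot)\to+\infty$, giving the claim.

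The step I expect to require the most care is (b): pinning down the limiting behaviour of the $m$-connecting log-likelihood approximations when evaluated at the dominating-DAG maximum likelihood estimates rather than at $\theta$, and confirming that the residual difference of these approximations is of smaller order than the penalty gap with the correct sign. This rests on exactly the ingredients already assembled for Lemma~\ref{lem:consistant_score} --- consistency of $\hat{\theta}_{\mc D,n}^{\text{mle}}$, a uniform law of large numbers for the interaction-information-rate terms, and exactness of the factorization under the global Markov property --- so with that lemma in hand the present statement follows much as it mirrors the classical consistency result, Proposition~\ref{prop:bic_consist}.
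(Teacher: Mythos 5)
Your proposal is correct and follows essentially the same route as the paper, whose entire proof is a one-line appeal to Lemma \ref{lem:consistant_score} together with Proposition \ref{prop:bic_consist}. The only difference is that you explicitly supply two facts the paper leaves implicit in that citation---strict positivity of $\mbb E_{P_\theta}[\phi_{\ms P}(x)^{\top} i_{\g'}^{\leq}]$ when $\theta \in \Theta_{\g} \sm \Theta_{\g'}$ (via the structural-imset decomposition into weighted conditional mutual informations and Corollary \ref{cor:fact}) and the sign of the score difference---which is a worthwhile expansion rather than a departure.
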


\subsection{Empirical Evaluation}

While the BIC score is a consistent scoring criterion, it does not always have a closed-form solution for ADMG models. In particular, the calculation of the maximum log-likelihood requires the development of model specific optimizers. Such optimizers have been developed for Gaussian and multinomial ADMG models by \cite{drton2004iterative, drton2009computing} and \cite{evans2014markovian}, respectively. In contrast, the approximate maximum log-likelihood calculation developed in this paper uses the maximum likelihood estimate of the parameters for a dominating DAG model. Accordingly, the $\textup{BIC}_\textup{MF}$ score has a closed form solution for many curved exponential families, such as Gaussian and multinomial ADMG models.

In this section, we compare the ability of the $\textup{BIC}_\textup{MF}$ score to recover the correct MEC for Gaussian ADMG models against the $\textup{BIC}_\textup{ICF}$---the BIC score where the maximum likelihood estimate of the parameters is approximated using an R implementation of the iterative conditional fitting (ICF) procedure\footnote{\texttt{https://CRAN.R-project.org/package=ggm}} \citep{drton2004iterative}. ICF optimizes the log-likelihood for curved exponential families constrained by an ADMG independence model, however, this space is not necessarily convex. Accordingly, ICF is not guaranteed to converge to the maximum likelihood estimate of the parameters. 

In addition to the $\textup{BIC}_\textup{ICF}$ score, we compare the $\textup{BIC}_\textup{MF}$ score's ability to recover the correct MEC for Gaussian ADMG models against three causal discovery algorithms:
\begin{itemize}
\item FCI \citep{spirtes1999algorithm, zhang2008completeness} using the 
zero partial correlation test of conditional independence.
\item $\text{FCI}_\text{max}$  \citep{raghu2018comparison} using Fisher's Z test of conditional independence.
\item $\text{GFCI}_{i}$ \citep{ogarrio2016hybrid} using the BIC score with penalty discount $i \in \{1,2\}$ and Fisher's Z test of conditional independence.
\end{itemize}
We used Java implementations of these algorithms from the TETRAD project\footnote{\texttt{https://github.com/cmu-phil/tetrad}}. In our simulations, we generated data from a subfamily of ADMGs called directed maximal ancestral graphs (MAGs); directed MAGs induce the same independence models as ADMGs and have a straightforward parameterization \citep{richardson2002ancestral}. Algorithm \ref{alg:sim} outlines the procedure used to simulate $n$ data instances with respect to directed MAG $\g$.

\vskip 5mm

\begin{algorithm}[H]
\caption{$\textsc{Simulate}(\g, n)$}
\label{alg:sim}
\KwIn{directed MAG: $\g = (V,E)$, number of instances: $n$}
\KwOut{data: $x^1, \dots x^n$}

\Repeat{$\Omega$ is symmetric and positive-definite}{
$\Omega = (\omega_{ab})_{a,b \in V} \; \text{where} \; \omega_{ab} \sim \Bigg\{ \!\!
\begin{array}{ll}
\text{Uniform} \, [-0.7,-0.3]\cup[0.3,0.7] & \quad \text{if} \; \{a, b\} \in E \\
\text{Uniform} \, [1.0,3.0] & \quad \text{if} \; a = b \\
0 & \quad \text{otherwise}
\end{array}$ \;
}
$B^\top = (\beta_{ab})_{a,b \in V} \; \text{where} \; \beta_{ab} \sim \Big\{ \!\!
\begin{array}{ll}
\text{Uniform} \, [-0.7,-0.3]\cup[0.3,0.7] & \quad \text{if} \; (a, b) \in E \\
0 & \quad \text{otherwise}
\end{array}$ \;
$\Sigma = (I-B)^{-1} \, \Omega \, (I-B)^{-\top}$ \;
$x^1, \dots, x^n \sim \text{Gaussian}(0, \Sigma)$ \;
\end{algorithm}

\vskip 5mm 

We use a Python implementation of $\textup{BIC}_\textup{MF}$ to exhaustively rank all ADMG MECs\footnote{\texttt{https://github.com/bja43/agMm}}. Histograms show the distribution of the rank of the data generating MEC. That is, the $i^\textup{th}$ bin counts the number of times the data generating MEC is ranked $i^\textup{th}$ among all models. Notably, there are 24,259 possible positions in the ranking for graphs with 5 vertices. 

For all experiments, we simulate datasets with 500, 5,000 and 50,000 instances. We run experiments for two prespecified graphs and two sets of random graphs with 5 vertices; The first set has between 0 and 5 edges and the second set has between 6 and 10 edges. For each scenario, we run 1,000 repetitions (100 for comparison to $\textup{BIC}_\textup{ICF}$). When not prespecified, graphs are resampled with equal probability assigned to each directed MAG satisfying the experimental parameters. The tables below report the proportion of times that algorithms recovered the correct MEC. These numbers may be compared to the counts reported in the first bin of each histogram divided by 1,000. All experiments were run on a system with the following hardware:
\begin{itemize}
\item Memory: 7.7 GiB
\item Processor: Intel\textsuperscript{\textregistered} Core\textsuperscript{TM} i5-5200U CPU @ 2.20GHz $\times$ 4
\end{itemize}

\begin{figure}[H]
\begin{center}
Directed MAG 
\vskip -5mm
\includegraphics{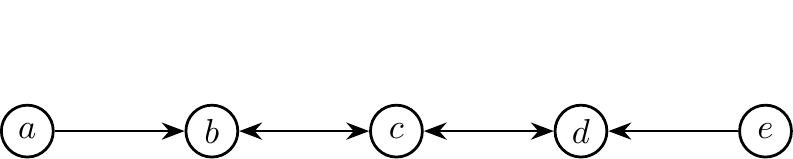}
\vskip 5mm
MEC Recovery
\vskip 2mm
\begin{minipage}{0.3\textwidth}
\centering
\includegraphics[width=0.9\textwidth]{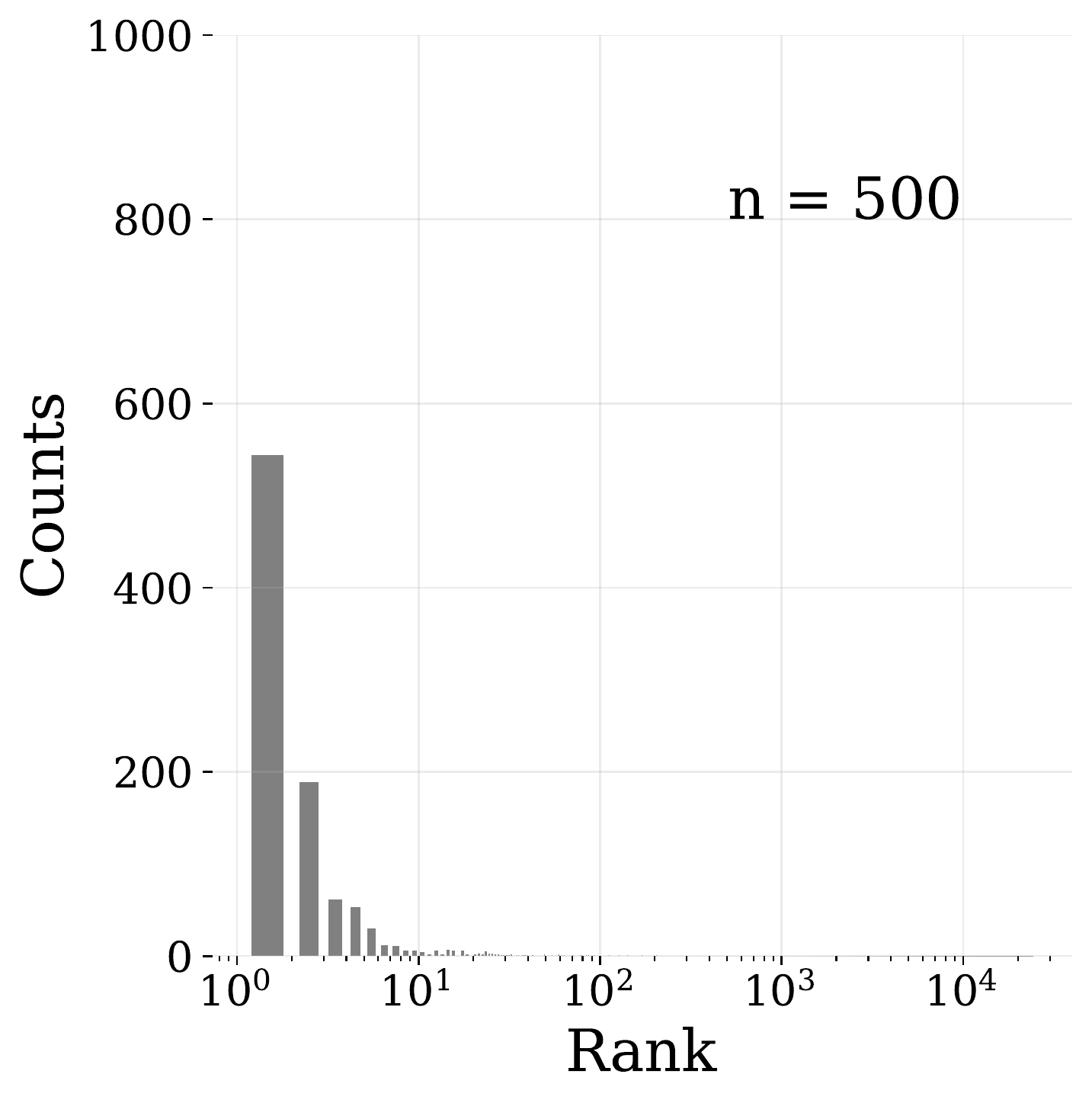}
\end{minipage}%
\begin{minipage}{0.3\textwidth}
\centering
\includegraphics[width=0.9\textwidth]{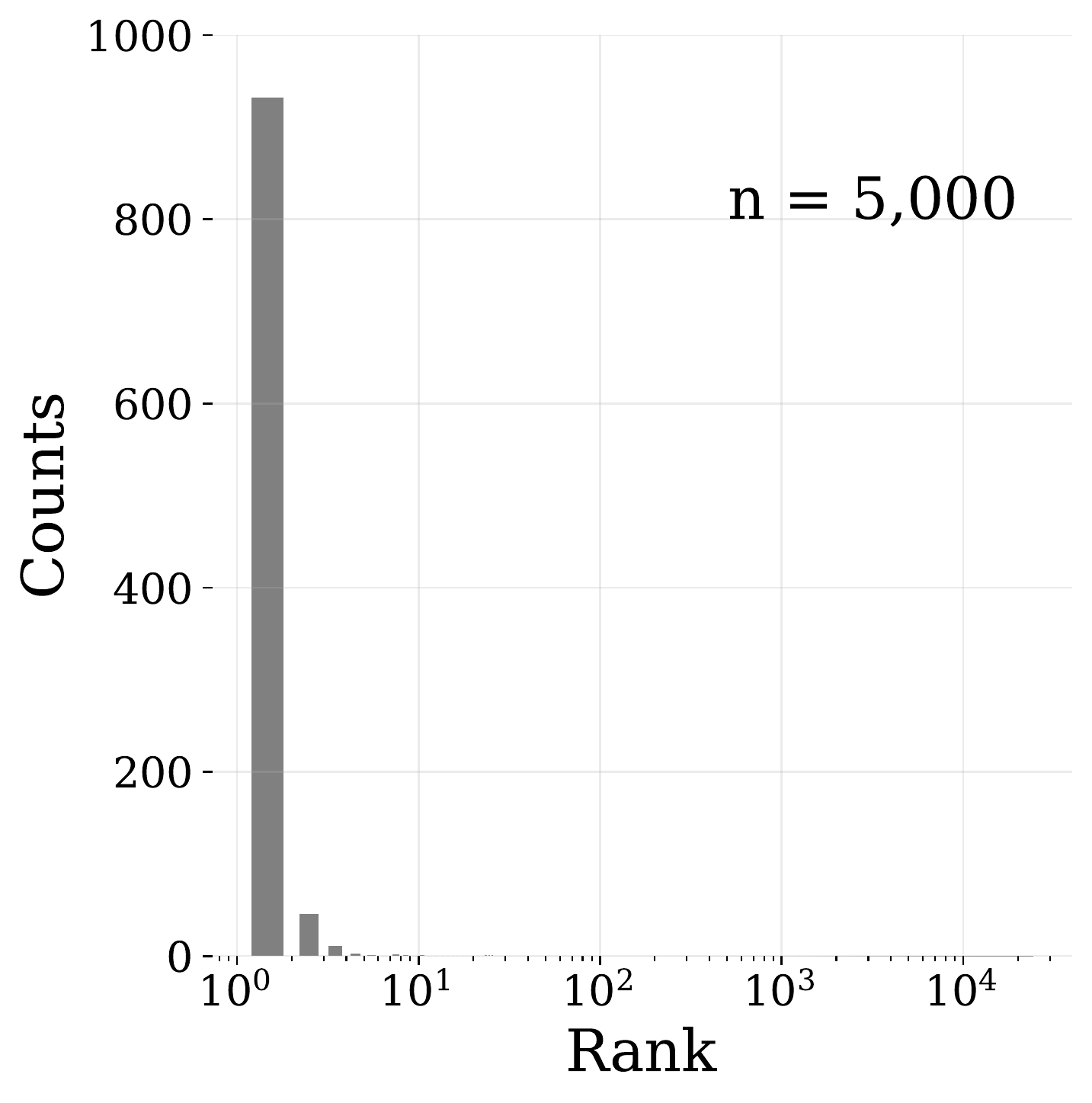}
\end{minipage}%
\begin{minipage}{0.3\textwidth}
\centering
\includegraphics[width=0.9\textwidth]{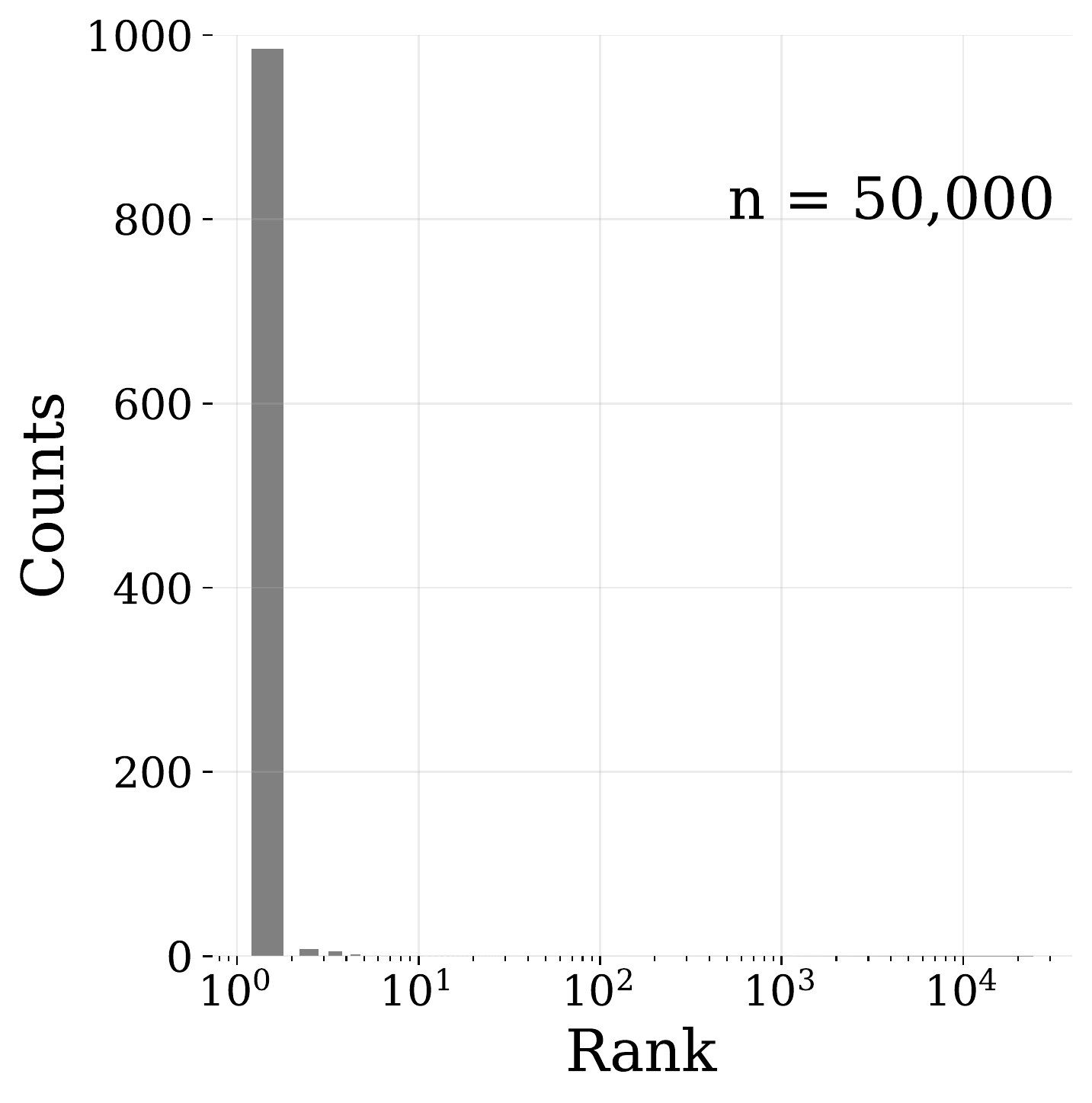}
\end{minipage}
\vskip 2mm
\begin{tabular}{c|c|c|c|c|c|c|c|c|c|c|c}
& $\textup{BIC}_\textup{ICF}$ & \multicolumn{2}{c|}{$\textup{BIC}_\textup{MF}$} & \multicolumn{2}{c|}{FCI} & \multicolumn{2}{c|}{FCI$_\text{max}$} & \multicolumn{2}{c|}{GFCI$_{1}$} & \multicolumn{2}{c}{GFCI$_{2}$} \\
\hline
repetitions & 100 & 100 & 1,000 & \multicolumn{2}{c|}{1,000} & \multicolumn{2}{c|}{1,000} & \multicolumn{2}{c|}{1,000} & \multicolumn{2}{c}{1,000} \\
\hline
$\alpha$-level & - & - & - & 0.01 & 0.001 & 0.01 & 0.001 & 0.01 & 0.001 & 0.01 & 0.001 \\
\hline
n = 500 & 0.49 & 0.47 & 0.544 & 0.836 & 0.702 & 0.499 & 0.444 & 0.223 & 0.213 & 0.08 & 0.08 \\
n = 5,000 & 0.91 & 0.93 & 0.932 & 0.978 & 0.999 & 0.933 & 0.948 & 0.779 & 0.786 & 0.575 & 0.576 \\
n = 50,000 & 1.0 & 1.0 & 0.985 & 0.975 & 0.998 & 0.975 & 0.998 & 0.99 & 0.991 & 0.977 & 0.977 
\end{tabular}
\end{center}
\caption{An evaluation of the $\textup{BIC}_\textup{MF}$ relative to the $\textup{BIC}_\textup{ICF}$ and state-of-the-art algorithms on data generated from the depicted directed MAG.}
\label{fig:case_5}
\end{figure}

In Figure \ref{fig:case_5}, the prespecified graph is a bi-directed chain of length five. The $\textup{BIC}_\textup{MF}$ score performs nearly identically to the $\textup{BIC}_\textup{ICF}$ score and consistently ranks the correct MEC in the top 10 with the ranking converging to a point-mass in the first bin as $n \ra \infty$. The top ranked model according to the $\textup{BIC}_\textup{MF}$ score was correct less often than the model returned by FCI, and more often than the model returned by GFCI for small sample sizes. 

\begin{figure}[H]
\begin{center}
Directed MAG 
\vskip -5mm
\includegraphics{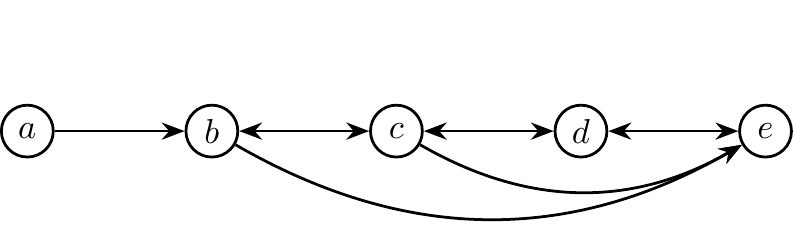}
\vskip 5mm
MEC Recovery
\vskip 2mm
\begin{minipage}{0.3\textwidth}
\centering
\includegraphics[width=0.9\textwidth]{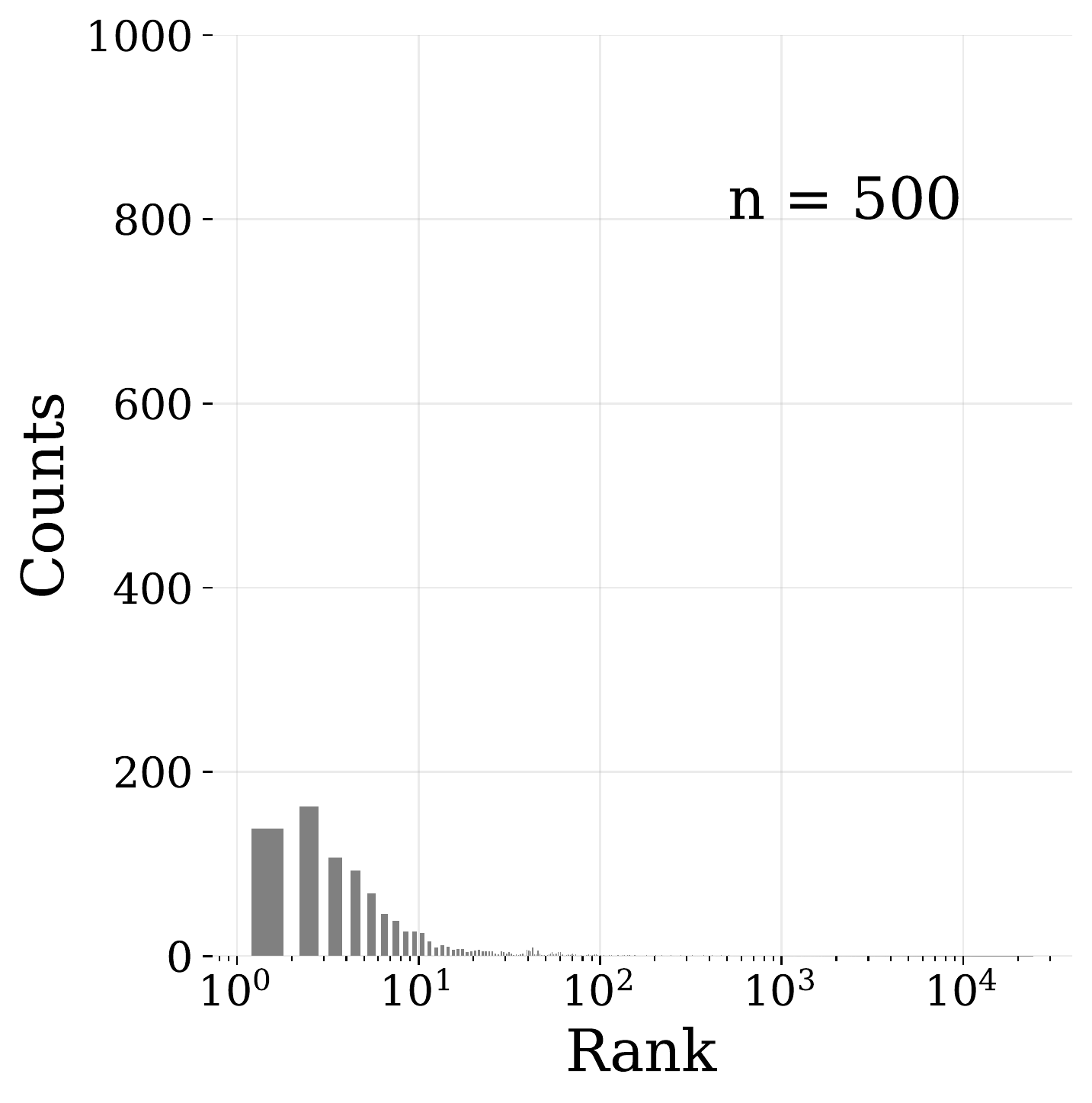}
\end{minipage}%
\begin{minipage}{0.3\textwidth}
\centering
\includegraphics[width=0.9\textwidth]{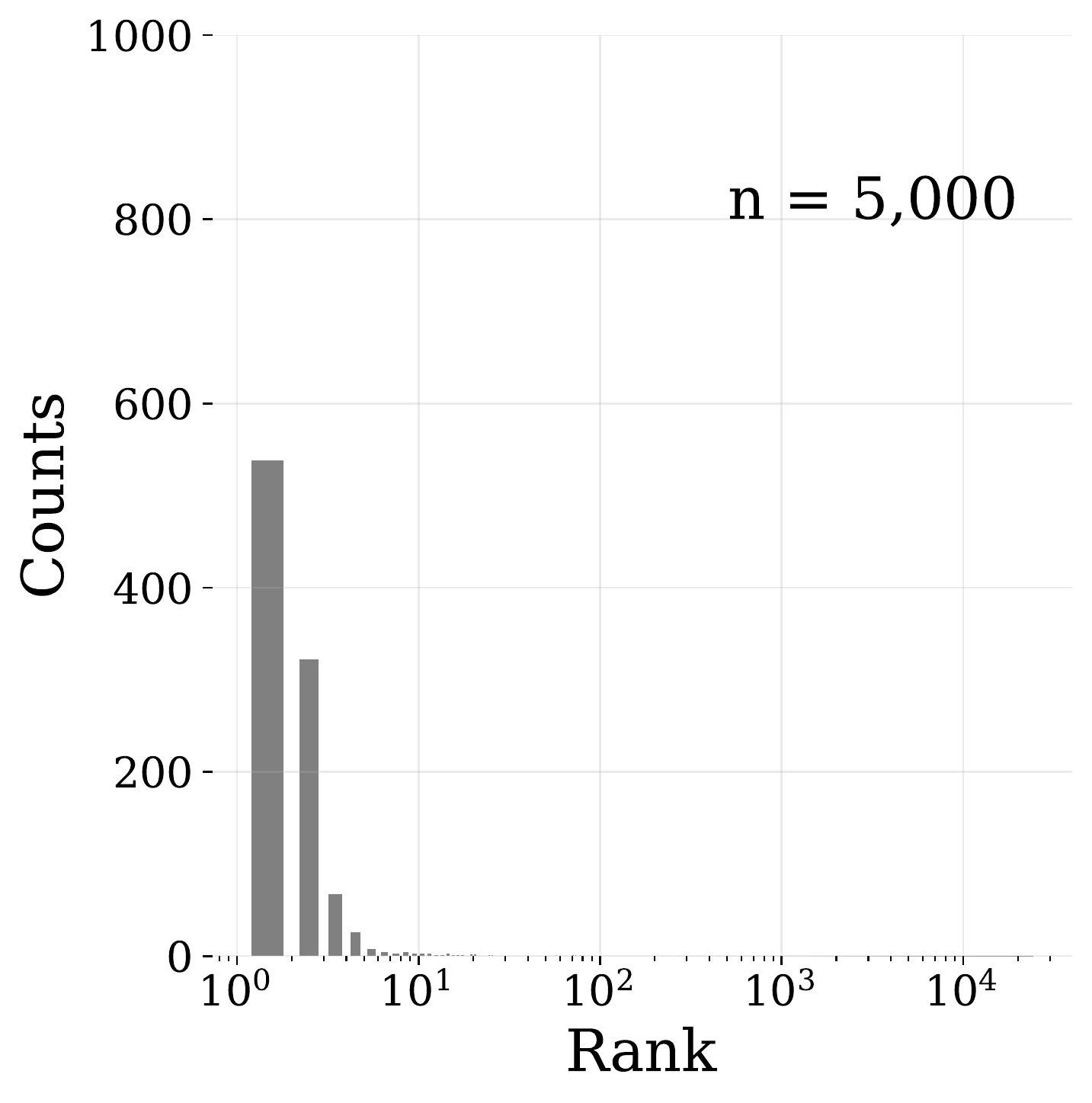}
\end{minipage}%
\begin{minipage}{0.3\textwidth}
\centering
\includegraphics[width=0.9\textwidth]{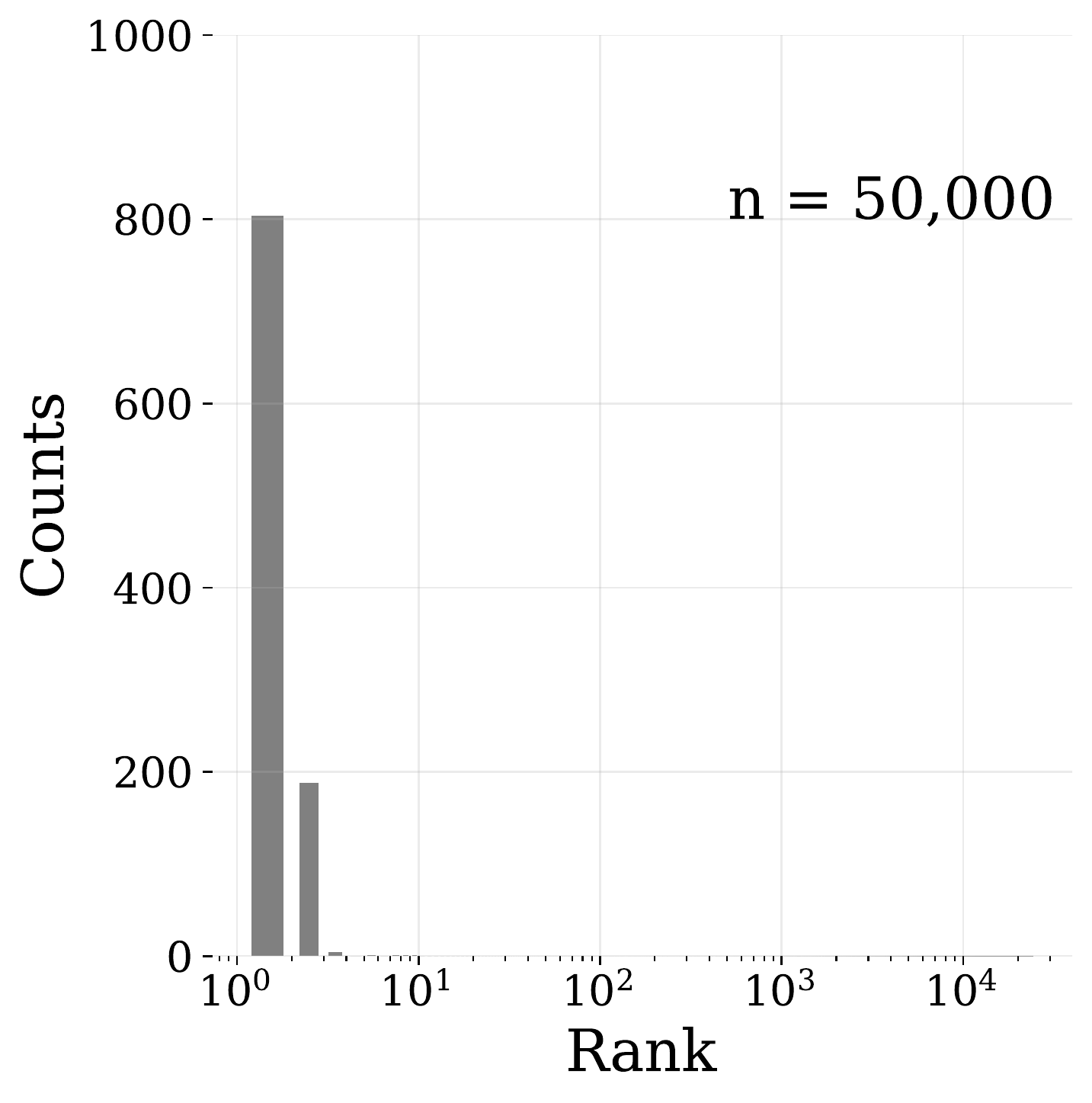}
\end{minipage}
\vskip 2mm
\begin{tabular}{c|c|c|c|c|c|c|c|c|c|c|c}
& $\textup{BIC}_\textup{ICF}$ & \multicolumn{2}{c|}{$\textup{BIC}_\textup{MF}$} & \multicolumn{2}{c|}{FCI} & \multicolumn{2}{c|}{FCI$_\text{max}$} & \multicolumn{2}{c|}{GFCI$_{1}$} & \multicolumn{2}{c}{GFCI$_{2}$} \\
\hline
repetitions & 100 & 100 & 1,000 & \multicolumn{2}{c|}{1,000} & \multicolumn{2}{c|}{1,000} & \multicolumn{2}{c|}{1,000} & \multicolumn{2}{c}{1,000} \\
\hline
$\alpha$-level & - & - & - & 0.01 & 0.001 & 0.01 & 0.001 & 0.01 & 0.001 & 0.01 & 0.001 \\
\hline
n = 500 & 0.12 & 0.14 & 0.138 & 0.006 & 0.0 & 0.005 & 0.0 & 0.0 & 0.0 & 0.0 & 0.0 \\
n = 5,000 & 0.53 & 0.54 & 0.538 & 0.052 & 0.028 & 0.048 & 0.028 & 0.041 & 0.029 & 0.032 & 0.02 \\
n = 50,000 & 0.77 & 0.79 & 0.804 & 0.307 & 0.23 & 0.296 & 0.221 & 0.304 & 0.247 & 0.28 & 0.221
\end{tabular}
\end{center}
\caption{An evaluation of the $\textup{BIC}_\textup{MF}$ relative to the $\textup{BIC}_\textup{ICF}$ and state-of-the-art algorithms on data generated from the depicted directed MAG.}
\label{fig:case_6}
\end{figure}

In Figure \ref{fig:case_6}, the prespecified graph contains a discriminating path of length five; see \citep{richardson2002ancestral} for details. The $\textup{BIC}_\textup{MF}$ score performs nearly identically to the $\textup{BIC}_\textup{ICF}$ score and consistently ranks the correct MEC in the top 10 with the ranking converging to a point-mass in the first bin as $n \ra \infty$. The top ranked model according to the $\textup{BIC}_\textup{MF}$ score was correct more often than the models returned by other methods.

\begin{figure}[H]
\begin{center}    
Random Directed MAGs with $|V| = 5$ and $|E| \in [0,5]$
\vskip 5mm
MEC Recovery
\begin{minipage}{0.3\textwidth}
\centering
\includegraphics[width=0.9\textwidth]{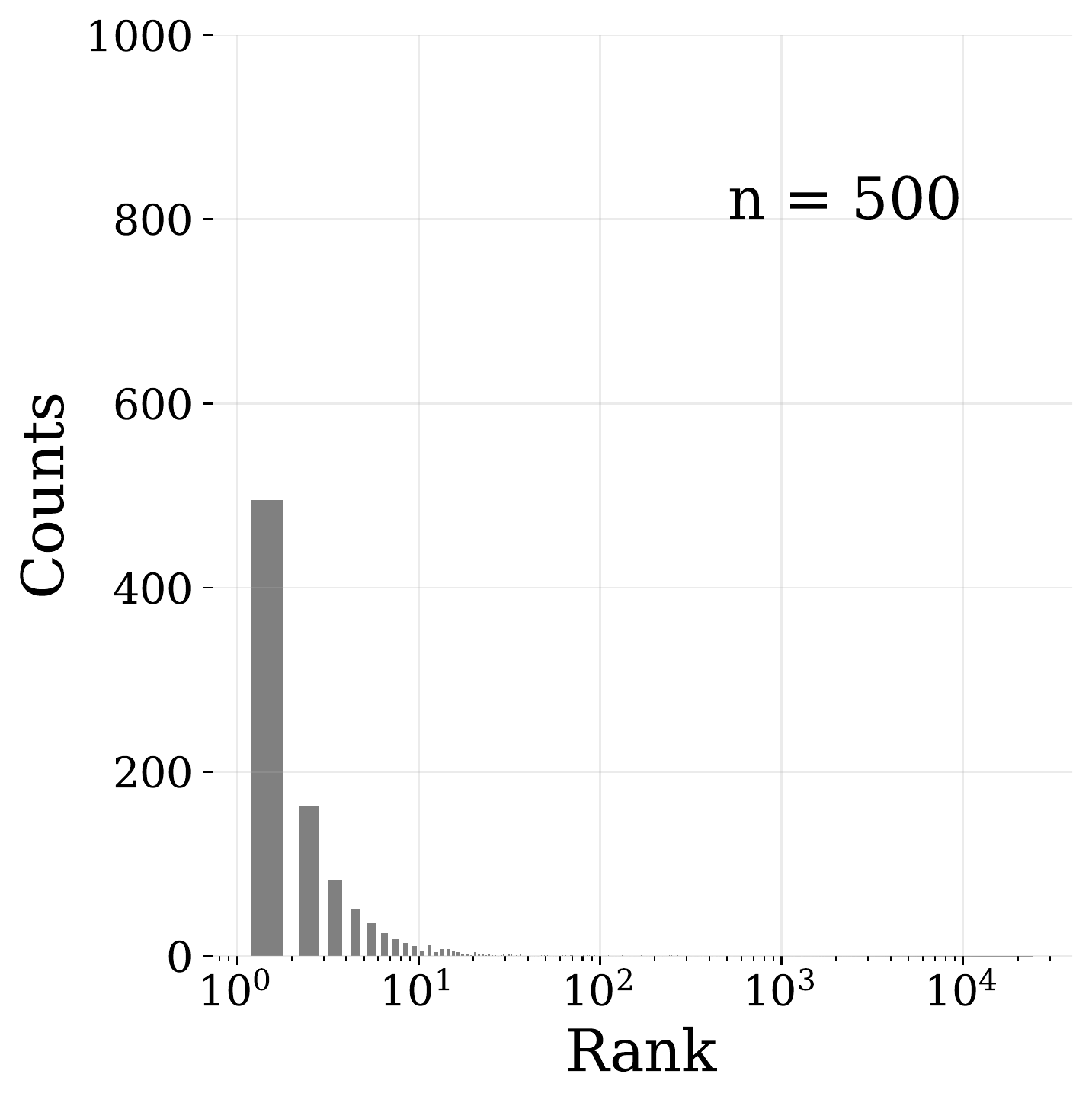}
\end{minipage}%
\begin{minipage}{0.3\textwidth}
\centering
\includegraphics[width=0.9\textwidth]{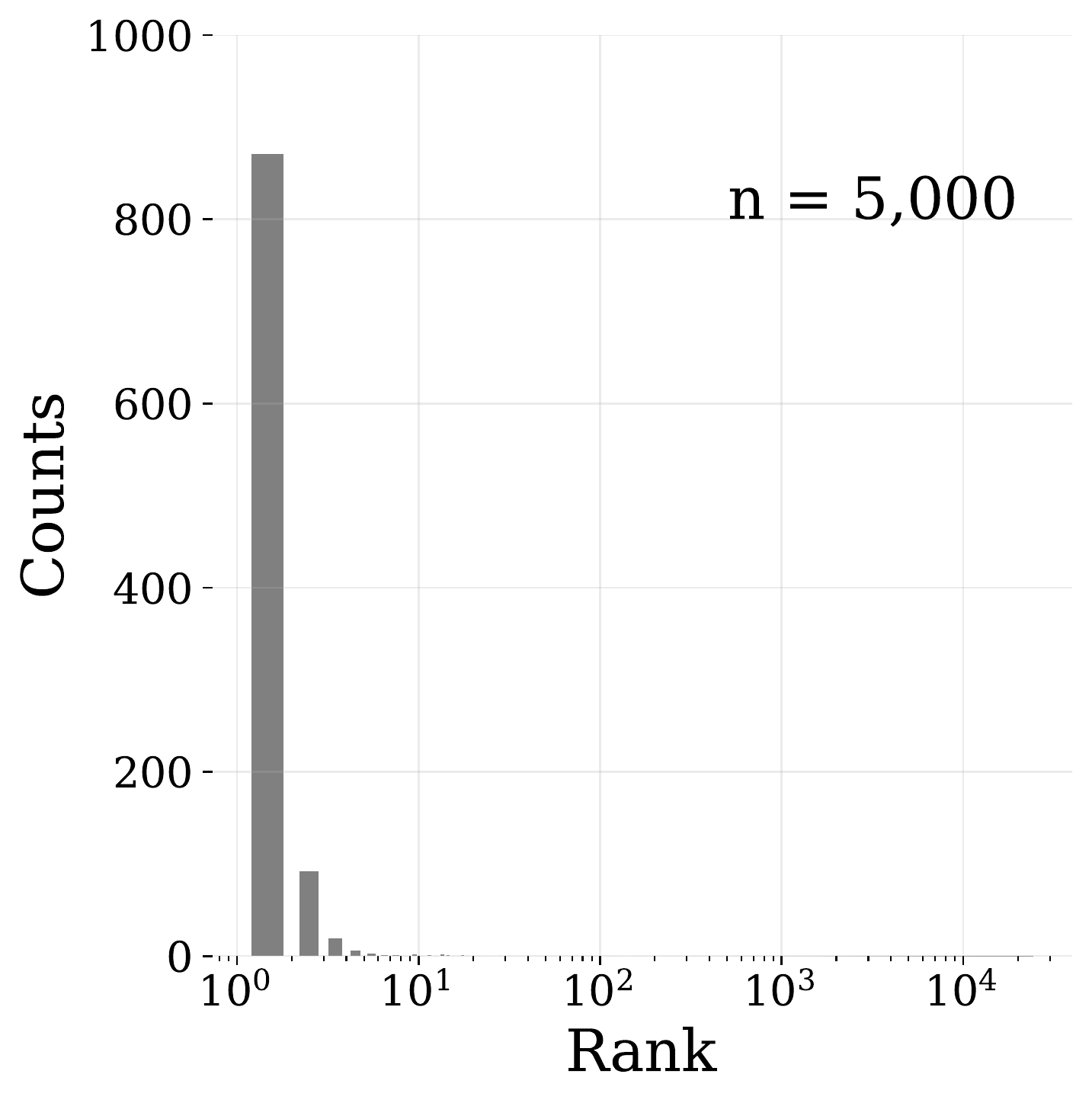}
\end{minipage}%
\begin{minipage}{0.3\textwidth}
\centering
\includegraphics[width=0.9\textwidth]{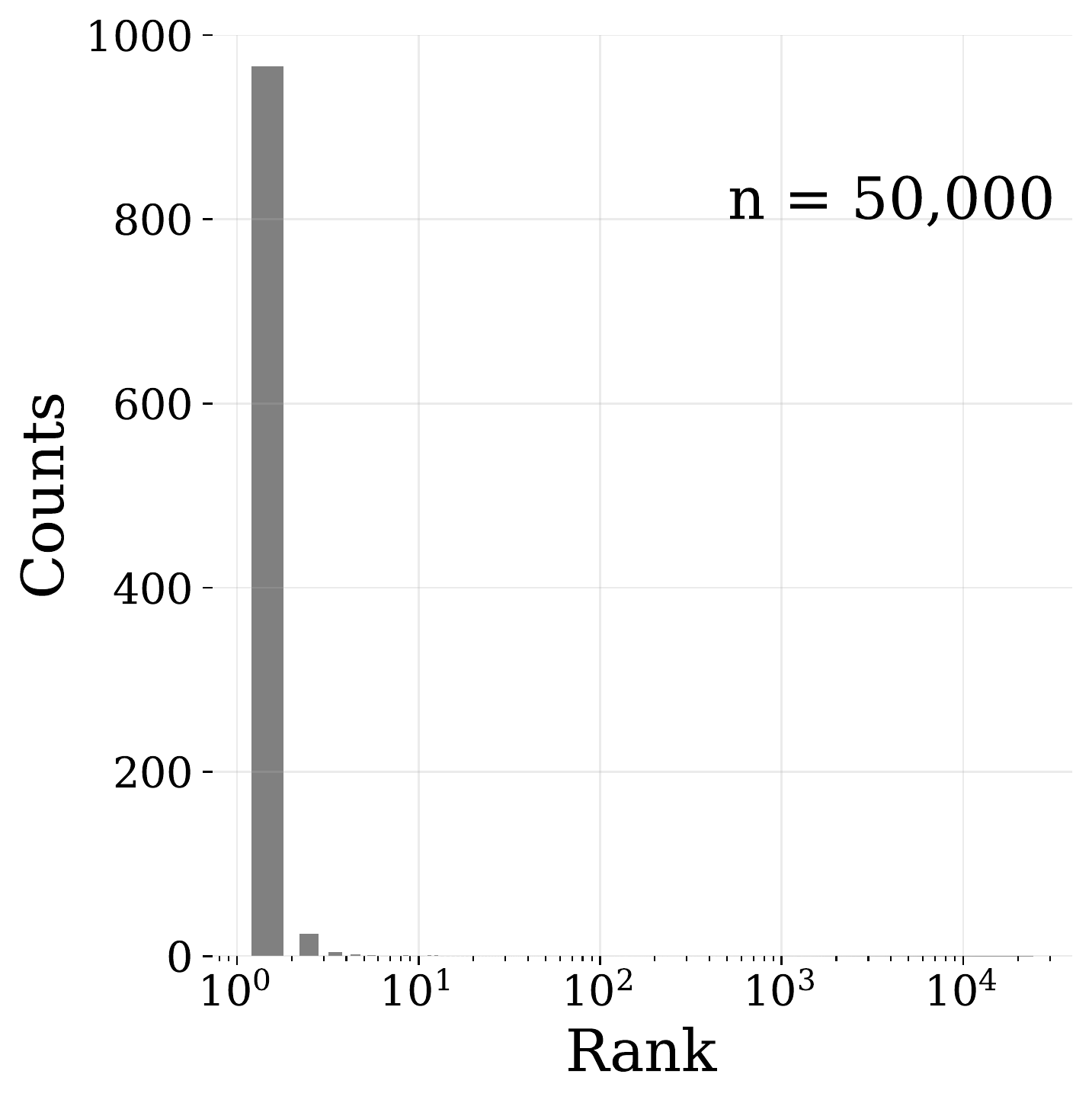}
\end{minipage}
\vskip 2mm
\begin{tabular}{c|c|c|c|c|c|c|c|c|c|c|c}
& $\textup{BIC}_\textup{ICF}$ & \multicolumn{2}{c|}{$\textup{BIC}_\textup{MF}$} & \multicolumn{2}{c|}{FCI} & \multicolumn{2}{c|}{FCI$_\text{max}$} & \multicolumn{2}{c|}{GFCI$_{1}$} & \multicolumn{2}{c}{GFCI$_{2}$} \\
\hline
repetitions & 100 & 100 & 1,000 & \multicolumn{2}{c|}{1,000} & \multicolumn{2}{c|}{1,000} & \multicolumn{2}{c|}{1,000} & \multicolumn{2}{c}{1,000} \\
\hline
$\alpha$-level & - & - & - & 0.01 & 0.001 & 0.01 & 0.001 & 0.01 & 0.001 & 0.01 & 0.001 \\
\hline
n = 500 & 0.57 & 0.58 & 0.495 & 0.273 & 0.16 & 0.398 & 0.338 & 0.349 & 0.324 & 0.258 & 0.252 \\
n = 5,000 & 0.9 & 0.88 & 0.871 & 0.776 & 0.741 & 0.788 & 0.78 & 0.741 & 0.732 & 0.676 & 0.671 \\
n = 50,000 & 0.99 & 0.99 & 0.966 & 0.919 & 0.915 & 0.92 & 0.919 & 0.904 & 0.9 & 0.893 & 0.888
\end{tabular}
\end{center}
\caption{An evaluation of the $\textup{BIC}_\textup{MF}$ relative to the $\textup{BIC}_\textup{ICF}$ and state-of-the-art algorithms on data generated using random directed MAGs with specified vertex and edge ranges.}
\label{fig:case_0_5}
\end{figure}

In Figure \ref{fig:case_0_5}, the $\textup{BIC}_\textup{MF}$ score performs nearly identically to the $\textup{BIC}_\textup{ICF}$ score and consistently ranks the correct MEC in the top 10 with the ranking converging to a point-mass in the first bin as $n \ra \infty$. The top ranked model according to the $\textup{BIC}_\textup{MF}$ score was correct more often than the models returned by other methods.

\begin{figure}[H]
\begin{center}    
Random Directed MAGs with $|V| = 5$ and $|E| \in [6,10]$
\vskip 5mm
MEC Recovery
\vskip 2mm
\begin{minipage}{0.3\textwidth}
\centering
\includegraphics[width=0.9\textwidth]{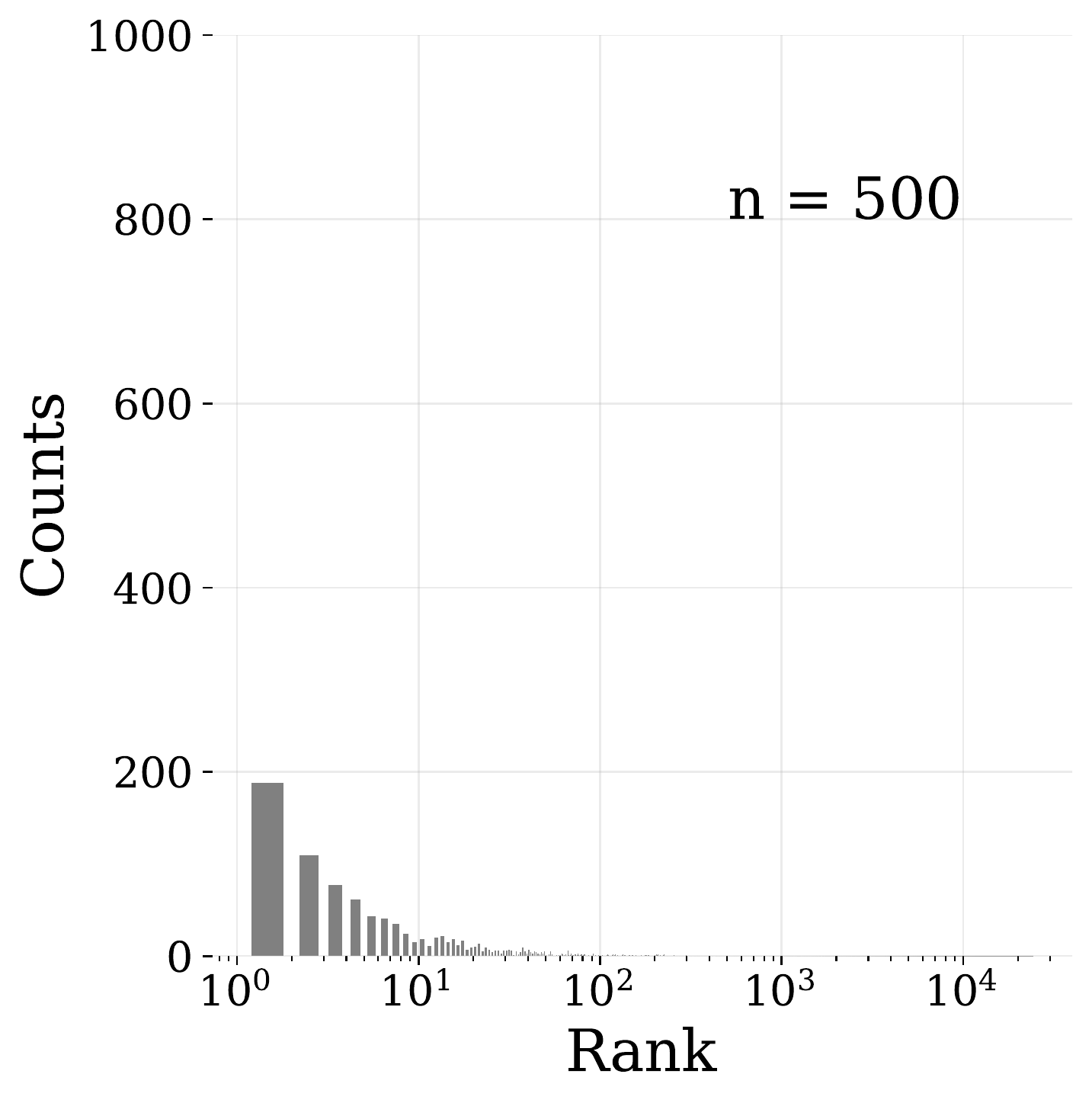}
\end{minipage}%
\begin{minipage}{0.3\textwidth}
\centering
\includegraphics[width=0.9\textwidth]{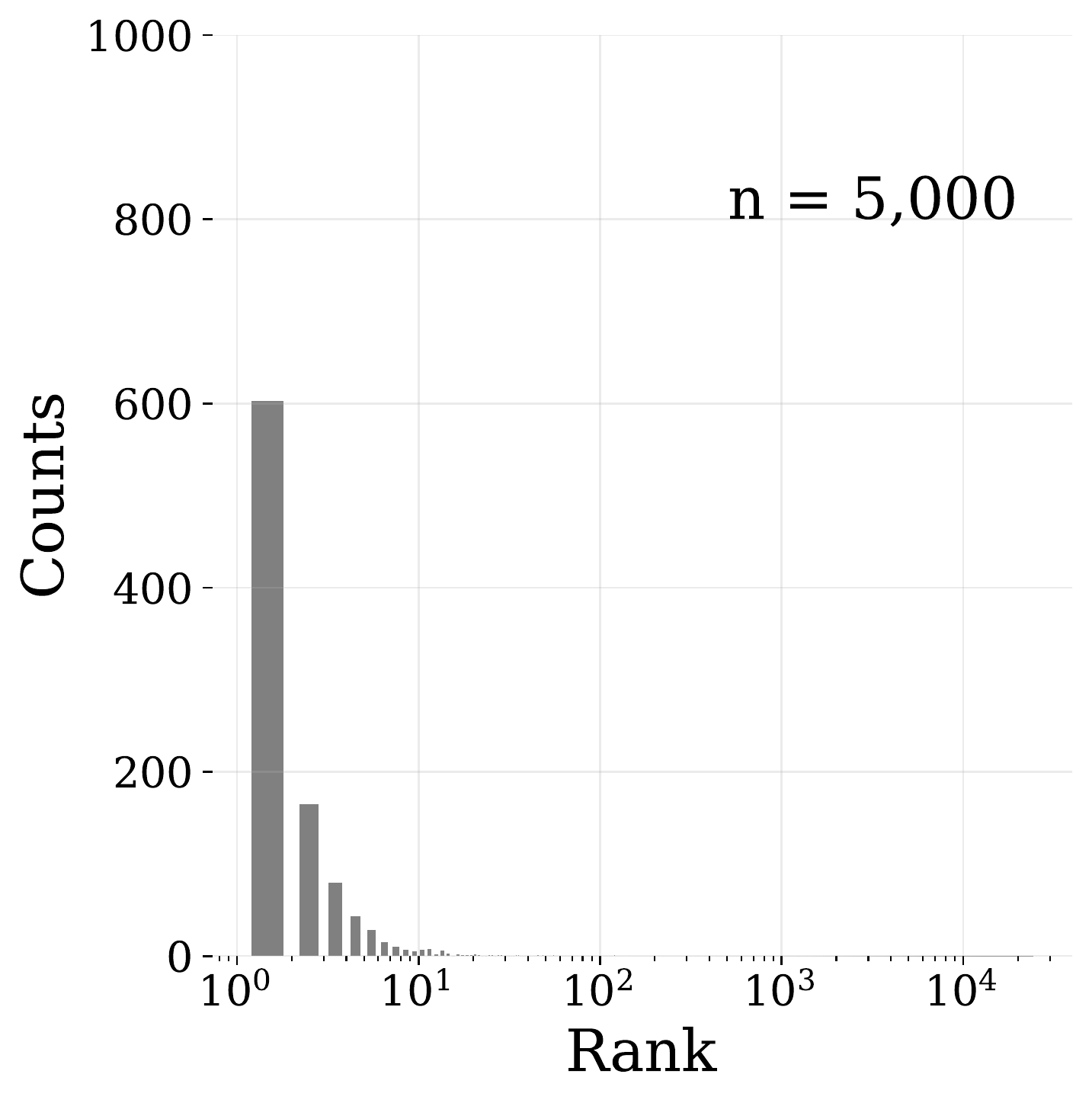}
\end{minipage}%
\begin{minipage}{0.3\textwidth}
\centering
\includegraphics[width=0.9\textwidth]{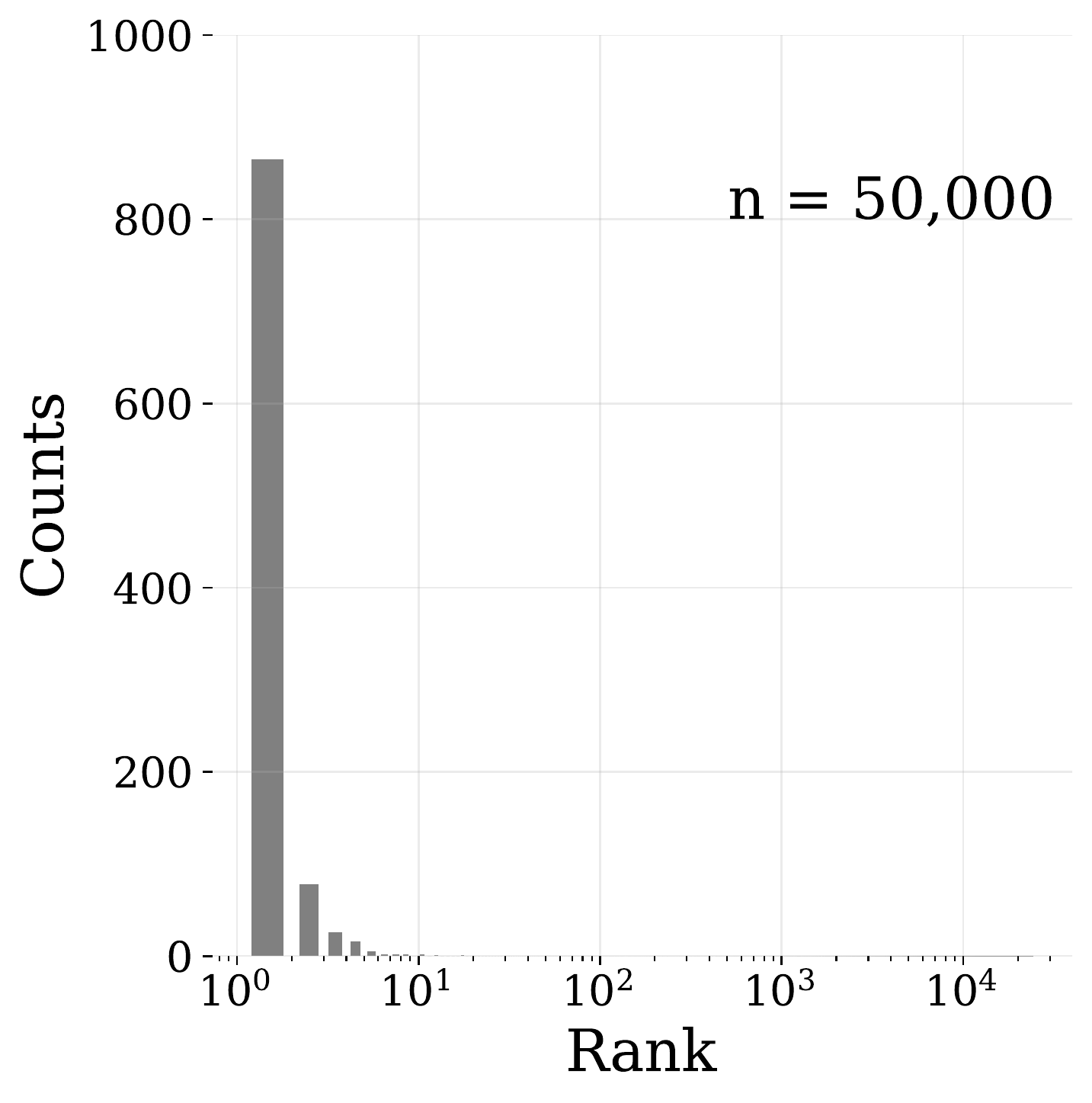}
\end{minipage}
\vskip 2mm
\begin{tabular}{c|c|c|c|c|c|c|c|c|c|c|c}
& $\textup{BIC}_\textup{ICF}$ & \multicolumn{2}{c|}{$\textup{BIC}_\textup{MF}$} & \multicolumn{2}{c|}{FCI} & \multicolumn{2}{c|}{FCI$_\text{max}$} & \multicolumn{2}{c|}{GFCI$_{1}$} & \multicolumn{2}{c}{GFCI$_{2}$} \\
\hline
repetitions & 100 & 100 & 1,000 & \multicolumn{2}{c|}{1,000} & \multicolumn{2}{c|}{1,000} & \multicolumn{2}{c|}{1,000} & \multicolumn{2}{c}{1,000} \\
\hline
$\alpha$-level & - & - & - & 0.01 & 0.001 & 0.01 & 0.001 & 0.01 & 0.001 & 0.01 & 0.001 \\
\hline
n = 500 & 0.24 & 0.24 & 0.188 & 0.026 & 0.009 & 0.072 & 0.031 & 0.035 & 0.021 & 0.022 & 0.019 \\
n = 5,000 & 0.61 & 0.59 & 0.603 & 0.311 & 0.243 & 0.354 & 0.319 & 0.297 & 0.261 & 0.238 & 0.216 \\
n = 50,000 & 0.9 & 0.91 & 0.865 & 0.665 & 0.617 & 0.687 & 0.671 & 0.641 & 0.622 & 0.595 & 0.579
\end{tabular}
\end{center}
\caption{An evaluation of the $\textup{BIC}_\textup{MF}$ score relative to the $\textup{BIC}_\textup{ICF}$ score and state-of-the-art algorithms on data generated using random directed MAGs with specified vertex and edge ranges.}
\label{fig:case_6_10}
\end{figure}

In Figure \ref{fig:case_6_10}, the $\textup{BIC}_\textup{MF}$ score performs nearly identically to the $\textup{BIC}_\textup{ICF}$ score and consistently ranks the correct MEC in the top 100 with the ranking converging to a point-mass in the first bin as $n \ra \infty$. The top ranked model according to the $\textup{BIC}_\textup{MF}$ score was correct more often than the models returned by other methods.

\begin{table}[H]
\centering
\begin{tabular}{c|c|c|c|c|c|c}
& \multicolumn{3}{c|}{$\textup{BIC}_\textup{ICF}$} & \multicolumn{3}{c}{$\textup{BIC}_\textup{MF}$} \\
\hline
sample size & n = 500 & n = 5,000 & n = 50,000 & n = 500 & n = 5,000 & n = 50,000 \\
\hline
Figure \ref{fig:case_5} & 74.6 (5.8) & 72.8 (5.7) & 71.9 (5.8) & 0.64 (0.02) & 0.63 (0.01) & 0.63 (0.01) \\
Figure \ref{fig:case_6} & 81.3 (8.5) & 80.9 (8.1) & 80.8 (8.4) & 0.64 (0.06) & 0.63 (0.06) & 0.63 (0.07) \\
Figure \ref{fig:case_0_5} & 77.7 (7.5) & 76.5 (7.8) & 75.5 (7.7) & 0.32 (0.1) & 0.29 (0.07) & 0.28 (0.03) \\
Figure \ref{fig:case_6_10} & 80.9 (10.5) & 80.3 (10.1) & 80.3 (10.4) & 0.26 (0.01) & 0.26 (0.01) & 0.26 (0.01)
\end{tabular}
\caption{The mean run times (stds in parentheses) for the simulations in seconds (100 reps).}
\label{tab:runtime_5}
\end{table}

Table \ref{tab:runtime_5} tabulates the time it took the $\textup{BIC}_\textup{MF}$ and $\textup{BIC}_\textup{ICF}$ scores to rank all 24,259 models. Both scores were given a sample covariance matrix as input. The time to calculate the sample covariance matrix generally took between 2 and 5 milliseconds. In general, we find that the $\textup{BIC}_\textup{MF}$ and $\textup{BIC}_\textup{ICF}$ scores have little to no difference in performance, while the $\textup{BIC}_\textup{MF}$ score takes a fraction of the time. Both scores perform well with small sample sizes and compare favorably to FCI and two of its variants on recovering correct independence models.

\section{Discussion} 
\label{sec:discussion}

In this paper, we introduce the \textit{m}-connecting imset which provides an alternative representation for ADMG independence models. Furthermore, we define the \textit{m}-connecting factorization criterion for ADMG models and prove its equivalence to the global Markov property. The \textit{m}-connecting imset and factorization criterion provide two new statistical tools for learning and inference with ADMG models. We demonstrate the usefulness of these tools by formulating and evaluating a consistent scoring criterion for Gaussian and multinomial ADMG models with a closed form. Unlike existing methods for scoring ADMG models, our proposed approach does not require the development of a distribution specific optimizer. Consequently, we do not solve for parameters in a non-convex space. 

In the past five years, there has been an influx of methods capable of learning ADMGs by optimizing a score, such as the methods of \cite{chen2021integer, nowzohour2017distributional, triantafillou2016score, tsirlis2018scoring}---these methods rely on the $\textup{BIC}_\textup{ICF}$ score. One could both improve the efficiency of these algorithms and allow them to be extended to more general distributions by replacing the $\textup{BIC}_\textup{ICF}$ score with the $\textup{BIC}_\textup{MF}$ score. Additionally, since our score is formulated using the \textit{m}-connecting imset, one could also explore a geometric approach for learning ADMGs, similar to existing DAG learning methods \cite{studeny2014learning, studeny2017towards, linusson2021greedy}.

\section*{Acknowledgements}

We thank Joe Ramsey, Kun Zhang, Clark Glymour, Robin Evans, and Zhongyi Hu for insightful discussion and comments on earlier versions of the draft. The research reported in this paper was supported by grant U54HG008540 awarded by the National Human Genome Research Institute through funds provided by the trans-NIH Big Data to Knowledge (BD2K) initiative, grant \#4100070287 from the Pennsylvania Department of Health (DOH), and grant IIS-1636786 from the National Science Foundation. Author BA also received support from training grant T15LM007059 from the National Library of Medicine. The content of this paper is solely the responsibility of the authors and does not necessarily represent the official views of these funding agencies.

\bibliographystyle{apalike}
\bibliography{main.bib}

\newpage

\appendix

\section{Proofs}

\label{app:proofs}

\noindent \textbf{Proposition \ref{prop:ordered_markov_helper}}
\textit{Let $\g = (V, E)$ be an ADMG with consistent order $\leq$. If $b \in V$ and $R = \pre{\g}{b}$ then\textup{:}}
\[
    \istate{b}{V \sm \cl{\g}{b}}{\mb{\g}{b}}{\g}.
\]

\begin{proof}
    By Proposition \ref{prop:admg_graphoid} $\mc I(\g)$ is a compositional graphoid, so we may apply the compositional graphoid axioms. Let $a \in V \sm \cl{\g}{b}$ and $\pi_{ab}$ be a path between $a$ and $b$. Traverse $\pi_{ab}$ from $a$ to $b$ until reaching a member of $\cl{\g}{b}$ denoting it $c$ and the vertex immediately preceding it on the transversal $d$. Notably, if $b = c$ or $c$ is a collider on $\pi_{ab}$, then $d \in \cl{\g}{b}$ which is false by construction. Accordingly, $c \in \mb{\g}{b}$ and is a non-collider on $\pi_{ab}$. It follows that $\pi_{ab}$ is not an \textit{m}-connecting path between $a$ and $b$ relative to $\mb{\g}{b}$. Since we picked $a$ and $\pi_{ab}$ arbitrarily, \istate{a}{b}{\mb{\g}{b}}{\g} for all $a \in V \sm \cl{\g}{b}$. By the composition graphoid axiom that \istate{b}{V \sm \cl{\g}{b}}{\mb{\g}{b}}{\g}.
\end{proof}

\vskip 5mm

\noindent \textbf{Theorem \ref{thm:imfact}}
\textit{Let $u$ be an imset and $P$ be a positive measure. If $u \in \mc S(V)$, then the following are equivalent\textup{:}}
\begin{enumerate}
    \item $\sum \limits_{S \in \mc P(V)} u(S) \log f_{S}(x) = 0 \s[18]$ for $\mu$-almost all $x \in \mc X$\textup{;}
    \item $\istate{A}{B}{C}{u} \s[18] \Ra \s[18] \istate{A}{B}{C}{P} \s[18]$ for all $\seq{A,B \mid C} \in \mc T(V)$.
\end{enumerate}

\begin{proof}
    Notably:
    \begin{align*} 
        \sum \limits_{S \in \mc P(V)} u(S) \log f_{S}(x) &= \sum \limits_{S \in \mc P(V)} \sum_{\seq{a,b \mid C} \in \mc T(V)} k_{a,b \mid C} u_{\seq{a,b \mid C}}(S) \log f_{S}(x) \\
        &= \sum_{\seq{a,b \mid C} \in \mc T(V)} k_{a,b \mid C} \sum \limits_{S \in \mc P(V)} u_{\seq{a,b \mid C}}(S) \log f_{S}(x) \\
        &= \sum_{\seq{a,b \mid C} \in \mc T(V)} k_{a,b \mid C} \log \frac{f_{abC}(x) f_{C}(x)}{f_{aC}(x) f_{bC}(x)}
    \end{align*}
    for $k_{\seq{a,b \mid C}} \in \mbb Q^+$.
    
    \vskip 5mm
    
    \noindent ($i \Ra ii$) If:
    \[
        \sum_{\seq{a,b \mid C} \in \mc T(V)} k_{a,b \mid C} \log \frac{f_{abC}(x) f_{C}(x)}{f_{aC}(x) f_{bC}(x)} = 0 \s[18] \text{for $\mu$-almost all} \; x \in \mc X,
    \]
    then:
    \[
        \mbb E_P \log \frac{f_{abC}(x) f_{C}(x)}{f_{aC}(x) f_{bC}(x)} = 0 \s[18] \text{for all} \; k_{a,b \mid C} \neq 0.
    \]
    Accordingly, $\istate{a}{b}{C}{u} \; \Ra \; \istate{a}{b}{C}{P}$ for all $\seq{a,b \mid C} \in \mc T(V)$ and by repeated application of the graphoid axoims $\istate{A}{B}{C}{u} \; \Ra \;  \istate{A}{B}{C}{P}$ for all $\seq{A,B \mid C} \in \mc T(V)$.
    
    \vskip 5mm
    
    \noindent ($i \La ii$) If $\istate{A}{B}{C}{u} \; \Ra \; \istate{A}{B}{C}{P}$ for all $\seq{A,B \mid C} \in \mc T(V)$, then:
    \[
        \log \frac{f_{abC}(x) f_{C}(x)}{f_{aC}(x) f_{bC}(x)} = 0 \s[18] \text{for all} \; k_{\seq{a,b \mid C}} \neq 0.
    \]
    Accordingly: 
    \[
        \sum_{\seq{a,b \mid C} \in \mc T(V)} k_{a,b \mid C} \log \frac{f_{abC}(x) f_{C}(x)}{f_{aC}(x) f_{bC}(x)} = 0.
    \]
\end{proof}

\vskip 5mm

\noindent \textbf{Proposition \ref{prop:char_imset}}
\textit{Let $\g[] = (V,E)$ be a DAG and $S \sube V$ $(|S| \geq 2)$. If $A = \an{\g}{S}$ and $B = \ba{\g}{S}$, then\textup{:}}
\begin{enumerate}
    \item $c_{\g}(S) \in \{ 0,1 \}$\textup{;}
    \item $c_{\g}(S) = 1 \s[18] \Lra \s[18] \text{there exists} \; b \in S \; \text{such that} \; S \sm b \sube \pa{\g}{b}$\textup{;}
    \item $c_{\g}(S) = 1 \s[18] \Lra \s[18] S \sube \co{\g[A]}{B}$. 
\end{enumerate}

\begin{proof}

    Properties (\textit{i}) and (\textit{ii}) were shown by \cite{hemmecke2012characteristic}. Accordingly, we show the following are equivalent:
    \begin{enumerate}
        \item $S \sube \co{\g[A]}{B}$\textup{;}
        \item $\text{there exists} \; b \in S \; \text{such that} \; S \sm b \sube \pa{\g}{b}$.
    \end{enumerate}

    \noindent ($i \Ra ii$) If $S \sube \co{\g[A]}{B}$, then let $\leq$ be a total order consistent with $\g$ and $b = \ceo{S}$. It follows that $S \sm b = \pa{\g}{a}$ since $\g$ is a DAG.
    
    \vskip 5mm
    
    \noindent ($i \La ii$) If $b \in S \; \text{such that} \; S \sm b \sube \pa{\g}{b}$, then by definition, $S \sube \co{\g[A]}{B}$
\end{proof}

\vskip 5mm

\noindent \textbf{Proposition \ref{prop:iir}}
\textit{If $P$ is a positive measure, then\textup{:}}
\begin{enumerate}
    \item $\displaystyle \phi_{A \mid B}(x) = \sum_{\substack{T \sube AB \\ B \sube T}} (-1)^{|AB \sm T|} \log f_{T}(x)$
    \item $\displaystyle \phi_{A, B \mid C}(x) = \log \frac{ f_{A B C}(x) f_{C}(x) }{ f_{A C}(x) f_{B C}(x) }$
    \item $u_{\seq{A,B \mid C}} = \mu_{\ms P} \s \delta_{A,B \mid C}$
    \item $\istate{A}{B}{C}{P} \s[18] \Lra \s[18] \phi_{A, B \mid C}(x) = 0 \s[18] \text{for $\mu$-almost all} \; x \in \mc X$
\end{enumerate}

\begin{proof}
\noindent (\textit{i}):

\begin{align*}
    \phi_{A \mid B}(x) &= \sum_{\substack{S \sube AB \\ A \sube S}} \phi_{S}(x) \\
    &= \sum_{\substack{S \sube AB \\ A \sube S}} \sum_{T \sube S} (-1)^{|S \sm T|} \log f_{T}(x) \\
    &= \sum_{T \sube AB} \log f_{T}(x) \sum_{\substack{S \sube AB \\ A,T \sube S}} (-1)^{|S \sm T|} \\
    &= \sum_{T \sube AB} \log f_{T}(x) \sum_{U \sube B \sm T} (-1)^{|A \sm T|+|U|} \\
    &= \sum_{\substack{T \sube AB \\ B \sube T}} (-1)^{|AB \sm T|} \log f_{T}(x)
\end{align*}
where
\[
    \sum_{U \sube B \sm T } (-1)^{|A \sm T|+|U|} =
    \begin{cases}
        \s[18] (-1)^{|AB \sm T|} \s[36] & B \sube T; \\
        \s[18] 0 \s[36] & B \not \sube T
    \end{cases}
\]
since whenever $B \not \sube T$ note that $|A \sm T|$ is constant and $B \sm T \neq \es$:
\[
    \sum_{S \sube B \sm T } (-1)^{|A \sm T|+|S|} = 0
\]
and whenever $B \sube T$ note that $B \sm T = \es$ and $A \sm T = AB \sm T$:
\[
    \sum_{S \sube B \sm T } (-1)^{|A \sm T|+|S|} = (-1)^{|AB \sm T|}.
\]

\vskip 5mm

\noindent (\textit{ii}):
\begin{align*}
\phi_{A, B \mid C}(x) &= \sum_{\substack{T \sube ABC \\ T \not \sube AC \\ T \not \sube BC}} \phi_{T}(x) \\
&= \sum_{T \sube ABC} \phi_{T}(x) + \sum_{T \sube C} \phi_{T}(x) - \sum_{T \sube AC} \phi_{T}(x) - \sum_{T \sube BC} \phi_{T}(x) \\
&= \log f_{A B C}(x) + \log f_{C}(x) - \log f_{A C}(x) - \log f_{B C}(x).
\end{align*}

\noindent (\textit{iii}): Note that $\sum_{T \sube A} \delta_T = \zeta_{\ms P} \s \delta_A$:
\begin{align*}
\delta_{A, B \mid C} &= \sum_{\substack{T \sube ABC \\ T \not \sube AC \\ T \not \sube BC}} \delta_{T} \\
&= \sum_{T \sube ABC} \delta_{T} + \sum_{T \sube C} \delta_{T} - \sum_{T \sube AC} \delta_{T} - \sum_{T \sube BC} \delta_{T} \\
&= \zeta_{\ms P} \s \delta_{A B C} + \zeta_{\ms P} \s \delta_{C} - \zeta_{\ms P} \s \delta_{A C} - \zeta_{\ms P} \s \delta_{B C} \\
&= \zeta_{\ms P} \s \left[ \delta_{ABC} + \delta_{C} - \delta_{AC} - \delta_{BC} \right] \\
&= \zeta_{\ms P} \s u_{\seq{A,B \mid C}}.
\end{align*}

\vskip 5mm

\noindent (\textit{iv}): This immediately follows from (\textit{ii}) and the definition of probabilistic conditional independence.

\end{proof}

\noindent \textbf{Theorem \ref{thm:mconn_fact_simp}}
\textit{Let $\g = (V, E)$ be an ADMG with consistent order $\leq$. If $P$ is a positive measure and $|M| \leq 5$ for all $M \in \mc M(\g)$, then the following are equivalent\textup{:}}
\begin{enumerate}
    \item $\log f(x) = \sum \limits_{M \in \mc M(\g)} \phi_M(x) \s[18]$ for $\mu$-almost all $x \in \mc X$\textup{;}
    \item \istate{A}{B}{C}{\g} $\s[18] \Ra \s[18]$ \istate{A}{B}{C}{P} $\s[18]$ for every $\seq{A,B \mid C} \in \mc T(V)$.
\end{enumerate}

\begin{proof}
    This result immediately follows from the enumeration of m-connecting factorizations for ADMG MECs in Appendix \ref{app:fac}.
\end{proof}

\vskip 5mm

\noindent \textbf{Proposition \ref{prop:struct_conds}}
\textit{Let $\g = (V, E)$ be an ADMG. If $|M| \leq 5$ for all $M \in \mc M(\g)$, then $\mu_{\ms P} \s n_{\g} \in \mc S(V)$\textup{.}}

\begin{proof}
    This result immediately follows from the enumeration of m-connecting factorizations for ADMG MECs in Appendix \ref{app:fac}.
\end{proof}

\vskip 5mm

Lemma \ref{lem:cm_equiv} shows that every collider-connecting set is a parameterizing set, and that every maximal collider-connecting set is a maximal parameterizing set.

\begin{lem}
    \label{lem:cm_equiv}
    If $\g$ is an ADMG, then\textup{:}
    \begin{enumerate}
        \item $\mc C(\g) \sube \mc M(\g)$\textup{;}
        \item $\ceil{\mc C(\g)} = \ceil{\mc M(\g)}$.
\end{enumerate}
\end{lem}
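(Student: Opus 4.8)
The plan is to obtain part (i) directly from the head--tail description $\mc M(\g) = \{\, HT \in \mc P(V) : H \in \mc H(\g),\ T \sube \ta{\g}{H}\,\}$, and then to reduce part (ii) to a single structural claim about the inclusion-maximal members of $\mc M(\g)$.

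For part (i), take a nonempty $C \in \mc C(\g)$ and set $H \eq \ba{\g}{C}$, so that $H \in \mc H(\g)$ by definition of a head. I would first verify $C \sube \an{\g}{H}$: the relation defined by $c \prec c' \Lra c \in \an{\g}{c'} \sm c'$ is a strict partial order on the finite set $C$ (irreflexivity and transitivity use acyclicity), its $\prec$-maximal elements are precisely $\ba{\g}{C} = H$, and every other element of $C$ is a $\prec$-predecessor of a $\prec$-maximal one; hence every $c \in C$ is an ancestor of a vertex of $H$. Next, since $C = \co{\g[C]}{C}$, every $c \in C$ is collider-connecting to every $h \in H$ by a path that stays inside $\g[C]$, and since all its vertices lie in $C \sube \an{\g}{H}$ it is also a collider-connecting path in $\g[A]$, where $A \eq \an{\g}{H}$. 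Thus $C \sube \co{\g[A]}{H}$, so $C \sm H \sube \co{\g[A]}{H} \sm H = \ta{\g}{H}$, and $C = H \cup (C \sm H) \in \mc M(\g)$.

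For part (ii), part (i) together with a routine poset fact reduces the statement to showing $\ceil{\mc M(\g)} \sube \mc C(\g)$: for finite families $\mc A \sube \mc B$ with $\ceil{\mc B} \sube \mc A$ one has $\ceil{\mc A} = \ceil{\mc B}$, because a maximal member of $\mc B$ lying in $\mc A$ is maximal in $\mc A$, and a maximal member of $\mc A$, were it properly contained in a member of $\mc B$, would be properly contained in a member of $\ceil{\mc B} \sube \mc A$. Moreover every $M \in \ceil{\mc M(\g)}$ has the form $M = H \cup \ta{\g}{H}$ for some $H \in \mc H(\g)$, since $H \cup \ta{\g}{H} \in \mc M(\g)$ contains every $HT$; and writing $A \eq \an{\g}{H}$ and using $H \sube \co{\g[A]}{H}$ (which holds by the argument of part (i), as $H = \ba{\g}{C_0}$ for some $C_0 \in \mc C(\g)$ with $C_0 \sube A$), this gives $M = \co{\g[A]}{H}$. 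So part (ii) reduces to: for every head $H$, writing $A \eq \an{\g}{H}$, the set $M \eq \co{\g[A]}{H}$ satisfies $M = \co{\g[M]}{M}$.

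Since $\co{\g[M]}{M} \sube M$ is automatic, the content is the reverse inclusion, and this is the main obstacle. The clean ingredient is that if $w$ is an internal vertex of a collider-connecting path $\pi$ in $\g[A]$ between vertices $x$ and $y$, then splitting $\pi$ at $w$ exhibits $w$ as collider-connecting in $\g[A]$ to both $x$ and $y$, because the directed edges on a collider-connecting path occur only at its two ends. Applying this with $x = h \in H$ and $y \in M = \co{\g[A]}{H}$, together with the internal collider-connectedness of $H$, one obtains that every internal vertex of such a witnessing path is collider-connecting to all of $H$, hence lies in $M$; this yields $\co{\g[A]}{H} = \co{\g[M]}{H}$. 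Then for $m_1,m_2 \in M$ one concatenates a within-$M$ collider-connecting path from $m_1$ to a vertex of $H$ with one from that vertex to $m_2$ and prunes the resulting walk to a path, giving $m_1 \in \co{\g[M]}{m_2}$; intersecting over $m_2$ gives $M \sube \co{\g[M]}{M}$. The delicate part --- controlling the edge-marks at the junction vertices when composing collider-connecting paths, and checking that walk-to-path pruning preserves the collider-connecting structure --- is where the real effort lies; the poset reductions and part (i) are routine. If the direct argument proves awkward, an alternative is to recast the crux through Proposition~\ref{prop:rasm}: membership of $M$ in $\mc M(\g)$ is exactly the statement that every pair of vertices in $M$ is $m$-connected given the rest of $M$, and for inclusion-maximal $M$ one argues that this forces $M$ to be closed under $\co{\g[M]}{\cdot}$.
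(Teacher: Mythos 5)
Your proposal is correct and follows essentially the same route as the paper: part (i) via the head--tail description of $\mc M(\g)$, and part (ii) by sandwiching $\ceil{\mc M(\g)}$ between $\mc C(\g)$ and $\mc M(\g)$, using maximality to force $S = \co{\g[A]}{B}$ with $A = \an{\g}{S}$ and $B = \ba{\g}{S}$ --- indeed you give more detail than the paper, whose proof simply asserts that $\co{\g[A]}{B}$ belongs to $\mc C(\g[A])$. The one step you flag as delicate, the edge marks at the junction vertex $h \in H$ when concatenating collider-connecting paths, closes immediately once you observe that $h$ is childless in $\g[A]$ for $A = \an{\g}{H}$ (a child of $h$ lying in $A$ would make $h$ a proper ancestor of another member of the generating collider-connecting set, contradicting $h \in \ba{\g}{C}$), so every edge at $h$ in $\g[A]$ carries an arrowhead at $h$ and $h$ is a collider on the concatenated walk.
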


\begin{proof}

\noindent $(i)$: If $S \in \mc C(\g)$, then by the definitions of head and tail, there exist $H \in \mc H(\g)$ and $T = \ta{\g}{H}$ such that $S = HT$. Therefore, by the definition of parameterizing sets, $S \in \mc M(\g)$ and $\mc C(\g) \sube \mc M(\g)$.

\vskip 5mm

\noindent $(ii)$: Notably, if $\mc C(\g) \sube \mc M(\g)$, then $\ceil{\mc C(\g)} \sube \mc M(\g)$. If $S \in \ceil{\mc M(\g)}$, then by Proposition \ref{prop:char_imset}, $S \sube \co{\g[A]}{B}$ where $A = \an{\g}{S}$ and $B = \ba{\g}{S}$. Moreover, $S = \co{\g[A]}{B}$ by the maximality of $S$. Therefore, since $\mc C(\g[A]) \sube \mc C(\g)$, $S \in \mc C(\g)$ and $\ceil{\mc M(\g)} \sube \mc C(\g)$. If $\ceil{\mc C(\g)} \sube \mc M(\g)$ and $\ceil{\mc M(\g)} \sube \mc C(\g)$, then $\ceil{\mc M(\g)} = \ceil{\mc C(\g)}$.
\end{proof}

\vskip 5mm

\noindent Lemma \ref{lem:unique_barren} shows that vertices in the barren subset of all vertices have unique maximal parameterizing and collider-connecting sets.

\begin{lem}
    \label{lem:unique_barren}
    If $\g = (V, E)$ is an ADMG and $b \in \ba{\g}{V}$, then\textup{:}
    \begin{enumerate}
        \item $|\ceil{\{ C \in \mc C(\g) \; : \; b \in C \}}| = 1$\textup{;}
        \item $|\ceil{\{ M \in \mc M(\g) \; : \; b \in M \}}| = 1$.
    \end{enumerate}
\end{lem}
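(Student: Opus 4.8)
The plan is to prove both parts by the same device: I will show that the family of sets in question is closed under pairwise union, so that by finiteness of $V$ its union is a member of the family, and hence its unique maximal element. The one place barren-ness of $b$ is used is the following \emph{splicing observation}. Since $b \in \ba{\g}{V}$ the vertex $b$ has no children, so no edge $b \to \cdot$ exists in $\g$, and therefore on any path meeting $b$ the edge incident to $b$ carries an arrowhead at $b$. Consequently, if $\pi_1$ is a path from $v$ to $b$ and $\pi_2$ a path from $b$ to $w$, each with all of its non-endpoint vertices colliders and both contained in a vertex set $S$, then their concatenation makes $b$ a collider; repeatedly excising any vertex that occurs twice on the resulting walk leaves a genuine path from $v$ to $w$ inside $S$ all of whose non-endpoints are colliders, since each excision keeps the two incident edges of a retained internal vertex arrowheaded at that vertex. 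Hence $v$ and $w$ are collider-connected within $\g[S]$.

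For part \textit{(i)}, put $\ms C_b = \{ C \in \mc C(\g) \; : \; b \in C \}$, which is nonempty because $\{b\} \in \mc C(\g)$. For $C_1, C_2 \in \ms C_b$ I claim $C_1 \cup C_2 \in \mc C(\g)$: the inclusion $\co{\g[C_1 \cup C_2]}{C_1 \cup C_2} \sube C_1 \cup C_2$ is automatic, and for the reverse it is enough to connect every pair $v,w \in C_1 \cup C_2$ by a path inside $\g[C_1 \cup C_2]$ all of whose non-endpoints are colliders. If $v$ and $w$ lie in a common $C_i$ this follows from $C_i = \co{\g[C_i]}{C_i}$; otherwise, say $v \in C_1$ and $w \in C_2$, it follows from the splicing observation applied to a collider path $v \leadsto b$ in $\g[C_1]$, a collider path $b \leadsto w$ in $\g[C_2]$, and $S = C_1 \cup C_2$. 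Thus $\ms C_b$ is closed under pairwise union, so $\bigcup \ms C_b \in \ms C_b$ and dominates every member of $\ms C_b$, giving $|\ceil{\ms C_b}| = 1$.

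For part \textit{(ii)}, put $\ms M_b = \{ M \in \mc M(\g) \; : \; b \in M \}$, nonempty since $\{b\} \in \mc C(\g) \sube \mc M(\g)$ by Lemma \ref{lem:cm_equiv}. For $M_1, M_2 \in \ms M_b$ let $A_i = \an{\g}{M_i}$ and $B_i = \ba{\g}{M_i}$; by Proposition \ref{prop:mcimset_defn} we have $M_i \sube \co{\g[A_i]}{B_i}$, and $b \in B_i$ because $b$ is barren. With $A = \an{\g}{M_1 \cup M_2} = A_1 \cup A_2$ and $B = \ba{\g}{M_1 \cup M_2}$ (so $b \in B$), I will verify $M_1 \cup M_2 \sube \co{\g[A]}{B}$ one vertex at a time: given $m \in M_1 \cup M_2$ and $c \in B$, if $m$ and $c$ both lie in some $M_i$ then $c \in \ba{\g}{M_i} = B_i$ and so $m \in \co{\g[A_i]}{B_i} \sube \co{\g[A_i]}{c} \sube \co{\g[A]}{c}$; otherwise, say $m \in M_1$ and $c \in M_2$, splice a collider path $m \leadsto b$ in $\g[A_1]$ (available since $b \in B_1$) with a collider path $b \leadsto c$ in $\g[A_2]$, taking $S = A$. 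Hence $m_{\g}(M_1 \cup M_2) = 1$ by Proposition \ref{prop:mcimset_defn}, so $M_1 \cup M_2 \in \ms M_b$, and as before $\bigcup \ms M_b$ is the unique maximal element, i.e.\ $|\ceil{\ms M_b}| = 1$.

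The main obstacle is the splicing observation — specifically, checking that excising a repeated vertex never destroys the ``all non-endpoints are colliders'' property at a retained vertex (including the case where one of $v, w$ recurs on the walk) and reconciling this with the several edge-mark patterns in the definition of $\co{\g}{\cdot}$. Everything else reduces to routine monotonicity of $\an{\g}{\cdot}$, $\ba{\g}{\cdot}$, and $\co{\g[\cdot]}{\cdot}$ under the relevant inclusions, plus the single use of barren-ness to force an arrowhead at $b$.
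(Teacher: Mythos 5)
Your proof is correct, but it takes a genuinely different route from the paper's. The paper argues by exhibiting the maximum element explicitly: since $b$ is barren, every collider-connecting set containing $b$ is contained in $\co{\g}{b}$, and $\co{\g}{b} = \dis{\g}{b} \cup \pa{\g}{\dis{\g}{b}}$ is itself a collider-connecting set, so it is the unique maximal element of the first family; part (\textit{ii}) is then dispatched by appealing to Lemma \ref{lem:cm_equiv}. You instead prove that each family is closed under pairwise union, splicing two collider paths at $b$ and using barrenness only to force an arrowhead at $b$; your excision argument is sound (property $(*)$ --- arrowheads at every interior occurrence from both incident edges --- is preserved when you cut out the segment between two interior occurrences of a vertex, and truncation handles a recurring endpoint), and it is essentially the same walk-to-path reduction the paper itself performs inside Lemma \ref{lem:routing}. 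What each approach buys: the paper's is shorter and yields a concrete description of the maximum set ($\co{\g}{b}$), but its reduction of (\textit{ii}) to (\textit{i}) leans on Lemma \ref{lem:cm_equiv}, which is stated for the globally maximal sets rather than for those containing $b$, so a little is left to the reader; your treatment of (\textit{ii}) is self-contained and handles the restricted family directly, at the cost of redoing the path-composition work. One small citation point: the equivalence $M \in \mc M(\g) \Leftrightarrow m_{\g}(M) = 1$ that you use to enter and exit Proposition \ref{prop:mcimset_defn} is supplied by Proposition \ref{prop:rasm} together with the definition of $m_{\g}$, not by Proposition \ref{prop:mcimset_defn} alone; worth adding the reference, but it does not affect the argument.
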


\begin{proof}
By Lemma \ref{lem:cm_equiv}, it is sufficient to prove (\textit{i}). Notably, by the definitions of collider-connecting set and collider-connecting vertices:
\[
    S = \co{\g}{S} \sube \co{\g}{b} \s[7] \text{for all} \s[7] S \in \{ C \in \mc C(\g) \; : \; b \in C \}
\]
and
\[
    \co{\g}{b} = \dis{\g}{b} \cup \pa{\g}{\dis{\g}{b}}.
\]
Accordingly,
\[
    \co{\g}{b} \in \{ C \in \mc C(\g) \; : \; b \in C \}.
\]
\end{proof}

\vskip 5mm

Corollary \ref{cor:induced_admg} shows that the induced subgraph of an ADMG over an ancestral set induces an independence subset over the shared variables. Furthermore, the induced subgraph has the same \textit{m}-connecting subsets of $A$.

\begin{cor}
\label{cor:induced_admg}
    If $\g$ is an ADMG and $A \in \mc A(\g)$, then\textup{:}
    \begin{enumerate}
        \item $\mc I(\g[A]) = \{ \seq{A, B \mid C} \in \mc T(A) \; : \; \seq{A, B \mid C} \in \mc I(\g) \}$\textup{;}
        \item $\mc M(\g[A]) = \{ M \in \mc P(A) \; : \; M \in \mc M(\g) \}$.
    \end{enumerate}
\end{cor}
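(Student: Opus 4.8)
The plan is to treat the two parts separately, using Proposition~\ref{prop:induce_margin} to identify $\g[A]$ with the latent projection $\textsc{LP}(\g, V \sm A)$ and Proposition~\ref{prop:mcimset_defn} to reduce membership in the parameterizing sets to a single collider-connecting condition. The basic fact used throughout is that, because $A$ is ancestral, every directed path of $\g$ ending at a vertex $m \in A$ stays inside $\an{\g}{m} \sube \an{\g}{A} = A$; consequently $\an{\g[A]}{M} = \an{\g}{M} \sube A$ for every $M \sube A$, and $\g[A]$ and $\g$ induce the same subgraph on any $B \sube A$.

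For part $(i)$, one inclusion is immediate: a path in $\g[A]$ uses only edges of $\g$, and ancestor sets can only grow when passing from $\g[A]$ to $\g$, so a collider that is ancestral to $C$ in $\g[A]$ is ancestral to $C$ in $\g$; hence every m-connecting path relative to $C$ in $\g[A]$ is already m-connecting relative to $C$ in $\g$, giving $\{ \seq{X,Y \mid Z} \in \mc T(A) : \seq{X,Y \mid Z} \in \mc I(\g) \} \sube \mc I(\g[A])$. For the converse I would start from a disjoint triple over $A$ together with an m-connecting path $\pi$ of $\g$ witnessing the dependence; since every collider of $\pi$ lies in $\an{\g}{Z} \sube A$, every vertex of $\pi$ outside $A$ must be a non-collider, so one can replace each maximal run of consecutive out-of-$A$ vertices on $\pi$ by a shorter connection between its two $A$-endpoints and obtain an m-connecting path inside $\g[A]$. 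The cleaner route is to invoke that latent projection preserves the induced independence model on the retained vertices and combine this with Proposition~\ref{prop:induce_margin}.

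For part $(ii)$, fix $M \in \mc P(A)$. If $M = \es$ it lies in neither $\mc M(\g[A])$ nor $\mc M(\g)$, and if $M = \{v\}$ with $v \in A$ it lies in both, so assume $|M| \geq 2$ and write $A' = \an{\g[A]}{M}$ and $B' = \ba{\g[A]}{M}$. Applying Proposition~\ref{prop:mcimset_defn} to $\g[A]$ (and recalling that the parameterizing sets are exactly the nonempty sets on which the $m$-connecting imset equals $1$), one has $M \in \mc M(\g[A]) \Lra M \sube \co{\g[A']}{B'}$, since the subgraph of $\g[A]$ on $A' \sube A$ equals $\g[A']$. By the ancestral-set observation $A' = \an{\g}{M}$ and $B' = \ba{\g}{M}$ (the barren subset of $M$ depends only on the ancestor relation among the elements of $M$, which is the same in $\g$ and $\g[A]$), so this condition reads $M \sube \co{\g[\an{\g}{M}]}{\ba{\g}{M}}$, which by Proposition~\ref{prop:mcimset_defn} applied to $\g$ is equivalent to $M \in \mc M(\g)$. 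Combined with $M \sube A$ this is the asserted equality.

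I expect the only genuinely delicate point to be the converse inclusion in part $(i)$, if it is carried out directly rather than by citing invariance under latent projection: the path surgery that deletes out-of-$A$ non-collider segments has to be done carefully enough that the result is a genuine path with no repeated vertices, that no new collider is created whose ancestrality to $Z$ must be rechecked, and that every surviving non-collider still avoids $Z$. Everything else is routine bookkeeping with the definitions once the two consequences of ancestrality noted at the outset are in hand.
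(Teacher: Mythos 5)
Your proof is correct. The paper disposes of both parts in a single line by citing Proposition \ref{prop:induce_margin} (for ancestral $A$, $\g[A] = \textsc{LP}(\g, V \sm A)$) together with the standard, but unstated, facts that latent projection preserves the induced independence model and the parameterizing sets over the retained vertices. Your part (\textit{i}) ultimately lands on that same route; the inclusion you do prove directly (paths of $\g[A]$ are paths of $\g$ and ancestor sets only grow), and your honest flagging of where the path-surgery argument for the converse gets delicate, is precisely the content the one-line citation hides. For part (\textit{ii}) you take a genuinely more self-contained route: instead of appealing to invariance of heads and tails under \textsc{LP}, you reduce membership in $\mc M(\cdot)$ to the collider-connecting criterion of Proposition \ref{prop:mcimset_defn} and verify that its three ingredients --- $\an{\cdot}{M}$, $\ba{\cdot}{M}$, and the induced subgraph on $\an{\g}{M}$ --- are unchanged in passing from $\g$ to $\g[A]$, exactly because $A$ is ancestral so that $\an{\g}{M} \sube A$. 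This buys a proof of (\textit{ii}) that depends on nothing beyond the definitions and Proposition \ref{prop:mcimset_defn}, at the cost of a little bookkeeping; the paper's version buys brevity at the cost of leaving the preservation properties of latent projection implicit. One minor remark: Proposition \ref{prop:mcimset_defn} carries no cardinality restriction (unlike Proposition \ref{prop:char_imset}), so your separate treatment of $|M| \leq 1$ is harmless but not needed.
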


\begin{proof}
    This immediately follows from Proposition \ref{prop:induce_margin}.
\end{proof}

\vskip 5mm

\noindent \textbf{Lemma \ref{lem:pairs_helper}}
\textit{If $\g = (V, E)$ is an ADMG and $b \in \ba{\g}{V}$ and $A \in \mc A(\g)$, then\textup{:}}
\begin{enumerate}
    \item $|\ceil{\{ M \in \mc M(\g) \; : \; b \in M \}}| = 1$\textup{;}
    \item $\mc M(\g[A]) = \{ M \in \mc P(A) \; : \; M \in \mc M(\g) \}$.
\end{enumerate}

\begin{proof}
    \noindent (\textit{i}) This immediately follows from Lemmas \ref{lem:cm_equiv} and \ref{lem:unique_barren}.
    
    \vskip 5mm
    
    \noindent (\textit{ii}) This immediately follows from Corollary \ref{cor:induced_admg}.
\end{proof}

\vskip 5mm

Lemma \ref{lem:routing} shows that \textit{m}-connecting sets may be characterized by the existence of inducing paths between a barren vertex and the other vertices in the set.

\begin{lem}
    \label{lem:routing}
    Let $\g = (V, E)$ be an ADMG, $b \in \ba{\g}{N}$, and $N \sube V$. If $b \in N$, then the following are equivalent\textup{:}
    \begin{enumerate}
    \item there exists $\seq{a,b \mid C} \in \mc T(V)$ such that $N \sm C = ab$ and \istate{a}{b}{C}{\g}\textup{;}
    \item $N \in \mc N(\g)$.
    \end{enumerate} 
\end{lem}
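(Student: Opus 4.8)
The plan is to establish the two implications separately: $(i)\Rightarrow(ii)$ will be immediate from the characterization of parameterizing sets in Proposition~\ref{prop:rasm}, whereas $(ii)\Rightarrow(i)$ will rest on Lemma~\ref{lem:fact_helper} followed by a short manipulation with the semi-graphoid axioms.

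For $(i)\Rightarrow(ii)$ I would argue by contraposition inside $\mc M(\g)$. Suppose there is a triple $\seq{a,b\mid C}\in\mc T(V)$ with $N\sm C=ab$ and $\istate{a}{b}{C}{\g}$. Since $b\in N$ we have $N\neq\es$, so in view of $\mc N(\g)=\mc P(V)\sm[\mc M(\g)\cup\{\es\}]$ it suffices to rule out $N\in\mc M(\g)$. But if $N\in\mc M(\g)$, then Proposition~\ref{prop:rasm} would supply an \textit{m}-connecting path between $a$ and $b$ relative to $C$, contradicting $\istate{a}{b}{C}{\g}$; hence $N\in\mc N(\g)$.

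For $(ii)\Rightarrow(i)$ I would start from $N\in\mc N(\g)$, so $N\neq\es$ and $N\notin\mc M(\g)$, and take $M=\ceil{\{M\in\mc M(\g)\;:\;b\in M\sube N\}}$, the set appearing in Lemma~\ref{lem:fact_helper} (nonempty, since $\{b\}\in\mc M(\g)$); that lemma yields $\istate{b}{N\sm M}{M\sm b}{\g}$. Because $M\in\mc M(\g)$, $M\sube N$, and $N\notin\mc M(\g)$, we get $M\subsetneq N$, so $N\sm M\neq\es$; fix any $a\in N\sm M$. Since $\mc I(\g)$ is a compositional, hence a semi-, graphoid by Proposition~\ref{prop:admg_graphoid}, I would apply the weak union axiom to $\istate{b}{N\sm M}{M\sm b}{\g}$ with the split $N\sm M=\{a\}\cup((N\sm M)\sm a)$, obtaining $\istate{b}{a}{(M\sm b)\cup((N\sm M)\sm a)}{\g}$; since $b\in M\sube N$ and $a\in N\sm M$, the conditioning set equals exactly $N\sm\{a,b\}$, and symmetry then gives $\istate{a}{b}{N\sm\{a,b\}}{\g}$. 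As $\seq{a,b\mid N\sm\{a,b\}}\in\mc T(V)$ and $N\sm(N\sm\{a,b\})=ab$, this is statement $(i)$.

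The hard part is entirely absorbed into Lemma~\ref{lem:fact_helper}, which is what guarantees that a witness of $N\notin\mc M(\g)$ can be taken to involve the prescribed barren vertex~$b$; once that is in hand, the remaining steps above are routine bookkeeping with the weak union and symmetry axioms. The one point I would pause to check is that the maximal parameterizing set $M$ used in Lemma~\ref{lem:fact_helper} is genuinely well defined under the hypothesis $b\in\ba{\g}{N}$ rather than the stronger $b\in\ba{\g}{V}$ of Lemma~\ref{lem:pairs_helper}; this should be recoverable by passing to the ancestral set $\an{\g}{N}$ together with $\ba{\g}{\an{\g}{N}}=\ba{\g}{N}$, but in any case it is a concern about Lemma~\ref{lem:fact_helper} rather than about the argument sketched here.
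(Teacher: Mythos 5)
Your $(i)\Rightarrow(ii)$ direction is exactly the paper's: it is an immediate consequence of Proposition \ref{prop:rasm}. The problem is the converse. Your proof of $(ii)\Rightarrow(i)$ rests entirely on Lemma \ref{lem:fact_helper}, but in the paper's development Lemma \ref{lem:fact_helper} is itself proved \emph{using} Lemma \ref{lem:routing}: the key step there takes the maximal parameterizing set $M \subsetneq N$, forms $N_a = M \cup a$ for $a \in N \sm M$, observes that $N_a$ is constrained by maximality, and then invokes Lemma \ref{lem:routing} (together with Corollary \ref{cor:induced_admg}) to extract $\istate{b}{a}{M \sm b}{\g}$ before composing. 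So, as written, your argument is circular. You even note that ``the hard part is entirely absorbed into Lemma \ref{lem:fact_helper}''---that is precisely the difficulty, because the hard part of Lemma \ref{lem:fact_helper} is in turn absorbed into the lemma you are trying to prove.

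The substantive content you are missing is a composition-type property of m-connection that cannot be obtained from the semi-graphoid bookkeeping you use (weak union, symmetry, decomposition). Concretely, the contrapositive of $(ii)\Rightarrow(i)$ says: if for every pair $\{a,b\}$ with $a \in N \sm b$ and every admissible conditioning set there is an m-connecting path, then $N \in \mc M(\g)$; by Proposition \ref{prop:rasm} this requires producing m-connecting paths between pairs $a,c \in N \sm b$ \emph{not} containing $b$. The paper does this by a direct path-splicing argument: it takes m-connecting paths $\pi_{ab}$ and $\pi_{bc}$, concatenates them at the first vertex $d$ of $\pi_{ab}$ lying on $\pi_{bc}$, and checks case by case (using $b \in \ba{\g}{N}$ to handle $d = b$, and an ancestry argument when $d$ switches from non-collider to collider) that the spliced path is m-connecting relative to the relevant set. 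Some graphical argument of this kind is unavoidable; if you want to keep your route you would have to supply an independent proof of Lemma \ref{lem:fact_helper}, which would end up reproducing essentially this construction. Your closing remark about the well-definedness of $M$ under $b \in \ba{\g}{N}$ is a fair observation, but it is secondary to the circularity.
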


\begin{proof}
    \noindent ($i \Ra ii$) is a special case of Proposition \ref{prop:rasm}.
    
    \vskip 5mm
    
    \noindent ($i \La ii$) When $|N| = 2$, the proof is trivial. Let $|N| > 2$ and consider the contrapositive. Pick arbitrary $a,c \in N \sm b$ ($a \neq c$) and $N \sube S$, and let $C = S \sm abc$. By the antecedent of the contrapositive, \dstate{a}{b}{c \s[2] C}{\g} and \dstate{b}{c}{a \s[2] C}{\g} so there exist \textit{m}-connecting paths $\pi_{ab}$ and $\pi_{bc}$ between $a$ and $b$ relative to $c \s[2] C$ and between $b$ and $c$ relative to $a \s[2] C$, respectively. Construct a path $\pi_{ac}$ by traversing $\pi_{ab}$ from $a$ to $b$ until reaching some $d \in \pi_{bc}$ then traversing $\pi_{bc}$ from $d$ to $c$. Note the status of every non-endpoint vertex on $\pi_{ac}$. In particular, check if every non-collider on $\pi_{ac}$ is not member of $C$ and if every collider on $\pi_{ac}$ is in the ancestors of $S$. By construction, every non-endpoint vertex on $\pi_{ac}$ has the same status as on $\pi_{ab}$ or $\pi_{bc}$ except for $d$. Accordingly, we consider all possible scenarios for $d$. 
    
    \begin{itemize}
        \item If $d = b$, then $d$ is a collider on $\pi_{ac}$ because $b \in \ba{\g}{V}$ and $d \in \an{\g}{S}$ because $b \in S$.
        \item If $d \neq b$ is a non-collider on $\pi_{ac}$, then $d \not \in C$ because $d$ is a non-collider on $\pi_{ab}$ or $\pi_{bc}$.
        \item If $d \neq b$ is a collider on $\pi_{ac}$ and $d$ is a collider on $\pi_{ab}$ or $\pi_{bc}$, then $d \in \an{\g}{S}$. 
        \item If $d \neq b$ is a collider on $\pi_{ac}$ and a non-collider on $\pi_{ab}$ and $\pi_{bc}$, then $d$ is an ancestor of $a$, $c$, or a collider on $\pi_{ac}$, accordingly $d \in \an{\g}{S}$. 
    \end{itemize}
    Therefore, \dstate{a}{c}{S \sm ac}{\g} and the contrapositive is satisfied.
\end{proof}

\vskip 5mm

\noindent \textbf{Lemma \ref{lem:fact_helper}}
\textit{Let $\g = (V, E)$ be an ADMG and $N \sube V$. If $b \in \ba{\g}{N}$ and $M = \ceil{\{ M \in \mc M(\g) \; : \; b \in M \sube N \}}$, then\textup{:}}
\[
    \istate{b}{N \sm M}{M \sm b}{\g}.
\]
    
\begin{proof}
    By Proposition \ref{prop:admg_graphoid} the induced independence model $\mc I(\g)$ is a compositional graphoid. Accordingly, graphoid axioms (\textit{i - vi}) may be applied. If $N$ is a parameterizing set, then by maximally $M = N$. By the triviality graphoid axiom:
    \[
    \istate{b}{N \sm M}{M \sm b}{\g}.
    \]
    If $N$ is a constrained set, then $M \sub N$. Let $A = \an{\g}{N}$ and pick $a \in N \sm M$ and let $N_a = M \, \cup \, a$. By maximally $N_a$ is a constrained set. By Lemma \ref{lem:routing} and Corollary \ref{cor:induced_admg}, since $b \in \ba{\g[A]}{N}$:
    \[
    \istate{b}{a}{M \sm b}{\g} \quad \text{for all} \quad a \in N \sm M.
    \]
    By the composition graphoid axiom:
    \[
    \istate{b}{N \sm M}{M \sm b}{\g}.
    \]
\end{proof}

\vskip 5mm

\noindent \textbf{Lemma \ref{lem:ncon_ie}}
\textit{Let $\g = (V, E)$ be an ADMG and $b \in \ba{\g}{V}$. If $\ms M, \ms N = \textsc{Pairs}(\g, b)$ where $|\ms M| = n$ and $\ms R = \{ N \in \mc N(\g) \; : \; b \in N \}$, then\textup{:}}
\[
    \bigcup_{i=1}^n \ms N_{i,i} = \ms R.
\]

\begin{proof}
    Let $N \in \ms R$ and suppose by way of contradiction that $N \not \in \ms N_{i,i}$ for any $1 \leq i \leq n$. Note that $N \sube \ms N_i$ for some $1 \leq i \leq n$ since $\ceil{R} \sube \ms N$. Pick $i$ such that $N \sube \ms N_i$. If $N \not \sube \ms M_i$, then $N \in \ms N_{i,i}$. Otherwise, there exists $N \sube \ms N_j \sub \ms M_i$ for some $\ms N_j \in \ms N$. Repeat this logic until $N \in \ms N_{j,j}$ or $\ms M_j$ has no maximal non-\textit{m}-connecting subsets; the latter is a contradiction because $N$ is a subset and non-\textit{m}-connecting. Thus, there exists $1 \leq i \leq n$ such that $N \in \ms N_{i,i}$ and $\bigcup_{i=1}^n \ms N_{i,i} = \ms R$.
\end{proof}

\vskip 5mm

\begin{defn}[\textit{inclusion/exclusion for imsets} \cite{wildberger2003multisets}]
Let $V$ be a non-empty set of variables and $\ms S_1, \dots, \ms S_n \sube \mc P(V)$ be $n$ sets of sets. The concept of inclusion/exclusion is extended to imsets as follows:
\[
    \delta_{\, \bigcup_{i = 1}^{n} \ms S_i} = \sum_{\substack{J \sube \{1,\dots,n\} \\ J \neq \es}} (-1)^{|J|-1} \; \delta_{\, \bigcap_{j \in J} \ms S_j}.
\]
Several applications of De Morgan's laws gives an alternative form as follows:
\[
    \delta_{\, \bigcap_{i = 1}^{n} \ms S_i} = \sum_{\substack{J \sube \{1,\dots,n\} \\ J \neq \es}} (-1)^{|J|-1} \; \delta_{\, \bigcup_{j \in J} \ms S_j}.
\]
\end{defn}

\vskip 5mm

\noindent \textbf{Proposition \ref{prop:nconn_decomp}}
\textit{Let $\g = (V,E)$ be an ADMG with consistent order $\leq$. If $i_{\g}^{\leq}, e_{\g}^{\leq} = \textsc{NIE}(\g, \leq)$ then\textup{:}}
\begin{enumerate}
    \item $n_{\g} = i_{\g}^{\leq} - e_{\g}^{\leq}$\textup{;}
    \item $\mu_{\ms P} \s i_{\g}^{\leq}, \mu_{\ms P} \s e_{\g}^{\leq} \in \mc S(V)$.
\end{enumerate}

\begin{proof}

Let $b \in V$ and $R = \pre{\g}{b}$ and define $n_{\g,b}^{\leq} : \mc P(v) \ra \{ 0, 1 \}$:
\[
    n_{\g,b}^{\leq}(S) \eq 
    \begin{cases}
        \s[18] 1 & \s[36] \{ \seq{a,b \mid C} \in \mc I(\g[R]) \; : \; S \sm C = ab  \} \neq \es; \\
        \s[18] 0 & \s[36] \{ \seq{a,b \mid C} \in \mc I(\g[R]) \; : \; S \sm C = ab  \} = \es.
    \end{cases}
\]
For all $S \in \mc P(V)$, $n_{\g,b}^{\leq}(S) = 1$ if and only if $S \in \mc N(\g)$ and $b \in S \sube R$ by Lemma \ref{lem:unique_barren}.

\vskip 5mm

\noindent If $\ms M, \ms N = \textsc{Pairs}(\g, b)$ and $n = |\ms M|$, then:
\begin{allowdisplaybreaks}
\begin{align*}
n_{\g,b}^{\leq} &= \delta_{\, \bigcup_{i = 1}^n \ms N_{i,i}} && \text{(Lemma \ref{lem:ncon_ie})} \\
&= \delta_{\, \bigcup_{i = 1}^{n} \left[ \bigcup_{\substack{T \in \ms U_i \\ T \not \sube M_i}} \{T\} \right]} \\
&= \sum_{\substack{J \sube \{1,\dots,n\} \\ J \neq \es}} (-1)^{|J|-1} \; \delta_{\, \bigcap_{j \in J} \left[ \bigcup_{\substack{T \in \ms U_j \\ T \not \sube M_j}} \{T\} \right]}  && \text{(inclusion/exclusion)} \\
&= \sum_{\substack{J \sube \{1,\dots,n\} \\ J \neq \es}} (-1)^{|J|-1} \; \delta_{\, \bigcap_{j \in J} \left[ \bigcup_{\substack{T \in \ms U_J \\ T \not \sube M_j}} \{T\} \right]} && (\ms U_j \ra \ms U_J) \\
&= \sum_{\substack{J \sube \{1,\dots,n\} \\ J \neq \es}} (-1)^{|J|-1} \sum_{\substack{K \sube J \\ K \neq \es}} (-1)^{|K|-1} \; \delta_{\, \bigcup_{k \in K} \left[ \bigcup_{\substack{T \in \ms U_J \\ T \not \sube M_k}} \{T\} \right]} && \text{(inclusion/exclusion)} \\
&= \sum_{\substack{J \sube \{1,\dots,n\} \\ J \neq \es}} \sum_{\substack{K \sube J \\ K \neq \es}} (-1)^{|J \sm K|} \; \delta_{\, \bigcup_{k \in K} \left[ \bigcup_{\substack{T \in \ms U_J \\ T \not \sube M_k}} \{T\} \right]} \\
&= \sum_{\substack{J \sube \{1,\dots,n\} \\ J \neq \es}} \sum_{\substack{K \sube J \\ K \neq \es}} (-1)^{|J \sm K|} \; \delta_{\, \bigcup_{k \in K} \left[ \ms U_J \, \sm \, \bigcup_{\substack{T \in \ms U_J \\ T \sube M_k}} \{T\} \right]} && \text{(complement)} \\
&= \sum_{\substack{J \sube \{1,\dots,n\} \\ J \neq \es}} \sum_{\substack{K \sube J \\ K \neq \es}} (-1)^{|J \sm K|} \; \delta_{\, \ms U_J \, \sm \, \left[ \bigcap_{k \in K} \bigcup_{\substack{T \in \ms U_J \\ T \sube M_k}} \{T\} \right]} && \text{(De Morgan's law)} \\
&= \sum_{\substack{J \sube \{1,\dots,n\} \\ J \neq \es}} \sum_{\substack{K \sube J \\ K \neq \es}} (-1)^{|J \sm K|} \; \delta_{\, \ms U_J \, \sm \, \left[ \bigcap_{k \in K} \bigcup_{\substack{T \in \ms U_J \\ T \sube M_{J,K}}} \{T\} \right]} && (M_k \ra M_{J,K}) \\
&= \sum_{\substack{J \sube \{1,\dots,n\} \\ J \neq \es}} \sum_{\substack{K \sube J \\ K \neq \es}} (-1)^{|J \sm K|} \; \delta_{\, \ms U_J \sm \left[ \bigcup_{\substack{T \in \ms U_J \\ T \sube M_{J,K}}} \{T\} \right]} \\
&= \sum_{\substack{J \sube \{1,\dots,n\} \\ J \neq \es}} \sum_{\substack{K \sube J \\ K \neq \es}} (-1)^{|J \sm K|} \; \delta_{\, \bigcup_{\substack{T \in \ms U_J \\ T \not \sube M_{J,K}}} \{T\}} && \text{(complement)} \\
&= \sum_{\substack{J \sube \{1,\dots,n\} \\ J \neq \es}} \sum_{\substack{K \sube J \\ K \neq \es}} (-1)^{|J \sm K|} \; \delta_{\ms N_{J,K}}
\end{align*}
\end{allowdisplaybreaks}

\noindent Furthermore, define $i_{\g,b}^{\leq} : \mc P(V) \ra \mbb Z$ and $e_{\g,b}^{\leq} : \mc P(V) \ra \mbb Z$:

\[
    i_{\g,b}^{\leq} \eq \sum_{\substack{J \sube \{1,\dots,n\} \\ J \neq \es}} \sum_{\substack{K \sube J \\ K \neq \es}} \mathbbm{1}[(-1)^{|J \sm K|} > 0] \; \delta_{\ms N_{J,K}} \s[72] e_{\g,b}^{\leq} \eq \sum_{\substack{J \sube \{1,\dots,n\} \\ J \neq \es}} \sum_{\substack{K \sube J \\ K \neq \es}} \mathbbm{1}[(-1)^{|J \sm K|} < 0] \; \delta_{\ms N_{J,K}}.
\]

\noindent Accordingly, $n_{\g}$ decomposes as follows:
\begin{align*}
    n_{\g} &= \sum_{b \in V} n_{\g,b}^{\leq} \\
    &= \sum_{b \in V} \left[ i_{\g,b}^{\leq} - e_{\g,b}^{\leq} \right] \\
    &= \sum_{b \in V} i_{\g,b}^{\leq} - \sum_{b \in V} e_{\g,b}^{\leq} \\
    &= i_{\g}^{\leq} - e_{\g}^{\leq}
\end{align*}
Moreover, by Proposition \ref{prop:iir}, $\mu_{\ms P} \s i_{\g}^{\leq}, \mu_{\ms P} \s e_{\g}^{\leq} \in \mc S(V)$.
\end{proof}

\newpage

In order to prove our main results, we introduce the concept of a minimal latent set. 

\begin{defn}[\textit{minimal latent set}]
    Let $\g$ be an ADMG with consistent order $\leq$ and $A \in \mc A(\g)$. If $b = \ceo{A}$ and $R = \pre{\g}{A}$, then the \textit{minimal latent set} with respect to $\leq$ for $A$ is defined:
    \[
        \ml{\g}{A} \eq \sib{\g[R]}{\dis{\g[A]}{b}} \sm \dis{\g[A]}{b}.
    \]
\end{defn}

\noindent The set is minimal in the sense that making any additional members of $R$ latent will not change the Markov boundary:
\[
    \cl{\g[A]}{b} = \cl{\g[\,l \cup A]}{b} \s[18] \text{for all} \s[18] l \in R \sm \ml{\g}{A}.
\]
This concept was originally introduced by \cite{richardson2003markov} to construct ancestral sets that are maximal in the same way.

\begin{figure}[H]
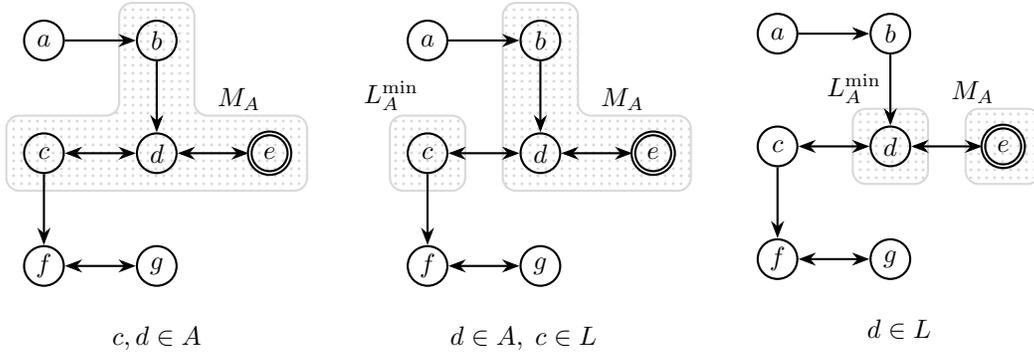

    \begin{center}
        \begin{minipage}{.32\textwidth}
            \centering
            \vspace{12mm}
            \includegraphics[page=9]{figures.pdf}
            \[
                c,d \in A
            \]
        \end{minipage}%
        \begin{minipage}{.32\textwidth}
            \centering
            \vspace{12mm}
            \includegraphics[page=10]{figures.pdf}
            \[
                 d \in A, \; c \in L
            \]
        \end{minipage}
        \begin{minipage}{.32\textwidth}
            \centering
            \vspace{12mm}
            \includegraphics[page=11]{figures.pdf}
            \[
            d \in L
            \]
        \end{minipage}
    \end{center}
    \caption{An illustration of the minimal latent sets.}
    \label{fig:lat_ex}
\end{figure}

Figure \ref{fig:lat_ex} illustrates the closure $M_A = \cl{\g[A]}{\ceo{A}}$ and corresponding minimal latent set $L_A^\text{min} = \ml{\g}{A}$ for an abstract $A \in \mc A(\g)$ in an ADMG $\g$ with abstract consistent order $\leq$ where $\ceo{A}$ is denoted with a doubled walled vertex. Lemma \ref{lem:pairs_ci} uses the concept of a minimal latent set to extract conditional independence statements from an ADMG.

\begin{lem}
\label{lem:pairs_ci}
Let $\g$ be an ADMG, $\leq$ be a total order consistent with $\g$, and $i_{\g}^{\leq}, e_{\g}^{\leq} = \textsc{NIE}(\g,\leq)$. If $A \in \mc A(\g)$, $b_A = \ceo{A}$, and $R_A = \pre{\g}{A}$, then\textup{:}
\begin{enumerate}
    \item \istate{b_A}{R \sm M_{R_A}}{M_{R_A} \sm b_A}{\mu_{\ms P} \s i_{\g}^{\leq}} where $M_{R_A} = \co{\g[R_A]}{b_A}$\textup{;}
    \item \istate{b_A}{N_A \sm M_A}{M_A \sm b_A}{\mu_{\ms P} \s i_{\g}^{\leq}} where $N_A = M_{R_A} \sm \ml{\g}{A}$ and $M_A = \co{\g[A]}{b}$.
\end{enumerate}
\end{lem}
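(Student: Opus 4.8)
The plan is to extract the two claimed independence statements from the structural imset $\mu_{\ms P} \s i_{\g}^{\leq}$ by identifying, among the many base conditional independence terms contributed to $i_{\g}^{\leq}$ by Algorithm \ref{alg:nie}, a single term that already encodes the desired statement, and then invoking the definition of imsetal independence: if $u_{\seq{A,B\mid C}}$ appears with a positive coefficient in a structural imset $v$, then $v - u_{\seq{A,B\mid C}}$ is again structural (it is just the sum of the remaining elementary/semi-elementary pieces, all with non-negative rational coefficients), hence $\istate{A}{B}{C}{v}$. So the whole argument reduces to bookkeeping about \emph{which} pairs $(J,K)$ the algorithm visits and what $\ms N_{J,K}$ (equivalently $\delta_{A,B\mid C}$) those correspond to.

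For part \emph{(i)}, I would run $\textsc{Pairs}(\g[R_A], b_A)$. Since $b_A = \ceo{A}$ and $\leq$ is consistent, $b_A \in \ba{\g[R_A]}{R_A}$, and the very first iteration picks the maximal constrained set $N \supseteq \{b_A\}$ and pairs it with $M = \ceil{\{M \in \mc M(\g[R_A]) : b_A \in M \sube N\}}$. By Lemma \ref{lem:unique_barren} (via Lemma \ref{lem:cm_equiv}) the unique maximal parameterizing / collider-connecting set through $b_A$ in $\g[R_A]$ is exactly $M_{R_A} = \co{\g[R_A]}{b_A} = \dis{\g[R_A]}{b_A} \cup \pa{\g[R_A]}{\dis{\g[R_A]}{b_A}}$. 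The singleton index set $J = K = \{1\}$ gives $|J\sm K| = 0$ (even), so $\delta_{\ms N_{1,1}} = \delta_{b_A,\, R_A \sm M_{R_A}\mid\, M_{R_A}\sm b_A}$ is added to $i_{\g,b_A}^{\leq}$, and by Proposition \ref{prop:iir}(iii) its M\"obius inverse is the semi-elementary imset $u_{\seq{b_A,\, R_A\sm M_{R_A}\mid\, M_{R_A}\sm b_A}}$, which (Proposition \ref{prop:semi_elem}) is a non-negative-integer combination of elementary imsets. Subtracting it from $\mu_{\ms P}\s i_{\g}^{\leq}$ leaves a structural imset, giving the claimed statement. (One subtlety: $b_A$ may already be the maximal element of a parameterizing set, in which case $N = M_{R_A}$, $\ms N_{1,1} = \es$, and the statement is the trivial one $\istate{b_A}{\es}{\cdots}{\cdot}$ — still fine by triviality.)

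For part \emph{(ii)}, I would argue that running $\textsc{Pairs}(\g[A], b)$ — which is what Algorithm \ref{alg:nie} actually does at the iteration with working set $A$ — produces, as its first pair, $N_A$ and $M_A$ with $M_A = \co{\g[A]}{b}$ and $N_A$ the maximal constrained set through $b$ in $\g[A]$; the point of the \emph{minimal latent set} machinery is precisely to identify $N_A = M_{R_A} \sm \ml{\g}{A}$ as that maximal constrained set, using the stated property $\cl{\g[A]}{b} = \cl{\g[l\cup A]}{b}$ for all $l \in R_A \sm \ml{\g}{A}$. Then, as in part \emph{(i)}, the $(J,K) = (\{1\},\{1\})$ term contributes $\delta_{b,\, N_A\sm M_A\mid\, M_A\sm b}$ to $i_{\g,b}^{\leq}$, whose M\"obius inverse is the corresponding semi-elementary imset, and subtraction yields the independence statement. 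The hardest part is verifying the claim that $N_A = M_{R_A} \sm \ml{\g}{A}$ is indeed the maximal set in $\mc N(\g[A])$ containing $b$ and that the $\textsc{Pairs}$ run on $\g[A]$ really selects this $N_A$ on its first iteration — this needs the minimal-latent-set identity together with Lemma \ref{lem:routing} and Corollary \ref{cor:induced_admg} to control which vertices outside $\cl{\g[A]}{b}$ get $m$-separated. I would isolate that as the key lemma step and handle the rest by the uniform "positive coefficient $\Rightarrow$ independence" argument above.
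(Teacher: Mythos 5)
Your treatment of part (\textit{i}) is correct and matches the paper's: the base term $\ms N_{1,1}$ produced by $\textsc{Pairs}(\g[R_A], b_A)$ has $N_1 = R_A$ and $M_1 = M_{R_A}$ (via Lemma \ref{lem:cm_equiv}), the index pair $J = K = \{1\}$ has $|J \sm K| = 0$ even so $\delta_{\ms N_{1,1}}$ lands in the inclusion imset, and subtracting the corresponding semi-elementary imset from $\mu_{\ms P}\s i_{\g}^{\leq}$ leaves a structural imset; the degenerate case $R_A = M_{R_A}$ is handled by triviality, exactly as you note.

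Part (\textit{ii}) has a genuine gap. You propose to read the statement off a run of $\textsc{Pairs}(\g[A], b_A)$, asserting that this is ``what Algorithm \ref{alg:nie} actually does at the iteration with working set $A$.'' It is not: the working sets in Algorithm \ref{alg:nie} are exactly the prefixes $\pre{\g}{b}$ of the total order, so the only run relevant to $b_A$ is $\textsc{Pairs}(\g[R_A], b_A)$, and for a general ancestral set $A$ with $\ceo{A} = b_A$ one has $A \sube R_A$ but typically $A \neq R_A$. Consequently the terms that a run on $\g[A]$ would produce are simply not summands of $i_{\g}^{\leq}$, and the ``positive coefficient $\Rightarrow$ independence'' mechanism has nothing to latch onto. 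The paper instead recovers the statement from the \emph{intersection} terms of the single run on $\g[R_A]$: setting $\ms N_A = \{ N \in \ms N \; : \; M_{R_A} = N \cup \ml{\g}{A} \}$ one checks that $N_A = \bigcap_{N \in \ms N_A} N$, so $N_A = N_J$ for the corresponding index set $J$, and $M_{J,J} = M_A$ by Lemma \ref{lem:cm_equiv}; since $K = J$ gives $|J \sm K| = 0$ even, $\delta_{\ms N_{J,J}} = \delta_{b_A,\, N_A \sm M_A \mid M_A \sm b_A}$ is an inclusion term and the argument closes as in part (\textit{i}). This is precisely why the algorithm loops over all pairs $K \sube J$ rather than only over the base pairs $J = K = \{i\}$, and it is the step your proposal is missing; your instinct that the minimal-latent-set identity is the crux is right, but it must be deployed inside the run on $\g[R_A]$, not used to justify a run on $\g[A]$ that never happens.
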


\begin{proof}

By Proposition \ref{prop:struct_semigraphiod} the independence model induced by a structural imset is a semi-graphoid. Accordingly, we may apply the semi-graphoid axioms. Let $\ms M$ and $\ms N$ be the ordered lists constructed by Algorithm \ref{alg:pairs} and let $n$ be their cardinality. The structural imset $\mu_{\ms P} \s  \s i_{\g}^{\leq}$ is constructed as the sum over a set of semi-elementary imsets including the semi-elementary imsets defined as $\mu_{\ms P} \s \delta_{\ms N_{i,i}}$ for $1 \leq i \leq n$. Accordingly:
\[
\istate{b_A}{N_i \sm M_i}{M_i \sm b_A}{\mu_{\ms P} \s i_{\g}^{\leq}}.
\]

If $R_A \sm M_{R_A} = \es$, then by the by semi-graphoid axiom of triviality:
\[
\istate{b_A}{R_A \sm M_{R_A}}{M_{R_A} \sm b_A}{\mu_{\ms P} \s i_{\g}^{\leq}}.
\]

\noindent Accordingly, let $M_{R_A} \sube R_A$. By construction $N_1 = R_A$ and by Lemma \ref{lem:cm_equiv} $M_1 = M_{R_A}$, therefore:
\[
\istate{b_A}{R_A \sm M_{R_A}}{M_{R_A} \sm b_A}{\mu_{\ms P} \s i_{\g}^{\leq}}.
\]

If $N_A \sm M_A = \es$, then by semi-graphoid axiom of triviality:
\[
\istate{b}{N_A \sm M_A}{M_A \sm b}{\mu_{\ms P} \s i_{\g}^{\leq}}.
\]

\noindent Accordingly, let $N_A \sm M_A \neq \es$. Let $\ms N_A = \{ N \in \ms N \; : \; M_{R_A} = N \cup \ml{\g}{A} \}$ and note that $N_A = \bigcap_{N \in \ms N_A} N$. By construction $N_J = N_A$ and by Lemma \ref{lem:cm_equiv} $M_{J,J} = M_A$ for some $J \sube \{ 1, \dots, n\}$, therefore:
\[
\istate{b}{N_A \sm M_A}{M_A \sm b}{\mu_{\ms P} \s i_{\g}^{\leq}}.
\]
\end{proof}

\vskip 5mm

We now extend the ideas of Lemma \ref{lem:pairs_ci} to incorporate the conditional independence statements required by the ordered local Markov property. Let $\g = (V,E)$ be an ADMG with consistent order $\leq$. The following sets will be instrumental:
\begin{itemize}
\setlength{\itemindent}{45mm}
\item[{\makebox[45mm]{$A \in \mc A(\g)$\hfill}}] an ancestral set;
\item[{\makebox[45mm]{$R_A = \pre{\g}{A}$\hfill}}] the preceding vertices of $A$ with respect to $\leq$;
\item[{\makebox[45mm]{$b_A = \ceo{A}$\hfill}}] the last vertex in $A$ with respect to $\leq$;
\item[{\makebox[45mm]{$M_A = \co{\g[A]}{b_A}$\hfill}}] the maximal parameterizing set with respect to $A$ and $b_A$;
\item[{\makebox[45mm]{$M_{R_A} = \co{\g[R_A]}{b_A}$\hfill}}] the maximal parameterizing set with respect to $R_A$ and $b_A$;
\item[{\makebox[45mm]{$L_A = R_A \sm A$\hfill}}] the latent set with respect to $A$;
\item[{\makebox[45mm]{$L_A^\text{min} = \ml{\g}{A}$\hfill}}] the minimal latent set with respect to $A$ and $\leq$;
\item[{\makebox[45mm]{$N_A = M_{R_A} \sm L_A^\text{min}$\hfill}}] the maximal constrained subset of $M_{R_A}$;
\item[{\makebox[45mm]{$B_A = b_A \, \cup \, L_A^\text{min}$\hfill}}] the independent set containing $b_A$;
\item[{\makebox[45mm]{$C_A = M_{R_A} \sm B_A$\hfill}}] the conditioning set;
\item[{\makebox[45mm]{$D_A = \de{\g[R_A]}{L_A^\text{min}} \sm L_A^\text{min}$\hfill}}] the set to be dropped;
\item[{\makebox[45mm]{$F_A = R_A \sm M_{R_A} D_A$\hfill}}] the independent set not containing $b_A$.
\end{itemize}

\noindent In order to help the reader grasp these sets and how they relate, Figure \ref{fig:proofposet} illustrates their Hasse diagram ordered by inclusion. Notably, $B_A$, $C_A$, $D_A$, and $F_A$ partition $R_A$ and our ultimate goal is to show that \istate{B_A}{F_A}{C_A}{u_{\ms P} \s i_{\g}^{\leq}}.

\begin{figure}[H]
\begin{center}
\includegraphics[page=42]{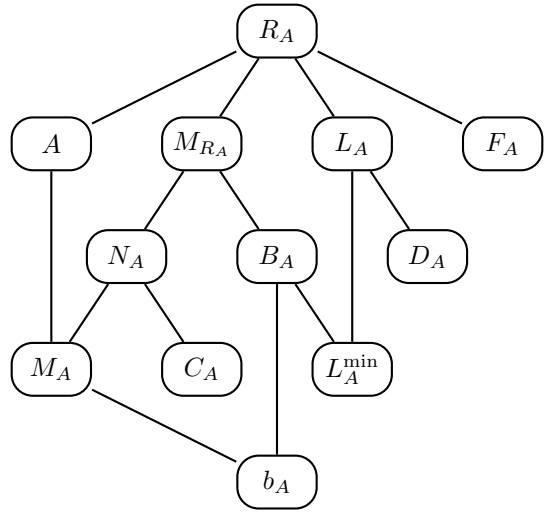}
\caption{The Hasse diagram for the poset over sets ordered by inclusion.}
\label{fig:proofposet}
\end{center}
\end{figure}

\noindent Figures \ref{fig:set_i} and \ref{fig:sets_ii} provide additional visual decomposition and partitioning of the sets used in the extension of Lemma \ref{lem:pairs_ci}.

\begin{figure}[H]
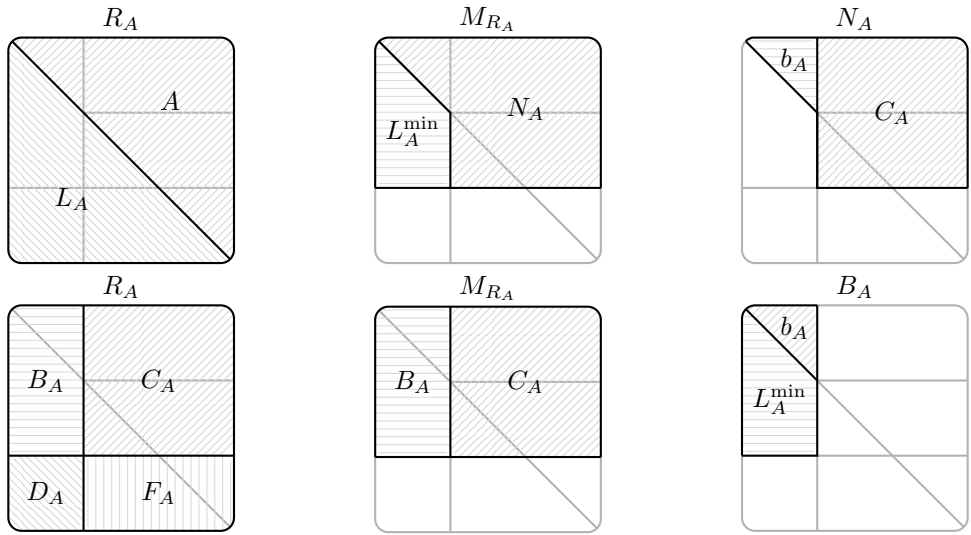

\begin{center}
\begin{minipage}{.32\textwidth}
\centering
\includegraphics[page=43]{figures.pdf} \\
\includegraphics[page=44]{figures.pdf}
\end{minipage}%
\begin{minipage}{.32\textwidth}
\centering
\includegraphics[page=46]{figures.pdf} \\
\includegraphics[page=45]{figures.pdf}
\end{minipage}%
\begin{minipage}{.32\textwidth}
\centering
\includegraphics[page=47]{figures.pdf} \\
\includegraphics[page=52]{figures.pdf}
\end{minipage}
\end{center}
\caption{An illustration of how various sets interact and partition each other.}
\label{fig:set_i}
\end{figure}

\begin{figure}[H]
\begin{center}
\begin{minipage}{.32\textwidth}
\centering
\includegraphics[page=53]{figures.pdf}
\end{minipage}%
\begin{minipage}{.32\textwidth}
\centering
\includegraphics[page=54]{figures.pdf}
\end{minipage}%
\begin{minipage}{.32\textwidth}
\centering
\includegraphics[page=55]{figures.pdf}
\end{minipage}
\end{center}
\caption{An illustration of how various sets interact and partition each other.}
\label{fig:sets_ii}
\end{figure}

\noindent $C_A = N_A \sm b_A$ because $N_A \sube M_{R_A}$:
\begin{align*}
C_A &= M_{R_A} \sm B_A \\
&= M_{R_A} \sm (b_A \, \cup \, L_A^\text{min}) && (B = b_A \, \cup \, L_A^\text{min}) \\
&= (M_{R_A} \sm b_A) \, \cap \, (M_{R_A} \sm L_A^\text{min}) && (\text{distributive property}) \\
&= M_{R_A} \sm b_A \, \cap \, N_A && (N_A = M_{R_A} \sm L_A^\text{min}) \\
&= N_A \sm b_A && (M_{R_A} \, \cap \, N_A = N_A)
\end{align*}
$R_A = A L_A$ because $A \sube R_A$:
\begin{align*}
R_A &= (R_A \, \cap \, A) \, \cup \, L_A && \text{$(L_A = R_A \sm A)$}\\
&= A L_A && \text{$(R_A \, \cap \, A = A)$}
\end{align*}
$M_{R_A} = N_A L_A^\text{min}$ because $L_A^\text{min} = \ml{\g}{A} \sube \co{\g[R_A]}{b_A} = M_{R_A}$:
\begin{align*}
M_{R_A} &= N_A \, \cup \, (M_{R_A} \, \cap \, L_A^\text{min}) && \text{$(N_A = M_{R_A} \sm L_A^\text{min})$}\\
&= N_A L_A^\text{min} && \text{$(M_{R_A} \, \cap \, L_A^\text{min} = L_A^\text{min})$}
\end{align*}

In order to facilitate the forthcoming proof we define a few alternative relations. 
$F_A = (A \sm N_A) \, \cup \, (L_A \sm M_{R_A} D_A)$ because $L_A \, \cap \, A = \es$ and $L_A^\text{min} D_A \sube L_A$. Note that $D_A \sube L_A$ because $D_A = \de{\g}{L_{A}^\text{min}}$ and $A$ is an ancestral set. 
\begin{align*}
F_A &= R_A \sm M_{R_A} D_A \\
&= R_A \sm (M_{R_A} D_A \, \cup \, (L_A \, \cap \, A)) && \text{$(L_A \, \cap \, A = \es)$} \\
&= R_A \sm (M_{R_A} D_A L_A \, \cap \, M_{R_A} D_A A) && (\text{distributive property}) \\
&= A L_A \sm (N_A L_A^\text{min} D_A L_A \, \cap \, M_{R_A} D_A A) && (\text{change notation}) \\
&= A L_A \sm (N_A L_A \, \cap \, M_{R_A} D_A A) && \text{$(L_A^\text{min} D_A \sube L_A)$} \\
&= (A L_A \sm N_A L_A) \, \cup \, (AL_A \sm M_{R_A} D_A A) && (\text{distributive property}) \\
&= (A \sm N_A) \, \cup \, (L_A \sm M_{R_A} D_A) && (\text{simplify differences})
\end{align*}
$N_A \sm b_A = (N_A \sm M_A) \, \cup \, (M_A \sm b_A)$ because $b_A \in M_A \sube N_A$. Note that $M_A \sube N_A$ because $M_A \sube M_{R_A}$ and $M_A \, \cap \, L_A = \es$.
\begin{align*}
N_A \sm b_A &= N_A \sm (M_A \, \cap \, b_A) && (b_A = M_A \, \cap \, b_A) \\
&= N_A \sm ((M_A \, \cap \, b_A) \, \cup \, (M_A \, \cap \, (N_A \sm M_A))) && (M_A \, \cap \, (N_A \sm M_A) = \es)\\
&= N_A \sm (M_A \, \cap \, (b_A \, \cup \, (N_A \sm M_A))) && (\text{distributive property}) \\
&= (N_A \sm M_A) \, \cup \, (N_A \sm (b_A \, \cup \, (N_A \sm M_A))) && (\text{distributive property}) \\
&= (N_A \sm M_A) \, \cup \, ((N_A \sm b_A) \, \cap \, (N_A \sm (N_A \sm M_A))) && (\text{distributive property}) \\
&= (N_A \sm M_A) \, \cup \, ((N_A \sm b_A) \, \cap \, (N_A \, \cap \, M_A)) && (N_A \sm (N_A \sm M_A) = N_A \, \cap \, M_A) \\
&= (N_A \sm M_A) \, \cup \, ((N_A \sm b_A) \, \cap \, M_A) && (N_A \, \cap \, M_A = M_A) \\
&= (N_A \sm M_A) \, \cup \, (M_A \sm b_A) && ((N_A \sm b_A) \, \cap \, M_A = M_A \sm b_A)
\end{align*}
$A \sm M_A \sube (A \sm N_A) \, \cup \, (N_A \sm M_A)$ because $M_A \sube N_A$:
\begin{align*}
A \sm M_A &= A \sm (N_A \, \cap \, M_A) && (M_A = N_A \, \cap \, M_A) \\
&= A \sm ((N_A \, \cap \, M_A) \, \cup \, (N_A \, \cap \, (A \sm N_A))) && (N_A \, \cap \, (A \sm N_A) = \es)\\
&= A \sm (N_A \, \cap \, (M_A \, \cup \, (A \sm N_A))) && (\text{distributive property}) \\
&= (A \sm N_A) \, \cup \, (A \sm (M_A \, \cup \, (A \sm N_A))) && (\text{distributive property}) \\
&= (A \sm N_A) \, \cup \, ((A \sm M_A) \, \cap \, (A \sm (A \sm N_A))) && (\text{distributive property}) \\
&= (A \sm N_A) \, \cup \, ((A \sm M_A) \, \cap \, (A \, \cap \, N_A)) && (A \sm (A \sm N_A) = A \, \cap \, N_A) \\
&\sube (A \sm N_A) \, \cup \, ((A \sm M_A) \, \cap \, N_A) && (A \, \cap \, N_A \sube N_A) \\
&= (A \sm N_A) \, \cup \, (N_A \sm M_A) && ((A \sm M_A) \, \cap \, N_A = N_A \sm M_A)
\end{align*}

Algorithm \ref{alg:olmp} outlines a generalized process to extract conditional independence statements from an ADMG. The conditional independence statements are used to construct a structural imset whose induced independence model is a subset of the induced independence model of the graph and a subset of the independence model induced by the output of Algorithm \ref{alg:nie}. Furthermore, the conditional independence statements required by the ordered local Markov property are represented in the constructed imset. Every set Algorithm \ref{alg:olmp} is defined with respect to a given ancestral set. Accordingly, we drop the $A$ subset in the pseudocode to simplify notation. Applications of the symmetry semi-graphoid axiom are not noted in the algorithm.

\vskip 5mm

\begin{algorithm}[H]
    \caption{$\textsc{Ordered Local Markov Property OLMP}(\g, \leq, A)$}
    \label{alg:olmp}
    \KwIn{ADMG: $\g[] = (V,E)$, \, total order: $\leq$, \, ancestral set: $A$}
    \KwOut{imset: $u$}
    Let $b = \ceo{A}$, \, $R = \pre{\g}{b}$, \, $M_R = \co{\g[R]}{b}$, \, $L = \ml{\g}{A}$, \, $N = M_R \sm L$ \;
    Let $B = b \cup L$, \, $C = M_R \sm B$, \, $D = \de{\g[R]}{L} \sm L$, \, $F = R \sm M_R D$ \;
    Initialize imset $u : \mc P(V) \ra 0$ \;
    Let $i = 1$, \, $r_i = \flo{B}$, \, $R_i = \pre{\g}{r_i}$ \;
    \Repeat{$r_i = b$}{
        Let $B_i =  R_i \cap B$, \, $C_i = R_i \cap C$, \, $D_i = R_i \cap D$, \, $F_i = R_i \cap F$ \;
        \uIf{$r_i \in \dis{\g[R_i]}{b}$}{
            Let $M_i = \co{\g[R_i]}{r_i}$ \;
            $u = u + u_{\seq{r_i, R_i \sm M_i \mid M_i \sm r_i}}$ ; \hfill \tcp{Lemma \ref{lem:pairs_ci}}
            $u = u + u_{\seq{r_i, F_i \mid B_i C_i \sm r_i}}$ ; \hfill \tcp{decomposition and weak union}
            \If{$r_i \in C$}{
                $u = u + u_{\seq{r_i \cup B_i, F_i \mid C_{i-1}}}$ ; \hfill \tcp{contraction}
            }
        }
        \ElseIf{$r_i \in C F$}{
            Let $A' = R_i \sm D$\;
            $u = u + \textsc{OLMP}(\g, \leq, A')$ ; \hfill \tcp{recursive call}
            $u = u + u_{\seq{B_i, r_i \mid C_i F_i \sm r_i}}$ ; \hfill \tcp{decomposition and weak union}
            \If{$r_i \in C$}{
                $u = u + u_{\seq{B_i, r_i \cup F_i \mid C_{i-1}}}$ ; \hfill \tcp{contraction}
            }
        }
        $u = u + u_{\seq{B_i, F_i \mid C_i}}$ ; \hfill \tcp{weak union or contraction}
        \If{$r_i \neq b$}{
        Let $i = i + 1$, \, $r_{i} = \flo{R \sm R_{i-1}}$, \, $R_i = \pre{\g}{r_i}$ \;
        }
    }
    Let $M = \co{\g[A]}{b}$ \;
    $u = u + u_{\seq{b, A \sm N \mid N \sm b}}$ ; \hfill \tcp{decomposition}
    $u = u + u_{\seq{b, N \sm M \mid M \sm b}}$ ; \hfill \tcp{Lemma \ref{lem:pairs_ci}}
    $u = u + u_{\seq{b, A \sm M \mid M \sm b}}$ ; \hfill \tcp{contraction}
\end{algorithm}

\vskip 5mm

\begin{lem}
\label{lem:olmp_helper}
Let $\g = (V,E)$ be an ADMG with consistent order $\leq$. Let $A \in \mc A(\g)$, $b_A = \ceo{A}$, and $R_A = \pre{\g}{A}$. Furthermore, let $r \in R_A$ and $R = \pre{\g}{r}$. If $r \in \dis{\g[R_A]}{b_A}$, then\textup{:}
\[
\co{\g[R]}{r} \sube \co{\g[R_A]}{b_A}.
\]
If $r \not \in \dis{\g[R_A]}{b_A}$, then let $B_A = b_A \, \cup \, \ml{\g}{A}$\textup{:}
\[
\co{\g[R]}{r} \, \cap \, B_A = \es.
\]
\end{lem}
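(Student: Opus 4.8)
The plan is to reduce both assertions to elementary monotonicity facts about districts and parent sets. Two preliminary observations drive everything. First, $R_A = \pre{\g}{A} = \pre{\g}{\ceo A} = \pre{\g}{b_A}$, so $r \in R_A$ forces $r \le b_A$ and hence $R = \pre{\g}{r} \sube \pre{\g}{b_A} = R_A$. Second, $r = \ceo R$ and $b_A = \ceo{R_A}$, so by consistency of $\le$ the vertex $r$ has no child in $\g[R]$ and $b_A$ has no child in $\g[R_A]$; consequently $\co{\g[R]}{r} = \dis{\g[R]}{r} \cup \pa{\g[R]}{\dis{\g[R]}{r}}$ and $\co{\g[R_A]}{b_A} = \dis{\g[R_A]}{b_A} \cup \pa{\g[R_A]}{\dis{\g[R_A]}{b_A}}$ — a collider-connecting path terminating at a vertex $v$ cannot leave $v$ along a tail, since that would require $v$ to have a child — which is the identity used inside the proof of Lemma~\ref{lem:unique_barren}. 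I would record this as a short preliminary step.

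For the first part, since $r \in \dis{\g[R_A]}{b_A}$ and sharing a district is an equivalence relation on the vertices of $\g[R_A]$, we have $\dis{\g[R_A]}{r} = \dis{\g[R_A]}{b_A} =: D$. Since $R \sube R_A$, every bi-directed path of $\g[R]$ is a bi-directed path of $\g[R_A]$, so $\dis{\g[R]}{r} \sube D$; and every arrow $w \ra v$ of $\g[R]$ is an arrow of $\g[R_A]$, so $\pa{\g[R]}{\dis{\g[R]}{r}} \sube \pa{\g[R_A]}{D}$. The two barren-vertex identities then give $\co{\g[R]}{r} \sube D \cup \pa{\g[R_A]}{D} = \co{\g[R_A]}{b_A}$.

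For the second part I would show that neither $b_A$ nor any $s \in \ml{\g}{A}$ lies in $\co{\g[R]}{r} = \dis{\g[R]}{r} \cup \pa{\g[R]}{\dis{\g[R]}{r}}$. Since $r \notin \dis{\g[R_A]}{b_A}$ we have $r \ne b_A$, hence $r < b_A$, hence $b_A \notin R$; as $\co{\g[R]}{r} \sube R$, this rules out $b_A$. Now fix $s \in \ml{\g}{A} = \sib{\g[R_A]}{\dis{\g[A]}{b_A}} \sm \dis{\g[A]}{b_A}$. Because $s$ is bi-directed adjacent in $\g[R_A]$ to a vertex of $\dis{\g[A]}{b_A} \sube \dis{\g[R_A]}{b_A}$, it lies in $\dis{\g[R_A]}{b_A}$, a different $\g[R_A]$-district than that of $r$; in particular $s \notin \dis{\g[R_A]}{r} \supseteq \dis{\g[R]}{r}$, so $s$ is not in the first piece. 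The remaining possibility is $s \ra v'$ in $\g[R]$ with $v' \in \dis{\g[R]}{r}$. Here one uses that $s \notin A$ (if $s \in A$, then being bi-directed adjacent in $\g[A]$ to $\dis{\g[A]}{b_A}$ it would belong to that district), that $A$ is ancestral and $s \in \an{\g}{v'}$ (so $v' \notin A$), and the way the latent vertices $R_A \sm A$ relate to $\ml{\g}{A}$ and its descendants — the ``dropped set'' in Algorithm~\ref{alg:olmp} — to conclude that $s$ and $v'$ would be forced into the same $\g[R_A]$-district as $r$, contradicting $s \in \dis{\g[R_A]}{b_A}$.

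The first part and the preliminary step should be routine. The main obstacle is the last case of the second part: showing that an element of $\ml{\g}{A}$ cannot be a parent, inside $\g[R]$, of a vertex in $r$'s district. This is exactly where the minimality encoded by $\ml{\g}{A}$ must be exploited, and I would handle it by tracing the bi-directed walk from $s$ into $\dis{\g[A]}{b_A}$ against the directed edge $s \ra v'$ and the bi-directed connection from $v'$ to $r$, showing the configuration is inconsistent with the hypothesis $r \notin \dis{\g[R_A]}{b_A}$.
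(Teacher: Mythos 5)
Your preliminary step, your first part, and the first two cases of your second part are correct and in substance match the paper's argument (the paper phrases the first part as a concatenation of a collider-connecting path with a bi-directed path rather than via the identity $\co{\g}{r}=\dis{\g}{r}\cup\pa{\g}{\dis{\g}{r}}$, but the content is the same). The genuine gap is the one you flag yourself: the case $s\in\ml{\g}{A}$ with $s \ra v'$ in $\g[R]$ and $v'\in\dis{\g[R]}{r}$ is never actually closed. You list ingredients ($s\notin A$, $v'\notin A$, ancestrality of $A$, the role of the dropped descendants) and assert that tracing the relevant walks ``shows the configuration is inconsistent with $r\notin\dis{\g[R_A]}{b_A}$,'' but no inconsistency is derived, and none follows from the stated hypotheses.

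The configuration you need to rule out is in fact realizable. Take $V=\{1,2,3,4,5\}$ with the natural order and edges $1\ra 2$, $2\lra 3$, $1\lra 4$, $4\lra 5$. Then $A=\{3,4,5\}$ is ancestral with $b_A=5$ and $R_A=V$; the induced subgraph $\g[A]$ contains only $4\lra 5$, so $\dis{\g[A]}{5}=\{4,5\}$ and $\ml{\g}{A}=\{1\}$, giving $B_A=\{1,5\}$. For $r=3$ we have $R=\{1,2,3\}$ and $\dis{\g[R_A]}{5}=\{1,4,5\}$, so $r\notin\dis{\g[R_A]}{b_A}$; yet $1\ra 2\lra 3$ is a collider-connecting path in $\g[R]$, so $1\in\co{\g[R]}{3}\cap B_A\neq\es$. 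Hence the case cannot be dispatched by an argument of the form you sketch: either additional hypotheses are needed or the conclusion must be weakened. Note that the paper's own proof is thin at exactly this spot: it concatenates the bi-directed path from $b_A$ to $s$ with the collider-connecting path from $s$ to $r$ and treats the result as establishing $r\in\dis{\g[R_A]}{b_A}$, which fails precisely when the edge leaving $s$ toward $r$ is $s\ra v'$ ($s$ is then a non-collider on the concatenation, and in any case district membership requires an all-bi-directed path, which a collider-connecting path does not supply). So you have isolated the right difficulty, but the proposal does not resolve it, and as stated it cannot be resolved.
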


\begin{proof}
Note that $\g[R]$ is a subgraph of $\g[R_A]$ so any vertices and paths in $\g[R]$ are in $\g[R_A]$. Pick vertex $a \in \co{\g[R]}{r}$ and path $\pi_{ar}$ in \g[R] between $a$ and $r$ such that $\pi_{ar}$ is collider-connecting. Notably, $r = \ceo{R}$ so $\pi_{ar}$ must have an arrowhead directed into $r$.

For the first statement, we show that $a \in \co{\g[R_A]}{b_A}$. Since $r \in \dis{\g[R_A]}{b_A}$, there is a path $\pi_{br}$ in $\g[R_A]$ between $b_A$ and $r$ consisting entirely of bi-directed edges. Accordingly, the composition of $\pi_{ar}$ from $a$ to $r$ with $\pi_{br}$ from $r$ to $b_A$ is a collider-connecting path between $a$ and $b_A$ in $\g[R_A]$. It follows that $\co{\g[R]}{r} \sube \co{\g[R_A]}{b}$.

For the second statement, we show that if $a \in B_A$, then $r \in \dis{\g[R_A]}{b_A}$; this is the contrapositive statement. Since $a \in B_A$, there is a path $\pi_{ab}$ in $\g[R_A]$ between $a$ and $b_A$ consisting entirely of bi-directed edges. Accordingly, the composition of $\pi_{ab}$ from $b_A$ to $a$ with $\pi_{ar}$ from $a$ to $r$ is a collider-connecting path between $b_A$ and $r$ in $\g[R_A]$. It follows that $\co{\g[R]}{r} \, \cap \, B_A = \es$
\end{proof}

\noindent \textbf{Lemma \ref{lem:olmp}}
\textit{Let $\g = (V, E)$ be an ADMG and $\leq$ be a total order consistent with $\g[]$. If $i_{\g}^{\leq}, e_{\g}^{\leq} = \textsc{NIE}(\g,\leq)$, $A \in \mc A(\g)$, and $b_A = \ceo{A}$, then\textup{:}}
\[
    \istate{b}{A \sm \cl{\g[A]}{b}}{\mb{\g[A]}{b}}{\mu_{\ms P} \s i_{\g}^{\leq}}.
\]

\begin{proof}
Let $u_A = \textsc{OLMP}(\g, \leq, A)$ be the imset constructed by Algorithm \ref{alg:olmp} and $M_A = \co{\g[A]}{b_A}$. Since $u_A$ is defined as the sum of semi-elementary imsets, $u_A$ is a structural imset. Additionally, \istate{b_A}{A \sm M_A}{M_A \sm b_A}{u_A} by line 30. Consider the recursive call on line 16: Note that $A'$ is ancestral because it is defined as an ancestral set minus a set that contains all of its descendants. Let $b_{A'} = \ceo{A'}$ and $R_{A'} = \pre{\g}{b_{A'}}$ and note that $R_{A'} \sub R_A$. Accordingly, each time the algorithm is called recursively, the set of preceding variables is smaller. Since these sets are finite, Algorithm \ref{alg:olmp} is guaranteed to terminate.

We show that the conditional independence statement represented by semi-elementary imset added to $u_A$ are either represented in $\mu_{\ms P} \s i_{\g}^{\leq}$ by Lemma \ref{lem:pairs_ci} or implied by conditional independence statements that have been added the $u$ on an earlier line of the algorithm. Accordingly, $\mc I(u_A) \sube \mc I(\mu_{\ms P} \s i_{\g}^{\leq})$.

Let $R_A = \pre{\g}{b_A}$, $M_{R_A} = \co{\g[R_A]}{b_A}$, $L_A^\text{min} = \ml{\g}{A}$, $N_A = M_{R_A} \sm L_A^\text{min}$, and $L_A = R_A \sm A$. Let $B_A = b_A \, \cup \, L_A^\text{min}$, $C_A = M_{R_A} \sm B_A$, $D_A = \de{\g[R_A]}{L_A^\text{min}} \sm L_A^\text{min}$, $F_A = R_A \sm M_{R_A}D_A$. 

\vskip 5mm

We proceed by induction. For the base case, let $r_1^A = \flo{B_A}$, $R_1^A = \pre{\g}{r_1^A}$, and $M_1^A = \co{\g[R_1^A]}{r_1^A}$. Let $B_1^A = B_A \, \cap \, R_1^A$, $C_1^A = C_A \, \cap \, R_1^A$, $D_1^A = D_A \, \cap \, R_1^A$, and $F_1^A = F_A \, \cap \, R_1^A$ be the sets constrained to the set of variables $R_1^A$. Note that $r_1^A \in \dis{\g[R_i^A]}{b_A}$. By Lemma \ref{lem:pairs_ci},
\[
\istate{r_1^A}{R_1^A \sm M_1^A}{M_1^A \sm r_1^A}{\mu_{\ms P} \s i_{\g}^{\leq}}.
\]
Thus line 9 is satisfied. By changing notation,
\[
\istate{r_1^A}{B_1^A C_1^A D_1^A F_1^A \sm M_1^A}{M_1^A \sm r_1^A}{\mu_{\ms P} \s i_{\g}^{\leq}}.
\]
By Lemma \ref{lem:olmp_helper} $M_1^A \sube B_1^A C_1^A$. By the decomposition and weak union semi-graphoid axioms,
\[
\istate{r_1^A}{F_1^A}{B_1^A C_1^A \sm r_1^A}{\mu_{\ms P} \s i_{\g}^{\leq}}.
\]
Thus line 10 is satisfied. Noting that $B_1^A = r_1^A$,
\[
\istate{B_1^A}{F_1^A}{C_1^A}{\mu_{\ms P} \s i_{\g}^{\leq}}.
\]
Thus line 22 is satisfied.

\vskip 5mm

Let $r_i^A = \flo{R_A \sm R_{i-1}^A}$, $R_i^A = \pre{\g}{r_i^A}$, and $M_i^A = \co{\g[R_i^A]}{r_i^A}$. Let $B_i^A = B_A \, \cap \, R_i^A$, $C_i^A = C_A \, \cap \, R_i^A$, $D_i^A = D_A \, \cap \, R_i^A$, and $F_i^A = F_A \, \cap \, R_i^A$ be the sets constrained to the set of variables $R_i^A$. By the inductive hypothesis:
\[
\istate{B_{i-1}^A}{F_{i-1}^A}{C_{i-1}^A}{\mu_{\ms P} \s i_{\g}^{\leq}}.
\]

\vskip 5mm

If $r_i^A \in \dis{\g[R_i^A]}{b_A}$, then by Lemma \ref{lem:pairs_ci},
\[
\istate{r_i^A}{R_i^A \sm M_{R_i^A}}{M_{R_i^A} \sm r_i^A}{\mu_{\ms P} \s i_{\g}^{\leq}}.
\]
Thus line 9 is satisfied. By changing notation,
\[
\istate{r_i^A}{B_i^A C_i^A D_i^A F_i^A \sm M_i^A}{M_i^A \sm r_i^A}{\mu_{\ms P} \s i_{\g}^{\leq}}.
\]
By Lemma \ref{lem:olmp_helper} $M_i^A \sube B_i^A C_i^A$. By the decomposition and weak union semi-graphoid axioms,
\[
\istate{r_i^A}{F_i^A}{B_i^A C_i^A \sm r_i^A}{\mu_{\ms P} \s i_{\g}^{\leq}}.
\]
Thus line 10 is satisfied.

\vskip 5mm

If $r_i^A \in B_A$, then $B_i^A = r_i^A \, \cup \, B_{i-1}^A$, $C_i^A = C_{i-1}^A$, $D_i^A = D_{i-1}^A$, and $F_i^A = F_{i-1}^A$. By changing notation,
\[
\istate{r_i^A}{F_i^A}{B_{i-1}^A C_i^A}{\mu_{\ms P} \s i_{\g}^{\leq}}.
\]
By changing the notation of the inductive hypothesis,
\[
\istate{B_{i-1}^A}{F_i^A}{C_i^A}{\mu_{\ms P} \s i_{\g}^{\leq}}.
\]
By the symmetry and contraction semi-graphoid axioms,
\[
\istate{B_i^A}{F_i^A}{C_i^A}{\mu_{\ms P} \s i_{\g}^{\leq}}.
\]
Thus line 22 is satisfied.

\vskip 5mm

If $r_i^A \in C_A$, then $B_i^A = B_{i-1}^A$, $C_i^A = r_i^A \, \cup \, C_{i-1}^A$, $D_i^A = D_{i-1}^A$, and $F_i^A = F_{i-1}^A$. By changing notation,
\[
\istate{r_i^A}{F_i^A}{B_i^A C_{i-1}^A}{\mu_{\ms P} \s i_{\g}^{\leq}}.
\]
By changing the notation of the inductive hypothesis,
\[
\istate{B_i^A}{F_i^A}{C_{i-1}^A}{\mu_{\ms P} \s i_{\g}^{\leq}}.
\]
By the symmetry and contraction semi-graphoid axioms,
\[
\istate{r_i^A \, \cup \, B_i^A}{F_i^A}{C_{i-1}^A}{\mu_{\ms P} \s i_{\g}^{\leq}}.
\]
Thus line 12 satisfies the lemma. By the symmetry and weak union semi-graphoid axioms,
\[
\istate{B_i^A}{F_i^A}{C_i^A}{\mu_{\ms P} \s i_{\g}^{\leq}}.
\]
Thus line 22 is satisfied.

\vskip 5mm

Else if $r_i^A \in C_A F_A$, then let $A' = R_i^A \sm D_A$, $b_{A'} = \ceo{A'}$, and $M_{A'} = \co{\g[A']}{b_{A'}}$. Note that $A'$ is ancestral because it is defined as an ancestral set minus a set that contains all of its descendants. Note that $b_{A'} = r_i^A$. Since we show that all other lines are satisfied and Algorithm \ref{alg:olmp} terminates, lines 16 is satisfied. Accordingly,
\[
\istate{b_{A'}}{A' \sm M_{A'}}{M_{A'} \sm b_{A'}}{\mu_{\ms P} \s i_{\g}^{\leq}}.
\]
By changing notation,
\[
\istate{r_i^A}{B_i^A C_i^A F_i^A \sm M_{A'}}{M_{A'} \sm r_i^A}{\mu_{\ms P} \s i_{\g}^{\leq}}.
\]
By Lemma \ref{lem:olmp_helper} $M_{A'} \sube C_i^A F_i^A$. By the symmetry, decomposition, and weak union semi-graphoid axioms,
\[
\istate{B_i^A}{r_i^A}{ C_i^A F_i^A \sm r_i^A}{\mu_{\ms P} \s i_{\g}^{\leq}}.
\]
Thus line 17 is satisfied.

\vskip 5mm

If $r_i^A \in F_A$, then $B_i^A = B_{i-1}^A$, $C_i^A = C_{i-1}^A$, $D_i^A = D_{i-1}^A$, and $F_i^A = r_i^A \, \cup \, F_{i-1}^A$. By changing notation,
\[
\istate{B_i^A}{r_i^A}{ C_i^A F_{i-1}^A}{\mu_{\ms P} \s i_{\g}^{\leq}}.
\]
By changing the notation of the inductive hypothesis,
\[
\istate{B_i^A}{F_{i-1}^A}{C_i^A}{\mu_{\ms P} \s i_{\g}^{\leq}}.
\]
By the contraction semi-graphoid axiom,
\[
\istate{B_i^A}{F_i^A}{C_i^A}{\mu_{\ms P} \s i_{\g}^{\leq}}.
\]
Thus line 22 is satisfied.

\vskip 5mm

If $r_i^A \in C_A$, then $B_i^A = B_{i-1}^A$, $C_i^A = r_i^A \, \cup \, C_{i-1}^A$, $D_i^A = D_{i-1}^A$, and $F_i^A = F_{i-1}^A$. By changing notation,
\[
\istate{B_i^A}{r_i^A}{ C_{i-1}^A F_i^A}{\mu_{\ms P} \s i_{\g}^{\leq}}.
\]
By changing the notation of the inductive hypothesis,
\[
\istate{B_i^A}{F_i^A}{C_{i-1}^A}{\mu_{\ms P} \s i_{\g}^{\leq}}.
\]
By the contraction semi-graphoid axiom,
\[
\istate{B_i^A}{r_i^A \, \cup \, F_i^A}{C_{i-1}^A}{\mu_{\ms P} \s i_{\g}^{\leq}}.
\]
Thus line 19 is satisfied. By the weak union semi-graphoid axiom,
\[
\istate{B_i^A}{F_i^A}{C_i^A}{\mu_{\ms P} \s i_{\g}^{\leq}}.
\]
Thus line 22 is satisfied.

\vskip 5mm

If $r_i^A \in D_A$, then $B_i^A = B_{i-1}^A$, $C_i^A = C_{i-1}^A$ $D_i^A = r_i^A \, \cup \, D_{i-1}^A$, and $F_i^A = F_{i-1}^A$. By changing the notation of the inductive hypothesis,
\[
\istate{B_i^A}{F_i^A}{C_i^A}{\mu_{\ms P} \s i_{\g}^{\leq}}.
\]
Thus line 22 is satisfied.

\vskip 5mm

Accordingly,
\[
\istate{B_A}{F_A}{C_A}{\mu_{\ms P} \s i_{\g}^{\leq}}.
\]
Note that $B_A = b_A \, \cup \, L_A^\text{min}$, $C_A = N_A \sm b_A$, and $F_A \sube (A \sm N_A) \, \cup \, (L_A \sm M_{R_A} D_A)$. By changing notation and the decomposition semi-graphoid axiom,
\[
\istate{b \, \cup \, L_A^\text{min}}{(A \sm N_A) \, \cup \, (L_A \sm M_{R_A} D_A)}{N_A \sm b_A}{\mu_{\ms P} \s i_{\g}^{\leq}}.
\]
By the symmetry and decomposition semi-graphoid axioms,
\[
\istate{b_A}{A \sm N_A}{N_A \sm b_A}{\mu_{\ms P} \s i_{\g}^{\leq}}.
\]
Thus line 28 is satisfied. Note that $N_A \sm b_A = (N_A \sm M_A) \, \cup \, (M_A \sm b_A)$ because $b_A \in M_A \sube N_A$. By expanding notation,
\[
\istate{b_A}{A \sm N_A}{(N_A \sm M_A) \, \cup \, (M_A \sm b)}{\mu_{\ms P} \s i_{\g}^{\leq}}
\]
By Lemma \ref{lem:pairs_ci},
\[
\istate{b_A}{N_A \sm M_A}{M_A \sm b_A}{\mu_{\ms P} \s i_{\g}^{\leq}}.
\]
Thus line 29 is satisfied. By the contraction and decomposition semi-graphoid axioms,
\[
\istate{b_A}{(A \sm N_A) \, \cup \, (N_A \sm M_A)}{M_A \sm b_A}{\mu_{\ms P} \s i_{\g}^{\leq}}.
\]
Note that $A \sm M_A \sube (A \sm N_A) \, \cup \, (N_A \sm M_A)$ because $M_A \sube N_A$. By the decomposition semi-graphoid axiom,
\[
\istate{b_A}{A \sm M_A}{M_A \sm b_A}{\mu_{\ms P} \s i_{\g}^{\leq}}.
\]
Thus line 30 is satisfied.
\end{proof}

\newpage

\noindent \textbf{Proposition \ref{prop:in_ex_indmodels}}
\textit{Let $\g = (V, E)$ be an ADMG, $\leq$ be a total order consistent with $\g[]$. If $i_{\g}^{\leq}, e_{\g}^{\leq} = \textsc{NIE}(\g,\leq)$, then\textup{:}}
\begin{enumerate}
    \item $\mc I(\mu_{\ms P} \s i_{\g}^{\leq}) = \mc I(\g)$\textup{;}
    \item $\mc I(\mu_{\ms P} \s e_{\g}^{\leq}) \sube \mc I(\g)$.
\end{enumerate}

\begin{proof}
    By Lemma \ref{lem:fact_helper}, $\mc I(\mu_{\ms P} \s i_{\g}^{\leq}) \sube \mc I(\g)$ and $\mc I(\mu_{\ms P} \s e_{\g}^{\leq}) \sube \mc I(\g)$.
    By Theorem \ref{thm:mprops} and Lemma \ref{lem:olmp}, $\mc I(\g) \sube \mc I(\mu_{\ms P} \s i_{\g}^{\leq})$.
\end{proof}

\noindent \textbf{Corollary \ref{cor:fact}}
\textit{If $\g$ be an ADMG with consistent order $\leq$, then\textup{:}}
\begin{enumerate}
    \item $\mc I(\g) \sube \mc I(P) \s[18] \Lra \s[18] \phi_{\ms P}(x)^{\top} i_{\g}^{\leq} = 0 \s[18]$ for $\mu$-almost all $x \in \mc X$\textup{;}
    \item $\mc I(\g) \sube \mc I(P) \s[18] \Ra \s[18] \phi_{\ms P}(x)^{\top} e_{\g}^{\leq} = 0 \s[18]$ for $\mu$-almost all $x \in \mc X$.
\end{enumerate}

\begin{proof}
    This directly follows from Proposition \ref{prop:in_ex_indmodels} and Theorem \ref{thm:imfact}
\end{proof}

\noindent \textbf{Theorem \ref{thm:mconn_fact}}
\textit{Let $\g = (V, E)$ be an ADMG with consistent order $\leq$. If $P$ is a positive measure, then the following are equivalent\textup{:}}
\begin{enumerate}
    \item $\log f(x) = \sum \limits_{M \in \mc M(\g)} \phi_{M}(x) - \phi_{\ms P}(x)^{\top} e_{\g}^{\leq} \s[18]$ for $\mu$-almost all $x \in \mc X$\textup{;}
    \item \istate{A}{B}{C}{\g} $\s[18] \Ra \s[18]$ \istate{A}{B}{C}{P} $\s[18]$ for all $\seq{A,B \mid C} \in \mc T(V)$.
\end{enumerate}

\begin{proof}

    In vector notation:
    \[
        \log f(x) = \phi_{\ms P}(x)^{\top} (m_{\g} - e_{\g}^{\leq}).
    \]
    
    By M{\"o}bius inversion:
    \begin{align}
        \label{eq:1}
        \log f(x) &= \phi_{\ms P}(x)^{\top} \delta_{\mc P(V)} \nonumber \\
        &= \phi_{\ms P}(x)^{\top} (m_{\g} + n_{\g}) \nonumber \\ 
        &= \phi_{\ms P}(x)^{\top} (m_{\g} + i_{\g}^{\leq} - e_{\g}^{\leq}) \nonumber \\ 
        &= \phi_{\ms P}(x)^{\top} m_{\g} - \phi_{\ms P}(x)^{\top} e_{\g}^{\leq} + \phi_{\ms P}(x)^{\top} i_{\g}^{\leq} \nonumber \\ 
        &= \sum \limits_{M \in \mc M(\g)} \phi_{M}(x) - \phi_{\ms P}(x)^{\top} e_{\g}^{\leq} + \phi_{\ms P}(x)^{\top} i_{\g}^{\leq}.
    \end{align}
    
    \noindent $(i \Ra ii)$:
    
    By setting the RHS of the antecedent and Equation \ref{eq:1} equal to each other:
    \[
        \phi_{\ms P}(x)^{\top} i_{\g}^{\leq} = 0.
    \]

    Applying the M{\"o}bius inversion:
    \begin{align*}
        \phi_{\ms P}(x)^{\top} i_{\g}^{\leq} &= \phi_{\ms P}(x)^{\top} \zeta_{\ms P} \mu_{\ms P} \s i_{\g}^{\leq} \\
        &= [\zeta_{\ms P}^{\top} \phi_{\ms P}(x)]^{\top} \mu_{\ms P} \s i_{\g}^{\leq} \\
        &= h_{\ms P}(x)^{\top} \mu_{\ms P} \s i_{\g}^{\leq}.
    \end{align*}
    
    Accordingly:
    \[
        h_{\ms P}(x)^{\top} \mu_{\ms P} \s i_{\g}^{\leq} = 0.
    \]
    
    By Lemma \ref{lem:olmp}, if $A \in \mc A(\g)$ and $b = \ceo{A}$, then:
    \[
        \istate{b}{A \sm \cl{\g[A]}{b}}{\mb{\g[A]}{b}}{\mu_{\ms P} \s i_{\g}^{\leq}}.
    \]

    By Theorem \ref{thm:imfact}:
    \[
        \istate{A}{B}{C}{\mu_{\ms P} \s i_{\g}^{\leq}} \s[18] \Ra \s[18] \istate{A}{B}{C}{P} \s[18] \text{for every} \; \seq{A,B \mid C} \in \mc T(V).
    \]
    
    Accordingly, if $A \in \mc A(\g)$ and $b = \ceo{A}$, then:
    \[
        \istate{b}{A \sm \cl{\g[A]}{b}}{\mb{\g[A]}{b}}{P}.
    \]
    
    By Theorem \ref{thm:mprops}:
    \[
        \istate{A}{B}{C}{\g} \s[18] \Ra \s[18] \istate{A}{B}{C}{P} \s[18] \text{for every} \; \seq{A,B \mid C} \in \mc T(V).
    \]
    
    \noindent $(ii \La i)$: 
    
    By Equation \ref{eq:1}:
    \[
        \log f(x) = \sum \limits_{M \in \mc M(\g)} \phi_{M}(x) - \phi_{\ms P}(x)^{\top} e_{\g}^{\leq} + \phi_{\ms P}(x)^{\top} i_{\g}^{\leq}.
    \]
    
    By Lemma \ref{cor:fact}:
    \[
        \phi_{\ms P}(x)^{\top} i_{\g}^{\leq} = 0.
    \]
    
    Accordingly:
    \[
        \log f(x) = \sum \limits_{M \in \mc M(\g)} \phi_{M}(x) - \phi_{\ms P}(x)^{\top} e_{\g}^{\leq}.
    \]
\end{proof}

\noindent \textbf{Lemma \ref{lem:consistant_score}}
\textit{Let $\g = (V, E)$ and $\g' = (V, E')$ be ADMGs and $\leq$ and $\leq'$ be total orders consistent with $\g$ and $\g'$, respectively. Furthermore, let $X$ be a collection of random variables indexed by $V$ with positive measure $P_\theta$ that admits density $f(x \mid \theta)$ with respect to dominating $\sigma$-finite product measure $\mu$.}

If $x^1, \dots, x^n \overset{iid}{\sim} f(x \mid \theta)$ and $\theta \in \Theta_{\g} \sm \Theta_{\g'}$, then:
\[
\lim_{n \ra \infty} \, \frac{1}{n} \left | \textup{BIC}_\textup{MF}(\g, \leq, x^1, \dots, x^n) - \textup{BIC}_\textup{MF}(\g', \leq', x^1, \dots, x^n) \right | = \mbb E_{P_\theta} \left[ \phi_{\ms P}(x)^{\top} i_{\g'}^{\leq} \right].
\]

If $x^1, \dots, x^n \overset{iid}{\sim} f(x \mid \theta)$ and $\theta \in \Theta_{\g} \cap \, \Theta_{\g'}$ with $|\Theta_{\g}| < |\Theta_{\g'}|$, then:
\[
\lim_{n \ra \infty} \frac{1}{\log n} \left | \textup{BIC}_\textup{MF}(\g, \leq, x^1, \dots, x^n) - \textup{BIC}_\textup{MF}(\g', \leq', x^1, \dots, x^n) \right | = \frac{|\Theta_{\g'}| - |\Theta_{\g}|}{2}.
\]

\begin{proof}
Let $\mc D = \dom{\g}$ and $\mc D' = \dom{\g'}$. If $x^1, \dots, x^n \overset{iid}{\sim} f(x \mid \theta)$ and $\theta \in \Theta_{\g} \sm \Theta_{\g'}$, then:

\begin{align*}
\lim_{n \ra \infty} & \frac{1}{n} \left | \textup{BIC}_\textup{MF}(\g', \leq' x^1, \dots, x^n) - \textup{BIC}_\textup{MF}(\g, \leq, x^1, \dots, x^n) \right | \\
&= \lim_{n \ra \infty} \frac{1}{n} \hat{\ell}_{\g}^{\leq}(\hat{\theta}_{\mc D,n}^\textup{mle} \mid x^1, \dots, x^n) - \lim_{n \ra \infty} \frac{1}{n} \hat{\ell}_{\g'}^{\leq'}(\hat{\theta}_{\mc D',n}^\textup{mle} \mid x^1, \dots, x^n) - \frac{|\Theta_{\g'}| - |\Theta_{\g}|}{2} \lim_{n \ra \infty} \frac{\log(n)}{n} \\
&= \mbb E_{P_\theta} \left[ \phi_{\ms P}(x)^{\top} \delta_{\mc P(V)} \right] - \mbb E_{P_\theta} \left[ \phi_{\ms P}(x)^{\top} \delta_{\mc M(\g')} \right] + \mbb E_{P_\theta} \left[ \phi_{\ms P}(x)^{\top} e_{\g'}^{\leq} \right] \\
&= \mbb E_{P_\theta} \left[ \phi_{\ms P}(x)^{\top} i_{\g'}^{\leq} \right].
\end{align*}

If $x^1, \dots, x^n \overset{iid}{\sim} f(x \mid \theta)$ and $\theta \in \Theta_{\g} \cap \, \Theta_{\g'}$ with $|\Theta_{\g}| < |\Theta_{\g'}|$, then:
\begin{align*}
\lim_{n \ra \infty} \, & \frac{1}{\log n} \left | \textup{BIC}_\textup{MF}(\g, \leq, x^1, \dots, x^n) - \textup{BIC}_\textup{MF}(\g', \leq', x^1, \dots, x^n) \right | \\
&= \lim_{n \ra \infty} \frac{1}{\log n} \left [ \hat{\ell}_{\g}^{\leq}(\hat{\theta}_{\mc D,n}^\textup{mle} \mid x^1, \dots, x^n) - \hat{\ell}_{\g'}^{\leq'}(\hat{\theta}_{\mc D',n}^\textup{mle} \mid x^1, \dots, x^n) \right ] - \frac{|\Theta_{\g'}| - |\Theta_{\g}|}{2} \\
&= \frac{|\Theta_{\g'}| - |\Theta_{\g}|}{2}.
\end{align*}
\end{proof}

\noindent \textbf{Proposition \ref{prop:consistant_score}}
\textit{Let $\g = (V, E)$ and $\g' = (V, E')$ be ADMGs and $\leq$ and $\leq'$ be total orders consistent with $\g$ and $\g'$, respectively. Furthermore, let $X$ be a collection of random variables indexed by $V$ with positive measure $P_{\theta}$ that admits density $f(x \mid \theta)$ with respect to dominating $\sigma$-finite product measure $\mu$. If $x^1, \dots, x^n \overset{iid}{\sim} f(x \mid \theta)$ and $\theta \in \Theta_{\g} \sm \Theta_{\g'}$ or $\theta \in \Theta_{\g} \cap \, \Theta_{\g'}$ where $|\Theta_{\g}| < |\Theta_{\g'}|$, then\textup{:}}
\[
\lim_{n \ra \infty} \textup{Pr}(\textup{BIC}_\textup{MF}(\g', \leq', x^1, \dots, x^n) < \textup{BIC}_\textup{MF}(\g, \leq, x^1, \dots, x^n)) = 1.
\]

\begin{proof}
    This immediately follows from Proposition \ref{prop:bic_consist} and Lemma \ref{lem:consistant_score}.
\end{proof}

\section{Extras}

\label{app:extras}

\subsection{Necessity of the Adjustment Term}

\label{app:ng_not_in_s}

In this section, we give an example for which our current strategy for proving that the \textit{m}-connecting factorization (without adjustment) is equivalent to the global Markov property fails by showing a case where $\mu_{\ms P} \s n_{\g} \not \in \mc S(V)$. This does not necessarily mean that the adjustment term is necessary, but does mean that if it is not, then we need a new proof strategy.

\begin{figure}[H]
    \centering
    \begin{minipage}{.45\textwidth}
        \centering
        \includegraphics[page=56]{figures.pdf}
    \end{minipage} %
    \begin{minipage}{.45\textwidth}
        \centering
        \begin{tabular}{c||c||c|c|c|c|c|c}
         & $|S|$ & 6 & 5 & 4 & 3 & 2 & 1 \\
         \hline
        $n_{\g}$ & $-$ & 0 & 0 & 9 & 14 & 9 & 0 \\
        $\delta_{v,w \mid S}$ & 2 & 0 & 0 & 1 & 2 & 1 & 0 \\
        $\delta_{v,w \mid S}$ & 1 & 0 & 0 & 0 & 1 & 1 & 0 \\
        $\delta_{v,w \mid S}$ & 0 & 0 & 0 & 0 & 0 & 1 & 0
    \end{tabular}
    \end{minipage}
    \caption{Imset values marginalized over set size.}
    \label{fig:not_struct}
\end{figure}

\noindent Let $V = {a,b,c,d,e,f}$. In Figure \ref{fig:not_struct}, we see that:
\begin{itemize}
    \item $\sum\limits_{\substack{T \in \mc P(V) \\ |T| = 6}} n_{\g}(T) = 0$; 
    \item $\sum\limits_{\substack{T \in \mc P(V) \\ |T| = 5}} n_{\g}(T) = 0$; 
    \item $\sum\limits_{\substack{T \in \mc P(V) \\ |T| = 4}} n_{\g}(T) = 9$; 
    \item $\sum\limits_{\substack{T \in \mc P(V) \\ |T| = 3}} n_{\g}(T) = 14$; 
    \item $\sum\limits_{\substack{T \in \mc P(V) \\ |T| = 2}} n_{\g}(T) = 9$;
    \item $\sum\limits_{\substack{T \in \mc P(V) \\ |T| = 1}} n_{\g}(T) = 0$.
\end{itemize}
Clearly, there is no way to pick coefficients $k_{v,w \mid S} \in \mbb Q^+$ for all $\seq{v,w \mid S} \in \mc T(V)$ such that:
\[
    n_{\g} = \sum\limits_{\seq{v,w \mid S} \in \mc T(V)} k_{v,w \mid S} \, \delta_{v,w \mid S}.
\]
Accordingly, $\mu_{\ms P} \s n_{\g} \not \in \mc S(V)$.

\subsection{Alternative In/Exclusion Algorithm}

\label{app:alt_nie}

In this section, we present an alternative version of Algorithm \ref{alg:nie} which accomplishes the same goal as Algorithm \ref{alg:nie}, but does so with less redundancy. That is, $i_{\g}^{\leq}$ and $e_{\g}^{\leq}$ are constructed such that no set identifier for any conditional term is added to both structural imsets. Importantly, we have not verified that removing redundant terms does not change the induced independence model.

\vskip 5mm

\begin{algorithm}[H]
\caption{${\textsc{Non-m-connecting imset via In/Ex-clusion NIE}(\g, \leq)}$}
\label{alg:nie_alt}
\KwIn{ADMG: $\g[] = (V,E)$, total order: $\leq$}
\KwOut{imsets: $i_{\g}^{\leq}$, $e_{\g}^{\leq}$}
Initialize imsets $i_{\g}^{\leq} = 0$ and $e_{\g}^{\leq} = 0$ \;
Let $A = V$ \;
\Repeat{$A = \es$}{
    Initialize lists $\ms I = \seq{}$ and $\ms E = \seq{}$ \;
    Let $b = \ceo{A}$ and $\ms M, \ms N = \textsc{Pairs}(\g[A],b)$ \;
    \ForEach{$J \sube \{1, \dots, |\ms M| \}$}{
        \ForEach{$K \sube J$ \textup{where} $K \neq \es$}{
            \uIf{$|J \sm K|$ \textup{is even and} $\ms N_{J,K} \not \in \ms E$}{
                Add an instance of $\ms N_{J,K}$ to $\ms I$ \;
            }
            \uElseIf{$|J \sm K|$ \textup{is even and} $\ms N_{J,K} \in \ms E$}{
                Remove an instance of $\ms N_{J,K}$ from $\ms E$ \;
            }\uElseIf{$|J \sm K|$ \textup{is odd and} $\ms N_{J,K} \not \in \ms I$}{
                Add an instance of $\ms N_{J,K}$ to $\ms E$ \;
            }
            \ElseIf{$|J \sm K|$ \textup{is odd and} $\ms N_{J,K} \in \ms I$}{
                Remove an instance of $\ms N_{J,K}$ from $\ms I$ \;
            }
        }
    }
    \ForEach{$\ms S \in \ms I$}{
        $i_{\g}^{\leq} = i_{\g}^{\leq} + \delta_{\ms S}$ \;
    }
    \ForEach{$\ms S \in \ms E$}{
        $e_{\g}^{\leq} = e_{\g}^{\leq} + \delta_{\ms S}$ \;
    }
    Remove $b$ from $A$ \;   
}
\end{algorithm}

\vskip 5mm

\subsection{Marginal Likelihood}

\label{app:bic}

In what follows, we reason about the asymptotic properties of the $\textup{BIC}_\textup{MF}$ score and its relation to the marginal likelihood. \cite{berk1972consistency} proved strong consistency for the maximum likelihood parameter estimates of exponential families under mild regularity conditions. In other wordd, if $\theta \in \Theta_{\g}$, then:
\[
\hat{\theta}_{\g,n}^\text{mle} \overset{\text{a.s.}\;}{\longrightarrow} \theta.
\]
Therefore, by the continuous mapping theorem:
\[
\ell(\hat{\theta}_{\g,n}^\text{mle} \mid x^1, \dots x^n) \overset{\text{a.s.}\;}{\longrightarrow} \ell(\theta \mid x^1, \dots x^n).
\]

\noindent Next, we note the following result due to \cite{dawid2011posterior}.

\begin{prop}[Theorem 1 \cite{dawid2011posterior}]
\label{prop:dawid}
Let $\g = (V,E)$ be an ADMG. Furthermore, let $X$ be a collection of random variables indexed by $V$ with probability measure $P_\theta$ that admits density $f(x \mid \theta)$ with respect to dominating $\sigma$-finite product measure $\nu$. If $x^1, \dots, x^n \overset{iid}{\sim} f(x \mid \theta)$, then\textup{:}
\[
\lim_{n \ra \infty} \log \frac{\int_{\hat{\theta} \in \Theta_{\g}} \prod_{i = 1}^n f(x^i \mid \hat{\theta}) \, \textup{d}\nu(\theta)}{\prod_{i = 1}^n f(x^i \mid \theta)} = - \lim_{n \ra \infty} n \inf_{\hat{\theta} \in \Theta_{\g}} \int_{x \in \mc X} \log \left[ \frac{f(x \mid \theta)}{f(x \mid \hat{\theta})} \right] \textup{d}P_\theta(x) + O_p(n^{\frac{1}{2}})
\]
where $\inf_{\hat{\theta} \in \Theta_{\g}} \int_{x \in \mc X} \log \left[ \frac{f(x \mid \theta)}{f(x \mid \hat{\theta})} \right] \textup{d}P_{\theta}(x) = 0$ if and only if $\theta \in \Theta_{\g}$.
\end{prop}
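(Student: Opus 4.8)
The statement is Theorem~1 of \cite{dawid2011posterior}, so in the paper I would simply cite it; the sketch below records the shape of the argument I would reconstruct. Throughout, $\nu$ denotes the prior on $\Theta_{\g}$ appearing in the integrand (read as a measure with a continuous density that is positive near the relevant parameter), $\ell_n(\hat\theta) = \sum_{i=1}^{n}\log f(x^i\mid\hat\theta)$, $\hat\theta_{\g,n}$ is the MLE over $\Theta_{\g}$, and
\[
    \theta^{\ast} \eq \argmax_{\hat\theta\in\Theta_{\g}} \Big( -\int_{\mc X}\log\tfrac{f(x\mid\theta)}{f(x\mid\hat\theta)}\,\textup{d}P_{\theta}(x)\Big)
\]
is the pseudo-true parameter, i.e.\ the minimizer over the model of the Kullback--Leibler divergence $D(P_{\theta}\,\|\,P_{\hat\theta})$. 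The plan is to split
\begin{align*}
    &\log\frac{\int_{\Theta_{\g}}\prod_{i}f(x^i\mid\hat\theta)\,\textup{d}\nu(\hat\theta)}{\prod_{i}f(x^i\mid\theta)} \\
    &\qquad = \big[\ell_n(\hat\theta_{\g,n})-\ell_n(\theta)\big] + \log\frac{\int_{\Theta_{\g}}\prod_{i}f(x^i\mid\hat\theta)\,\textup{d}\nu(\hat\theta)}{\prod_{i}f(x^i\mid\hat\theta_{\g,n})},
\end{align*}
handle the two bracketed pieces separately, and show the second is $O_p(\log n)=O_p(n^{1/2})$ so that the leading $-\,n\,D(P_{\theta}\,\|\,P_{\theta^{\ast}})$ term comes entirely from the first.

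For the first piece, Berk's strong consistency (quoted just above the statement) gives $\hat\theta_{\g,n}\ra\theta^{\ast}$ almost surely. A second-order Taylor expansion of $\ell_n$ around $\theta^{\ast}$, combined with the strong law and the central limit theorem applied to the i.i.d.\ summands $\log\frac{f(x^i\mid\theta)}{f(x^i\mid\theta^{\ast})}$, gives
\[
    \ell_n(\hat\theta_{\g,n})-\ell_n(\theta) = -\,n\,D(P_{\theta}\,\|\,P_{\theta^{\ast}}) + \sqrt{n}\,Z_n + O_p(1),
\]
with $Z_n$ asymptotically normal; replacing $\theta^{\ast}$ by the maximizer $\hat\theta_{\g,n}$ costs only $O_p(1)$ since near $\theta^{\ast}$ the expansion is quadratic with nonsingular Hessian --- this is where the smooth-manifold, nonsingular-information structure of the curved exponential family $\{P_{\hat\theta}:\hat\theta\in\Theta_{\g}\}$ enters. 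Hence the first piece equals $-\,n\inf_{\hat\theta\in\Theta_{\g}}\int_{\mc X}\log\frac{f(x\mid\theta)}{f(x\mid\hat\theta)}\,\textup{d}P_{\theta}(x) + O_p(n^{1/2})$.

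For the second piece I would use Laplace's method: on a shrinking neighbourhood of $\hat\theta_{\g,n}$ we have $\ell_n(\hat\theta)\approx \ell_n(\hat\theta_{\g,n}) - \tfrac{n}{2}(\hat\theta-\hat\theta_{\g,n})^{\top}J_n(\hat\theta-\hat\theta_{\g,n})$ with $J_n$ converging to a positive-definite matrix, so integrating against $\nu$ yields $\log\int_{\Theta_{\g}}\prod_i f(x^i\mid\hat\theta)\,\textup{d}\nu(\hat\theta) = \ell_n(\hat\theta_{\g,n}) - \tfrac{\dim\Theta_{\g}}{2}\log n + O_p(1)$, i.e.\ the second piece is $-\tfrac{\dim\Theta_{\g}}{2}\log n + O_p(1)$, which is absorbed into $O_p(n^{1/2})$. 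Adding the pieces gives the displayed limit. The ``if and only if'' clause is immediate: $D(P_{\theta}\,\|\,P_{\hat\theta})\ge 0$ with equality only if $P_{\hat\theta}=P_{\theta}$ $P_{\theta}$-a.e., and then $\mc I(\g)\sube\mc I(P_{\hat\theta})=\mc I(P_{\theta})$ forces $\theta\in\Theta_{\g}$; conversely if $\theta\in\Theta_{\g}$ then $\theta^{\ast}=\theta$ attains the value $0$.

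The main obstacle is the uniform control required to make Laplace's method rigorous: one must bound the contribution to $\int_{\Theta_{\g}}\prod_i f(x^i\mid\hat\theta)\,\textup{d}\nu(\hat\theta)$ from $\{\hat\theta:\lVert\hat\theta-\hat\theta_{\g,n}\rVert>\varepsilon\}$ and show it is exponentially negligible against the bulk, which needs a uniform strong law over $\Theta_{\g}$ (or compactness of the effective parameter region), uniqueness of $\theta^{\ast}$ in the interior, and nonsingular Fisher information along the manifold $\Theta_{\g}$ --- precisely the regularity hypotheses under which \cite{dawid2011posterior} establishes the result, so in the paper I would ultimately just invoke it.
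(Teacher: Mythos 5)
The paper states this result as a direct import of Theorem~1 of \cite{dawid2011posterior} and supplies no proof of its own, so your decision to simply cite it is exactly what the paper does. Your supplementary sketch of the Laplace-approximation argument around the pseudo-true (Kullback--Leibler-minimizing) parameter is a faithful outline of the standard derivation; the only loose end is that the ``only if'' direction of the final clause tacitly assumes the infimum of $D(P_{\theta}\,\|\,P_{\hat\theta})$ over $\Theta_{\g}$ is attained, which needs the closedness/compactness regularity you correctly flag as belonging to the hypotheses of the cited theorem.
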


\begin{thm}
\label{thm:asympt}
Let $\g = (V, E)$ and $\g' = (V, E')$ be ADMGs and $\leq$ and $\leq'$ be total orders consistent with $\g$ and $\g'$ respectively. Furthermore, let $X$ be a collection of random variables indexed by $V$ with probability measure $P_\theta$ that admits density $f(x \mid \theta)$ with respect to dominating $\sigma$-finite product measure $\nu$. 

If $x^1, \dots, x^n \overset{iid}{\sim} f(x \mid \theta)$ and $\theta \in \Theta_{\g}$, then:
\[
\textup{BIC}_\textup{MF}(\g, \leq, x^1, \dots, x^n) = \textup{BIC}(\g, x^1, \dots, x^n) \; \text{almost surely}.
\]

If $x^1, \dots, x^n \overset{iid}{\sim} f(x \mid \theta)$ and $\theta \in \Theta_{\g'} \sm \Theta_{\g}$, then:
\[
\lim_{n \ra \infty} \frac{\left|\textup{Pr}(x^1, \dots, x^n \mid \g) - \exp \textup{BIC}_\textup{MF}(\g, \leq x^1, \dots, x^n) \right|}{\exp \textup{BIC}_\textup{MF}(\g', \leq', x^1, \dots, x^n)} = O_p(e^{-n})
\]
\end{thm}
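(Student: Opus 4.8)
The plan is to establish the two displayed claims separately, each by reducing to results already available in the excerpt. For the first claim, suppose $\theta \in \Theta_{\g}$. When $\g$ is a DAG the $\textup{BIC}_\textup{MF}$ score reduces to the ordinary BIC score by construction (the \textit{m}-connecting factorization for a DAG is exactly the recursive factorization), so the content is in the ADMG case. I would use Berk's strong consistency result, quoted in the excerpt, namely $\hat\theta_{\g,n}^\textup{mle} \overset{\textup{a.s.}}{\to} \theta$, together with the analogous consistency for the dominating DAG $\mc D = \dom{\g}$, to argue that $\hat\theta_{\mc D,n}^\textup{mle}$ and $\hat\theta_{\g,n}^\textup{mle}$ converge to the same limit $\theta$ almost surely (here I use that $\theta \in \Theta_{\g} \sube \Theta_{\mc D}$, since the dominating DAG has a superset of the independences forced). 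Then, because Corollary \ref{cor:fact} gives $\phi_{\ms P}(x)^{\top} i_{\g}^{\leq} = 0$ for $\mu$-almost all $x$ precisely when $\mc I(\g) \sube \mc I(P)$, and Theorem \ref{thm:mconn_fact} then shows $\log f(x) = \sum_{M \in \mc M(\g)} \phi_M(x) - \phi_{\ms P}(x)^{\top} e_{\g}^{\leq}$, the approximate log-likelihood $\hat\ell_{\g}$ (or $\hat\ell_{\g}^{\leq}$) evaluated at the true parameter equals the exact log-likelihood. Combining these two observations — the MLEs coincide in the limit, and the \textit{m}-connecting factorization is exact at parameters satisfying the Markov property — yields $\textup{BIC}_\textup{MF}(\g,\leq,x^1,\dots,x^n) = \textup{BIC}(\g,x^1,\dots,x^n)$ almost surely.

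For the second claim, suppose $\theta \in \Theta_{\g'} \sm \Theta_{\g}$. I would write $\textup{Pr}(x^1,\dots,x^n \mid \g)$ for the marginal likelihood $\int_{\hat\theta \in \Theta_{\g}} \prod_i f(x^i \mid \hat\theta)\,\textup{d}\nu(\theta)$ and invoke Proposition \ref{prop:dawid} (Dawid's theorem) to control its logarithm: since $\theta \notin \Theta_{\g}$, the infimum Kullback–Leibler divergence $\inf_{\hat\theta \in \Theta_{\g}} \int \log[f(x\mid\theta)/f(x\mid\hat\theta)]\,\textup{d}P_\theta(x)$ is strictly positive, call it $\kappa > 0$, so $\log \textup{Pr}(x^1,\dots,x^n\mid\g) = \log \prod_i f(x^i\mid\theta) - n\kappa + O_p(n^{1/2})$. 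Meanwhile, by the first part of Lemma \ref{lem:consistant_score} (or by a direct law-of-large-numbers argument on $\hat\ell_{\g}^{\leq}$), $\frac{1}{n}\textup{BIC}_\textup{MF}(\g,\leq,x^1,\dots,x^n) \to \mbb E_{P_\theta}[\log f(x)] - \mbb E_{P_\theta}[\phi_{\ms P}(x)^{\top} i_{\g}^{\leq}]$, and one identifies $\mbb E_{P_\theta}[\phi_{\ms P}(x)^{\top} i_{\g}^{\leq}]$ with the same $\kappa$ up to the lower-order penalty — this is the crux, and it uses that the expected value of $\phi_{\ms P}(x)^{\top} i_{\g}^{\leq}$ under $P_\theta$ is exactly the KL divergence from $P_\theta$ to the nearest element of the constrained family, a fact that follows from Proposition \ref{prop:iir}(iv) and Proposition \ref{prop:in_ex_indmodels}. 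Thus both $\log \textup{Pr}(x^1,\dots,x^n\mid\g)$ and $\textup{BIC}_\textup{MF}(\g,\leq,x^1,\dots,x^n)$ equal $\log\prod_i f(x^i\mid\theta) - n\kappa$ up to $O_p(n^{1/2})$ and $O(\log n)$ corrections respectively, so their difference — and hence $|\textup{Pr}(x^1,\dots,x^n\mid\g) - \exp\textup{BIC}_\textup{MF}(\g,\leq,x^1,\dots,x^n)|$ — is dominated, after dividing by $\exp\textup{BIC}_\textup{MF}(\g',\leq',x^1,\dots,x^n)$, which (since $\theta \in \Theta_{\g'}$) grows like $\prod_i f(x^i\mid\theta)$ up to a $\log n$-order penalty. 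Since $\g$ is strictly misspecified and $\g'$ is not, the exponential gap $e^{-n\kappa}$ wins and the ratio is $O_p(e^{-n})$.

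The main obstacle I anticipate is the precise bookkeeping in the second claim: matching the constant $\kappa$ appearing in Dawid's KL infimum with the expectation $\mbb E_{P_\theta}[\phi_{\ms P}(x)^{\top} i_{\g}^{\leq}]$ coming out of the \textit{m}-connecting factorization, and then carefully tracking the orders $O_p(n^{1/2})$, $O(\log n)$, and $O(1)$ across the three quantities (true likelihood, marginal likelihood, $\textup{BIC}_\textup{MF}$) so that after normalizing by $\exp\textup{BIC}_\textup{MF}(\g',\leq',\cdot)$ everything collapses to $O_p(e^{-n})$. One must also be slightly careful that $\hat\ell_{\g}^{\leq}(\hat\theta_{\mc D,n}^\textup{mle}\mid\cdot)$ — which uses the dominating-DAG MLE rather than a constrained MLE — has the same leading-order behavior; this is where the consistency of $\hat\theta_{\mc D,n}^\textup{mle}$ and the continuous mapping theorem, as set up in the preamble to the theorem, do the work. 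The DAG-reduction for the first claim and the strictly-positive-KL input for the second are essentially immediate from the cited results, so the only real content is this asymptotic matching.
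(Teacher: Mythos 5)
Your proof is correct in substance and uses the same two ingredients as the paper — strong consistency of the MLE plus the continuous mapping theorem for the first claim, and Proposition \ref{prop:dawid} together with Lemma \ref{lem:consistant_score} for the second — but you have organized the second half around a step that is both unestablished and unnecessary. You identify as "the crux" the matching of the KL infimum $\kappa$ from Dawid's theorem with $\mbb E_{P_\theta}[\phi_{\ms P}(x)^{\top} i_{\g}^{\leq}]$, so that $\log \textup{Pr}(x^1,\dots,x^n\mid\g)$ and $\textup{BIC}_\textup{MF}(\g,\leq,\cdot)$ share the same leading exponential rate. That identity is not proved anywhere in the paper and does not follow from Propositions \ref{prop:iir}(iv) and \ref{prop:in_ex_indmodels} alone (the expected inclusion-imset term is a particular nonnegative combination of conditional mutual informations, not in general the KL divergence to the I-projection onto $\Theta_{\g}$). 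The paper sidesteps this entirely: since both quantities in the numerator are nonnegative, $|A-B|\leq\max(A,B)$, and one then argues each case separately — if the $\textup{BIC}_\textup{MF}$ term is larger, Lemma \ref{lem:consistant_score} gives a linear-in-$n$ gap of size $\mbb E_{P_\theta}[\phi_{\ms P}(x)^{\top} i_{\g}^{\leq}]>0$ (strictly positive because $\theta\notin\Theta_{\g}$ forces at least one conditional mutual information in the combination to be positive); if the marginal likelihood is larger, Proposition \ref{prop:dawid} gives a linear gap of size $\kappa>0$. The two rates need not agree; all that matters is that each is strictly positive while the denominator tracks the true likelihood up to $O(\log n)$. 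Your own conclusion actually only uses this weaker fact — each numerator term is individually $e^{-cn}$ relative to the denominator — so your argument survives once you drop the matching claim, but as written you have flagged a false dependency as the heart of the proof.
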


\begin{proof}
If $\theta \in \Theta_{\g}$, then $\textup{BIC}_\textup{MF}(\g, \leq, x^1, \dots, x^n) = \textup{BIC}(\g, x^1, \dots, x^n)$ almost surely by the continuous mapping theorem and strong consistency of the maximum likelihood estimate. If $\theta \in \Theta_{\g'} \sm \Theta_{\g}$, then:
\begin{align*}
\lim_{n \ra \infty} & \log \frac{\left|\textup{Pr}(x^1, \dots, x^n \mid \g) - \exp \textup{BIC}_\textup{MF}(\g, \leq, x^1, \dots, x^n) \right|}{\exp \textup{BIC}_\textup{MF}(\g', \leq', x^1, \dots, x^n)} \\ 
&\leq \lim_{n \ra \infty} \log \frac{\max \left[ \text{Pr}(x^1, \dots, x^n \mid \g), \exp \textup{BIC}_\textup{MF}(\g, \leq, x^1, \dots, x^n) \right] }{\exp \textup{BIC}_\textup{MF}(\g', \leq', x^1, \dots, x^n)}.
\end{align*}
If $\exp \textup{BIC}_\textup{MF}(\g, \leq, x^1, \dots, x^n) > \text{Pr}(x^1, \dots, x^n \mid \g)$, then the results directly follows from Lemma \ref{lem:consistant_score}. If $\text{Pr}(x^1, \dots, x^n \mid \g) > \exp \textup{BIC}_\textup{MF}(\g, \leq, x^1, \dots, x^n)$, then the result directly follows from Proposition \ref{prop:dawid}.
\end{proof}

Accordingly, the BIC and $\textup{BIC}_\textup{MF}$ scores are almost surely equal when the probability measure is Markov with respect to the ADMG. Moreover, the difference between the log marginal likelihood and the $\textup{BIC}_\textup{MF}$ score tends towards zero at an exponential rate with respect to sample size relative to the $\textup{BIC}_\textup{MF}$ score for models where the probability measure is Markov when the probability measure is not Markov with respect to the ADMG.


\section{Factorization of Graphs with Five Vertices}
\label{app:fac}

The graphs illustrated below are called maximally informative partial ancestral graphs (PAGs). Intuitively, a maximally informative PAG graphically represents the Markov equivalence class of a MAG (or an ADMG). A maximally informative PAG is not a mixed graph, but a graph that summarizes a set of mixed graphs. In addition to the standard set of edges used by mixed graphs, maximally informative PAGs also include edges with circle edge marks to denote ambiguity---the edge mark varies among the summarized graphs.

In what follows, we enumerate all MECs (up to vertex relabeling) for ADMGs with 5 vertices. Each MEC is represented graphically using a PAG and the corresponding m-connecting factorization is written next to the PAG. By expanding the conditional interaction information rates we can ensure that the corresponding sets are disjoint---this means that there is no need for an adjustment term in the factorization.

\begin{center}
\includegraphics[page=1]{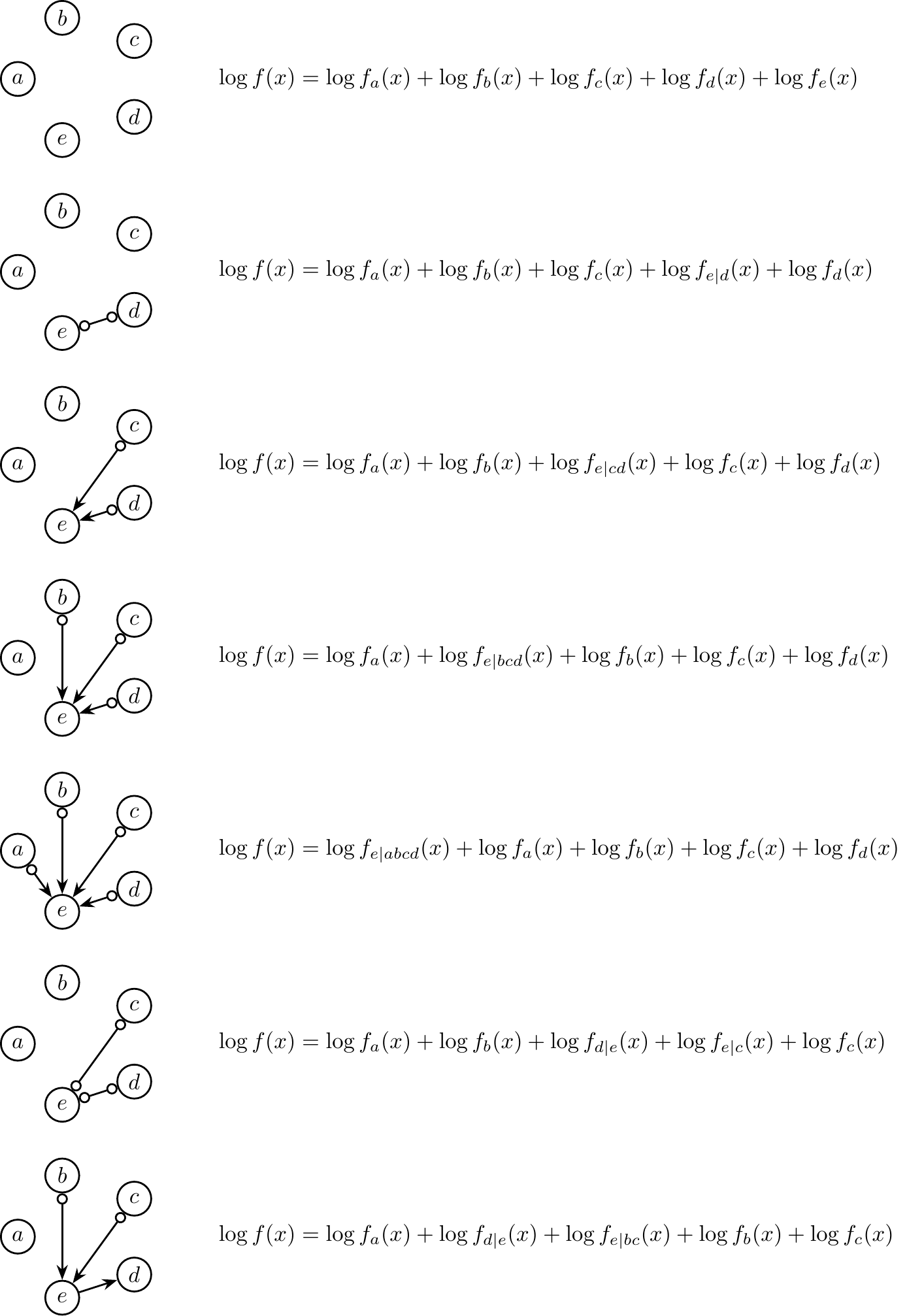}
\end{center}

\begin{center}
\includegraphics[page=2]{facts_5.pdf}
\end{center}

\begin{center}
\includegraphics[page=3]{facts_5.pdf}
\end{center}

\begin{center}
\includegraphics[page=4]{facts_5.pdf}
\end{center}

\begin{center}
\includegraphics[page=5]{facts_5.pdf}
\end{center}

\begin{center}
\includegraphics[page=6]{facts_5.pdf}
\end{center}

\begin{center}
\includegraphics[page=7]{facts_5.pdf}
\end{center}

\begin{center}
\includegraphics[page=8]{facts_5.pdf}
\end{center}

\begin{center}
\includegraphics[page=9]{facts_5.pdf}
\end{center}

\begin{center}
\includegraphics[page=10]{facts_5.pdf}
\end{center}

\begin{center}
\includegraphics[page=11]{facts_5.pdf}
\end{center}

\begin{center}
\includegraphics[page=12]{facts_5.pdf}
\end{center}

\begin{center}
\includegraphics[page=13]{facts_5.pdf}
\end{center}

\begin{center}
\includegraphics[page=14]{facts_5.pdf}
\end{center}

\begin{center}
\includegraphics[page=15]{facts_5.pdf}
\end{center}

\begin{center}
\includegraphics[page=16]{facts_5.pdf}
\end{center}

\begin{center}
\includegraphics[page=17]{facts_5.pdf}
\end{center}

\begin{center}
\includegraphics[page=18]{facts_5.pdf}
\end{center}

\begin{center}
\includegraphics[page=19]{facts_5.pdf}
\end{center}

\begin{center}
\includegraphics[page=20]{facts_5.pdf}
\end{center}

\begin{center}
\includegraphics[page=21]{facts_5.pdf}
\end{center}

\begin{center}
\includegraphics[page=22]{facts_5.pdf}
\end{center}

\begin{center}
\includegraphics[page=23]{facts_5.pdf}
\end{center}

\begin{center}
\includegraphics[page=24]{facts_5.pdf}
\end{center}

\begin{center}
\includegraphics[page=25]{facts_5.pdf}
\end{center}

\begin{center}
\includegraphics[page=26]{facts_5.pdf}
\end{center}

\begin{center}
\includegraphics[page=27]{facts_5.pdf}
\end{center}

\begin{center}
\includegraphics[page=28]{facts_5.pdf}
\end{center}

\begin{center}
\includegraphics[page=29]{facts_5.pdf}
\end{center}

\begin{center}
\includegraphics[page=30]{facts_5.pdf}
\end{center}

\begin{center}
\includegraphics[page=31]{facts_5.pdf}
\end{center}

\begin{center}
\includegraphics[page=32]{facts_5.pdf}
\end{center}

\begin{center}
\includegraphics[page=33]{facts_5.pdf}
\end{center}

\begin{center}
\includegraphics[page=34]{facts_5.pdf}
\end{center}

\begin{center}
\includegraphics[page=35]{facts_5.pdf}
\end{center}

\begin{center}
\includegraphics[page=36]{facts_5.pdf}
\end{center}

\begin{center}
\includegraphics[page=37]{facts_5.pdf}
\end{center}

\begin{center}
\includegraphics[page=38]{facts_5.pdf}
\end{center}

\begin{center}
\includegraphics[page=39]{facts_5.pdf}
\end{center}

\begin{center}
\includegraphics[page=40]{facts_5.pdf}
\end{center}

\begin{center}
\includegraphics[page=41]{facts_5.pdf}
\end{center}

\begin{center}
\includegraphics[page=42]{facts_5.pdf}
\end{center}

\begin{center}
\includegraphics[page=43]{facts_5.pdf}
\end{center}

\begin{center}
\includegraphics[page=44]{facts_5.pdf}
\end{center}

\begin{center}
\includegraphics[page=45]{facts_5.pdf}
\end{center}


\end{document}